\newtheorem{theorem}{\textbf{Theorem}}
\newtheorem{assumption}{\textbf{Assumption}}
\newtheorem{lemma}{\textbf{Lemma}}
\newtheorem{corollary}{\textbf{Corollary}}
\newtheorem{remark}{\textbf{Remark}}
\begin{document}

%
\runningtitle{Decentralized Multi-Level Compositional Optimization}

%
\runningauthor{Hongchang Gao}

\twocolumn[

\aistatstitle{Decentralized Multi-Level Compositional Optimization Algorithms with Level-Independent Convergence Rate}

\aistatsauthor{Hongchang Gao}

\aistatsaddress{Temple University} ]

\begin{abstract}
	Stochastic multi-level compositional optimization problems cover many new machine learning paradigms, e.g., multi-step model-agnostic meta-learning, which require efficient optimization algorithms for large-scale data. This paper studies the decentralized stochastic multi-level optimization algorithm, which is challenging because the multi-level structure and decentralized communication scheme may make the number of levels significantly affect the order of the convergence rate. To this end, we develop two novel decentralized optimization algorithms to optimize the multi-level compositional optimization problem. Our theoretical results show that both algorithms  can achieve the level-independent convergence rate for nonconvex problems under much milder conditions compared with existing single-machine algorithms. To the best of our knowledge, this is the first work that achieves the level-independent convergence rate under the decentralized setting. Moreover, extensive experiments confirm the efficacy of our proposed algorithms. 
\end{abstract}

\section{Introduction}
In recent years, some new learning paradigms, such as model-agnostic meta-learning \citep{finn2017model},  have been proposed to handle realistic machine learning applications, which are typically beyond the class of  traditional stochastic optimization. Some examples include bilevel optimization, minimax optimization, compositional optimization, and so on.  Of particular interest in this paper is the learning paradigm that can be formulated as the \textit{stochastic multi-level compositional optimization problem}.  More particularly, we are interested in the decentralized setting where data are distributed on different devices and the device performs  peer-to-peer communication to exchange information with its  neighboring devices. Mathematically, the loss function is defined as follows:
\vspace{-5pt}
\begin{equation} \label{loss}
	\min _{x \in \mathbb{R}^d} F(x)= \frac{1}{N} \sum_{n=1}^{N}F_n(x)  \ , 
\end{equation}
where $F_n(x) = f_n^{(K)} \circ f_n^{(K-1)} \circ \cdots \circ f_n^{(2)}\circ f_n^{(1)} (x)$, $x\in \mathbb{R}^d$ is the model parameter of a machine learning model,   $N$  devices compose a communication network, and  $F_n(x)$ is the loss function on the $n$-th device, 
for any $k\in \{1, 2, \cdots, K-1, K\}$, $f_n^{(k)}(\cdot)=\mathbb{E}_{\xi_{n}^{(k)}}[f_n^{(k)}(\cdot; \xi_{n}^{(k)})] : \mathbb{R}^{d_{k-1}} \rightarrow  \mathbb{R}^{d_{k}}$ is the $k$-th level  function on the $n$-th device, where $\xi_{n}^{(k)}$ denotes the data distribution for the $k$-th level function on the $n$-th device. 
It can observed that the input of $f_n^{(k)}(\cdot)$ is the output of $f_n^{(k-1)}(\cdot)$.

The stochastic multi-level compositional optimization (multi-level SCO) problem covers a wide range of machine learning models. For instance, the multi-step model-agnostic meta-learning \citep{finn2017model} can be formulated as a multi-level SCO problem. The stochastic training of graph neural networks  also belongs to the class of multi-level SCO problem \citep{yu2022graphfm,cong2021importance}.  The neural network with batch-normalization is actually a multi-level SCO problem \citep{lian2018revisit}.   The challenge of optimizing the multi-level SCO problem lies in that the stochastic gradient is not an unbiased estimator of the full gradient when the inner-level functions are nonlinear. To address this challenge, a couple of stochastic multi-level compositional gradient descent (multi-level SCGD) algorithms have been proposed recently. For instance, \cite{yang2019multilevel} proposed the first stochastic multi-level  compositional gradient descent algorithm. However, due to the nested structure of the loss function, {the order of its convergence rate depends on the number of levels $K$ exponentially} \footnote{Following \citep{balasubramanian2022stochastic}, throughout this paper, the level-dependent convergence rate means that the number of levels $K$  affects the order of the convergence rate, e.g., $O(\epsilon^{-K})$, while the level-independent convergence rate indicates that $K$ does not affect its order but may affect its coefficient,  e.g., $O(K\epsilon^{-2})$.},  where a larger $K$ results in a slower convergence rate. As such, it cannot match the traditional stochastic gradient descent (SGD) algorithm's convergence rate. Later, some algorithms \citep{zhang2021multilevel,balasubramanian2022stochastic,jiang2022optimal} were proposed to achieve the level-independent convergence rate via leveraging the variance-reduced estimator.  For instance, \cite{zhang2021multilevel} exploited the SPIDER \citep{nguyen2017sarah,fang2018spider}  estimator for both the stochastic function value $f_n^{(k)}(\cdot; \xi_{n}^{(k)})$ and the stochastic Jacobian matrix $\nabla f_n^{(k)}(\cdot; \xi_{n}^{(k)})$  of  each level function  to improve the convergence rate. 

However, existing stochastic multi-level  compositional optimization algorithms have some limitations. On the one hand, they only focus on the single-machine setting. As such,  they cannot be used to solve  the distributed multi-level SCO problem in Eq.~(\ref{loss}). In particular, it is unclear if  the \textit{level-independent convergence rate} is still achievable under the decentralized setting. More particularly, it is unclear whether the consensus error  caused by the decentralized communication scheme will make the level-independent convergence rate unachievable. On the other hand, under the single-machine setting, those algorithms with the variance-reduced estimator have some unrealistic operations, limiting their applications in real-world tasks.  In particular, they apply the variance reduction technique to the stochastic Jacobian matrix of every level function $\nabla f_n^{(k)}(\cdot, \xi_{n}^{(k)})$ where $k\in\{1, 2, \cdots, K\}$, which requires the clipping operation, e.g., Algorithm 3 in \citep{zhang2021multilevel}, or the projection operation, e.g., Eq.~(3) in \citep{jiang2022optimal}, to upper bound the variance-reduced gradient.  These operations either result in a very small learning rate or depend on unknown hyperparameters. These limitations motivate us to \textit{1)  develop  decentralized optimization algorithms for Eq.~(\ref{loss}) to enable  multi-level SCO problems for distributed data, 2) propose the practical algorithm based on the variance-reduced stochastic gradient  under mild conditions, and 3) establish the level-independent convergence rate  for the proposed algorithm. }

To this end, we developed two novel decentralized multi-level  stochastic compositional gradient descent algorithms, both of which can achieve the level-independent convergence rate. Specifically, they have the following contributions. 1) Our first algorithm demonstrates how to achieve the level-independent convergence rate with a novel combination of the inner-level function estimator and the momentum technique. Our second algorithm improves the convergence rate with a novel strategy of utilizing the variance-reduced estimator without impractical operations. In particular, unlike existing algorithms \citep{zhang2021multilevel,jiang2022optimal}, which apply the variance reduction technique to both \textit{all stochastic inner-level  function values} $\{f_n^{(k)}(\cdot; \xi_{n}^{(k)})\}_{k=1}^{K-1}$  and \textit{ all stochastic Jacobian matrices} $\{\nabla f_n^{(k)}(\cdot; \xi_{n}^{(k)})\}_{k=1}^{K}$, our algorithm  leverages the variance-reduced estimator for stochastic  inner-level function values $\{f_n^{(k)}(\cdot; \xi_{n}^{(k)})\}_{k=1}^{K-1}$ and  \textit{the gradient} $\nabla F_n(x; \xi_{n})$. As such, our algorithm does not require the clipping operation for the learning rate or the projection operation for   $\{\nabla f_n^{(k)}(\cdot; \xi_{n}^{(k)})\}_{k=1}^{K}$ as \citep{zhang2021multilevel,jiang2022optimal}.  Thus, it is more friendly to implement. 2) Besides the novel algorithmic design, we established the level-independent convergence rate of our two algorithms for nonconvex problems under the decentralized setting. In particular, our first algorithm, which leverages the momentum technique for the  gradient, enjoys the convergence rate of $O(\epsilon^{-4})$ to achieve the $\epsilon$-stationary point. Our second algorithm, which exploits the variance reduction technique for the  gradient, can achieve the   convergence rate of $O(\epsilon^{-3})$ for nonconvex problems. As far as we know, this is the first decentralized optimization work for multi-level SCO with theoretical guarantees.   3) Extensive experiments on the multi-step model-agnostic meta-learning task confirm the effectiveness of our algorithms.

\section{Related Work}
\subsection{Stochastic Two-Level  Compositional Optimization}
The stochastic two-level  compositional optimization problem has been extensively studied in the past few years. In particular, to address the biased gradient estimator problem,  \cite{wang2017stochastic} developed the stochastic compositional gradient descent (SCGD) algorithm for the first time, where the moving-average technique was leveraged to the estimation of the inner-level function value to control the estimation error. However, its sample complexity is as large as $O(\epsilon^{-8})$ for nonconvex problems, which is worse than $O(\epsilon^{-4})$ of the standard SGD algorithm for non-compositional optimization problems. Then, \cite{ghadimi2020single} applied the momentum technique to stochastic compositional gradient so that it improved the sample complexity to $O(\epsilon^{-4})$. On the contrary, \cite{chen2020solving} leveraged the variance-reduced estimator \citep{cutkosky2019momentum} for the inner-level function, which can also achieve the sample complexity of $O(\epsilon^{-4})$. To further improve the convergence rate, a couple of works exploited the variance-reduced technique to control the estimation error for both the inner-level function value and its Jacobian matrix. For instance, \cite{yuan2019stochastic} leveraged the SPIDER variance reduction technique \citep{nguyen2017sarah,fang2018spider} to improve the sample complexity to $O(\epsilon^{-3})$ for stochastic nonconvex problems. However, this algorithm requires a large batch size. To address this problem, \cite{yuan2020stochastic} employed the STORM variance reduction technique \citep{cutkosky2019momentum}, which can also achieve the sample complexity of $O(\epsilon^{-3})$, but with a small batch size.  As for the nonconvex finite-sum compositional problem, a couple of works \citep{zhang2019composite,zhang2019stochastic,yuan2019stochastic} also utilized the variance-reduction technique to improve the sample complexity to match the counterpart for non-compositional problems.

\subsection{Stochastic  Multi-Level Compositional Optimization}
Even though the aforementioned  algorithms can achieve desired sample complexity for the two-level compositional problem, it is non-trivial to extend them to the multi-level problem for achieving the same sample complexity. 
For instance, \cite{yang2019multilevel} developed an accelerated  stochastic compositional gradient descent algorithm for the stochastic multi-level  compositional optimization problem, which can only achieve the sample complexity of $O(\epsilon^{-(7+K)/2})$ for nonconvex problems. Obviously, this sample complexity depends on the number of function levels $K$, which is far from satisfactory. Later, \cite{balasubramanian2022stochastic} extended the momentum approach \citep{ghadimi2020single} to the multi-level problem, obtaining the $O(\epsilon^{-6})$ sample complexity, which is worse than the counterpart \citep{ghadimi2020single} for the two-level problem. Then,  they added a correction term when using the moving-average technique to estimate each level function so that the sample complexity was improved to $O(\epsilon^{-4})$, which can match the standard momentum stochastic gradient descent algorithm. In \citep{chen2020solving}, the STORM variance-reduction technique is leveraged to estimate each level function, which can also result in the sample complexity of $O(\epsilon^{-4})$. In \citep{zhang2021multilevel}, the SPIDER variance-reduction technique is exploited to estimate both each level function and its gradient so that it can achieve the sample complexity of $O(\epsilon^{-3})$. However, this algorithm requires a large batch size. Moreover, it requires a small learning rate to guarantee the Lipschitz continuousness of the variance-reduced gradient.  Recently, \cite{jiang2022optimal} leveraged the STORM variance-reduction technique to estimate each level function value and its Jacobian matrix, resulting in the sample complexity of $O(\epsilon^{-3})$ with the mini-batch size of $O(1)$. However, this algorithm requires the projection operation for Jacobian matrices such that they are upper bounded.  Thus, these algorithms with the  sample complexity $O(\epsilon^{-3})$ are not practical for real-world applications.
Moreover, it is unclear how to obtain the level-independent sample complexity under the decentralized setting.

\subsection{Decentralized Compositional Optimization}
Decentralized optimization  has been extensively studied for the non-compositional optimization problem from both the computation  \citep{lian2017can,sun2020improving,xin2020near} and communication \citep{koloskova2019decentralized,koloskova2019decentralized2,gao2020periodic,song2022communication,hua2022efficient,ying2021exponential} perspectives in recent years. 
Those algorithms are based on the stochastic gradient, which is an unbiased estimator of the full gradient. 
Thus, they cannot be directly extended to the stochastic compositional optimization problem because its stochastic gradient is a biased estimator of the full gradient. Recently, to address this problem, \cite{gao2021fast} developed the decentralized stochastic compositional gradient descent algorithm for the two-level stochastic compositional problem for the first time, which can achieve the sample complexity of $O(\epsilon^{-6})$. \cite{zhao2022distributed} leveraged the STORM-like technique to estimate the inner-level function and improved the sample complexity to $O(\epsilon^{-4})$.  Moreover, \cite{gao2023achieving} developed the decentralized stochastic compositional gradient descent ascent algorithm for stochastic compositional minimax problems.  On the other hand, a series of decentralized bilevel optimization algorithms have been proposed recently, e.g., \citep{gao2023convergenceaistats,zhang2023communication,lu2022stochastic} and the related works therein.  However, all those existing compositional and bilevel optimization algorithms only focus on the two-level problem. It is unclear how to apply them to the multi-level compositional optimization problem to achieve the level-independent sample complexity.

\section{Decentralized Stochastic Multi-Level Compositional Optimization}

In this section, we present the details of our proposed algorithms under the decentralized setting. Here,  it is assumed  the devices compose a communication graph and perform  peer-to-peer communication. The adjacency matrix $W$ of this  graph satisfies the following assumption. 
\begin{assumption} \label{assumption_graph}
	$W=[w_{ij}]\in \mathbb{R}^{N\times N}$  is a symmetric and doubly stochastic matrix. Its eigenvalues satisfy $|\lambda_N|\leq |\lambda_{N-1}|\leq \cdots \leq |\lambda_2|< |\lambda_1|=1$. 
\end{assumption}
Under this assumption, we can denote the spectral gap as $1-\lambda$ where $\lambda=|\lambda_2|$. Then, we propose two decentralized optimization algorithms for solving Eq.~(\ref{loss}) in the following two subsections.

\begin{algorithm}[h]
	\caption{DSMCGDM}
	\label{alg_dscgdm}
	\begin{algorithmic}[1]
		\REQUIRE ${x}_{n,0}={x}_{0}$, $\alpha>0$, $\beta>0$, $\mu>0$, $\eta>0$.

		\FOR{$t=0,\cdots, T-1$} 
		
		\STATE ${u}_{n,  t}^{(0)}= x_{n, t}$,
		\FOR{$k=1,\cdots, K-1$} 
		\IF {$t==0$}
		\STATE  ${u}_{n,  t}^{(k)}=f_{n}^{(k)}({u}_{n,  t}^{(k-1)}; \xi_{ n, t}^{(k)}) $,
		\ELSE
		\STATE  ${u}_{n,  t}^{(k)}=(1-\beta\eta)({u}_{n,  t-1}^{(k)} - f_{n}^{(k)}({u}_{n,  t-1}^{(k-1)}; \xi_{n,  t}^{(k)}) )+  f_{n}^{(k)}({u}_{n,  t}^{(k-1)}; \xi_{ n, t}^{(k)}) $,
		\ENDIF
		\STATE ${v}_{n,  t}^{(k)}=\nabla  f_{n}^{(k)}({u}_{ n, t}^{(k-1)}; \xi_{n,  t}^{(k)}) $,
		\ENDFOR
		\STATE ${v}_{ n, t}^{(K)}=\nabla  f_{n}^{(K)}({u}_{n, t}^{(K-1)}; \xi_{n,  t}^{(K)}) $, \\ 
		\STATE ${g}_{ n, t}= {v}_{n,  t}^{(1)} {v}_{ n, t}^{(2)}\cdots {v}_{ n, t}^{(K-1)} {v}_{n,  t}^{(K)}$, 
		\IF {$t==0$}
		\STATE $m_{n, t} = {g}_{n,  t}$,  $y_{n, t}= m_{n,t}$
		\ELSE
		\STATE $m_{n, t} = (1-\mu\eta)m_{n, t-1} + \mu \eta{g}_{n,  t}$, \\
		\STATE $y_{n, t}=\sum_{n'\in \mathcal{N}_{n}}w_{nn'} y_{n', t-1} + m_{n,t} - m_{n, t-1} $,
		\ENDIF

		\STATE ${x}_{n,   t+\frac{1}{2}}=\sum_{n'\in \mathcal{N}_{n}}w_{nn'}{x}_{ n', t}  - \alpha {y}_{n,  t}$, \\ 
		${x}_{n,   t+1}=  {x}_{n,   t} + \eta ({x}_{n,   t+\frac{1}{2}} - {x}_{n,   t})$, 
		\ENDFOR
	\end{algorithmic}
\end{algorithm}

\subsection{Decentralized Stochastic Multi-level Compositional Gradient Descent with Momentum}

\paragraph{Challenges.}
The momentum technique is commonly used in optimization. However, facilitating it to multi-level SCGD is non-trivial. Under the single-machine setting, \cite{balasubramanian2022stochastic} developed the first multi-level SCGD with momentum algorithm, which applies the moving-average technique to each inner-level function and the  gradient. However, this straightforward extension can only achieve  the $O(\epsilon^{-6})$ sample complexity, which is worse than $O(\epsilon^{-4})$ of the two-level algorithm. Then, \cite{balasubramanian2022stochastic} introduced a correction term to the inner-level function estimator to address this problem. However, this correction term requires to compute the Jacobian matrix (See its Algorithm 2), which is too complicated and unclear if it works under the decentralized setting.  Especially, \textit{it is unclear whether the consensus error caused by the decentralized communication topology will worsen the convergence rate in the presence of multi-level inner functions. }
Therefore, a natural question follows:  	\textbf{How to design an efficient decentralized multi-level SCGD with momentum algorithm to achieve the level-independent sample complexity $O(\epsilon^{-4})$?}

To answer this  question, in Algorithm~\ref{alg_dscgdm}, we develop the Decentralized Stochastic Multi-level Compositional Gradient Descent with Momentum (DSMCGDM) algorithm. Specifically, to achieve the level-independent sample complexity, which can match the  decentralized SGD with momentum algorithm for non-compositional problem, we leverage the STORM-like technique to estimate the $k$-th level function (where $k\in \{1, 2, \cdots, K-1\}$), which is shown below:
\begin{equation} \label{eq_storm_like}
	\begin{aligned}
		& {u}_{n,  t}^{(k)}=(1-\beta\eta)({u}_{n,  t-1}^{(k)} - f_{n}^{(k)}({u}_{n,  t-1}^{(k-1)}; \xi_{n,  t}^{(k)}) )  \\
		& \quad \quad \quad +  f_{n}^{(k)}({u}_{n,  t}^{(k-1)}; \xi_{ n, t}^{(k)}) \ , 
	\end{aligned}
\end{equation}
where $\beta>0$, $\eta>0$ are two hyperparameters satisfying $\beta\eta<1$, ${u}_{n,  t}^{(k)}$ is the estimation of the $k$-th level function $ f_{n}^{(k)}({u}_{n,  t}^{(k-1)})$ on the $n$-th device.

It is worth noting that we do not apply this variance-reduction technique to the stochastic Jacobian matrix ${v}_{n,  t}^{(k)}\triangleq\nabla  f_{n}^{(k)}({u}_{ n, t}^{(k-1)}; \xi_{n,  t}^{(k)})$. 
After we obtain the stochastic Jacobian matrix ${v}_{n,  t}^{(k)}$ of each level function, we combine them to get the  stochastic compositional gradient ${g}_{ n, t}$ of the objective function $F_n(x)$, which is shown in Line 12. Then, we compute the momentum of this  stochastic compositional gradient in Line 16, where $\mu>0$ is a hyperparameter satisfying $\mu\eta<1$.  After that, we  leverage the gradient-tracking technique in Line 17 to communicate the momentum between different devices according to the communication topology, which is defined below:
\begin{equation}
	y_{n, t}=\sum_{n'\in \mathcal{N}_{n}}w_{nn'} y_{n, t-1} + m_{n,t} - m_{n, t-1}  \ , 
\end{equation}
where $ \mathcal{N}_{n}=\{n'|w_{nn'}>0\}$ denotes the neighbors of the $n$-th device and $w_{nn'}$ is the edge weight of the communication graph. Finally, we can leverage $y_{n, t}$ to update the model parameter on the corresponding device, which is shown in Line 19, where $\alpha>0$ is a hyperparameter.

Note that Eq.~(\ref{eq_storm_like}) has been used for \textit{non-momentum} algorithm under the single-machine setting in \citep{chen2020solving}, rather then the decentralized setting. Therefore, it is still unclear how it affects the convergence for the \textit{momentum} algorithm or the \textit{decentralized} setting.  In fact, this is the first time to apply Eq.~(\ref{eq_storm_like}) to the momentum algorithm. We believe this novel algorithmic design can also be applied to the single-machine setting to accelerate existing algorithms, e.g., \citep{chen2020solving}. Moreover, to the best of our knowledge, this is the first decentralized optimization algorithm for the stochastic  multi-level compositional optimization problem. Meanwhile, this  algorithmic design brings new challenges for convergence analysis due to the interaction between the   estimator of each level function and momentum.  We will address these challenges and show this algorithm can achieve the $O(\epsilon^{-4})$ sample complexity in Section 4.

\subsection{Decentralized Stochastic Multi-Level Compositional Variance-Reduced Gradient Descent}
To  improve the convergence rate, in Algorithm~\ref{alg_dscgdvr},  we propose our second algorithm: Decentralized Stochastic Multi-level Compositional Variance-Reduced Gradient descent algorithm (DSMCVRG).

Similar to Algorithm~\ref{alg_dscgdm}, we leverage the standard STORM technique \footnote{Compared with Algorithm~\ref{alg_dscgdm}, $\eta$ is replaced with $\eta^2$ when estimating each level function.} to estimate each level function, which is shown in  Line 7, where $\beta>0$ and $\beta\eta^2<1$.  Different from  Algorithm~\ref{alg_dscgdm}, we do not exploit the momentum to update model parameters. Instead, we leverage the variance-reduced gradient for local update, which is defined below:
\begin{equation}
	m_{n, t} = (1-\mu\eta^2)(m_{n, t-1} - {g}_{n,  t-1}^{\xi_{t}})+ {g}_{n,  t}^{\xi_{t}} \ , 
\end{equation}
where $\mu>0$ is a hyperparameter satisfying $\mu\eta^2<1$, the stochastic gradients ${g}_{n,  t}^{\xi_{t}}$ and ${g}_{n,  t-1}^{\xi_{t}}$ are defined as:
\begin{equation}
	\begin{aligned}
		& {g}_{ n, t-1}^{\xi_{t}}= \nabla  f_{n}^{(1)}({u}_{n, t-1}^{(0)}; \xi_{n,  t}^{(1)})  \nabla  f_{n}^{(2)}({u}_{n, t-1}^{(1)}; \xi_{n,  t}^{(2)})    \cdots \\
		& \quad \times \nabla  f_{n}^{(K-1)}({u}_{n, t-1}^{(K-2)}; \xi_{n,  t}^{(K-1)})  \nabla  f_{n}^{(K)}({u}_{n, t-1}^{(K-1)}; \xi_{n,  t}^{(K)}) \ ,  \\
		& {g}_{ n, t}^{\xi_{t}}= \nabla  f_{n}^{(1)}({u}_{n, t}^{(0)}; \xi_{n,  t}^{(1)})  \nabla  f_{n}^{(2)}({u}_{n, t}^{(1)}; \xi_{n,  t}^{(2)})   \cdots  \\
		& \quad \times \nabla  f_{n}^{(K-1)}({u}_{n, t}^{(K-2)}; \xi_{n,  t}^{(K-1)})  \nabla  f_{n}^{(K)}({u}_{n, t}^{(K-1)}; \xi_{n,  t}^{(K)}) \ . 
	\end{aligned}
\end{equation}
Then, based on this variance-reduced gradient, we exploit the gradient-tracking technique to update the model parameter on each device, which is shown in Lines 17 and 19. 

\begin{algorithm}[h]
	\caption{DSMCVRG}
	\label{alg_dscgdvr}
	\begin{algorithmic}[1]
		\REQUIRE ${x}_{n,0}={x}_{0}$, $\alpha>0$, $\beta>0$, $\mu>0$, $\eta>0$.

		\FOR{$t=0,\cdots, T-1$} 
		
		\STATE ${u}_{n,  t}^{(0)}= x_{n, t}$, 
		\FOR{$k=1,\cdots, K-1$} 
		
		\IF {$t==0$}
		\STATE With  batch size $S$, compute \\
		${u}_{n,  t}^{(k)}=f_{n}^{(k)}({u}_{n,  t}^{(k-1)}; \xi_{ n, t}^{(k)}) $, \quad \\
		${v}_{n,  t}^{(k)}=\nabla  f_{n}^{(k)}({u}_{ n, t}^{(k-1)}; \xi_{n,  t}^{(k)}) $, 
		\ELSE
		\STATE  ${u}_{n,  t}^{(k)}=(1-\beta\eta^2)({u}_{n,  t-1}^{(k)} - f_{n}^{(k)}({u}_{n,  t-1}^{(k-1)}; \xi_{n,  t}^{(k)}) )+  f_{n}^{(k)}({u}_{n,  t}^{(k-1)}; \xi_{ n, t}^{(k)}) $, \\
		 \STATE ${v}_{n,  t}^{(k)}=\nabla  f_{n}^{(k)}({u}_{ n, t}^{(k-1)}; \xi_{n,  t}^{(k)}) $, 
		\ENDIF
		\ENDFOR

		\IF {$t==0$}
		\STATE ${v}_{ n, t}^{(K)}=\nabla  f_{n}^{(k)}({u}_{n, t}^{(K-1)}; \xi_{n,  t}^{(K)}) $  with batch size $S$,
		\STATE  $m_{n, t} = {g}_{n,  t}^{\xi_{t}}$, \quad $y_{n, t}= m_{n,t}$,
		\ELSE
		\STATE ${v}_{ n, t}^{(K)}=\nabla  f_{n}({u}_{n, t}^{(K-1)}; \xi_{n,  t}^{(K)}) $, 
		\STATE $m_{n, t} = (1-\mu\eta^2)(m_{n, t-1} - {g}_{n,  t-1}^{\xi_{t}})+ {g}_{n,  t}^{\xi_{t}}$, \\ 
		\STATE $y_{n, t}=\sum_{n'\in \mathcal{N}_{n}}w_{nn'} y_{n, t-1} + m_{n,t} - m_{n, t-1} $,
		\ENDIF
		\STATE ${x}_{n,   t+\frac{1}{2}}=\sum_{n'\in \mathcal{N}_{n}}w_{nn'}{x}_{ n', t}  - \alpha {y}_{n,  t}$,  \\ 
		${x}_{n,   t+1}=  {x}_{n,   t} + \eta ({x}_{n,   t+\frac{1}{2}} - {x}_{n,   t})$,
		\ENDFOR
	\end{algorithmic}
\end{algorithm}

\paragraph{Novelty.} Here, we would like to emphasize the novelty on the algorithmic design in Algorithm~\ref{alg_dscgdvr}. Under the single-machine setting, existing variance-reduced multi-level compositional gradient descent algorithms \citep{zhang2021multilevel,jiang2022optimal} apply the variance-reduction technique to each level function and its stochastic Jacobian matrix. For instance, \cite{jiang2022optimal} computes the variance-reduced Jacobian matrix for each level function as follows:
\begin{equation}
	\begin{aligned}
		& {v}_{n,  t}^{(k)}=(1-\beta\eta^2)({v}_{n,  t-1}^{(k)} - \nabla f_{n}^{(k)}({u}_{n,  t-1}^{(k-1)}; \xi_{n,  t}^{(k)}) )  \\
		& \quad \quad +  \nabla f_{n}^{(k)}({u}_{n,  t}^{(k-1)}; \xi_{ n, t}^{(k)}) \ .
	\end{aligned}
\end{equation}
This kind of variance-reduced estimator for each level function suffers from some limitations. On the theoretical analysis side, when bounding the gradient estimation error for  $\nabla F_n(\cdot)$,  it requires ${v}_{n,  t}^{(k)}$ to be upper bounded in all levels and iterations. To do that, \cite{zhang2021multilevel} uses a clipping operation, which may result in a very tiny update (See $\gamma_t$ in Algorithm 3 of  \citep{zhang2021multilevel}), while \cite{jiang2022optimal} employs a projection operation to guarantee  ${v}_{n,  t}^{(k)}$ is upper bounded by the  Lipschitz constant of the deterministic Jacobian matrix  (See Eq.~(3) in \cite{jiang2022optimal}), which is an  unknown hyperparameter so that it is not feasible in practice.  On the implementation side, these algorithms are not friendly for practical applications. For instance, when applying them to the stochastic training of graph neural networks (GNN), computing the variance-reduced Jacobian matrix for each level function (i.e., each layer of GNN) requires to intervene the backpropagation in each layer, which is not easy to implement.

On the contrary, our Algorithm~\ref{alg_dscgdvr} just computes the standard stochastic Jacobian matrix ${v}_{n,  t}^{(k)}$ for each level function. This can naturally avoid the aforementioned impractical operations since the standard stochastic Jacobian matrix  is easy to bound under the commonly used assumptions. Meanwhile, it is easy to compute. However, using standard stochastic Jacobian matrix of each  level function may introduce a large estimation error. Then, a natural question follows: \textit{Can Algorithm~\ref{alg_dscgdvr} achieve  the $O(\epsilon^{-3})$ sample complexity as \citep{zhang2021multilevel,jiang2022optimal} when not using the variance reduction technique for each level function's Jacobian?} In Section 4, we provide an affirmative answer:  Our Algorithm~\ref{alg_dscgdvr} can still achieve the $O(\epsilon^{-3})$ sample complexity, even though we don't use the variance reduced Jacobian for each level function.  

All in all, our algorithm is novel and we believe our idea can be leveraged to improve  existing single-machine algorithms \citep{zhang2021multilevel,jiang2022optimal}.

\section{Convergence Analysis}
To establish the convergence rate of our algorithms, we introduce the following assumptions, which are commonly used in existing multi-level compositional optimization works \citep{yang2019multilevel,zhang2021multilevel,jiang2022optimal}. 

\begin{assumption} \label{assumption_smooth}
	For any $k \in \{1,2, \cdots, K\}$ and any $y_1, y_2 \in \mathbb{R}^{d_{k-1}}$,   there exists $L_{k} >0$ such that $\|\nabla f^{(k)}(y_1) - \nabla f^{(k)}(y_2)\| \leq L_{k} \|y_1 -y_2\|$ and $\mathbb{E}[\|\nabla f^{(k)}(y_1; \xi^{(k)}) - \nabla f^{(k)}(y_2; \xi^{(k)})\| ]\leq L_{k} \|y_1 -y_2\|$. 
	Additionally, $F_n(x)$ is $L_F$-smooth \footnote{Based on the smoothness of each level function, it is easy to prove $F_n$ is smooth \citep{yang2019multilevel,zhang2021multilevel,jiang2022optimal} so that we directly assume it is smooth.} where $L_F>0$. 
\end{assumption}

\begin{assumption} \label{assumption_bound_gradient}
	For any $k \in \{1,2, \cdots, K\}$ and any $y\in \mathbb{R}^{d_{k-1}}$,   there exists $C_k>0$ such that $ \mathbb{E}[\|\nabla f^{(k)}(y; \xi) \|^2] \leq C_k^2$ and $\|\nabla f^{(k)}(y) \|^2 \leq C_k^2 $. 
\end{assumption}

\begin{assumption} \label{assumption_bound_variance}
	For any $k\in \{1,2, \cdots, K\}$ and any $y_1, y_2 \in \mathbb{R}^{d_{k-1}}$,  there exist $\sigma_k>0$ and $\delta_k>0$  such that $\mathbb{E}[\|\nabla f^{(k)}(y; \xi)  - \nabla f^{(k)}(y) \|^2] \leq \sigma_{k}^2$ and $\mathbb{E}[\|   f^{(k)}(y; \xi) -   f^{(k)}(y) \|^2] \leq \delta_{k}^2$.
\end{assumption}
Based on these assumptions, we denote $A_k=  (\sum_{j=k}^{K-1}(\frac{L_{j+1}\prod_{i=1}^{K}C_i}{C_{j+1}} \prod_{i=k+1}^{j}C_{i}))^2$ and  $B_{k} = \frac{\prod_{j=1}^{K}C_j^2 }{C_k^2} $ for $ k\in\{1, \cdots, K-1\}$, as well as $D_{k}=\frac{(\prod_{j=1}^{K} C_{j}^2)L_{k+1}^2}{C_{k+1}^2}$ for $ k\in\{0, \cdots, K-1\} $.  Moreover, we use $\bar{z}_t$ to denote the mean value across devices for any variables throughout this paper. Then, we established the convergence rate of our two algorithms.

\begin{theorem} \label{theorem1}
	Given Assumptions~\ref{assumption_graph}-\ref{assumption_bound_variance}, by setting $\mu>0$, $\beta>0$,  $\alpha\leq \min \{{ (1-\lambda)^2}/\sqrt{\tilde{\alpha}_1}, 1/(4\sqrt{\tilde{\alpha}_2})\}$,   
	$\eta \leq \min\{\tilde{\omega}_k/(8\beta\sum_{j=1}^{K-1}  \tilde{\omega}_jC_j^2  \prod_{i=k+1}^{j}(2C_{i}^2)) , {1}/(2\alpha L_F),\\ {1}/{\beta},    {1}/{\mu}, 1\}$ for any $k\in \{1, 2, \cdots, K-1\}$,  Algorithm~\ref{alg_dscgdm} has the following convergence rate: 
	\vspace{-5pt}
	\begin{equation}
		\begin{aligned}
			& \frac{1}{T}\sum_{t=0}^{T-1} \mathbb{E}[\|\nabla F(\bar{{x}}_{t})\|^2]   \leq \frac{2(F({{x}}_{0}) - F({{x}}_{*}))}{\alpha\eta T}  +O( \frac{\mu K}{ T}) \\
			& \quad +O(\frac{K}{\eta T} ) + O( \frac{K}{\mu \eta T}) + O(\beta^2  \mu^2\eta^3   K)  +O(\mu^2\eta K )\\
			& \quad + O(\mu^3\eta^2 K)   +O( \beta^2\eta^2 K) + O(\mu \eta K) + O(\beta^2\eta K ) \ , \\
		\end{aligned}
	\end{equation}
	where  $\tilde{\omega}_k= \frac{2}{\beta} ((12A_k+8 D_{k})\mu  + 2  A_k  +  2 \beta  \sum_{j=k+1}^{K-1}( 20    A_j C_j^2+ 8 D_{j}  C_{j}^2)  \prod_{i=k+1}^{j}(2C_{i}^2) ) $ for $k\in \{1, 2, \cdots, K-1\}$, and $\tilde{\omega}_{K+1}= L_F^2+ 8(\frac{2  L_F^2}{\mu^2  }+ 8  L_F^2
	+  4 KD_{0} +  K\sum_{k=1}^{K-1}(20 A_k C_k^2+ 2 \tilde{\omega}_kC_k^2 + 8 D_{k}  C_{k}^2)(\prod_{j=1}^{k-1}(2C_{j}^2)) )$, $\tilde{\alpha}_1=4\tilde{\omega}_{K+1} + 
	8 L_F^2/\mu^2  + 32  L_F^2
	+  16 KD_{0}  +  4K\sum_{k=1}^{K-1}(20 A_k C_k^2+ 2 \tilde{\omega}_kC_k^2 + 8 D_{k}  C_{k}^2)(\prod_{j=1}^{k-1}(2C_{j}^2))$, $\tilde{\alpha}_2=2  L_F^2/\mu^2  + 8  L_F^2
	+  4 KD_{0} +  K\sum_{k=1}^{K-1}(20 A_k C_k^2+ 2 \tilde{\omega}_kC_k^2 + 8 D_{k}  C_{k}^2)(\prod_{j=1}^{k-1}(2C_{j}^2))$.

\end{theorem}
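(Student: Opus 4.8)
The plan is to build a single Lyapunov (potential) function that simultaneously controls the objective value at the averaged iterate $\bar{x}_t$, the two consensus errors (for the parameters $x_{n,t}$ and the gradient-tracking variables $y_{n,t}$), the per-level estimation errors of the inner functions, and the momentum-based gradient estimation error, then to show this potential decreases on average and telescope it. First I would use the fact that the gradient-tracking recursion preserves the mean, so that $\bar{y}_t=\bar{m}_t$ and the averaged update collapses to $\bar{x}_{t+1}=\bar{x}_t-\alpha\eta\bar{m}_t$. Applying the $L_F$-smoothness of $F$ from Assumption~\ref{assumption_smooth} then yields a descent inequality of the form $\mathbb{E}[F(\bar{x}_{t+1})]\le \mathbb{E}[F(\bar{x}_t)]-\tfrac{\alpha\eta}{2}\mathbb{E}\|\nabla F(\bar{x}_t)\|^2+(\text{error terms})$, where the errors are the gap $\|\bar{m}_t-\tfrac{1}{N}\sum_n\nabla F_n(\bar{x}_t)\|^2$ and the parameter consensus error $\tfrac{1}{N}\sum_n\|x_{n,t}-\bar{x}_t\|^2$.

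Next I would bound each error source separately. For the \emph{consensus errors}, Assumption~\ref{assumption_graph} and the spectral gap $1-\lambda$ give contraction recursions for $\tfrac{1}{N}\sum_n\|x_{n,t}-\bar{x}_t\|^2$ and $\tfrac{1}{N}\sum_n\|y_{n,t}-\bar{y}_t\|^2$, the latter driven by the one-step momentum change $m_{n,t}-m_{n,t-1}$. For the \emph{level errors}, I would define $e_{n,t}^{(k)}=u_{n,t}^{(k)}-f_n^{(k)}(u_{n,t}^{(k-1)})$ and extract from the STORM-like recursion~\eqref{eq_storm_like} a bound of the shape $\mathbb{E}\|e_{n,t}^{(k)}\|^2\le(1-\beta\eta)\mathbb{E}\|e_{n,t-1}^{(k)}\|^2+O(\beta^2\eta^2\delta_k^2)+O(\mathbb{E}\|u_{n,t}^{(k-1)}-u_{n,t-1}^{(k-1)}\|^2)$, so the error at level $k$ is forced by the movement of the next-lower estimate. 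For the \emph{gradient bias} I would bound $\|g_{n,t}-\nabla F_n(x_{n,t})\|$ by peeling the Jacobian product $v_{n,t}^{(1)}\cdots v_{n,t}^{(K)}$ one factor at a time using Assumptions~\ref{assumption_bound_gradient}--\ref{assumption_bound_variance}; this is precisely where the coefficients $A_k$ and $D_k$ (the products of the $C_i$) enter. Finally, from $\bar{m}_t=(1-\mu\eta)\bar{m}_{t-1}+\mu\eta\bar{g}_t$ I would derive a recursion for the momentum error $\mathbb{E}\|\bar{m}_t-\tfrac{1}{N}\sum_n\nabla F_n(\bar{x}_t)\|^2$.

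I would then assemble the potential $\Phi_t=F(\bar{x}_t)+(\text{consensus terms})+\sum_{k=1}^{K-1}\tilde{\omega}_k\,\mathbb{E}\|e_t^{(k)}\|^2+\tilde{\omega}_{K+1}(\text{momentum error})$. The weights $\tilde{\omega}_k$ are chosen so that the positive contribution $O(\mathbb{E}\|u_t^{(k-1)}-u_{t-1}^{(k-1)}\|^2)$ appearing in the level-$k$ recursion is absorbed by the negative drift of the level-$(k-1)$ term; this telescoping across levels is exactly what the backward sum $\sum_{j=k+1}^{K-1}$ in the definition of $\tilde{\omega}_k$, together with the step-size cap $\eta\le\tilde{\omega}_k/(8\beta\sum_j\cdots)$, is engineered to enforce. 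Summing the resulting one-step decrease of $\Phi_t$ from $t=0$ to $T-1$ and dividing by $\alpha\eta T/2$ produces the stated bound, with the $t=0$ initialization terms yielding the $O(\mu K/T)$, $O(K/(\eta T))$, and $O(K/(\mu\eta T))$ contributions and the steady-state residuals yielding the remaining $O(\cdot K)$ terms.

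The hard part will be the interaction between the momentum update for the gradient and the chained inner-function estimation errors: because $g_{n,t}$ is a product of Jacobians evaluated at the \emph{estimated} points $u_{n,t}^{(k)}$, a naive argument amplifies each per-level error by a factor growing geometrically in $K$ through $\prod_i C_i$, which would make $K$ appear in the \emph{order} of the rate and destroy level-independence. Preventing this requires the precise weighting $\tilde{\omega}_k$ so that the inter-level couplings telescope and $K$ survives only in the coefficients of the final bound; checking that the single choice of $\eta,\alpha,\beta,\mu$ simultaneously satisfies the spectral-gap contraction, the per-level STORM contraction, the momentum contraction, and the smoothness descent condition is the most delicate piece of bookkeeping.
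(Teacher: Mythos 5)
Your blueprint is essentially the paper's own proof: reduce the averaged update to $\bar{x}_{t+1}=\bar{x}_t-\alpha\eta\bar{m}_t$ via the mean-preserving property of gradient tracking, apply $L_F$-smoothness, derive contraction recursions for the two consensus errors, the STORM-like level errors, and the momentum error, and assemble a weighted potential whose weights $\tilde{\omega}_k$ are tuned (together with the cap on $\eta$) so that the inter-level couplings telescope. The coefficients $A_k$, $D_k$, the role of the backward sums $\sum_{j=k+1}^{K-1}$, and the attribution of the $O(\mu K/T)$, $O(K/(\eta T))$, $O(K/(\mu\eta T))$ terms to the initialization of the potential all match the paper.

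There is, however, one concrete gap in the bookkeeping. Your potential carries a \emph{single} momentum-error term, the network-averaged one $\mathbb{E}\|\bar{m}_t-\frac{1}{N}\sum_{n=1}^{N}\nabla F_n(\cdot)\|^2$, whose recursion you derive from $\bar{m}_t=(1-\mu\eta)\bar{m}_{t-1}+\mu\eta\bar{g}_t$. But the gradient-tracking consensus recursion that you yourself identify---the one driven by $M_{t+1}-M_t$---produces the \emph{per-node} quantity $\sum_{n=1}^{N}\mathbb{E}\|m_{n,t}-\nabla F_n(x_{n,t})\|^2$ on its right-hand side (this is exactly what appears in Lemma~\ref{lemma_y_consensus_momentum}). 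The averaged error does not dominate this sum: there is no cancellation inside $\sum_n\|\cdot\|^2$, and $\frac{1}{N}\sum_n\|a_n\|^2$ can be arbitrarily larger than $\|\frac{1}{N}\sum_n a_n\|^2$. So with the potential as you state it, the driving term of the $Y$-consensus recursion cannot be absorbed and the telescoping does not close. The fix is what the paper does: carry \emph{both} momentum-error terms, the averaged one (the paper's weight $\omega_K$, which enjoys the $1/N$ variance reduction and feeds the descent inequality) and the per-node average $\frac{1}{N}\sum_{n}\mathbb{E}\|m_{n,t}-\nabla F_{n}(x_{n,t})\|^2$ (the paper's weight $\omega_0$, chosen as $\omega_0=\omega_{K+2}\frac{4\mu\eta}{1-\lambda}$ precisely to cancel the $Y$-consensus contribution), each with its own recursion (Lemmas~\ref{lemma_m_var_momentum} and~\ref{lemma_m_var_momentum2}). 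A second, much smaller slip: in the descent step the error term should be measured against $\frac{1}{N}\sum_n\nabla F_n(x_{n,t})$ at the \emph{local} iterates, with the discrepancy to $\nabla F(\bar{x}_t)$ charged to the parameter-consensus term via $L_F$; writing it directly at $\bar{x}_t$ as you do skips that splitting, though this is routine to repair.
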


\begin{corollary} \label{corollary_stationary_point_1}
	Given Assumptions~\ref{assumption_graph}-\ref{assumption_bound_variance}, by setting $\mu=O(1)$, $\beta=O(1)$, $\alpha=O((1-\lambda)^2)$, $\eta=O(\epsilon^2)$,  $T=O((1-\lambda)^{-2}\epsilon^{-4})$, Algorithm~\ref{alg_dscgdm} can achieve the $\epsilon$-stationary point, i.e.,  $\frac{1}{T}\sum_{t=0}^{T-1} \mathbb{E}[\|\nabla F(\bar{{x}}_{t})\|^2 ] \leq \epsilon^2$. 
\end{corollary}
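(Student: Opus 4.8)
The plan is to obtain Corollary~\ref{corollary_stationary_point_1} as a direct specialization of Theorem~\ref{theorem1}: substitute the prescribed parameter values into the ten-term upper bound and verify that every term is of order $\epsilon^2$. Before the substitution, I would first check that the choices $\mu=O(1)$, $\beta=O(1)$, $\alpha=O((1-\lambda)^2)$, $\eta=O(\epsilon^2)$ are admissible, i.e., that they respect the constraints on $\alpha$ and $\eta$ in the hypothesis of Theorem~\ref{theorem1}. Since $\tilde{\alpha}_1$, $\tilde{\alpha}_2$, $\tilde{\omega}_k$, $A_k$, $B_k$, $D_k$ are all constants (depending on $L_k$, $C_k$, and $K$, but not on $\epsilon$ or $T$), the requirement $\alpha\leq\min\{(1-\lambda)^2/\sqrt{\tilde{\alpha}_1}, 1/(4\sqrt{\tilde{\alpha}_2})\}$ is met by taking the hidden constant in $\alpha=O((1-\lambda)^2)$ small enough, and the requirement $\eta\leq\min\{\tilde{\omega}_k/(8\beta\sum_j\cdots), 1/(2\alpha L_F), 1/\beta, 1/\mu, 1\}$ holds for all sufficiently small $\epsilon$ because $\eta=O(\epsilon^2)\to 0$ while each listed upper limit is bounded below by a positive constant (indeed $1/(2\alpha L_F)=\Theta((1-\lambda)^{-2})$ is large).

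Next I would carry out the term-by-term substitution, and the key observation is that the terms scaling linearly in $\eta$ are the binding ones. With $\mu,\beta=O(1)$ and $K$ treated as a constant, the terms $O(\mu^2\eta K)$, $O(\mu\eta K)$, and $O(\beta^2\eta K)$ are each $\Theta(\eta)$, so forcing them below $\epsilon^2$ is precisely what pins down $\eta=O(\epsilon^2)$; the remaining bias/variance terms $O(\beta^2\mu^2\eta^3 K)$, $O(\mu^3\eta^2 K)$, and $O(\beta^2\eta^2 K)$ are then of order $\epsilon^6$ and $\epsilon^4$, hence negligible. Having fixed $\eta=O(\epsilon^2)$ and $\alpha=O((1-\lambda)^2)$, the leading optimization term $2(F(x_0)-F(x_*))/(\alpha\eta T)$ becomes $\Theta(1/((1-\lambda)^2\epsilon^2 T))$, so requiring it to be $O(\epsilon^2)$ forces $T=\Omega((1-\lambda)^{-2}\epsilon^{-4})$, which is exactly the prescribed iteration count; here the $(1-\lambda)^2$ from $\alpha$ and the $(1-\lambda)^{-2}$ from $T$ cancel. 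Finally, I would dispatch the three remaining $1/T$-type terms: $O(\mu K/T)=O((1-\lambda)^2\epsilon^4)$, $O(K/(\eta T))=O((1-\lambda)^2\epsilon^2)$, and $O(K/(\mu\eta T))=O((1-\lambda)^2\epsilon^2)$, all $O(\epsilon^2)$ under the chosen $T$.

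Summing the finitely many terms, each $O(\epsilon^2)$, gives $\frac{1}{T}\sum_{t=0}^{T-1}\mathbb{E}[\|\nabla F(\bar{x}_t)\|^2]=O(\epsilon^2)$, and absorbing the constants into the hidden constants on the parameters yields the claimed $\leq\epsilon^2$. Since the momentum algorithm draws $O(1)$ samples per level per iteration, the total sample complexity is $O(KT)=O(K(1-\lambda)^{-2}\epsilon^{-4})$, confirming the level-independent $O(\epsilon^{-4})$ rate advertised in the introduction.

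I do not expect a genuine mathematical obstacle, since the corollary is a plug-in to Theorem~\ref{theorem1}; the one point demanding care is bookkeeping the spectral-gap dependence. Specifically, I must ensure that the $(1-\lambda)$ factors entering through $\alpha=O((1-\lambda)^2)$ cancel exactly against the $(1-\lambda)^{-2}$ in $T$ in the dominant term, and that the admissibility bound $\eta\le 1/(2\alpha L_F)$ (which loosens as $1-\lambda\to 0$) is never the active constraint on $\eta$. Verifying these two consistency checks, together with the term-by-term order verification, completes the argument.
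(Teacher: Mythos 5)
Your proposal is correct and follows essentially the same route as the paper, which treats Corollary~\ref{corollary_stationary_point_1} as a direct plug-in of the parameter choices into Theorem~\ref{theorem1} (with the paper's remark that $\tilde{\alpha}_1$, $\tilde{\alpha}_2$, and $\tilde{\omega}_k$ are constants independent of the learning rate and spectral gap playing exactly the role of your admissibility check). Your term-by-term verification—identifying the $\Theta(\eta)$ terms as binding, noting the cancellation of $(1-\lambda)^2$ between $\alpha$ and $T$ in the leading term, and confirming $\eta\le 1/(2\alpha L_F)$ is never active—is precisely the bookkeeping the corollary requires.
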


\begin{remark}
	Given  $\mu=O(1)$ and $\beta=O(1)$,  the hyperparameters $\tilde{\alpha}_i$ ($i=1, 2$) and $\tilde{\omega}_k$ ($k\in \{1, 2, \cdots, K-1, K+1\}$) are independent of the learning rate and spectral gap.  Thus, they do not affect the order of the convergence rate. 
\end{remark}

\begin{remark}
	From Corollary~\ref{corollary_stationary_point_1}, we can know that the convergence rate of Algorithm~\ref{alg_dscgdm} is $O((1-\lambda)^{-2}\epsilon^{-4})$, which is independent of the number of function levels. Meanwhile, it indicates that  the dependence on the spectral gap is $O((1-\lambda)^{-2})$. When the communication graph is fully connected, the convergence rate becomes $O(\epsilon^{-4})$, which can match the single-machine momentum algorithm \citep{balasubramanian2022stochastic}. All in all, the level-independent convergence rate is  achievable under the dencetralized setting. 
\end{remark}

\begin{remark}
	Since the mini-batch size is $O(1)$, the sample complexity is $O((1-\lambda)^{-2}\epsilon^{-4})$. Moreover,  the communication complexity is $O((1-\lambda)^{-2}\epsilon^{-4})$.
\end{remark}

\begin{theorem} \label{theorem2}
	Given Assumptions~\ref{assumption_graph}-\ref{assumption_bound_variance}, by setting  $\mu>0$, $\beta>0$,   $\alpha\leq \min \{{ (1-\lambda)^2}/\sqrt{\tilde{\alpha}_1}, 1/(4\sqrt{\tilde{\alpha}_2})\}$,  
	$\eta \leq \min\{ 0.5\sqrt{\tilde{\omega}_{k}/(2\beta\sum_{j=1}^{K-1}\tilde{\omega}_{j} C_j^2(\prod_{i=k+1}^{j}(2C_{i}^2)))}, {1}/(2\alpha L_F), \\{1}/\sqrt{\beta}, {1}/\sqrt{\mu}, 1\}$ for any $k\in \{1, 2, \cdots, K-1\}$,  Algorithm~\ref{alg_dscgdvr} has the following convergence rate:
	\begin{equation}
		\begin{aligned}
			&  \frac{1}{T}\sum_{t=0}^{T-1}\mathbb{E}[\|\nabla F(\bar{{x}}_{t})\|^2 ]  \leq \frac{2({F}(x_0) -F(x_*))}{\alpha\eta T}  +O( \frac{K}{\eta^2 TS})    \\
			& \quad+ O( \frac{K}{\mu\eta^2 TS})   +  O(\frac{\mu\eta  K}{ TS})    +  O(\frac{K}{\eta T}) +  O(\beta^2  \eta^3K) \\
			& \quad  + O(\mu^2\eta^3K) + O(\beta^2\eta^2 K) + O(\frac{\beta^2  \eta^2 K}{\mu})  \\
			& \quad + O( \mu\eta^2 K) +O( \mu \beta^2  \eta^5 K  )+ O(\mu^3\eta^5 K)   \ ,   \\
		\end{aligned}
	\end{equation}
	where  $\tilde{\omega}_{k}= \frac{16 D_{k} }{\mu N}+ 24   D_{k}   + \frac{4A_{k} }{\beta}  + 16 \sum_{j=1}^{K-1}((\frac{2}{\mu N} +3 )D_{j} C_{j}^2 )  (\prod_{i=k+1}^{j}(2C_{i}^2))$ for $ k\in \{1, 2, \cdots, K-1\}$,  $\tilde{\omega}_{K+2} = 16( \sum_{k=1}^{K-1}((\frac{8K}{\mu N }  +  12  K )D_{k} C_{k}^2 +  2\tilde{\omega}_{k}  C_k^2)(\prod_{j=1}^{k-1}(2C_{j}^2))+ \frac{ 4KD_{0}}{\mu N} + 6   KD_{0} )+  2L_F^2 $,  $\tilde{\alpha}_1=2\tilde{\omega}_{K+2} +4 K [ \sum_{k=1}^{K-1}((\frac{8}{\mu N}  +  12   )D_{k} C_{k}^2 +  2\tilde{\omega}_{k}  C_k^2)(\prod_{j=1}^{k-1}(2C_{j}^2))+ \frac{ 4D_{0}}{\mu N} + 6  D_{0} ] $, $\tilde{\alpha}_2=K\sum_{k=1}^{K-1}((\frac{8}{\mu N }  +  12   )D_{k} C_{k}^2 +  2\tilde{\omega}_{k}  C_k^2)(\prod_{j=1}^{k-1}(2C_{j}^2))+ \frac{ 4KD_{0}}{\mu N} + 6  KD_{0} $.

\end{theorem}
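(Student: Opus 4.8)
The plan is to control the iterates through a Lyapunov (potential) function that combines the objective value at the averaged iterate $\bar{x}_t$ with the accumulated estimation errors of the level functions, the gradient estimator, and the decentralized consensus and tracking errors. First I would invoke the $L_F$-smoothness of $F$ (Assumption~\ref{assumption_smooth}) to write the descent inequality for $\bar{x}_t$. Since $W$ is doubly stochastic (Assumption~\ref{assumption_graph}) and the tracking update preserves the mean, one gets $\bar{y}_t = \bar{m}_t$, so the averaged update collapses to $\bar{x}_{t+1} = \bar{x}_t - \alpha\eta\,\bar{m}_t$. Splitting $\bar{m}_t = \nabla F(\bar{x}_t) + (\bar{m}_t - \nabla F(\bar{x}_t))$ turns the descent lemma into a bound involving $-\|\nabla F(\bar{x}_t)\|^2$, the gradient-estimation error $\mathbb{E}\|\bar{m}_t - \nabla F(\bar{x}_t)\|^2$, and the consensus error $\tfrac1N\sum_n\mathbb{E}\|x_{n,t}-\bar{x}_t\|^2$.

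Next I would derive recursions for each source of error. For the consensus and gradient-tracking errors I would use the $\lambda$-contraction of $W$ on the mean-zero subspace to obtain geometric decay with forcing terms proportional to $\alpha^2$ and to the increments $\|m_{n,t}-m_{n,t-1}\|^2$; the constraint $\alpha\le(1-\lambda)^2/\sqrt{\tilde{\alpha}_1}$ is what makes these contractions dominate. For the gradient estimator I would unroll the STORM recursion $m_{n,t}=(1-\mu\eta^2)(m_{n,t-1}-g_{n,t-1}^{\xi_t})+g_{n,t}^{\xi_t}$, separating the martingale (variance) part from the drift caused by the change of iterate and by the level-function estimation errors. The variance part contributes the $O(K/(\eta^2 TS))$-type terms (the batch $S$ enters only through the $t=0$ initialization), while the drift is bounded using Assumptions~\ref{assumption_bound_gradient}--\ref{assumption_bound_variance} together with the telescoping products $\prod_{i}(2C_i^2)$ that appear in $\tilde{\omega}_k$.

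The core of the argument is the per-level analysis. For each $k$ I would write the STORM recursion for $u_{n,t}^{(k)}$ and bound $\tfrac1N\sum_n\mathbb{E}\|u_{n,t}^{(k)}-f_n^{(k)}(u_{n,t}^{(k-1)})\|^2$, where the forcing term couples to the next-inner level error $\|u_{n,t}^{(k-1)}-f_n^{(k-1)}(\cdots)\|^2$ through the Lipschitz constant $L_k$ and the bounded Jacobian $C_k$. I would then form the weighted sum $\sum_k \tilde{\omega}_k\cdot(\text{level-}k\text{ error})$, choosing the weights $\tilde{\omega}_k$ so that the cross-level coupling telescopes: the product factors $\prod_{i=k+1}^{j}(2C_i^2)$ in the definitions of $\tilde{\omega}_k$, $A_k$, $D_k$ are precisely calibrated so that summing over $k$ leaves a coefficient that is linear in $K$ rather than exponential. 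This is the step I expect to be the main obstacle --- ensuring that the errors do not amplify multiplicatively across the $K$ compositions, which is exactly what makes the final rate level-independent; the coupling between the variance-reduced gradient and the non-variance-reduced Jacobians introduces extra terms that must be absorbed by the $\eta^2$ scaling in the level recursions.

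Finally I would assemble the potential $H_t = F(\bar{x}_t) + (\text{weighted level errors}) + (\text{weighted gradient-estimator and consensus/tracking errors})$, with the auxiliary weights $\tilde{\omega}_{K+2}$, $\tilde{\alpha}_1$, $\tilde{\alpha}_2$ chosen so that every non-negative term produced by the descent and error recursions is absorbed, leaving $\mathbb{E}[H_{t+1}] \le \mathbb{E}[H_t] - \tfrac{\alpha\eta}{2}\mathbb{E}\|\nabla F(\bar{x}_t)\|^2 + (\text{residual terms})$; the stated upper bounds on $\eta$ (in particular $\eta\le 0.5\sqrt{\tilde{\omega}_k/(2\beta\sum_j\tilde{\omega}_j C_j^2\prod_i(2C_i^2))}$ and $\eta\le 1/(2\alpha L_F)$) are what guarantee the absorption. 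Telescoping from $t=0$ to $T-1$, using $H_T\ge F(x_*)$, and dividing by $\alpha\eta T/2$ then yields the claimed bound, with each residual matching one of the $O(\cdot)$ expressions in the statement.
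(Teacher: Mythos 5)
Your plan follows the paper's own proof essentially step for step: the same potential function (objective value plus weighted level-estimation errors, STORM gradient-estimator errors, and consensus/tracking errors), the same descent-lemma start using $\bar{x}_{t+1}=\bar{x}_t-\alpha\eta\,\bar{m}_t$, the same $\lambda$-contraction recursions for $\|X_t-\bar{X}_t\|_F^2$ and $\|Y_t-\bar{Y}_t\|_F^2$, the same calibration of the weights $\tilde{\omega}_k$ via the products $\prod_{i}(2C_i^2)$ so the cross-level coupling stays linear in $K$, and the same accounting of the batch size $S$ entering only through $\mathcal{H}_0$. The only presentational difference is that the paper runs the STORM error recursion for $m_{n,t}$ against $h_{n,t}=\nabla f_n^{(1)}(u_{n,t}^{(0)})\cdots\nabla f_n^{(K)}(u_{n,t}^{(K-1)})$ rather than directly against $\nabla F_n(x_{n,t})$, deferring the $h_{n,t}$-to-$\nabla F_n$ gap to the level-error terms via Lemma~\ref{lemma_f_u_f_x_var} --- which is precisely the ``drift caused by the level-function estimation errors'' you describe, so your approach is the same in substance.
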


\begin{corollary} \label{corollary_stationary_point_2}
	Given Assumptions~\ref{assumption_graph}-\ref{assumption_bound_variance}, by setting $\mu=O(1)$, $\beta=O(1)$, $\alpha=O((1-\lambda)^2)$, $S=O(\epsilon^{-1})$, $\eta=O(\epsilon)$,  $T=O((1-\lambda)^{-2}\epsilon^{-3})$, Algorithm~\ref{alg_dscgdvr} can achieve $\epsilon$-stationary point.
\end{corollary}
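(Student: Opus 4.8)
The plan is to obtain Corollary~\ref{corollary_stationary_point_2} as a direct consequence of Theorem~\ref{theorem2}: I would substitute the prescribed parameter orders into the convergence bound and check that every one of the twelve terms on the right-hand side is $O(\epsilon^2)$. Before substituting, I would first confirm that the chosen parameters are \emph{admissible}, i.e. they obey the step-size constraints of Theorem~\ref{theorem2}. Under $\mu=O(1)$ and $\beta=O(1)$ the quantities $\tilde\omega_k$, $\tilde\alpha_1$, $\tilde\alpha_2$ are constants independent of $\eta$ and of the spectral gap (the factor $1/(\mu N)$ occurring in them is bounded), exactly as in the Remark following Theorem~\ref{theorem1}. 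Hence $\alpha=O((1-\lambda)^2)$ is consistent with $\alpha\leq\min\{(1-\lambda)^2/\sqrt{\tilde\alpha_1},\,1/(4\sqrt{\tilde\alpha_2})\}$ for a suitable hidden constant, and $\eta=O(\epsilon)$ is consistent with $\eta\leq\min\{0.5\sqrt{\tilde\omega_k/(\cdots)},\,1/(2\alpha L_F),\,1/\sqrt{\beta},\,1/\sqrt{\mu},\,1\}$: the binding term is $1/(2\alpha L_F)=O((1-\lambda)^{-2})$, which exceeds $O(\epsilon)$ for small $\epsilon$ since $(1-\lambda)^{-2}\geq 1$, while the remaining bounds are $\Theta(1)$ and also dominate $O(\epsilon)$.

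Next I would plug $S=O(\epsilon^{-1})$, $\eta=O(\epsilon)$, $T=O((1-\lambda)^{-2}\epsilon^{-3})$, $\alpha=O((1-\lambda)^2)$, and $\mu=\beta=O(1)$ into each term, treating $K$ as a constant. For the optimization-error term, $\alpha\eta T=\Theta((1-\lambda)^2\cdot\epsilon\cdot(1-\lambda)^{-2}\epsilon^{-3})=\Theta(\epsilon^{-2})$, so $\frac{2(F(x_0)-F(x_*))}{\alpha\eta T}=O(\epsilon^2)$, the spectral-gap factors cancelling. The initialization-variance terms $\frac{K}{\eta^2 TS}$ and $\frac{K}{\mu\eta^2 TS}$ become $O((1-\lambda)^2\epsilon^2)=O(\epsilon^2)$, using $\eta^2 TS=\Theta((1-\lambda)^{-2}\epsilon^{-2})$; here the batch size $S=O(\epsilon^{-1})$ is essential, since without it $\frac{K}{\eta^2 T}=\Theta((1-\lambda)^2\epsilon)$ would only be $O(\epsilon)$, and the large initialization batch supplies the missing factor of $\epsilon$. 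Likewise $\frac{K}{\eta T}=O((1-\lambda)^2\epsilon^2)$, while $O(\beta^2\eta^2 K)$, $O(\beta^2\eta^2 K/\mu)$, and $O(\mu\eta^2 K)$ are each exactly $\Theta(\epsilon^2)$; the terms $O(\beta^2\eta^3 K)$, $O(\mu^2\eta^3 K)$ are $O(\epsilon^3)$, and $O(\mu\eta K/(TS))$, $O(\mu\beta^2\eta^5 K)$, $O(\mu^3\eta^5 K)$ are of even higher order. Summing, every term is $O(\epsilon^2)$, so $\frac{1}{T}\sum_{t}\mathbb{E}[\|\nabla F(\bar x_t)\|^2]\leq\epsilon^2$, which is precisely the asserted $\epsilon$-stationary point.

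The substitution itself is mechanical once Theorem~\ref{theorem2} is available; the main obstacle is bookkeeping rather than analysis. Two points require care: (i) confirming that the constants $\tilde\alpha_1,\tilde\alpha_2,\tilde\omega_k$ carry no hidden dependence on $\epsilon$ or $1-\lambda$, so that they cannot perturb the orders—this is where I lean on $\mu,\beta=O(1)$ and the boundedness of $1/(\mu N)$; and (ii) tracking the spectral-gap dependence so that the $(1-\lambda)^{-2}$ in $T$ is exactly matched by the $(1-\lambda)^2$ in $\alpha$, yielding a clean $O(\epsilon^2)$. I would also highlight the diagnostic role of $S$: it is used only at $t=0$, so the $1/S$ terms are initialization-variance contributions, and the relation $\eta^2 TS=\Theta((1-\lambda)^{-2}\epsilon^{-2})$ is what forces $S=\Theta(\epsilon^{-1})$ once $\eta=\Theta(\epsilon)$ and $T=\Theta((1-\lambda)^{-2}\epsilon^{-3})$ are fixed. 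The stated sample complexity then follows by counting $O(S)$ samples at initialization plus $O(1)$ per iteration over $T$ iterations, giving $O((1-\lambda)^{-2}\epsilon^{-3})$, consistent with the level-independent rate claimed in the introduction.
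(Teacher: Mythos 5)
Your proposal is correct and follows essentially the same route as the paper: Corollary~\ref{corollary_stationary_point_2} is obtained exactly by verifying the admissibility of the parameter choices against the step-size constraints of Theorem~\ref{theorem2} (where $\mu,\beta=O(1)$ make $\tilde{\omega}_k,\tilde{\alpha}_1,\tilde{\alpha}_2$ constants) and then substituting $\alpha=O((1-\lambda)^2)$, $\eta=O(\epsilon)$, $S=O(\epsilon^{-1})$, $T=O((1-\lambda)^{-2}\epsilon^{-3})$ into each term of the bound to conclude every term is $O(\epsilon^2)$. Your term-by-term bookkeeping, including the observation that $S=O(\epsilon^{-1})$ is exactly what upgrades the initialization-variance terms $K/(\eta^2 TS)$ from $O(\epsilon)$ to $O(\epsilon^2)$, matches the paper's reasoning.
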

\begin{remark}
	Given  $\mu=O(1)$ and $\beta=O(1)$,  the hyperparameters $\tilde{\alpha}_i$ ($i=1, 2$) and $\tilde{\omega}_k$ ($k\in \{1, 2, \cdots, K-1, K+2\}$) also do not affect the order of the convergence rate. 
\end{remark}

\begin{remark}
	From Corollary~\ref{corollary_stationary_point_2}, we can know that the convergence rate of Algorithm~\ref{alg_dscgdvr} is $O((1-\lambda)^{-2}\epsilon^{-3}))$, which is also independent of the number of function levels and has  the dependence on the spectral gap with $O((1-\lambda)^{-2})$. Moreover, this convergence rate is better than  Algorithm~\ref{alg_dscgdm}. Additionally, when the communication graph is fully connected, the convergence rate  can match the single-machine  algorithms \citep{zhang2021multilevel,jiang2022optimal}, but our Algorithm~\ref{alg_dscgdvr} requires much milder operations than \citep{zhang2021multilevel,jiang2022optimal}.
\end{remark}

\begin{remark}
	Since the mini-batch size is $O(1)$ except the first iteration, the sample complexity is $O((1-\lambda)^{-2}\epsilon^{-3}))$. Similarly, we can know that the communication complexity is $O((1-\lambda)^{-2}\epsilon^{-3}))$.
\end{remark}

\textbf{Discussions.} Due to the multi-level nested structure and the decentralized communication scheme, it is quite challenging to establish the convergence rate of our algorithms. Specifically, compared with the decentralized  two-level compositional optimization problem, the multi-level nested structure makes the convergence analysis more difficult. For instance, when bounding $ \mathbb{E}[\|u_{n, t}^{(k)} - f_{n}^{(k)}(u_{n, t}^{(k-1)}) \|^2]$ in Lemma~\ref{lemma_u_var_momentum}, its upper bound depends on the update of the lower-level function estimator $\mathbb{E}[\|u_{n, t-1}^{(k-1)}- u_{n,t}^{(k-1)} \|^2] $, which further has a quite complicated upper bound as below:
\vspace{-5pt}
\begin{equation} \label{eq_u_inc}
	\begin{aligned}
		&  \mathbb{E}[\|u_{n,t}^{(k-1)}  - u_{n, t-1}^{(k-1)} \|^2 ] \leq  \Big(\prod_{j=1}^{k-1}(2C_{j}^2)\Big)\mathbb{E}[\|u_{n, t-1}^{(0)} - u_{n,t}^{(0)} \|^2] \\
		& \quad + 2\beta^2  \eta^2\sum_{j=1}^{k-1} \Big(\prod_{i=j+1}^{k-1}(2C_{i}^2)\Big)\mathbb{E}[\|u_{n, t-1}^{(j)}- f_{n}^{(j)}(u_{n, t-1}^{(j-1)})  \|^2 ]  \\
		& \quad + 2\beta^2  \eta^2\sum_{j=1}^{k-1} \Big(\prod_{i=j+1}^{k-1}(2C_{i}^2)\Big)\delta_{j}^2  \ . \\
	\end{aligned}
\end{equation}
On the contrary, in the two-level compositional optimization problem, $\mathbb{E}[\|u_{n, t-1}^{(k-1)}- u_{n,t}^{(k-1)} \|^2] $ becomes the update of model parameters, which is much easier to bound.  On the other hand, compared with the single-machine multi-level compositional optimization problem, $\mathbb{E}[\|u_{n, t-1}^{(0)} - u_{n,t}^{(0)} \|^2] $ in Eq.~(\ref{eq_u_inc}) involves the decentralized communication operation, which makes it more difficult to bound.  

Furthermore, the  multi-level structure and the decentralized communication scheme  bring more challenges to bound the consensus error, e.g., Lemma~\ref{lemma_y_consensus_momentum} and Lemma~\ref{lemma_y_consensus_var}.  Last but not least, our algorithm does not apply the variance-reduction technique to the stochastic Jacobian matrix of each level function. Thus, we need to carefully bound the gradient estimation error to guarantee the desired convergence rate. This has never been studied before so that we need to develop new strategies to bound the gradient  estimation error, e.g., Lemma~\ref{lemma_m_var_var}. All in all, the theoretical analysis is challenging.


To address those challenges, we developed novel potential functions to establish the convergence rate of our algorithms. In particular, to prove Theorem~\ref{theorem1}, we proposed the following potential function:
\begin{equation}
	\begin{aligned}
			& 	\mathcal{H}_{t} = \mathbb{E}[F({\bar{x}}_{t}) ] + \omega_0\frac{1}{N}\sum_{n=1}^{N}\mathbb{E}\Big[\Big\|{m}_{n,t} -\nabla F_{n}({{x}}_{n,t})\Big\|^2\Big]  \\
			& + \frac{1}{N} \sum_{n=1}^{N}\sum_{k=1}^{K-1}\omega_k \mathbb{E}[\|u_{n, t}^{(k)} -f_{n}^{(k)}(u_{n, t}^{(k-1)}) \|^2]   \\
			& + \omega_{K}\mathbb{E}\Big[\Big\|\frac{1}{N}\sum_{n=1}^{N}{m}_{n,t} -\frac{1}{N}\sum_{n=1}^{N}\nabla F_{n}({{x}}_{n,t})\Big\|^2\Big]   \\
			& + \omega_{K+1}\frac{1}{N} \mathbb{E}[\|X_{t} - \bar{X}_{t}\|_{F}^2 ]+ \omega_{K+2}\frac{1}{N} \mathbb{E}[\|Y_{t} - \bar{Y}_{t}\|_{F}^2 ]  \ , \\
		\end{aligned}
\end{equation}
where $\omega_i>0$ ($i\in \{0, 1, \cdots, K+2\}$) are determined in our proof, which actually is challenging  due to  the interaction between the  multi-level structure and the decentralized communication scheme. 

Moreover, since this potential function cannot be applied to Theorem~\ref{theorem2}, we proposed the following potential function to prove Theorem~\ref{theorem2}: 
\begin{equation}
	\begin{aligned}
			& 	\mathcal{H}_{t} = \mathbb{E}[F({\bar{x}}_{t}) ] + \frac{1}{N} \sum_{n=1}^{N}\sum_{k=1}^{K-1}\omega_k \mathbb{E}[\|u_{n, t}^{(k)} -f_{n}^{(k)}(u_{n, t}^{(k-1)}) \|^2 ]  \\
			& + \omega_{K} \mathbb{E}[\|\bar{m}_{t}- \bar{h}_{t}\|^2]   + \omega_{K+1}\frac{1}{N} \sum_{n=1}^{N}\mathbb{E}[\|{m}_{n,t} -h_{n,t}\|^2 ] \\
			& + \omega_{K+2}\frac{1}{N}\mathbb{E}[\|X_{t} - \bar{X}_{t}\|_{F}^2] + \omega_{K+3}\frac{1}{N}\mathbb{E}[\|Y_{t} - \bar{Y}_{t}\|_{F}^2 ] \ ,
		\end{aligned}
\end{equation}
where $h_{n,t}= \nabla f_{n}^{(1)} ( u_{n,t}^{(0)})\cdots\nabla f_{n}^{(K)}( u_{n,t}^{(K-1)}) $, and $\omega_i>0$ ($i\in \{1, \cdots, K+3\}$) are determined in our proof. 

\begin{figure*}[t]
	\centering 
	\hspace{-10pt}
	\subfigure[Ring Graph (Support)]{
		\includegraphics[scale=0.27]{./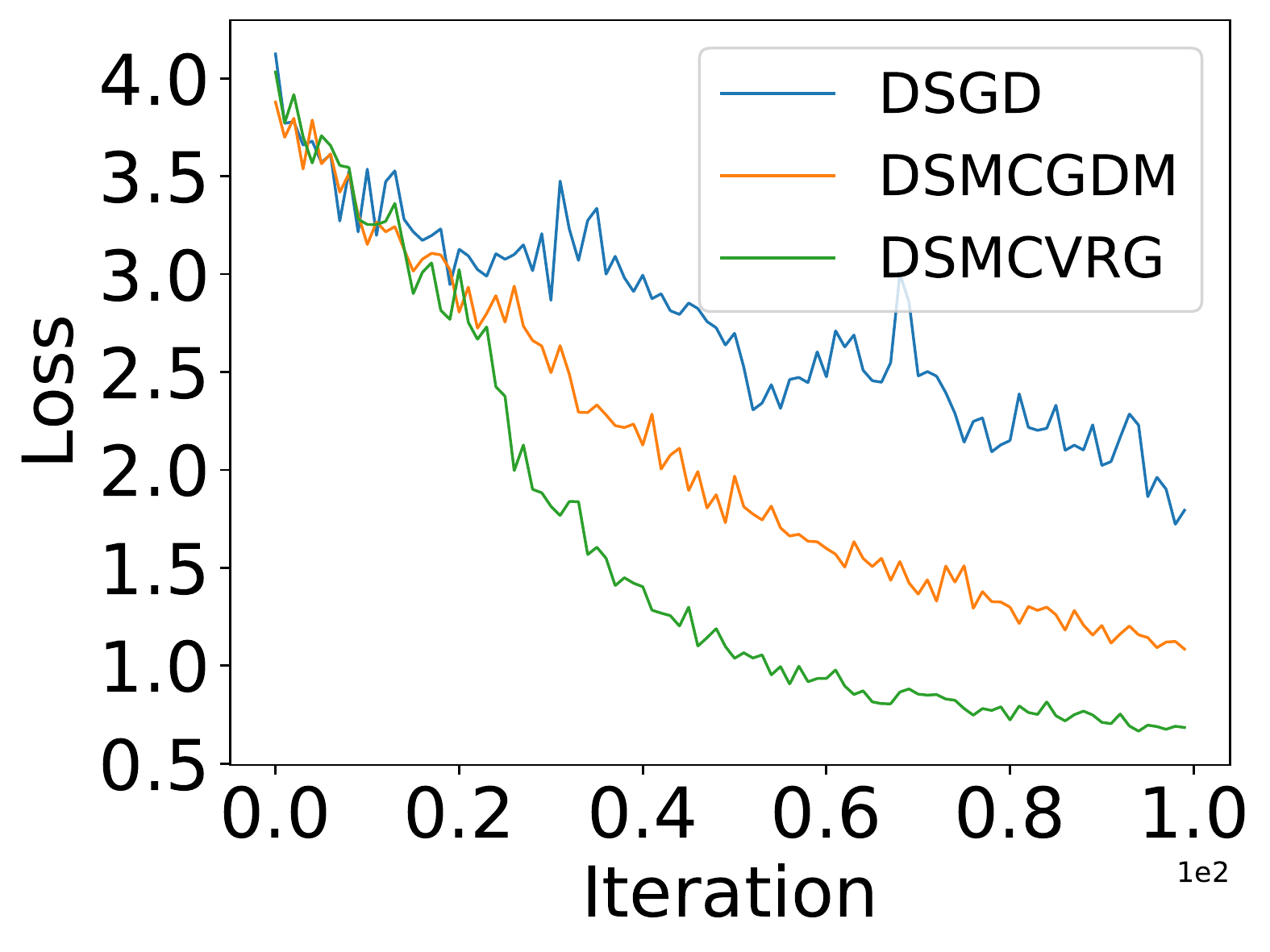}
	}
	\hspace{-15pt}
	\subfigure[Ring Graph (Query)]{
		\includegraphics[scale=0.27]{./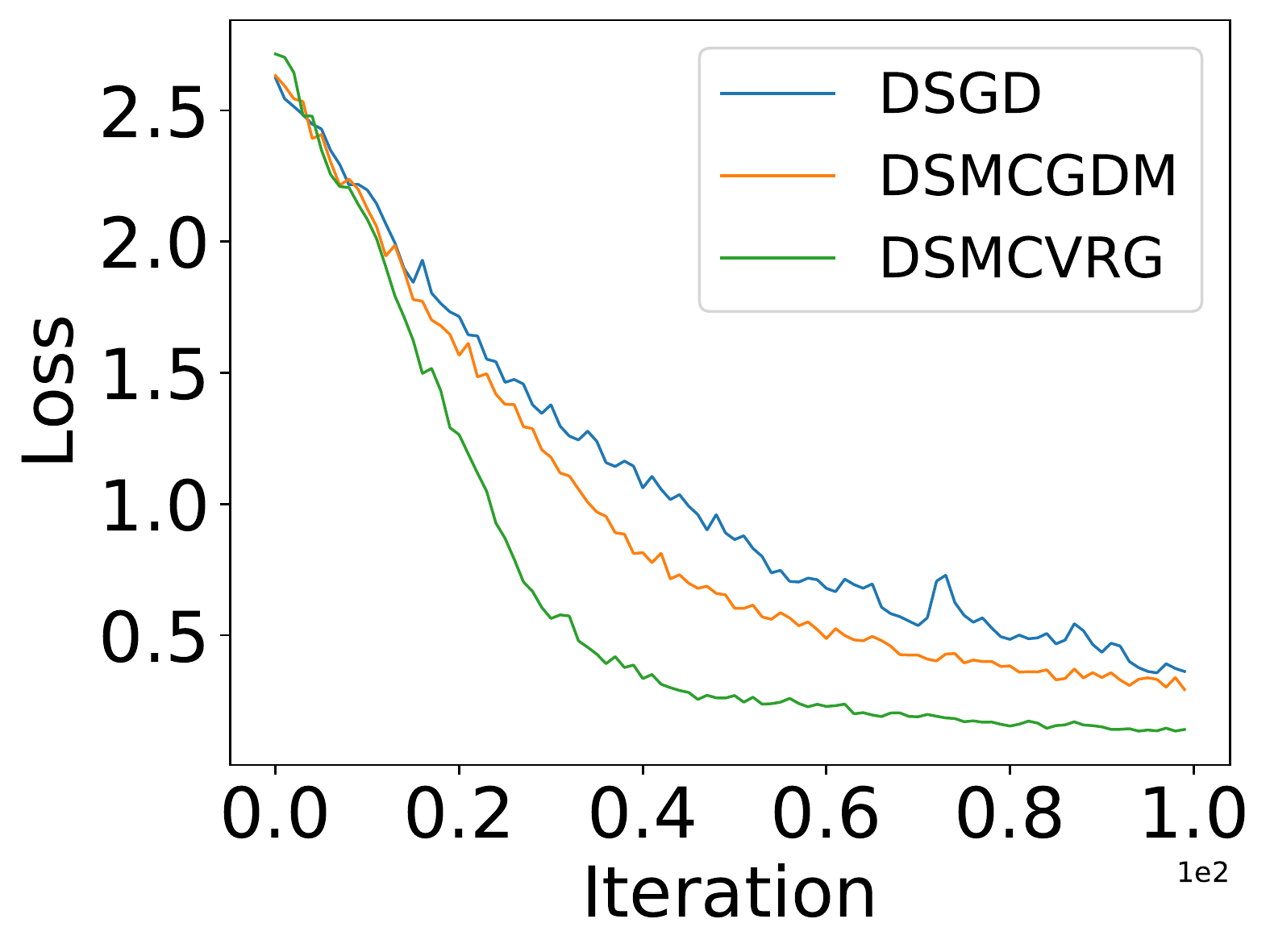}
	}
	\hspace{-15pt}
	\subfigure[Random Graph (Support)]{
		\includegraphics[scale=0.27]{./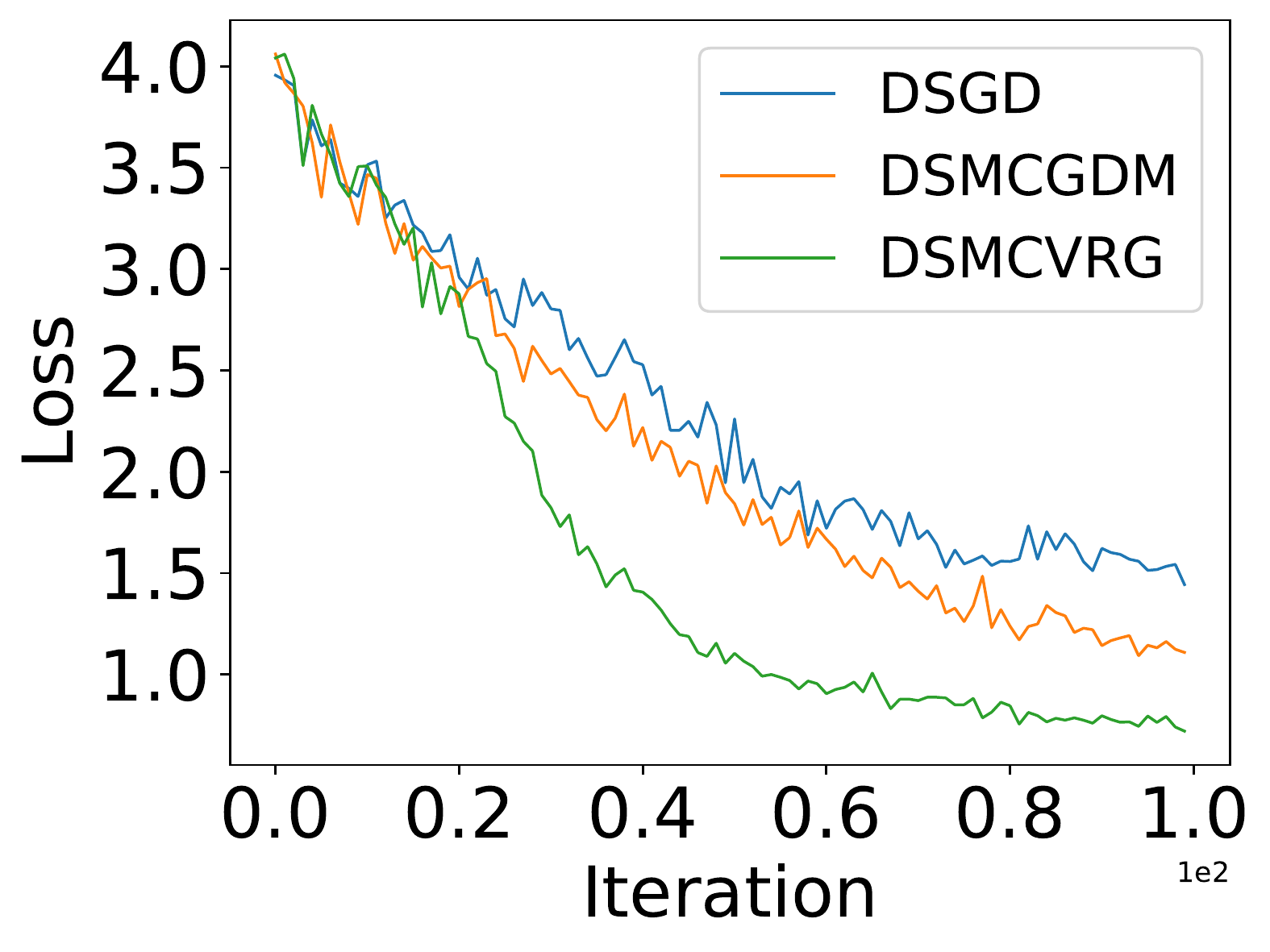}
	}
	\hspace{-15pt}
	\subfigure[Random Graph (Query)]{
		\includegraphics[scale=0.27]{./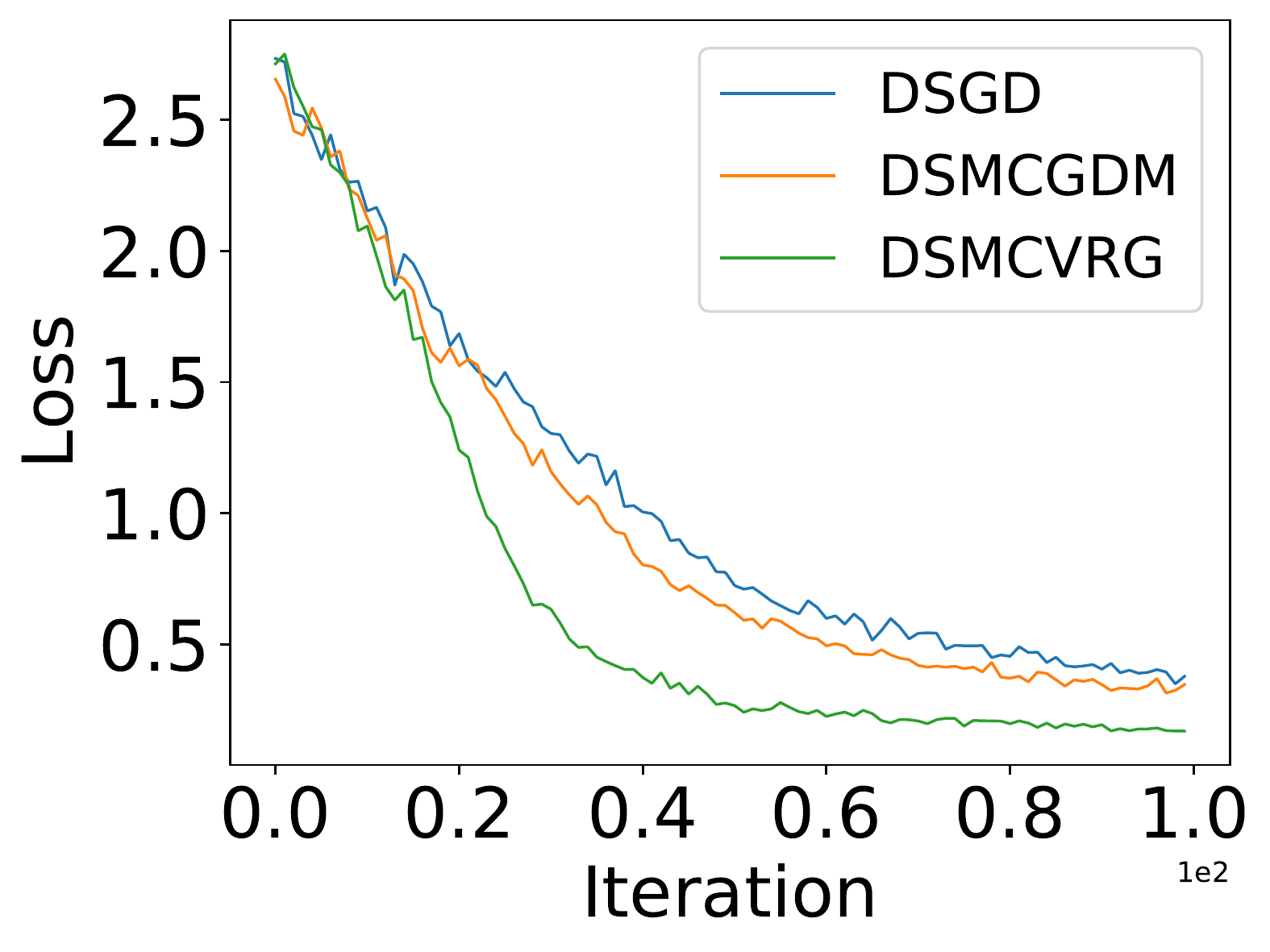}
	}
	\caption{Regression: The  loss function value on support and query sets versus the number of iterations for the ring and random graph.  }
	\label{regression}
\end{figure*}

\begin{figure*}[ht]
	\centering 
	\hspace{-10pt}
	\subfigure[Ring Graph (Support)]{
		\includegraphics[scale=0.27]{./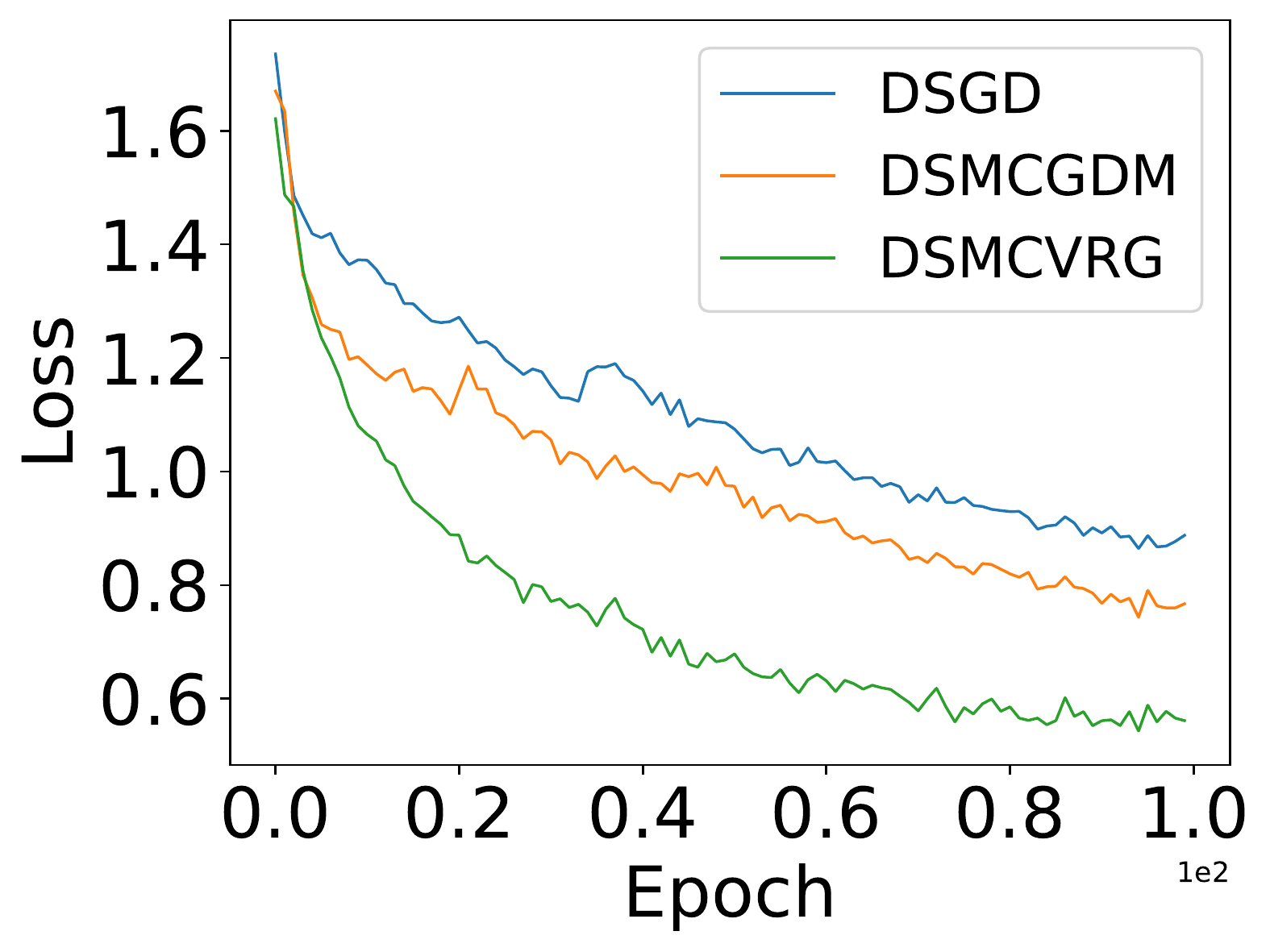}
	}
	\hspace{-15pt}
	\subfigure[Ring Graph (Query)]{
		\includegraphics[scale=0.27]{./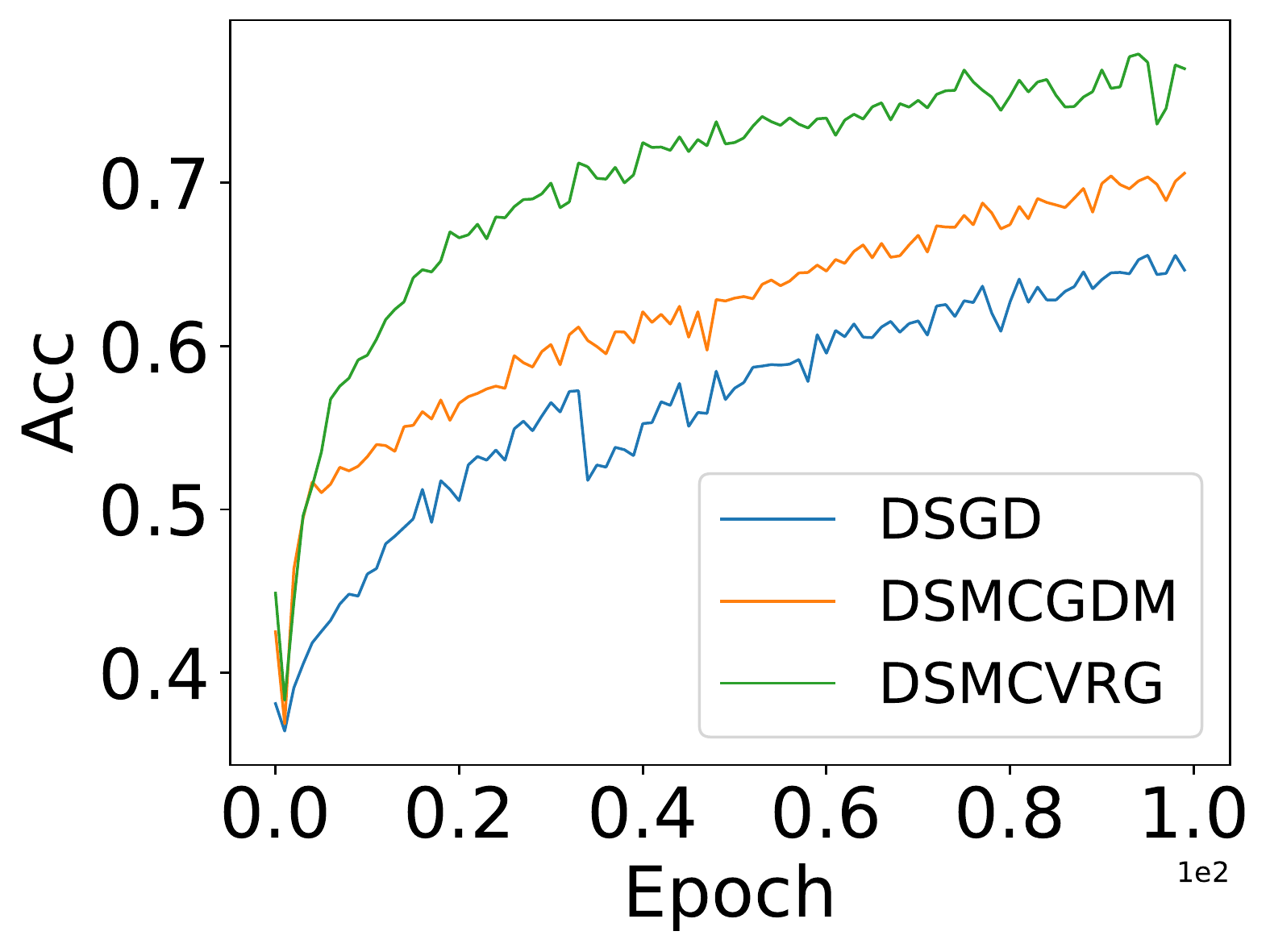}
	}
	\hspace{-15pt}
	\subfigure[Random Graph (Query)]{
		\includegraphics[scale=0.27]{./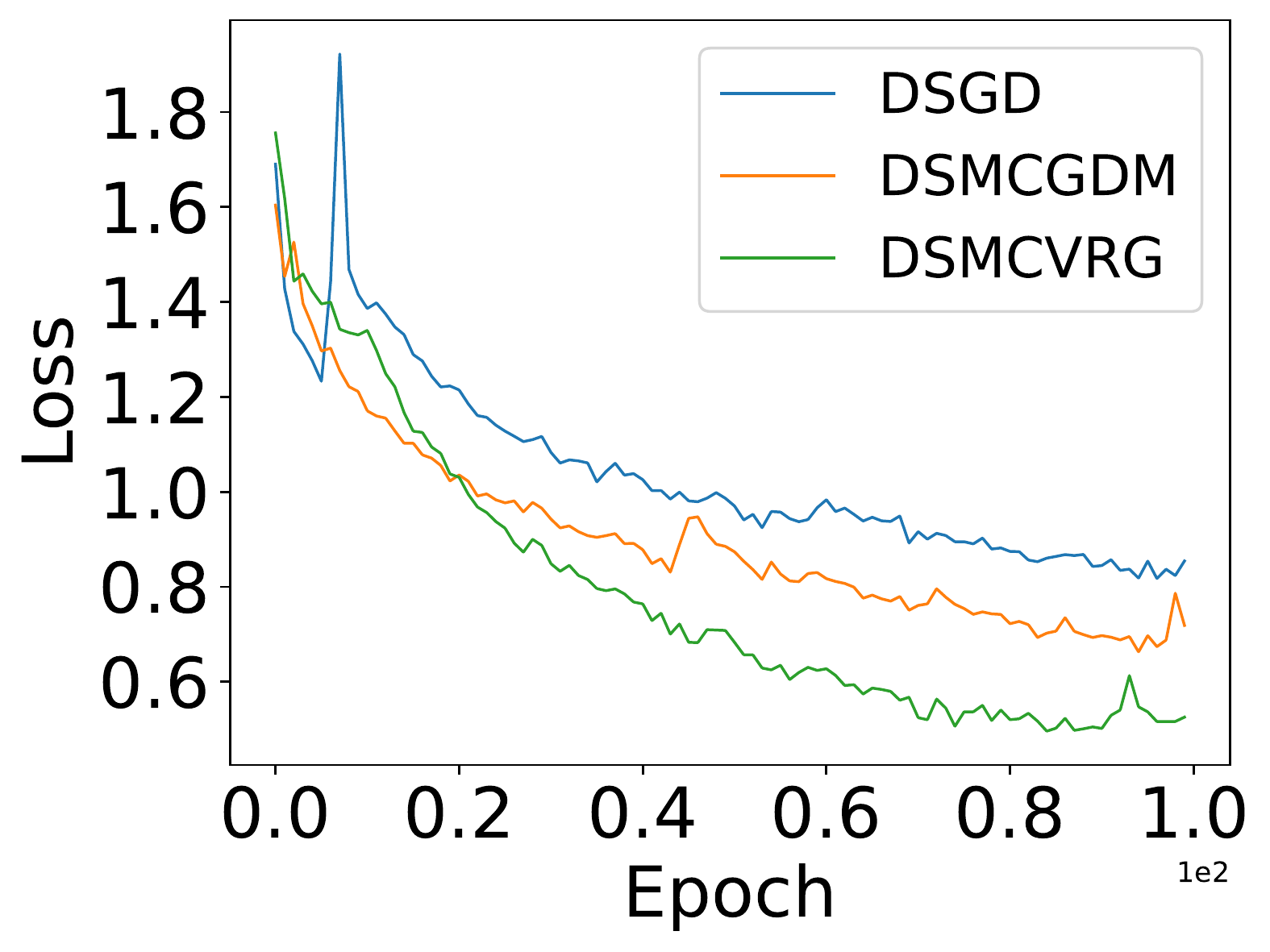}
	}
	\hspace{-15pt}
	\subfigure[Random Graph (Support)]{
		\includegraphics[scale=0.27]{./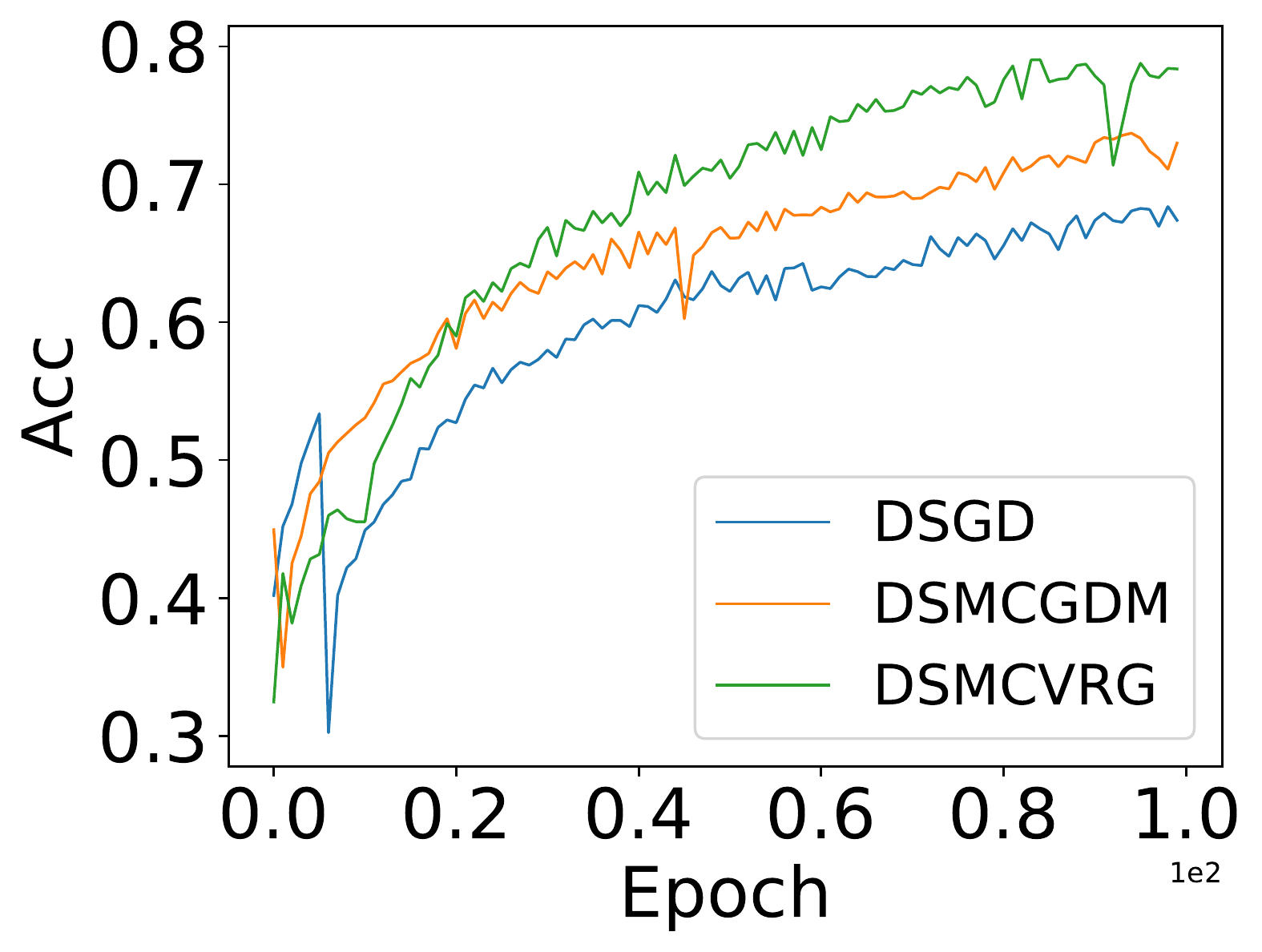}
	}
	\caption{Classification: The loss function value on support set and test accuracy on query set versus the number of epochs for the ring and random graph.  }
	\label{classification}
\end{figure*}

Based on these two novel potential functions, the task boils down to studying how each term evolves across iterations and determining its coefficient. The detailed proof can be found in Appendix.

\section{Experiment}
In this section, we apply our proposed algorithms to the multi-step model-agnostic meta-learning task to verify the performance of our algorithms.

\subsection{Multi-Step Model-Agnostic Meta-Learning}
Model-agnostic meta-learning (MAML) \citep{finn2017model} is to learn an initialization model that can be adapted to a new task via a couple of steps of stochastic gradient descent. Basically, the one-step MAML under the decentralized setting is defined as below:
\begin{align}
	&	\min _{x  \in \mathbb{R}^d}  \frac{1}{N}\sum_{n=1}^{N} \mathbb{E}_{i\sim \mathcal{P}_{n, \text{task}}, \zeta_{n} \sim \mathcal{D}_{n, \text {query}_i}}  \left[\mathcal{L}_{n,i}\left(y; \zeta_{n}\right)\right], \\
	& \text{where}  \ 	y =  x- \nu  \mathbb{E}_{\xi_n \sim \mathcal{D}_{n, \text {support}_i}}\nabla \mathcal{L}_{n,i}\left(x;  \xi_{n}\right)  \label{eq_mamal_one_step} \ , 
\end{align}
where Eq.~(\ref{eq_mamal_one_step}) denotes one-step gradient descent, $\nu$ is the learning rate, $\mathcal{P}_{n, \text{task}}$ denotes the task distribution on the $n$-th device, $\mathcal{D}_{n, \text {query}_i}$ ($\mathcal{D}_{n, \text {support}_i}$) represents the query (support) set of the $i$-th task on the $n$-th device. This one-step update can be viewed as a two-level compositional optimization problem. If taking multiple gradient descent steps, this problem becomes a multi-level compositional optimization problem \citep{jiang2022optimal,chen2020solving}. Therefore, we can apply our algorithms to the multi-step MAML problem. In our experiment, we will focus on two tasks: regression and classification tasks.

\subsection{Experimental Settings and Results}
\textbf{Regression.} For the regression problem, we follow \citep{finn2017model} to generate a sinewave dataset. Specifically, when generating the sine wave, the amplitude is randomly picked from $[0.1, 5.0]$, the phase is from $[0, \pi]$, and the input is from $[-5, 5]$. The model used for this task is a fully-connected neural network with the dimensionality as $[1,40,40, 1]$. For the support set, the meta-batch size (tasks) on each device is set to 200 and the number of samples for each task is 10. For the query set, the meta-batch size is 500 and the number of samples in each task is also 10. Moreover, the number of gradient descent updates in Eq.~(\ref{eq_mamal_one_step}) is $3$ so that it is a four-level compositional optimization problem. 
The learning rate $\nu$ is  $0.01$.

\textbf{Classification.} In this experiment, we use Omniglot dataset, which has 1,623 characters (tasks) and each character has 20 images. 1,200 tasks are used as the support set and the left tasks are used as the query set. Following \citep{finn2017model}, we employ the 5-way-1-shot setting.  The model we used has four convolutional layers, where each layer has 64 $3\times 3$ filters, and one linear layer.  The meta-batch size (tasks) on each device is set to 8. The number of gradient descent updates in Eq.~(\ref{eq_mamal_one_step}) is set to 3 so that it is also a four-level compositional optimization problem. The learning rate $\nu$ is  $0.01$ too.

In our experiments, we select $\mu$ and $\beta$ from $\{1, 3, 5, 7, 9\}$, and fix $\alpha$ to $1.0$. Additionally, we set $\epsilon^2=0.1$. Then,  we set the learning rate $\eta=\epsilon^2$ for Algorithm~\ref{alg_dscgdm} in terms of Corollary~\ref{corollary_stationary_point_1}, and $\eta=\epsilon$ for Algorithm~\ref{alg_dscgdvr} according to Corollary~\ref{corollary_stationary_point_2}. Moreover, we use four devices in our experiments. The topology we used includes the ring graph and random graph. Here, the random graph is generated from an Erdos-Renyi random graph with the edge probability being 0.4. As for the baseline algorithm, we use the standard decentralized SGD (DSGD) \citep{lian2017can} since there does not exist other decentralized multi-level compositional  algorithms. In our experiments, the learning rate of DSGD is 0.1

In Figure~\ref{regression}, we report the support and query loss function values versus the number of iterations for the regression task. It is easy to find that our two algorithms outperform the standard DSGD algorithm. The reason is that our algorithms leverage the variance-reduction technique to control the estimation error for each level function. Moreover, our second  algorithm DSMCVRG converges faster than the first algorithm DSMCGDM, which confirms the correctness of  our theoretical results. 

In Figure~\ref{classification}, we show the  loss function value on the support set and the  accuracy on the query set for the classification task. It can also be found that our two algorithms outperform the baseline algorithm and DSMCVRG converges faster than DSMCGDM, which further confirms the correctness of our theoretical results.

\subsection{More Experiments}

To  further demonstrate the performance of our algorithms, we set the number of inner steps of multi-step MAML to 4 and 5 so that we have the five-level and six-level compositional optimization problems. In Figure~\ref{highlevel}, we show the  loss function values on the support set versus the number of iterations for sinewave dataset. From this figure, we can still find that our two algorithms outperform DSGD and our second algorithm DSMCVRG converges faster than DSMCGDM, which confirms the effectiveness and correctness of our proposed algorithms. 
\begin{figure}[h]
	\centering 
	\hspace{-10pt}
	\subfigure[Five-level SCO problem]{
		\includegraphics[scale=0.26]{./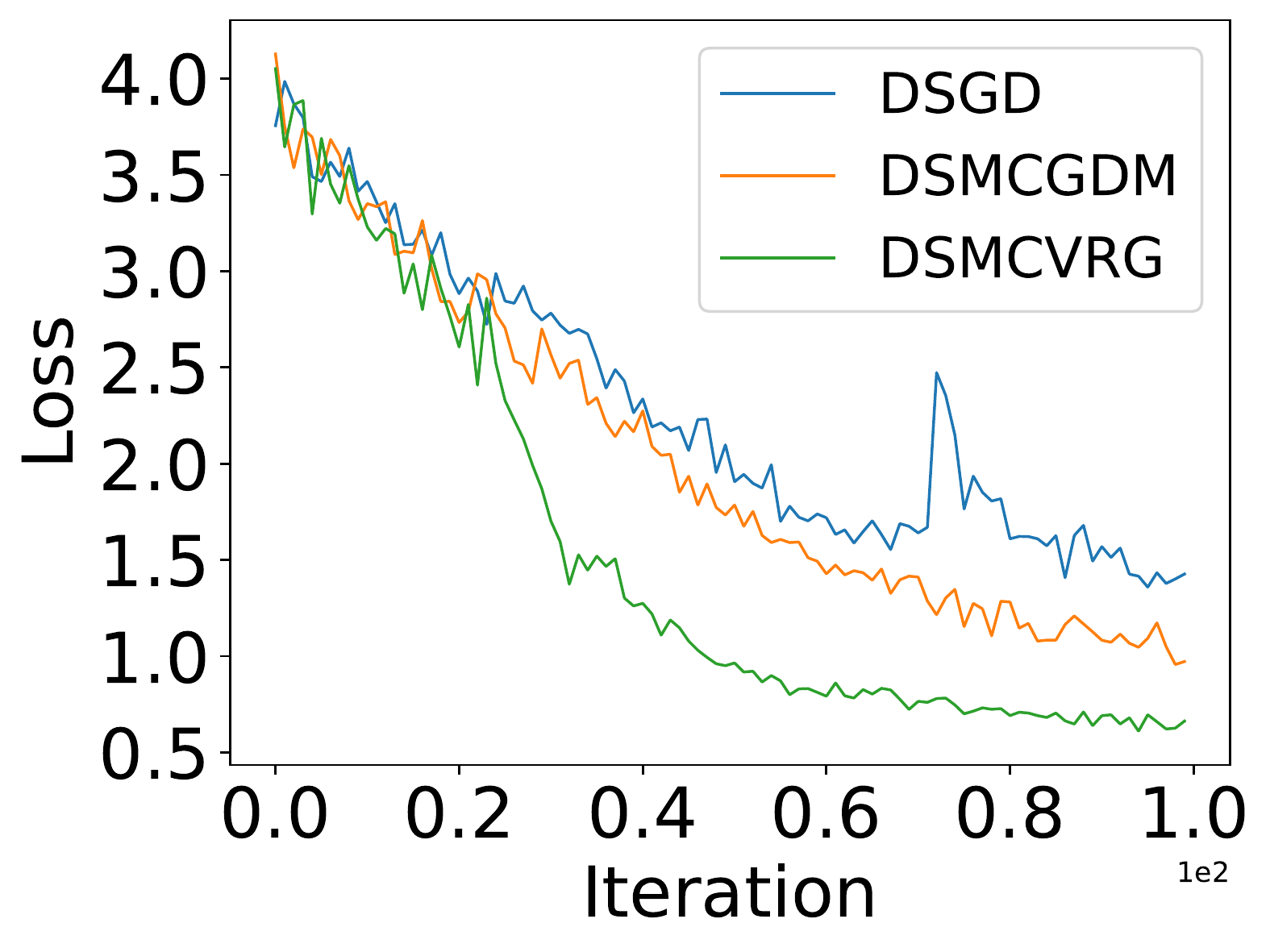}
	}
	\hspace{-15pt}
	\subfigure[Six-level SCO problem]{
		\includegraphics[scale=0.26]{./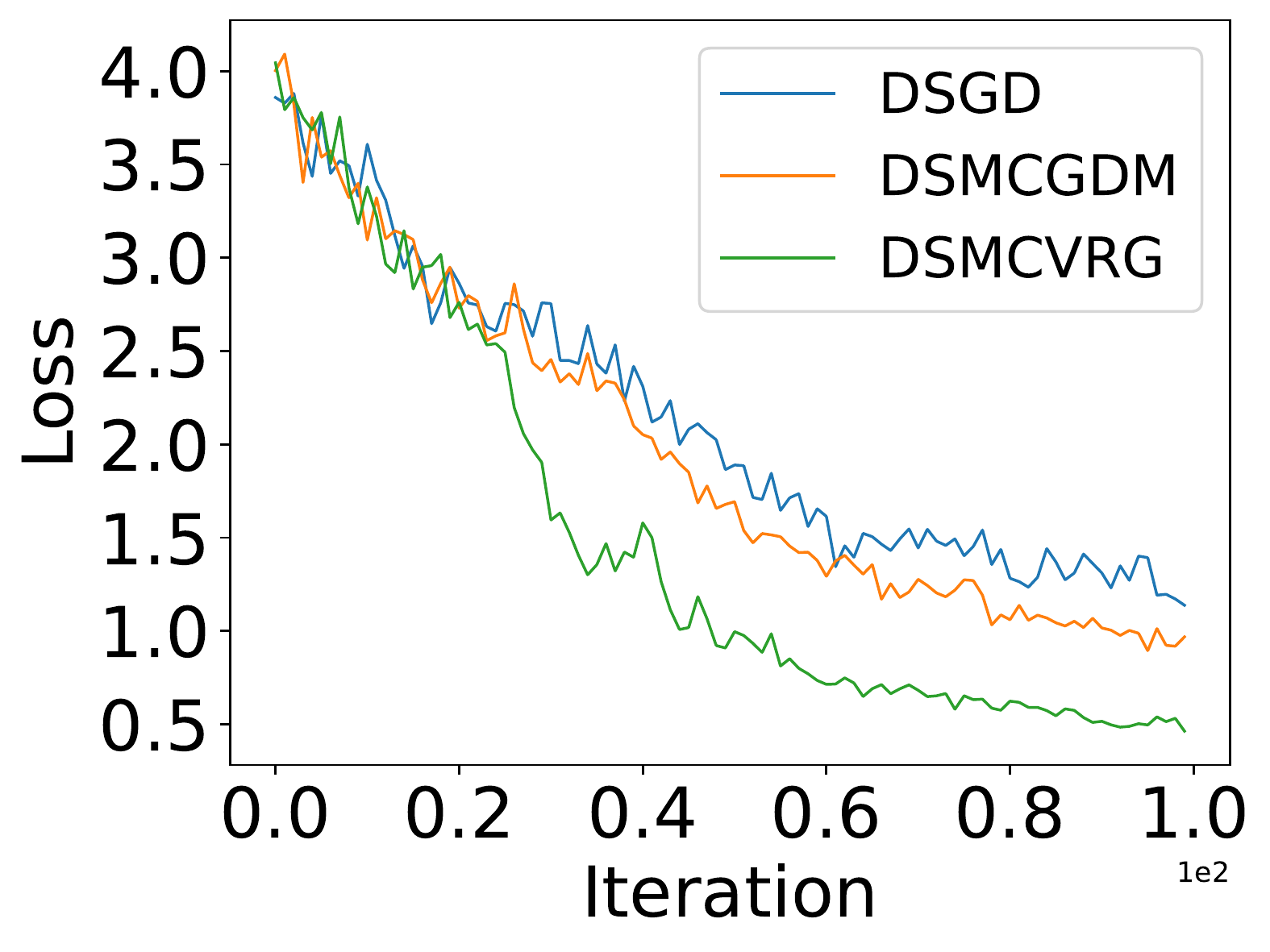}
	}
	\caption{The  loss function value on support test versus the number of iterations for the regression task.  Ring graph is used. }
	\label{highlevel}
\end{figure}

Moreover, we show the acceleration benefit of our decentralized optimization algorithms. In particular, we compare the convergence performance when using four and eight devices. Here, the meta-batch size is set to 200 when using four devices and it is set to 100 when using eight devices. Other hyperparameters are the same as previous experiments. In Figure~\ref{acceleration}, we show the  loss function value on the support set versus the consumed time for the regression task when using the ring graph. It is easy to find that using more devices can accelerate the convergence speed, which confirms the efficacy of our algorithms. 
\begin{figure}[h]
	\centering 
	\hspace{-10pt}
	\subfigure[DSMCGDM]{
		\includegraphics[scale=0.266]{./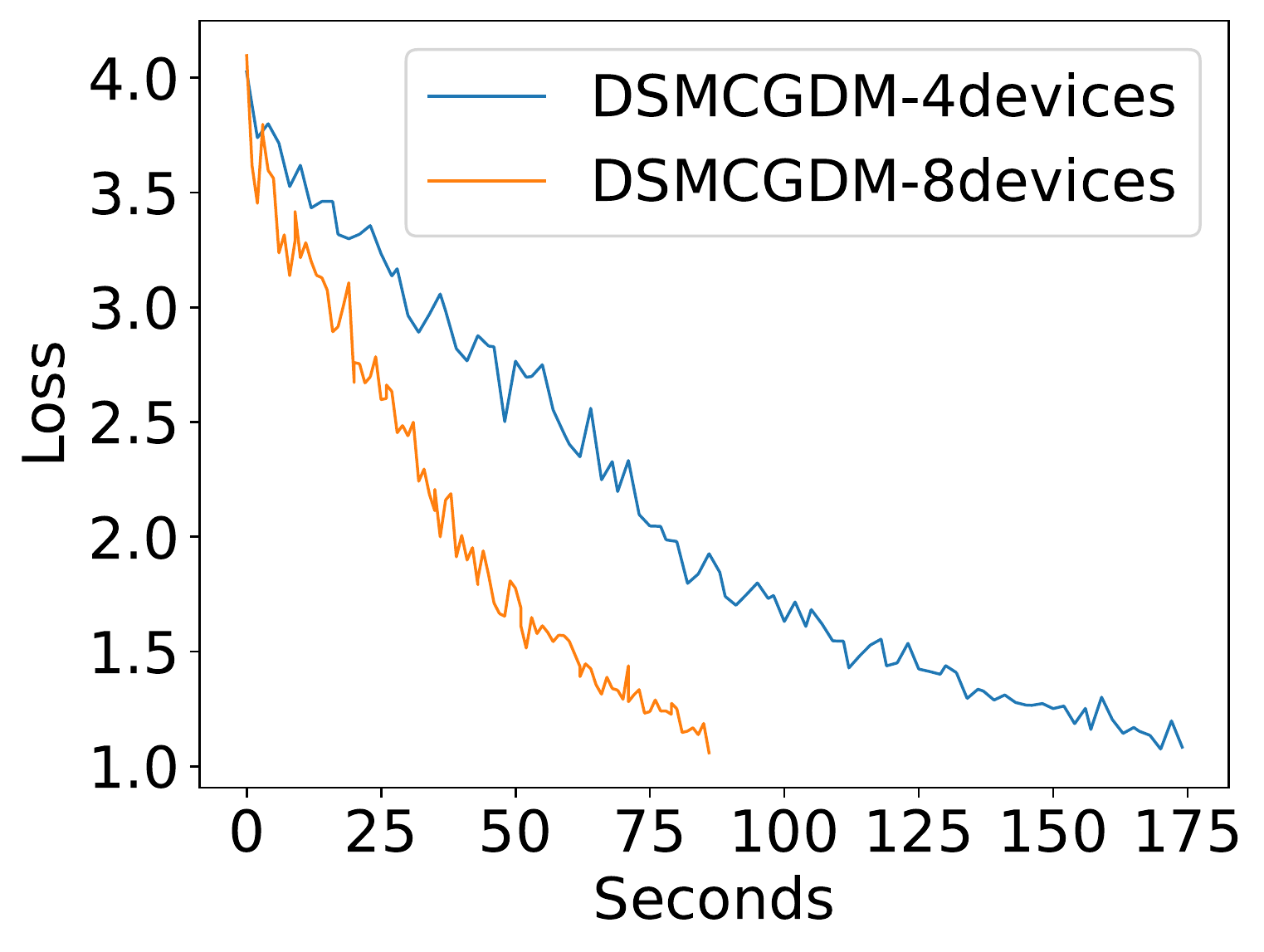}
	}
	\hspace{-15pt}
	\subfigure[DSMCVRG]{
		\includegraphics[scale=0.266]{./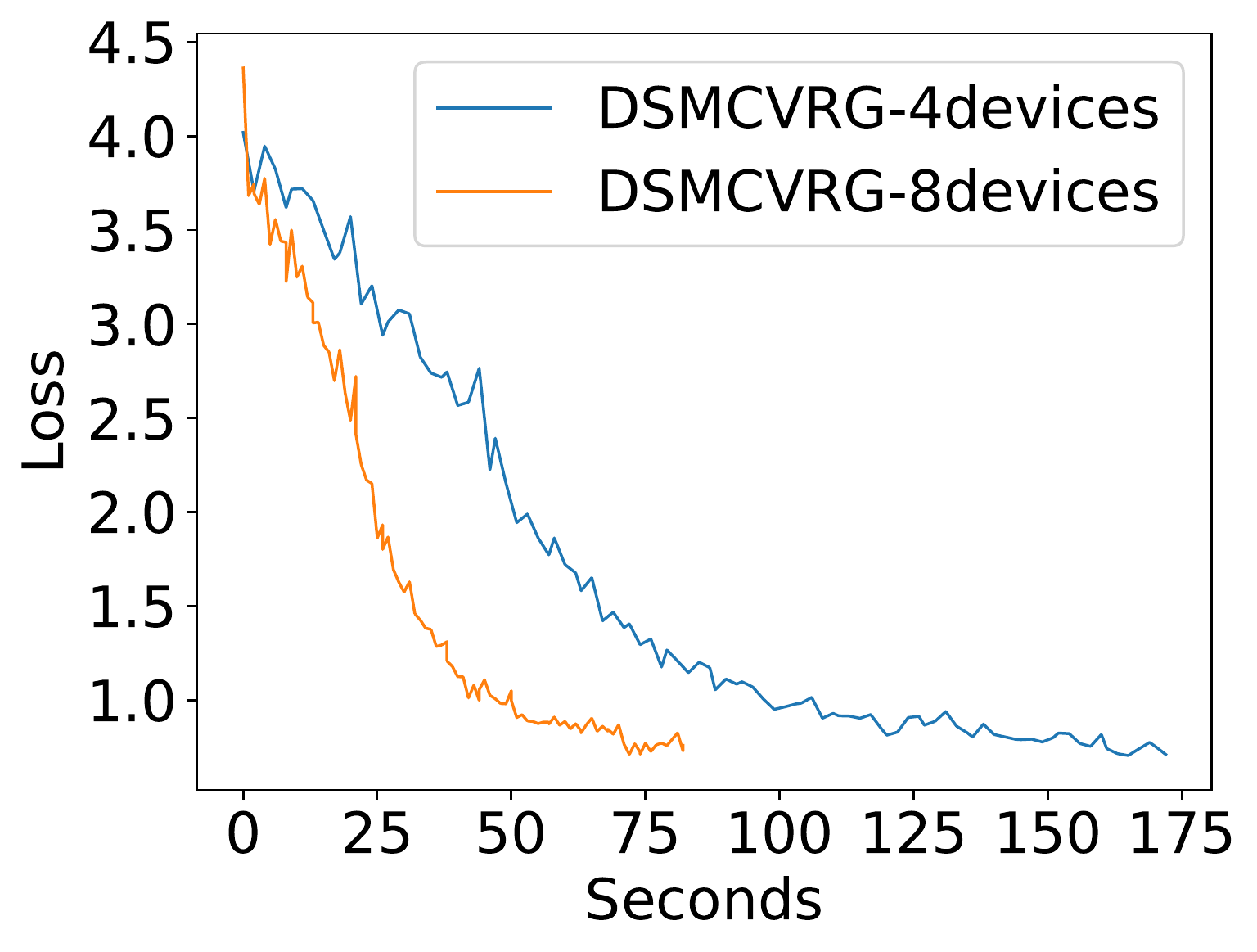}
	}
	\caption{The  loss function value on support set versus the consumed time for the regression task and ring graph.  }
	\label{acceleration}
\end{figure}

\section{Conclusion}
In this paper, we developed two novel decentralized stochastic multi-level compositional optimization algorithms. They both can achieve the level-independent convergence rate with practical operations.  In particular, we developed a novel strategy for applying the variance reduction technique to estimate the gradient. 
Extensive experimental results confirm the effectiveness  of  our algorithms. We believe our novel algorithmic design and theoretical analysis strategies can benefit the development of multi-level compositional optimization problems for both single-machine and distributed settings.

\bibliographystyle{abbrvnat}

\bibliography{example_paper}

\appendix
\onecolumn

\aistatstitle{Supplementary Materials}

\section{Appendix}
\subsection{Terminologies}
Before presenting the detailed proof, we first introduce some terminologies as below. First, we denote the function up to the $k$-th level as below:
\begin{equation}
	\begin{aligned}
		& F_n^{(k)}(x) =  f_n^{(1)} (x)  f_n^{(2)} (F_n^{(1)}(x)) \cdots   f_n^{(k-1)} (F_n^{(k-2)}(x)) f^{(k)}(F^{(k-1)}(x))  \ ,\\
	\end{aligned}
\end{equation}
where $f_n^{(k)}(\cdot)   = \mathbb{E}[ f_n^{(k)}(\cdot;  \xi^{(k)}) ]$ and $\xi^{(k)}$ denotes the random sample.
It is easy to know $ F_n(x) = F_n^{(K)}(x)=  f_n^{(1)} (x)  f_n^{(2)} (F_n^{(1)}(x)) \cdots   f_n^{(K-1)} (F_n^{(K-2)}(x)) f_n^{(K)}(F_n^{(K-1)}(x))$. Then, the gradient  of $\nabla F_n^{(k)}(x)$ can be represented as below:
\begin{equation}
	\begin{aligned}
		& \nabla F_n^{(k)}(x) =  \nabla f_n^{(1)} (x)\nabla  f_n^{(2)} (F_n^{(1)}(x)) \cdots \nabla  f_n^{(k-1)} (F_n^{(k-2)}(x)) \nabla f_n^{(k)}(F_n^{(k-1)}(x))  \ , \\
	\end{aligned}
\end{equation}
where $\nabla f_n^{(k)}(\cdot)  = \mathbb{E}[\nabla f_n^{(k)}(\cdot;  \xi^{(k)}) ]$.

Throughout the proof, we assume $\prod_{i}^{k} a_i=1$ when $i>k$. Additionally, we denote 
$X_{t}=[x_{1, t},  \cdots, x_{N, t}]$, $Y_{t}=[y_{1, t}, \cdots, y_{N, t}]$, $M_{t}=[m_{1,t}, \cdots, m_{N,t}]$,  $G_{t}=[g_{1,t}, \cdots, g_{N,t}]$, $\bar{X}_t=[\frac{1}{N}\sum_{n=1}^{N}x_{n, t}, \cdots, \frac{1}{N}\sum_{n=1}^{N}x_{n, t}]$, $\bar{Y}_t=[\frac{1}{N}\sum_{n=1}^{N}y_{n, t}, \cdots, \frac{1}{N}\sum_{n=1}^{N}y_{n, t}]$, $\bar{M}_t=[\frac{1}{N}\sum_{n=1}^{N}m_{n, t}, \cdots, \frac{1}{N}\sum_{n=1}^{N}m_{n, t}]$.

\subsection{Proof of Theorem~\ref{theorem1}}
\begin{lemma} \label{lemma_u_F_momentum}
	For $k\in \{1, \cdots, K-1\}$, given Assumptions~\ref{assumption_smooth}-\ref{assumption_bound_variance}, we can get 
	\begin{equation}
		\begin{aligned}
			& \quad \|u_{n,t}^{(k)} - F_{n}^{(k)}(x_{n,t}) \| \leq \sum_{j=1}^{k} \Big(\prod_{i=j+1}^{k}C_{i}\Big)\|u_{n,t}^{(j)} -f_{n}^{(j)}(u_{n,t}^{(j-1)}) \| \ . 
		\end{aligned}
	\end{equation} 
\end{lemma}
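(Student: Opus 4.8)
The plan is to prove this bound by induction on the level index $k$, peeling off one level at a time and using the Lipschitz/boundedness structure of the composition. The quantity $u_{n,t}^{(k)} - F_n^{(k)}(x_{n,t})$ measures how far the running estimate $u_{n,t}^{(k)}$ of the $k$-th level output is from the true composed value $F_n^{(k)}(x_{n,t})$. The key observation is that this error decomposes into two pieces: the ``one-level'' error $u_{n,t}^{(k)} - f_n^{(k)}(u_{n,t}^{(k-1)})$, which compares the estimate against the exact $k$-th level function applied to the \emph{estimated} input $u_{n,t}^{(k-1)}$, plus a ``propagated'' error coming from the fact that $u_{n,t}^{(k-1)}$ itself differs from the true $(k-1)$-th level value $F_n^{(k-1)}(x_{n,t})$.

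Concretely, the first step is to write, via the triangle inequality,
\begin{equation}
	\begin{aligned}
		& \|u_{n,t}^{(k)} - F_n^{(k)}(x_{n,t})\| \\
		& \quad \leq \|u_{n,t}^{(k)} - f_n^{(k)}(u_{n,t}^{(k-1)})\| + \|f_n^{(k)}(u_{n,t}^{(k-1)}) - f_n^{(k)}(F_n^{(k-1)}(x_{n,t}))\| \ .
	\end{aligned}
\end{equation}
Here I use the definition $F_n^{(k)}(x) = f_n^{(k)}(F_n^{(k-1)}(x))$ (the composition recursion stated in the Terminologies section) to identify the true value as $f_n^{(k)}$ evaluated at the true lower-level value. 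For the second term I would bound the increment of $f_n^{(k)}$ by its Lipschitz-type factor: since $\|\nabla f^{(k)}\|^2 \leq C_k^2$ by Assumption~\ref{assumption_bound_gradient}, the deterministic map $f_n^{(k)}$ is $C_k$-Lipschitz, giving $\|f_n^{(k)}(u_{n,t}^{(k-1)}) - f_n^{(k)}(F_n^{(k-1)}(x_{n,t}))\| \leq C_k \|u_{n,t}^{(k-1)} - F_n^{(k-1)}(x_{n,t})\|$. This reduces the level-$k$ error to the level-$(k-1)$ error of exactly the same form, scaled by $C_k$, which sets up the induction.

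The induction then unrolls cleanly. The base case $k=1$ holds because $F_n^{(1)}(x_{n,t}) = f_n^{(1)}(u_{n,t}^{(0)})$ (recall $u_{n,t}^{(0)} = x_{n,t}$), so the bound reduces to $\|u_{n,t}^{(1)} - f_n^{(1)}(u_{n,t}^{(0)})\|$, matching the single $j=1$ term with empty product $\prod_{i=2}^{1}C_i = 1$. Assuming the claim for $k-1$ and substituting into the recursive inequality above yields
\begin{equation}
	\begin{aligned}
		& \|u_{n,t}^{(k)} - F_n^{(k)}(x_{n,t})\| \leq \|u_{n,t}^{(k)} - f_n^{(k)}(u_{n,t}^{(k-1)})\| \\
		& \quad + C_k \sum_{j=1}^{k-1}\Big(\prod_{i=j+1}^{k-1}C_i\Big)\|u_{n,t}^{(j)} - f_n^{(j)}(u_{n,t}^{(j-1)})\| \ ,
	\end{aligned}
\end{equation}
and absorbing the factor $C_k$ into each product gives $\prod_{i=j+1}^{k}C_i$, exactly reproducing the stated sum up to $j=k$ (the $j=k$ term being the leading one-level error with empty product). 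I do not anticipate a serious obstacle here since the estimate is a direct telescoping; the only point requiring care is the bookkeeping of the product indices and the empty-product convention $\prod_{i}^{k}a_i = 1$ for $i>k$ declared in the Terminologies, together with correctly invoking the composition recursion to identify $F_n^{(k)}(x_{n,t}) = f_n^{(k)}(F_n^{(k-1)}(x_{n,t}))$ at each step.
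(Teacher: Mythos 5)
Your proposal is correct and follows essentially the same argument as the paper: induction on the level index, the same triangle-inequality decomposition into the one-level error $\|u_{n,t}^{(k)} - f_n^{(k)}(u_{n,t}^{(k-1)})\|$ plus a propagated term, the same use of the bounded-Jacobian assumption to make $f_n^{(k)}$ a $C_k$-Lipschitz map, and the same index bookkeeping to absorb $C_k$ into the products. The only cosmetic difference is that you run the induction from $k-1$ to $k$ while the paper writes it from $k$ to $k+1$.
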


\begin{proof}
	When $k=1$, we have $ \|u_{n,t}^{(1)} - F_{n}^{(1)}(x_{n,t}) \|   = \|u_{n,t}^{(1)}  - f_{n}^{(1)}(u_{n,t}^{(0)})\|$.
	Assume for $k>1$, we have
	\begin{equation}
		\begin{aligned}
			& \quad \|u_{n,t}^{(k)} - F_{n}^{(k)}(x_{n,t}) \| \leq \sum_{j=1}^{k} \Big(\prod_{i=j+1}^{k}C_{i}\Big)\|u_{n,t}^{(j)} -f_{n}^{(j)}(u_{n,t}^{(j-1)}) \| \  .  \\
		\end{aligned}
	\end{equation} 
	
	Then, for $k+1$, we have
	\begin{equation}
		\begin{aligned}
			& \quad \| u_{n,t}^{(k+1)} - F_{n}^{(k+1)}(x_{n,t}) \| \\
			& = \| u_{n,t}^{(k+1)} - f_{n}^{(k+1)}(F_{n}^{(k)}(x_{n,t})) \| \\
			& \leq \| u_{n,t}^{(k+1)} - f_{n}^{(k+1)}( u_{n,t}^{(k)}) \|+ \|f_{n}^{(k+1)}( u_{n,t}^{(k)}) - f_{n}^{(k+1)}(F_{n}^{(k)}(x_{n,t})) \| \\
			& \leq \| u_{n,t}^{(k+1)} - f_{n}^{(k+1)}( u_{n,t}^{(k)}) \|+ C_{k+1}\| u_{n,t}^{(k)} - F_{n}^{(k)}(x_{n,t}) \| \\
			& \leq \| u_{n,t}^{(k+1)} - f_{n}^{(k+1)}( u_{n,t}^{(k)}) \|+ C_{k+1}\sum_{j=1}^{k} \Big(\prod_{i=j+1}^{k}C_{i}\Big)\| u_{n,t}^{(j)} -f_{n}^{(j)}( u_{n,t}^{(j-1)}) \| \\
			&  = \sum_{j=1}^{k+1} \Big(\prod_{i=j+1}^{k+1}C_{i}\Big)\| u_{n,t}^{(j)} -f_{n}^{(j)}( u_{n,t}^{(j-1)}) \| \ , 
		\end{aligned}
	\end{equation} 
which completes the proof. 
	
\end{proof}

\newpage
\begin{lemma} \label{lemma_u_inc_momentum}
		For $k\in \{2, \cdots, K\}$, given Assumptions~\ref{assumption_smooth}-\ref{assumption_bound_variance},  we can get 
	\begin{equation}
		\begin{aligned}
			&  \mathbb{E}[\|u_{n,t}^{(k-1)}  - u_{n, t-1}^{(k-1)} \|^2 ] \leq  \Big(\prod_{j=1}^{k-1}(2C_{j}^2)\Big)\mathbb{E}[\|u_{n, t-1}^{(0)} - u_{n,t}^{(0)} \|^2] + 2\beta^2  \eta^2\sum_{j=1}^{k-1} \Big(\prod_{i=j+1}^{k-1}(2C_{i}^2)\Big)\mathbb{E}[\|u_{n, t-1}^{(j)}- f_{n}^{(j)}(u_{n, t-1}^{(j-1)})  \|^2 ] \\
			& \quad \quad \quad \quad\quad\quad \quad \quad \quad\quad + 2\beta^2  \eta^2\sum_{j=1}^{k-1} \Big(\prod_{i=j+1}^{k-1}(2C_{i}^2)\Big)\delta_{j}^2 \ ,  
		\end{aligned}
	\end{equation}
	and
	\begin{equation}
	\begin{aligned}
		&  \mathbb{E}[\|u_{n,t}^{(k-1)}  - u_{n, t-1}^{(k-1)} \|^2] \leq  2C_{k-1}^2 \mathbb{E}[\|u_{n, t-1}^{(k-2)} - u_{n,t}^{(k-2)} \|^2  + 2 \beta^2  \eta^2\mathbb{E}[ \|u_{n, t-1}^{(k-1)}- f_{n}^{(k-1)}(u_{n, t-1}^{(k-2)})  \|^2] + 2 \beta^2  \eta^2 \delta_{k-1}^2 \ .\\
	\end{aligned}
\end{equation}

\end{lemma}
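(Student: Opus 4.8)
The plan is to prove both bounds in Lemma~\ref{lemma_u_inc_momentum} by unrolling the STORM-like recursion in Eq.~(\ref{eq_storm_like}) one level at a time, and then establishing the multi-level bound by induction on $k$ from the single-level bound. First I would derive the second (single-level) inequality, since it is the base step and the multi-level bound is essentially its cascaded consequence. Starting from the update rule for $t \geq 1$,
\begin{equation}
	u_{n,t}^{(k-1)} = (1-\beta\eta)\big(u_{n,t-1}^{(k-1)} - f_{n}^{(k-1)}(u_{n,t-1}^{(k-2)}; \xi_{n,t}^{(k-1)})\big) + f_{n}^{(k-1)}(u_{n,t}^{(k-2)}; \xi_{n,t}^{(k-1)}) \ ,
\end{equation}
I would subtract $u_{n,t-1}^{(k-1)}$ from both sides and rearrange so that the increment $u_{n,t}^{(k-1)} - u_{n,t-1}^{(k-1)}$ decomposes into a term proportional to $\beta\eta$ times the estimation residual $u_{n,t-1}^{(k-1)} - f_{n}^{(k-1)}(u_{n,t-1}^{(k-2)}; \xi_{n,t}^{(k-1)})$ plus a stochastic function difference $f_{n}^{(k-1)}(u_{n,t}^{(k-2)}; \xi_{n,t}^{(k-1)}) - f_{n}^{(k-1)}(u_{n,t-1}^{(k-2)}; \xi_{n,t}^{(k-1)})$. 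The function-difference term I would control by the mean-value/Lipschitz-type bound implied by Assumption~\ref{assumption_bound_gradient} (the stochastic Jacobian is bounded in expectation by $C_{k-1}$), giving a contribution proportional to $C_{k-1}^2 \|u_{n,t}^{(k-2)} - u_{n,t-1}^{(k-2)}\|^2$, while the residual term I would split further using the variance Assumption~\ref{assumption_bound_variance} to introduce the $\delta_{k-1}^2$ term. Applying Young's inequality $\|a+b\|^2 \leq 2\|a\|^2 + 2\|b\|^2$ to separate these two contributions yields the claimed factors of $2C_{k-1}^2$, $2\beta^2\eta^2$, and $2\beta^2\eta^2\delta_{k-1}^2$.

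Once the single-level bound is in hand, I would prove the first inequality by induction on $k$. The base case $k=2$ is exactly the single-level bound at level $1$ (recalling $u_{n,t}^{(0)} = x_{n,t}$). For the inductive step, I would apply the single-level bound to level $k-1$ to express $\mathbb{E}[\|u_{n,t}^{(k-1)} - u_{n,t-1}^{(k-1)}\|^2]$ in terms of $\mathbb{E}[\|u_{n,t}^{(k-2)} - u_{n,t-1}^{(k-2)}\|^2]$ plus the residual and variance terms at level $k-1$, then substitute the inductive hypothesis for the level-$(k-2)$ increment. The products $\prod_{j=1}^{k-1}(2C_j^2)$ and $\prod_{i=j+1}^{k-1}(2C_i^2)$ arise naturally from the repeated multiplication by the $2C_{k-1}^2$ factor as the recursion telescopes down to the $u^{(0)}$ increment, and the summation structure over $j$ collects the residual and variance terms injected at each level.

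The main obstacle I anticipate is the careful bookkeeping of the product and summation indices so that the coefficients match exactly, particularly ensuring the new residual/variance term at level $k-1$ (which carries an empty product $\prod_{i=k}^{k-1}(2C_i^2)=1$ under the convention stated in the appendix) slots correctly as the $j=k-1$ summand, while the terms inherited from the inductive hypothesis each pick up an extra factor of $2C_{k-1}^2$ and thereby extend their products from $\prod_{i=j+1}^{k-2}$ to $\prod_{i=j+1}^{k-1}$. A secondary subtlety is handling the stochastic function difference: rather than a deterministic Lipschitz bound, Assumption~\ref{assumption_bound_gradient} provides a bound on the stochastic Jacobian in expectation, so I would need to argue (via the integral form of the mean value theorem applied along the segment between $u_{n,t}^{(k-2)}$ and $u_{n,t-1}^{(k-2)}$, together with the expectation bound) that $\mathbb{E}[\|f_{n}^{(k-1)}(u_{n,t}^{(k-2)}; \xi) - f_{n}^{(k-1)}(u_{n,t-1}^{(k-2)}; \xi)\|^2] \leq C_{k-1}^2 \mathbb{E}[\|u_{n,t}^{(k-2)} - u_{n,t-1}^{(k-2)}\|^2]$; this step should be routine but must be invoked explicitly to justify the $C_{k-1}^2$ factor. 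Everything else reduces to Young's inequality and index manipulation.
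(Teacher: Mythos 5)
Your proposal is correct and follows essentially the same route as the paper's own proof: derive the single-level recursion from the STORM-like update via the decomposition plus Young's inequality and Assumptions~\ref{assumption_bound_gradient}--\ref{assumption_bound_variance}, then unroll it level by level down to $u^{(0)}$ (the paper's ``recursive expansion'' is exactly your induction, with the same empty-product bookkeeping). The only point you must make explicit when writing it up is that splitting the stochastic residual $u_{n,t-1}^{(k-1)} - f_{n}^{(k-1)}(u_{n,t-1}^{(k-2)};\xi_{n,t}^{(k-1)})$ into the deterministic residual plus the $\delta_{k-1}^2$ term has to exploit the unbiasedness $\mathbb{E}[f_{n}^{(k-1)}(u_{n,t-1}^{(k-2)};\xi_{n,t}^{(k-1)})] = f_{n}^{(k-1)}(u_{n,t-1}^{(k-2)})$ so that the cross term vanishes, as the paper does; a second application of Young's inequality there would yield $4\beta^2\eta^2$ rather than the claimed $2\beta^2\eta^2$.
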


\begin{proof}
	
For any $k>1$, we can get
	\begin{equation}
	\begin{aligned}
		& \quad \mathbb{E}[\|u_{n,t}^{(k-1)}  - u_{n, t-1}^{(k-1)} \|^2]  \\
		& = \mathbb{E}[\|(1-\beta \eta)(u_{n, t-1}^{(k-1)} -f_{n}^{(k-1)}(u_{n, t-1}^{(k-2)}; \xi_{n, t}^{(k-1)})   )+  f_{n}^{(k-1)}(u_{n,t}^{(k-2)}; \xi_{n, t}^{(k-1)})   - u_{n, t-1}^{(k-1)} \|^2]  \\
		&  =\mathbb{E}[\|-\beta \eta(u_{n, t-1}^{(k-1)}- f_{n}^{(k-1)}(u_{n, t-1}^{(k-2)})   + f_{n}^{(k-1)}(u_{n, t-1}^{(k-2)})   -f_{n}^{(k-1)}(u_{n, t-1}^{(k-2)}; \xi_{n, t}^{(k-1)})   )  \\
		& \quad -f_{n}^{(k-1)}(u_{n, t-1}^{(k-2)}; \xi_{n, t}^{(k-1)})   +  f_{n}^{(k-1)}(u_{n,t}^{(k-2)}; \xi_{n, t}^{(k-1)})   \|^2  \\
		& \leq 2\mathbb{E}[\|-\beta \eta(u_{n, t-1}^{(k-1)}- f_{n}^{(k-1)}(u_{n, t-1}^{(k-2)})   + f_{n}^{(k-1)}(u_{n, t-1}^{(k-2)})   -f_{n}^{(k-1)}(u_{n, t-1}^{(k-2)}; \xi_{n, t}^{(k-1)})   )  \|^2]\\
		& \quad + 2\mathbb{E}[\|-f_{n}^{(k-1)}(u_{n, t-1}^{(k-2)}; \xi_{n, t}^{(k-1)})   +  f_{n}^{(k-1)}(u_{n,t}^{(k-2)}; \xi_{n, t}^{(k-1)})   \|^2]  \\
		& \leq  2C_{k-1}^2 \mathbb{E}[\|u_{n, t-1}^{(k-2)} - u_{n,t}^{(k-2)} \|^2  + 2 \beta^2  \eta^2\mathbb{E}[ \|u_{n, t-1}^{(k-1)}- f_{n}^{(k-1)}(u_{n, t-1}^{(k-2)})  \|^2] + 2 \beta^2  \eta^2 \delta_{k-1}^2 \ ,\\
	\end{aligned}
\end{equation}
where the last step holds due to Assumption~\ref{assumption_bound_gradient} and Assumption~\ref{assumption_bound_variance}, and $\mathbb{E}[f_{n}^{(k-1)}(u_{n, t-1}^{(k-2)}; \xi_{n, t}^{(k-1)})  ] =f_{n}^{(k-1)}(u_{n, t-1}^{(k-2)})$.

Then, by recursively expanding this inequality, we can get
\begin{equation}
	\begin{aligned}
		&  \mathbb{E}[\|u_{n,t}^{(k-1)}  - u_{n, t-1}^{(k-1)} \|^2 ] \leq  \Big(\prod_{j=1}^{k-1}(2C_{j}^2)\Big)\mathbb{E}[\|u_{n, t-1}^{(0)} - u_{n,t}^{(0)} \|^2] \\
		& \quad + 2\beta^2  \eta^2\sum_{j=1}^{k-1} \Big(\prod_{i=j+1}^{k-1}(2C_{i}^2)\Big)\mathbb{E}[\|u_{n, t-1}^{(j)}- f_{n}^{(j)}(u_{n, t-1}^{(j-1)})  \|^2 ]+ 2\beta^2  \eta^2\sum_{j=1}^{k-1} \Big(\prod_{i=j+1}^{k-1}(2C_{i}^2)\Big)\delta_{j}^2 \ . 
	\end{aligned}
\end{equation}

\end{proof}

\begin{lemma} \label{lemma_f_u_f_x_momentum}
	Given Assumptions~\ref{assumption_smooth}-\ref{assumption_bound_variance}, we can get
	\begin{equation}
		\begin{aligned}
			&\quad   \frac{1}{N}\sum_{n=1}^{N}\|\nabla f_{n}^{(1)} ( u_{n,t}^{(0)})\nabla  f_{n}^{(2)} ( u_{n,t}^{(1)}) \cdots \nabla  f_{n}^{(K-1)} ( u_{n,t}^{(K-2)})\nabla f_{n}^{(K)}( u_{n,t}^{(K-1)})  \\
			& \quad  - \nabla f_{n}^{(1)} (x_{n,t})\nabla  f_{n}^{(2)} (F_{n}^{(1)}(x_{n,t})) \cdots \nabla  f_{n}^{(K-1)} (F_{n}^{(K-2)}(x_{n,t}))\nabla f_{n}^{(K)}(F_{n}^{(K-1)}(x_{n,t})) \|^2 \\
			& \leq \frac{K}{N}\sum_{n=1}^{N}\sum_{k=1}^{K-1}A_k\| u_{n,t}^{(k)} -f_{n}^{(k)}( u_{n,t}^{(k-1)}) \|^2  \ ,  \\
		\end{aligned}
	\end{equation}
	
	where $A_k= \Bigg(\sum_{j=k}^{K-1}\Big(\frac{L_{j+1}\prod_{i=1}^{K}C_i}{C_{j+1}} \prod_{i=k+1}^{j}C_{i}\Big)\Bigg)^2 $ .

\end{lemma}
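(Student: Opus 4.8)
The plan is to use the standard telescoping identity for products of matrices, and then feed the resulting factorwise differences into Lemma~\ref{lemma_u_F_momentum}. Writing $J_\ell^{u}=\nabla f_n^{(\ell)}(u_{n,t}^{(\ell-1)})$ and $J_\ell^{F}=\nabla f_n^{(\ell)}(F_n^{(\ell-1)}(x_{n,t}))$ for the $\ell$-th Jacobian evaluated at the estimated versus the true argument, the difference of the two products decomposes as
\begin{equation}
	\prod_{\ell=1}^{K}J_\ell^{u}-\prod_{\ell=1}^{K}J_\ell^{F}=\sum_{\ell=1}^{K}\Big(\prod_{i=1}^{\ell-1}J_i^{u}\Big)(J_\ell^{u}-J_\ell^{F})\Big(\prod_{i=\ell+1}^{K}J_i^{F}\Big) \ .
\end{equation}
Since $u_{n,t}^{(0)}=x_{n,t}=F_n^{(0)}(x_{n,t})$, we have $J_1^{u}=J_1^{F}$, so the $\ell=1$ term vanishes and the sum effectively runs from $\ell=2$.

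First I would take norms and bound each factor: Assumption~\ref{assumption_bound_gradient} gives $\|J_i\|\le C_i$, while the Lipschitz property of the Jacobian in Assumption~\ref{assumption_smooth} gives $\|J_\ell^{u}-J_\ell^{F}\|\le L_\ell\|u_{n,t}^{(\ell-1)}-F_n^{(\ell-1)}(x_{n,t})\|$, yielding
\begin{equation}
	\Big\|\prod_{\ell=1}^{K}J_\ell^{u}-\prod_{\ell=1}^{K}J_\ell^{F}\Big\|\le\sum_{\ell=2}^{K}\Big(\prod_{i=1}^{\ell-1}C_i\Big)L_\ell\Big(\prod_{i=\ell+1}^{K}C_i\Big)\|u_{n,t}^{(\ell-1)}-F_n^{(\ell-1)}(x_{n,t})\| \ .
\end{equation}
Next I would invoke Lemma~\ref{lemma_u_F_momentum} to replace each $\|u_{n,t}^{(\ell-1)}-F_n^{(\ell-1)}(x_{n,t})\|$ by $\sum_{m=1}^{\ell-1}(\prod_{i=m+1}^{\ell-1}C_i)\|u_{n,t}^{(m)}-f_n^{(m)}(u_{n,t}^{(m-1)})\|$ and swap the order of summation so that the estimation-error index $m$ becomes the outer index. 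For fixed $m$ the index $\ell$ runs from $m+1$ to $K$, and using $\prod_{i=1}^{\ell-1}C_i\cdot\prod_{i=\ell+1}^{K}C_i=\prod_{i=1}^{K}C_i/C_\ell$ together with the substitution $j=\ell-1$, the coefficient of each error term collapses exactly to $\sqrt{A_m}$:
\begin{equation}
	\sum_{\ell=m+1}^{K}\frac{L_\ell\prod_{i=1}^{K}C_i}{C_\ell}\prod_{i=m+1}^{\ell-1}C_i=\sum_{j=m}^{K-1}\frac{L_{j+1}\prod_{i=1}^{K}C_i}{C_{j+1}}\prod_{i=m+1}^{j}C_i=\sqrt{A_m} \ .
\end{equation}
Thus the norm of the difference is at most $\sum_{m=1}^{K-1}\sqrt{A_m}\,\|u_{n,t}^{(m)}-f_n^{(m)}(u_{n,t}^{(m-1)})\|$. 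Finally I would square this, apply the Cauchy--Schwarz bound $(\sum_{m=1}^{K-1}a_m)^2\le(K-1)\sum_{m=1}^{K-1}a_m^2\le K\sum_{m=1}^{K-1}a_m^2$, and average over the $N$ devices to obtain the claimed inequality.

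The telescoping identity and the two assumptions are routine, so the main obstacle is the bookkeeping of the nested products of $C_i$: after interchanging the double sum and reindexing one must verify that the accumulated prefix/suffix product factors recombine precisely into $\sqrt{A_m}$, and that the $K$-factor produced by Cauchy--Schwarz (from the $K-1$ summands, bounded by $K$) matches the stated right-hand side. Keeping the telescoping prefixes on the $u$-side and suffixes on the $F$-side consistent with the Lipschitz/boundedness bounds is where care is needed.
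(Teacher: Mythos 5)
Your proposal is correct and follows essentially the same route as the paper's proof: a telescoping decomposition of the Jacobian product (you write it as a closed-form identity, the paper as an explicit insert-and-subtract chain), factor bounds from Assumptions~\ref{assumption_smooth} and~\ref{assumption_bound_gradient}, substitution of Lemma~\ref{lemma_u_F_momentum}, a swap of summation order yielding the coefficients $\sqrt{A_k}$, and a final Cauchy--Schwarz squaring to produce the factor $K$. The only difference is that you make the Cauchy--Schwarz step explicit, which the paper leaves implicit in "taking the squared operation on both sides."
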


\begin{proof}
	Because $u_{n,t}^{(0)}=x_{n,t}$, we can get
	\begin{equation}
		\begin{aligned}
			&\quad  \|\nabla f_{n}^{(1)} ( u_{n,t}^{(0)})\nabla  f_{n}^{(2)} ( u_{n,t}^{(1)}) \cdots \nabla  f_{n}^{(K-1)} ( u_{n,t}^{(K-2)})\nabla f_{n}^{(K)}( u_{n,t}^{(K-1)})  \\
			& \quad  - \nabla f_{n}^{(1)} (x_{n,t})\nabla  f_{n}^{(2)} (F_{n}^{(1)}(x_{n,t})) \cdots \nabla  f_{n}^{(K-1)} (F_{n}^{(K-2)}(x_{n,t}))\nabla f_{n}^{(K)}(F_{n}^{(K-1)}(x_{n,t})) \| \\
			& = \|\nabla f_{n}^{(1)} (x_{n,t})\nabla  f_{n}^{(2)} ( u_{n,t}^{(1)}) \cdots \nabla  f_{n}^{(K-1)} ( u_{n,t}^{(K-2)})\nabla f_{n}^{(K)}( u_{n,t}^{(K-1)})  \\
			& \quad  - \nabla f_{n}^{(1)} (x_{n,t})\nabla  f_{n}^{(2)} (F_{n}^{(1)}(x_{n,t})) \nabla  f_{n}^{(3)} ( u_{n,t}^{(2)}) \cdots \nabla  f_{n}^{(K-1)} ( u_{n,t}^{(K-2)})\nabla f_{n}^{(K)}( u_{n,t}^{(K-1)})  \\
			& \quad  + \nabla f_{n}^{(1)} (x_{n,t})\nabla  f_{n}^{(2)} (F_{n}^{(1)}(x_{n,t})) \nabla  f_{n}^{(3)} ( u_{n,t}^{(2)}) \cdots \nabla  f_{n}^{(K-1)} ( u_{n,t}^{(K-2)})\nabla f_{n}^{(K)}( u_{n,t}^{(K-1)})  \\
			& \quad  - \nabla f_{n}^{(1)} (x_{n,t})\nabla  f_{n}^{(2)} (F_{n}^{(1)}(x_{n,t})) \nabla  f_{n}^{(3)} (F_{n}^{(2)}(x_{n,t})) \cdots \nabla  f_{n}^{(K-1)} ( u_{n,t}^{(K-2)})\nabla f_{n}^{(K)}( u_{n,t}^{(K-1)})  \\
			& \quad +\cdots \\
			& \quad + \nabla f_{n}^{(1)} (x_{n,t})\nabla  f_{n}^{(2)} (F_{n}^{(1)}(x_{n,t})) \cdots \nabla  f_{n}^{(K-1)} (F_{n}^{(K-2)}(x_{n,t}))\nabla f_{n}^{(K)}( u_{n,t}^{(K-1)})  \\
			& \quad  - \nabla f_{n}^{(1)} (x_{n,t})\nabla  f_{n}^{(2)} (F_{n}^{(1)}(x_{n,t})) \cdots \nabla  f_{n}^{(K-1)} (F_{n}^{(K-2)}(x_{n,t}))\nabla f_{n}^{(K)}(F_{n}^{(K-1)}(x_{n,t})) \| \\
			& \leq \|\nabla f_{n}^{(1)} (x_{n,t})\nabla  f_{n}^{(2)} ( u_{n,t}^{(1)}) \cdots \nabla  f_{n}^{(K-1)} ( u_{n,t}^{(K-2)})\nabla f_{n}^{(K)}( u_{n,t}^{(K-1)})  \\
			& \quad \quad  - \nabla f_{n}^{(1)} (x_{n,t})\nabla  f_{n}^{(2)} (F_{n}^{(1)}(x_{n,t})) \nabla  f_{n}^{(3)} ( u_{n,t}^{(2)}) \cdots \nabla  f_{n}^{(K-1)} ( u_{n,t}^{(K-2)})\nabla f_{n}^{(K)}( u_{n,t}^{(K-1)}) \| \\
			& \quad  +\|\nabla f_{n}^{(1)} (x_{n,t})\nabla  f_{n}^{(2)} (F_{n}^{(1)}(x_{n,t})) \nabla  f_{n}^{(3)} ( u_{n,t}^{(2)}) \cdots \nabla  f_{n}^{(K-1)} ( u_{n,t}^{(K-2)})\nabla f_{n}^{(K)}( u_{n,t}^{(K-1)})  \\
			& \quad \quad  - \nabla f_{n}^{(1)} (x_{n,t})\nabla  f_{n}^{(2)} (F_{n}^{(1)}(x_{n,t})) \nabla  f_{n}^{(3)} (F_{n}^{(2)}(x_{n,t})) \cdots \nabla  f_{n}^{(K-1)} ( u_{n,t}^{(K-2)})\nabla f_{n}^{(K)}( u_{n,t}^{(K-1)})  \|\\
			& \quad +\cdots \\
			& \quad + \|\nabla f_{n}^{(1)} (x_{n,t})\nabla  f_{n}^{(2)} (F_{n}^{(1)}(x_{n,t})) \cdots \nabla  f_{n}^{(K-1)} (F_{n}^{(K-2)}(x_{n,t}))\nabla f_{n}^{(K)}( u_{n,t}^{(K-1)})  \\
			& \quad  \quad - \nabla f_{n}^{(1)} (x_{n,t})\nabla  f_{n}^{(2)} (F_{n}^{(1)}(x_{n,t})) \cdots \nabla  f_{n}^{(K-1)} (F_{n}^{(K-2)}(x_{n,t}))\nabla f_{n}^{(K)}(F_{n}^{(K-1)}(x_{n,t}) )\| \\
			& \leq  \frac{L_2\prod_{k=1}^{K}C_k}{C_2}\| u_{n,t}^{(1)}- F_{n}^{(1)}(x_{n,t})  \| +  \frac{L_3\prod_{k=1}^{K}C_k}{C_3}\| u_{n,t}^{(2)}  - F_{n}^{(2)}(x_{n,t}) \| \\
			& \quad + \cdots + \frac{L_K\prod_{k=1}^{K}C_k}{C_K}\| u_{n,t}^{(K-1)}  - F_{n}^{(K-1)}(x_{n,t}) \| \\
			& =\sum_{k=1}^{K-1}  \frac{L_{k+1}\prod_{j=1}^{K}C_j}{C_{k+1}}\| u_{n,t}^{(k)}- F_{n}^{(k)}(x_{n,t})  \| \\
			& \leq \sum_{k=1}^{K-1}  \frac{L_{k+1}\prod_{j=1}^{K}C_j}{C_{k+1}} \sum_{j=1}^{k} \Big(\prod_{i=j+1}^{k}C_{i}\Big)\|u_{n,t}^{(j)} -f_{n}^{(j)}(u_{n,t}^{(j-1)}) \| \\
			& \leq \sum_{k=1}^{K-1}   \Bigg(\sum_{j=k}^{K-1}\Big(\frac{L_{j+1}\prod_{i=1}^{K}C_i}{C_{j+1}} \prod_{i=k+1}^{j}C_{i}\Big)\Bigg)\|u_{n,t}^{(k)} -f_{n}^{(k)}(u_{n,t}^{(k-1)}) \|  \ , \\
		\end{aligned}
	\end{equation}
where the second to last step holds due to Lemma~\ref{lemma_u_F_momentum}. Then, we complete the proof by taking the squared operation on both sides.

\end{proof}

\begin{lemma} \label{lemma_f_u_inc_momentum}
		Given Assumptions~\ref{assumption_smooth}-\ref{assumption_bound_variance} and $D_{k}=\frac{(\prod_{j=1}^{K} C_{j}^2)L_{k+1}^2}{C_{k+1}^2}$ where $k\in\{0, \cdots, K-1\}$, we can get
	\begin{equation}
		\begin{aligned}
			& \quad \mathbb{E}\Big[\Big\|  \nabla f_{n}^{(1)} ( u_{n,t-1}^{(0)})\nabla  f_{n}^{(2)} ( u_{n,t-1}^{(1)}) \cdots \nabla  f_{n}^{(K-1)} ( u_{n,t-1}^{(K-2)})\nabla f_{n}^{(K)}( u_{n,t-1}^{(K-1)})\\
			& \quad  \quad   - \nabla f_{n}^{(1)} ( u_{n,t}^{(0)})\nabla  f_{n}^{(2)} ( u_{n,t}^{(1)}) \cdots \nabla  f_{n}^{(K-1)} ( u_{n,t}^{(K-2)})\nabla f_{n}^{(K)}( u_{n,t}^{(K-1)}) \Big\|^2\Big]  \\
			& \leq KD_{0}\mathbb{E}[ \|x_{n,t}-x_{n,t-1}\|^2 ] +  2K\sum_{k=1}^{K-1}D_{k}  C_{k}^2 \mathbb{E}[ \|u_{n, t-1}^{(k-1)} - u_{n,t}^{(k-1)} \|^2] \\
			& \quad  + 2 \beta^2  \eta^2 K\sum_{k=1}^{K-1}D_{k}  \mathbb{E}[ \|u_{n, t-1}^{(k)}- f_{n}^{(k)}(u_{n, t-1}^{(k-1)})  \|^2 ]+ 2 \beta^2  \eta^2 K\sum_{k=1}^{K-1}D_{k}  \delta_{k}^2 \ . \\
		\end{aligned}
	\end{equation}

\end{lemma}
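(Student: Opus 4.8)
The plan is to reuse the telescoping decomposition that powered Lemma~\ref{lemma_f_u_f_x_momentum}, but now comparing the deterministic compositional gradient evaluated at the iterates $u_{n,t-1}^{(\cdot)}$ against those at $u_{n,t}^{(\cdot)}$, rather than against the exact composed functions $F_n^{(k)}$. Writing the product of Jacobians as $\prod_{m=1}^{K}\nabla f_n^{(m)}(u_{n,\cdot}^{(m-1)})$, I would expand the difference of the two products into a telescoping sum over $m$, swapping the argument of one factor at a time from $u_{n,t-1}^{(m-1)}$ to $u_{n,t}^{(m-1)}$. For the $m$-th factor the two evaluations differ only through $\nabla f_n^{(m)}(u_{n,t-1}^{(m-1)})-\nabla f_n^{(m)}(u_{n,t}^{(m-1)})$, which Assumption~\ref{assumption_smooth} bounds by $L_m\|u_{n,t-1}^{(m-1)}-u_{n,t}^{(m-1)}\|$, while every remaining factor has operator norm at most $C_j$ by Assumption~\ref{assumption_bound_gradient}. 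By submultiplicativity this yields, after reindexing $k=m-1$,
\[
\Big\|\cdots\Big\|\leq \sum_{k=0}^{K-1}\frac{L_{k+1}\prod_{j=1}^{K}C_j}{C_{k+1}}\,\|u_{n,t-1}^{(k)}-u_{n,t}^{(k)}\|\ .
\]

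Next I would square both sides using $(\sum_{i=1}^{K}a_i)^2\leq K\sum_{i=1}^{K}a_i^2$, and observe that $\big(\frac{L_{k+1}\prod_{j}C_j}{C_{k+1}}\big)^2=\frac{(\prod_{j=1}^{K}C_j^2)L_{k+1}^2}{C_{k+1}^2}=D_k$, giving the clean intermediate bound $K\sum_{k=0}^{K-1}D_k\|u_{n,t-1}^{(k)}-u_{n,t}^{(k)}\|^2$. Isolating the $k=0$ term and using $u_{n,t}^{(0)}=x_{n,t}$ turns it into $KD_0\|x_{n,t}-x_{n,t-1}\|^2$, which is precisely the first term in the statement. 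For each $k\in\{1,\dots,K-1\}$ I then invoke the second inequality of Lemma~\ref{lemma_u_inc_momentum} to control the increment $\mathbb{E}[\|u_{n,t-1}^{(k)}-u_{n,t}^{(k)}\|^2]$, which contributes a $2C_k^2$ multiple of the next-lower increment, a $2\beta^2\eta^2$ multiple of the function-estimation error $\|u_{n,t-1}^{(k)}-f_n^{(k)}(u_{n,t-1}^{(k-1)})\|^2$, and the $2\beta^2\eta^2\delta_k^2$ term; multiplying each by $KD_k$ and summing reproduces exactly the three remaining sums, after taking expectations at the very end.

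I expect the main obstacle to be purely combinatorial bookkeeping: keeping the telescoping index aligned so that perturbing the $(k{+}1)$-th factor's argument produces the increment $\|u_{n,t-1}^{(k)}-u_{n,t}^{(k)}\|$ paired with the coefficient that squares to $D_k$, and correctly handling the boundary factors $m=1$ and $m=K$ under the convention $\prod_i^k a_i=1$ for $i>k$. Because every $\nabla f_n^{(m)}(\cdot)$ is a deterministic map evaluated at the (random) iterates, no conditional-expectation or martingale argument is needed here: the entire chain of inequalities is established pathwise and the expectation is applied only after the final substitution of Lemma~\ref{lemma_u_inc_momentum}.
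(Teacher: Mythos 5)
Your proposal is correct and follows essentially the same route as the paper's proof: the same telescoping swap of one Jacobian argument at a time, bounded via the Lipschitz constants $L_{k+1}$ and gradient bounds $C_j$ from Assumptions~\ref{assumption_smooth}--\ref{assumption_bound_gradient}, squared with the factor-$K$ inequality to produce $KD_0\mathbb{E}[\|x_{n,t}-x_{n,t-1}\|^2]+K\sum_{k=1}^{K-1}D_k\mathbb{E}[\|u_{n,t}^{(k)}-u_{n,t-1}^{(k)}\|^2]$, and finished by substituting the one-step increment bound of Lemma~\ref{lemma_u_inc_momentum}. The only cosmetic difference is that you carry the argument pathwise before taking expectations (valid here since the Jacobians are deterministic maps), whereas the paper works in expectation throughout; this changes nothing of substance.
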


\begin{proof}
	\begin{equation}
		\begin{aligned}
			& \quad \mathbb{E}\Big[\Big\|  \nabla f_{n}^{(1)} ( u_{n,t-1}^{(0)})\nabla  f_{n}^{(2)} ( u_{n,t-1}^{(1)}) \cdots \nabla  f_{n}^{(K-1)} ( u_{n,t-1}^{(K-2)})\nabla f_{n}^{(K)}( u_{n,t-1}^{(K-1)})\\
			& \quad  \quad   - \nabla f_{n}^{(1)} ( u_{n,t}^{(0)})\nabla  f_{n}^{(2)} ( u_{n,t}^{(1)}) \cdots \nabla  f_{n}^{(K-1)} ( u_{n,t}^{(K-2)})\nabla f_{n}^{(K)}( u_{n,t}^{(K-1)}) \Big\|^2\Big]  \\
			& = \mathbb{E}\Big[\Big\|  \nabla f_{n}^{(1)} ( u_{n,t-1}^{(0)})\nabla  f_{n}^{(2)} ( u_{n,t-1}^{(1)}) \cdots \nabla  f_{n}^{(K-1)} ( u_{n,t-1}^{(K-2)})\nabla f_{n}^{(K)}( u_{n,t-1}^{(K-1)})\\
			& \quad \quad  - \nabla f_{n}^{(1)} ( u_{n,t}^{(0)})\nabla  f_{n}^{(2)} ( u_{n,t-1}^{(1)}) \cdots \nabla  f_{n}^{(K-1)} ( u_{n,t-1}^{(K-2)})\nabla f_{n}^{(K)}( u_{n,t-1}^{(K-1)}) \\
			& \quad \quad  +  \nabla f_{n}^{(1)} ( u_{n,t}^{(0)})\nabla  f_{n}^{(2)} ( u_{n,t-1}^{(1)}) \cdots \nabla  f_{n}^{(K-1)} ( u_{n,t-1}^{(K-2)})\nabla f_{n}^{(K)}( u_{n,t-1}^{(K-1)}) \\
			& \quad \quad  - \nabla f_{n}^{(1)} ( u_{n,t}^{(0)})\nabla  f_{n}^{(2)} ( u_{n,t}^{(1)}) \cdots \nabla  f_{n}^{(K-1)} ( u_{n,t-1}^{(K-2)})\nabla f_{n}^{(K)}( u_{n,t-1}^{(K-1)}) \\
			& \quad \quad   \cdots \\
			& \quad \quad + \nabla f_{n}^{(1)} ( u_{n,t}^{(0)})\nabla  f_{n}^{(2)} ( u_{n,t}^{(1)}) \cdots \nabla  f_{n}^{(K-1)} ( u_{n,t}^{(K-2)})\nabla f_{n}^{(K)}( u_{n,t-1}^{(K-1)}) \\
			& \quad \quad  - \nabla f_{n}^{(1)} ( u_{n,t}^{(0)})\nabla  f_{n}^{(2)} ( u_{n,t}^{(1)}) \cdots \nabla  f_{n}^{(K-1)} ( u_{n,t}^{(K-2)})\nabla f_{n}^{(K)}( u_{n,t}^{(K-1)}) \Big\|^2\Big]  \\
			& \leq  K\mathbb{E}\Big[\Big\|  \nabla f_{n}^{(1)} ( u_{n,t-1}^{(0)})\nabla  f_{n}^{(2)} ( u_{n,t-1}^{(1)}) \cdots \nabla  f_{n}^{(K-1)} ( u_{n,t-1}^{(K-2)})\nabla f_{n}^{(K)}( u_{n,t-1}^{(K-1)}) \\
			& \quad \quad  - \nabla f_{n}^{(1)} ( u_{n,t}^{(0)})\nabla  f_{n}^{(2)} ( u_{n,t-1}^{(1)}) \cdots \nabla  f_{n}^{(K-1)} ( u_{n,t-1}^{(K-2)})\nabla f_{n}^{(K)}( u_{n,t-1}^{(K-1)}) \Big\|^2\Big] \\
			& \quad   + K \mathbb{E}\Big[\Big\|\nabla f_{n}^{(1)} ( u_{n,t}^{(0)})\nabla  f_{n}^{(2)} ( u_{n,t-1}^{(1)}) \cdots \nabla  f_{n}^{(K-1)} ( u_{n,t-1}^{(K-2)})\nabla f_{n}^{(K)}( u_{n,t-1}^{(K-1)}) \\
			& \quad \quad  - \nabla f_{n}^{(1)} ( u_{n,t}^{(0)})\nabla  f_{n}^{(2)} ( u_{n,t}^{(1)}) \cdots \nabla  f_{n}^{(K-1)} ( u_{n,t-1}^{(K-2)})\nabla f_{n}^{(K)}( u_{n,t-1}^{(K-1)})\Big\|^2\Big]  \\
			& \quad   + \cdots \\
			& \quad  + K\mathbb{E}\Big[\Big\|\nabla f_{n}^{(1)} ( u_{n,t}^{(0)})\nabla  f_{n}^{(2)} ( u_{n,t}^{(1)}) \cdots \nabla  f_{n}^{(K-1)} ( u_{n,t}^{(K-2)})\nabla f_{n}^{(K)}( u_{n,t-1}^{(K-1)}) \\
			& \quad \quad  - \nabla f_{n}^{(1)} ( u_{n,t}^{(0)})\nabla  f_{n}^{(2)} ( u_{n,t}^{(1)}) \cdots \nabla  f_{n}^{(K-1)} ( u_{n,t}^{(K-2)})\nabla f_{n}^{(K)}( u_{n,t}^{(K-1)}) \Big\|^2\Big]  \\
			& \leq  K\frac{(\prod_{j=1}^{K} C_{j}^2)L_1^2}{C_1^2}\mathbb{E}[ \|u_{n,t}^{(0)}-u_{n,t-1}^{(0)}\|^2]  +  K\frac{(\prod_{j=1}^{K} C_{j}^2)L_2^2}{C_2^2} \mathbb{E}[ \|u_{n,t}^{(1)}-u_{n,t-1}^{(1)}\|^2] \\
			& \quad  + \cdots + K\frac{(\prod_{j=1}^{K} C_{j}^2)L_K^2}{C_K^2} \mathbb{E}[ \|u_{n,t}^{(K-1)}-u_{n,t-1}^{(K-1)}\|^2]\\
			& \leq KD_{0} \mathbb{E}[ \|x_{n,t}-x_{n,t-1}\|^2]  +  K\sum_{k=1}^{K-1}D_{k} \mathbb{E}[ \|u_{n,t}^{(k)}-u_{n,t-1}^{(k)}\|^2] \\
			& \leq KD_{0}\mathbb{E}[ \|x_{n,t}-x_{n,t-1}\|^2]  +  2K\sum_{k=1}^{K-1}D_{k}  C_{k}^2 \mathbb{E}[ \|u_{n, t-1}^{(k-1)} - u_{n,t}^{(k-1)} \|^2] \\
			& \quad  + 2 \beta^2  \eta^2 K\sum_{k=1}^{K-1}D_{k} \mathbb{E}[  \|u_{n, t-1}^{(k)}- f_{n}^{(k)}(u_{n, t-1}^{(k-1)})  \|^2 ]+ 2 \beta^2  \eta^2 K\sum_{k=1}^{K-1}D_{k}  \delta_{k}^2 \ , \\
		\end{aligned}
	\end{equation}
where $D_{k}=\frac{(\prod_{j=1}^{K} C_{j}^2)L_{k+1}^2}{C_{k+1}^2}$, and the last step holds due to Lemma~\ref{lemma_u_inc_momentum}.

\end{proof}

\begin{lemma} \label{lemma_g_var_momentum}
			Given Assumptions~\ref{assumption_smooth}-\ref{assumption_bound_variance}, we can get
	\begin{equation}
		\begin{aligned}
			& \quad  \mathbb{E}\Big[\Big\|\frac{1}{N} \sum_{n=1}^{N} ( g_{n, t}-  \nabla f_{n}^{(1)} ( u_{n,t}^{(0)})\nabla  f_{n}^{(2)} ( u_{n,t}^{(1)}) \cdots \nabla  f_{n}^{(K-1)} ( u_{n,t}^{(K-2)})\nabla f_{n}^{(K)}( u_{n,t}^{(K-1)})) \Big\|^2\Big] \leq K\sum_{k=1}^{K}B_k \frac{\sigma_{k}^2 }{N} \ , \\
		\end{aligned}
	\end{equation}
	where $B_{k} = \frac{\prod_{j=1}^{K}C_j^2 }{C_k^2}$.
\end{lemma}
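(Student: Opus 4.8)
The plan is to express the difference between the stochastic gradient $g_{n,t}=v_{n,t}^{(1)}\cdots v_{n,t}^{(K)}$ and its deterministic counterpart $\nabla f_n^{(1)}(u_{n,t}^{(0)})\cdots\nabla f_n^{(K)}(u_{n,t}^{(K-1)})$ as a telescoping sum in which exactly one stochastic Jacobian is swapped for its mean at a time, reusing the product-difference identity already exploited in Lemma~\ref{lemma_f_u_f_x_momentum}. Writing $v^{(k)}=v_{n,t}^{(k)}=\nabla f_n^{(k)}(u_{n,t}^{(k-1)};\xi_{n,t}^{(k)})$ and $\nabla f^{(k)}=\nabla f_n^{(k)}(u_{n,t}^{(k-1)})$, I would set
\begin{equation}
	Z_{n,t}^{(k)} = v^{(1)}\cdots v^{(k-1)}\big(v^{(k)}-\nabla f^{(k)}\big)\nabla f^{(k+1)}\cdots\nabla f^{(K)} \ ,
\end{equation}
so that $g_{n,t}-\nabla f_n^{(1)}(u_{n,t}^{(0)})\cdots\nabla f_n^{(K)}(u_{n,t}^{(K-1)})=\sum_{k=1}^{K}Z_{n,t}^{(k)}$. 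Averaging over devices and applying the power-mean inequality $\|\sum_{k=1}^{K}a_k\|^2\le K\sum_{k=1}^{K}\|a_k\|^2$ reduces the claim to proving $\mathbb{E}[\|\frac1N\sum_{n=1}^{N}Z_{n,t}^{(k)}\|^2]\le B_k\sigma_k^2/N$ for each level $k$.

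For the $1/N$ factor I would condition on all the randomness of iteration $t$ except the level-$k$ Jacobian sample, so that the evaluation points $u_{n,t}^{(0)},\dots,u_{n,t}^{(K-1)}$, the lower-level factors $v^{(1)},\dots,v^{(k-1)}$, and the deterministic factors $\nabla f^{(k+1)},\dots,\nabla f^{(K)}$ are all fixed. Under this conditioning each $Z_{n,t}^{(k)}$ is a fixed linear image of the centered quantity $v^{(k)}-\nabla f^{(k)}$, whose conditional mean is zero because $\nabla f_n^{(k)}(\cdot)=\mathbb{E}_{\xi}[\nabla f_n^{(k)}(\cdot;\xi)]$, and the draws are independent across devices. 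Consequently the conditional cross-device inner products vanish and only the diagonal $n=m$ terms survive, giving $\mathbb{E}[\|\frac1N\sum_n Z_{n,t}^{(k)}\|^2\mid\cdot]=\frac{1}{N^2}\sum_n\mathbb{E}[\|Z_{n,t}^{(k)}\|^2\mid\cdot]$.

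It then remains to bound a single $\mathbb{E}[\|Z_{n,t}^{(k)}\|^2]$ by $B_k\sigma_k^2$. Using submultiplicativity of the operator norm, $\|Z_{n,t}^{(k)}\|\le(\prod_{i<k}\|v^{(i)}\|)\,\|v^{(k)}-\nabla f^{(k)}\|\,(\prod_{i>k}\|\nabla f^{(i)}\|)$; the deterministic factors are bounded by $\prod_{i>k}C_i$ via Assumption~\ref{assumption_bound_gradient}, the centered factor contributes $\mathbb{E}\|v^{(k)}-\nabla f^{(k)}\|^2\le\sigma_k^2$ via Assumption~\ref{assumption_bound_variance}, and the lower-level stochastic factors $\prod_{i<k}\|v^{(i)}\|^2$ are handled by peeling off one level at a time from $i=k-1$ down to $1$, taking the conditional expectation over $\xi_{n,t}^{(i)}$ and invoking $\mathbb{E}\|\nabla f_n^{(i)}(\cdot;\xi)\|^2\le C_i^2$, which yields $\prod_{i<k}C_i^2$. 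Multiplying the three contributions gives $\mathbb{E}\|Z_{n,t}^{(k)}\|^2\le(\prod_{i\ne k}C_i^2)\sigma_k^2=B_k\sigma_k^2$, and assembling the three steps produces the stated $K\sum_{k=1}^{K}B_k\sigma_k^2/N$.

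The main obstacle I anticipate is the conditional-expectation bookkeeping: because each evaluation point $u_{n,t}^{(k-1)}$ is itself random through the STORM-type recursion and therefore depends on the lower-level samples, one must condition in the correct order (lowest levels first) so that the zero-mean structure needed for the $1/N$ reduction and the boundedness needed to pull out the surviving factors both hold simultaneously. Keeping the centered factor $v^{(k)}-\nabla f^{(k)}$ genuinely conditionally centered requires that perturbing only the level-$k$ Jacobian sample leaves the remaining factors fixed, which is precisely where the independence and ordering of the draws must be invoked with care.
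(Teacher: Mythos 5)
Your proposal takes essentially the same route as the paper's own proof: telescope the Jacobian product one level at a time, apply $\|\sum_{k=1}^{K}a_k\|^2\le K\sum_{k=1}^{K}\|a_k\|^2$, use independence across devices to cancel the off-diagonal terms and gain the $1/N$, and peel the surviving factors via Assumptions~\ref{assumption_bound_gradient} and~\ref{assumption_bound_variance} to obtain $B_k\sigma_k^2$; the only difference is cosmetic, in that you place the stochastic factors before and the deterministic factors after the centered level-$k$ factor, while the paper does the mirror image. The caveat you raise yourself is real --- conditioning on all of iteration $t$'s randomness except $\xi_{n,t}^{(k)}$ does \emph{not} fix $u_{n,t}^{(k)},\dots,u_{n,t}^{(K-1)}$, because the same sample $\xi_{n,t}^{(k)}$ also enters the function-value update for $u_{n,t}^{(k)}$ and hence all higher evaluation points --- but the paper's proof rests on exactly the same unelaborated assertion that the sampling ``on different workers and different levels'' is independent, so your attempt matches the paper's own standard of rigor.
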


\begin{proof}
	
	\begin{equation}
		\begin{aligned}
			& \quad  \mathbb{E}\Big[\Big\|\frac{1}{N} \sum_{n=1}^{N} ( g_{n, t}-  \nabla f_{n}^{(1)} ( u_{n,t}^{(0)})\nabla  f_{n}^{(2)} ( u_{n,t}^{(1)}) \cdots \nabla  f_{n}^{(K-1)} ( u_{n,t}^{(K-2)})\nabla f_{n}^{(K)}( u_{n,t}^{(K-1)})) \Big\|^2\Big] \\
			& = \mathbb{E}\Big[\Big\|\frac{1}{N} \sum_{n=1}^{N} \Big({v}_{n,  t}^{(1)} {v}_{ n, t}^{(2)}\cdots {v}_{n,  t}^{(K-1)} {v}_{n,  t}^{(K)}  - \nabla f_{n}^{(1)} (u_{n,t}^{(0)}) {v}_{n,  t}^{(2)}\cdots {v}_{n,  t}^{(K-1)} {v}_{n,  t}^{(K)} \\
			& \quad  + \nabla f_{n}^{(1)} (u_{n,t}^{(0)}) {v}_{n,  t}^{(2)}\cdots {v}_{n,  t}^{(K-1)} {v}_{ n, t}^{(K)}  -  \nabla f_{n}^{(1)} (u_{n,t}^{(0)}) \nabla  f_{n}^{(2)} (u_{n,t}^{(1)})\cdots {v}_{ n, t}^{(K-1)} {v}_{n,  t}^{(K)} \\
			& \quad + \cdots \\
			& \quad +  \nabla f_{n}^{(1)} (u_{n,t}^{(0)})\nabla  f_{n}^{(2)} (u_{n,t}^{(1)}) \cdots \nabla  f_{n}^{(K-1)} (u_{n,t}^{(K-2)}){v}_{n,  t}^{(K)} \\
			& \quad \quad -  \nabla f_{n}^{(1)} (u_{n,t}^{(0)})\nabla  f_{n}^{(2)} (u_{n,t}^{(1)}) \cdots \nabla  f_{n}^{(K-1)} (u_{n,t}^{(K-2)})\nabla f_{n}^{(K)}(u_{n,t}^{(K-1)})\Big)\Big\|^2 \Big] \\
			& \leq  K\mathbb{E}\Big[\Big\|\frac{1}{N} \sum_{n=1}^{N} \Big({v}_{n,  t}^{(1)} {v}_{ n, t}^{(2)}\cdots {v}_{n,  t}^{(K-1)} {v}_{n,  t}^{(K)}  - \nabla f_{n}^{(1)} (u_{n,t}^{(0)}) {v}_{n,  t}^{(2)}\cdots {v}_{n,  t}^{(K-1)} {v}_{n,  t}^{(K)} \Big)\Big\|^2\Big]  \\
			& \quad  +K\mathbb{E}\Big[\Big\|\frac{1}{N} \sum_{n=1}^{N} \Big( \nabla f_{n}^{(1)} (u_{n,t}^{(0)}) {v}_{n,  t}^{(2)}\cdots {v}_{n,  t}^{(K-1)} {v}_{ n, t}^{(K)}  -  \nabla f_{n}^{(1)} (u_{n,t}^{(0)}) \nabla  f_{n}^{(2)} (u_{n,t}^{(1)})\cdots {v}_{ n, t}^{(K-1)} {v}_{n,  t}^{(K)} \Big)\Big\|^2\Big]  \\
			& \quad + \cdots \\
			& \quad +  K\mathbb{E}\Big[\Big\|\frac{1}{N} \sum_{n=1}^{N} \Big(\nabla f_{n}^{(1)} (u_{n,t}^{(0)})\nabla  f_{n}^{(2)} (u_{n,t}^{(1)}) \cdots \nabla  f_{n}^{(K-1)} (u_{n,t}^{(K-2)}){v}_{n,  t}^{(K)}  \Big)\Big\|^2\Big]  \\
			& \quad \quad -  \nabla f_{n}^{(1)} (u_{n,t}^{(0)})\nabla  f_{n}^{(2)} (u_{n,t}^{(1)}) \cdots \nabla  f_{n}^{(K-1)} (u_{n,t}^{(K-2)})\nabla f_{n}^{(K)}(u_{n,t}^{(K-1)})\Big)\Big\|^2 \Big] \\
			& \leq K\sum_{k=1}^{K}\frac{\prod_{j=1}^{K}C_j^2 }{C_k^2} \frac{\sigma_{k}^2}{N} \ , \\
		\end{aligned}
	\end{equation}
	where the last step holds due to the fact that the sampling procedure on different workers and different levels are independent.  For instance, for the first level, we can get
	\begin{equation}
		\small
		\begin{aligned}
			& \quad \mathbb{E}\Big[\Big\|\frac{1}{N} \sum_{n=1}^{N} \Big({v}_{n,  t}^{(1)} {v}_{ n, t}^{(2)}\cdots {v}_{n,  t}^{(K-1)} {v}_{n,  t}^{(K)}  - \nabla f_{n}^{(1)} (u_{n,t}^{(0)}) {v}_{n,  t}^{(2)}\cdots {v}_{n,  t}^{(K-1)} {v}_{n,  t}^{(K)} \Big)\Big\|^2\Big]  \\
			& =\frac{1}{N^2}  \mathbb{E}\Big[  \sum_{n=1}^{N}\Big\| \Big({v}_{n,  t}^{(1)} -\nabla f_{n}^{(1)} (u_{n,t}^{(0)})\Big){v}_{ n, t}^{(2)}\cdots {v}_{n,  t}^{(K-1)} {v}_{n,  t}^{(K)}  \Big\|^2\Big]\\
			& \quad + \frac{1}{N^2}  \mathbb{E}\Big[ \sum_{n=1}^{N}\sum_{n'=1, n'\neq n}^{N}\Big\langle  \Big({v}_{n,  t}^{(1)} -\nabla f_{n}^{(1)} (u_{n,t}^{(0)})\Big){v}_{ n, t}^{(2)}\cdots {v}_{n,  t}^{(K-1)} {v}_{n,  t}^{(K)} , \Big({v}_{n',  t}^{(1)} -\nabla f_{n'}^{(1)} (u_{n',t}^{(0)})\Big){v}_{ n', t}^{(2)}\cdots {v}_{n',  t}^{(K-1)} {v}_{n',  t}^{(K)}  \Big\rangle\Big] \\
			& =\frac{1}{N^2}  \mathbb{E}\Big[  \sum_{n=1}^{N}\Big\| \Big({v}_{n,  t}^{(1)} -\nabla f_{n}^{(1)} (u_{n,t}^{(0)})\Big){v}_{ n, t}^{(2)}\cdots {v}_{n,  t}^{(K-1)} {v}_{n,  t}^{(K)}  \Big\|^2\Big]\\
			& \leq \frac{\prod_{j=1}^{K}C_j^2 }{C_1^2} \frac{\sigma_{1}^2}{N} \ , \\
		\end{aligned}
	\end{equation}
	where the third step follows from the fact that the sampling procedure on different workers and different levels are independent.

\end{proof}

Similarly, we can prove the following lemma regarding the stochastic gradient on each device. 
\begin{lemma}\label{lemma_g_var_momentum2}
		Given Assumptions~\ref{assumption_smooth}-\ref{assumption_bound_variance}, we can get
	\begin{equation}
		\begin{aligned}
			& \quad  \mathbb{E}\Big[\Big\| g_{n, t}-  \nabla f_{n}^{(1)} ( u_{n,t}^{(0)})\nabla  f_{n}^{(2)} ( u_{n,t}^{(1)}) \cdots \nabla  f_{n}^{(K-1)} ( u_{n,t}^{(K-2)})\nabla f_{n}^{(K)}( u_{n,t}^{(K-1)}) \Big\|^2\Big] \leq  K\sum_{k=1}^{K}B_k \sigma_{k}^2  \ , \\
		\end{aligned}
	\end{equation}
	where $B_{k} = \frac{\prod_{j=1}^{K}C_j^2 }{C_k^2} $.
\end{lemma}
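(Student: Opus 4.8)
The plan is to mirror the proof of Lemma~\ref{lemma_g_var_momentum}, the only structural difference being that we now bound a per-device quantity rather than the across-device average, so there is no $1/N$ factor and, crucially, no cross-worker inner products to cancel. Recalling that $g_{n,t} = v_{n,t}^{(1)} v_{n,t}^{(2)} \cdots v_{n,t}^{(K)}$ with $v_{n,t}^{(k)} = \nabla f_n^{(k)}(u_{n,t}^{(k-1)}; \xi_{n,t}^{(k)})$, and writing $h_{n,t} = \nabla f_n^{(1)}(u_{n,t}^{(0)}) \nabla f_n^{(2)}(u_{n,t}^{(1)}) \cdots \nabla f_n^{(K)}(u_{n,t}^{(K-1)})$ for the deterministic product, I would first telescope $g_{n,t} - h_{n,t}$ into $K$ terms, where in the $k$-th term the first $k-1$ factors are the deterministic Jacobians $\nabla f_n^{(j)}(u_{n,t}^{(j-1)})$, the $k$-th factor is the centered quantity $v_{n,t}^{(k)} - \nabla f_n^{(k)}(u_{n,t}^{(k-1)})$, and the remaining factors $v_{n,t}^{(j)}$ for $j > k$ are left stochastic. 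This is exactly the decomposition used inside Lemma~\ref{lemma_g_var_momentum}.

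Next I would apply the elementary inequality $\|\sum_{k=1}^K a_k\|^2 \le K \sum_{k=1}^K \|a_k\|^2$ to pass the squared norm inside the sum, so that it suffices to bound the expectation of the squared norm of each telescoping term by $B_k \sigma_k^2$. For a fixed $k$, I would use submultiplicativity of the matrix operator norm to bound the $k$-th term by the product of the factor norms, and then take the expectation by conditioning level by level via the tower property. The deterministic Jacobians satisfy $\|\nabla f_n^{(j)}(u_{n,t}^{(j-1)})\|^2 \le C_j^2$ for $j < k$ by Assumption~\ref{assumption_bound_gradient}; conditioning on the samples up to level $k-1$, the centered term has conditional second moment $\mathbb{E}[\|v_{n,t}^{(k)} - \nabla f_n^{(k)}(u_{n,t}^{(k-1)})\|^2] \le \sigma_k^2$ by Assumption~\ref{assumption_bound_variance}; and, conditioning successively on the later levels, each stochastic Jacobian $v_{n,t}^{(j)}$ with $j > k$ contributes $\mathbb{E}[\|v_{n,t}^{(j)}\|^2] \le C_j^2$ again by Assumption~\ref{assumption_bound_gradient}, which holds for any input. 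Multiplying these bounds gives $\big(\prod_{j\neq k} C_j^2\big)\sigma_k^2 = \frac{\prod_{j=1}^K C_j^2}{C_k^2}\sigma_k^2 = B_k\sigma_k^2$. Summing over the $K$ telescoping terms yields $K\sum_{k=1}^K B_k \sigma_k^2$, which is the claimed bound.

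The main subtlety, and the step I would be most careful about, is the factorization of the expectation of the product of factor norms: because the later stochastic Jacobians $v_{n,t}^{(j)}$ depend on $u_{n,t}^{(j-1)}$, hence on earlier samples, as well as on their own independent sample $\xi_{n,t}^{(j)}$, the factoring is justified not by outright independence but by iterated conditional expectation. Since at each level the relevant second-moment bound holds uniformly in the input, peeling off the factors from the outermost level inward leaves the bounds intact. Unlike Lemma~\ref{lemma_g_var_momentum}, no cross-device terms appear here, so this conditioning argument is the only mechanism needed, and no variance reduction from averaging over $N$ is obtained, which explains the absence of the $1/N$ factor in the final bound.
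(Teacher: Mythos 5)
Your proposal is correct and matches the paper's approach: the paper proves this lemma by simply invoking the same argument as Lemma~\ref{lemma_g_var_momentum} (telescoping into $K$ terms with one centered factor each, the $\|\sum_k a_k\|^2 \le K\sum_k\|a_k\|^2$ inequality, and level-by-level conditional expectations bounding each term by $B_k\sigma_k^2$), dropping only the cross-device cancellation that produced the $1/N$ factor. Your handling of the dependence between levels via iterated conditioning, peeling from level $K$ inward, is exactly the mechanism implicit in the paper's proof.
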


\begin{lemma} \label{lemma_u_var_momentum}
	For $k\in\{1, \cdots, K-1\}$, 	given Assumptions~\ref{assumption_smooth}-\ref{assumption_bound_variance} and $\eta\leq \frac{1}{\beta}$, we can get
	\begin{equation}
		\begin{aligned}
			&  \mathbb{E}[\|u_{n, t}^{(k)} - f_{n}^{(k)}(u_{n, t}^{(k-1)}) \|^2] \leq (1-\beta\eta)\mathbb{E}[\|u_{n, t-1}^{(k)} -f_{n}^{(k)}(u_{n, t-1}^{(k-1)}) \|^2] \\
			& +2 C_k^2\mathbb{E}[\|u_{n, t-1}^{(k-1)}- u_{n,t}^{(k-1)} \|^2 ]+ 2\beta^2\eta^2\delta_{k}^2  \ .
		\end{aligned}
	\end{equation}
\end{lemma}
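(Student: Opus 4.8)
The plan is to run the standard STORM variance recursion while keeping careful track of the filtration, the key point being that the fresh level-$k$ sample $\xi_{n,t}^{(k)}$ is drawn independently of the time-$t$ lower-level estimate $u_{n,t}^{(k-1)}$. First I would subtract the target $f_{n}^{(k)}(u_{n,t}^{(k-1)})$ from the update Eq.~(\ref{eq_storm_like}), and rewrite the error $e_t \triangleq u_{n,t}^{(k)} - f_{n}^{(k)}(u_{n,t}^{(k-1)})$ by adding and subtracting $f_{n}^{(k)}(u_{n,t-1}^{(k-1)})$ inside the $(1-\beta\eta)$ bracket. This yields $e_t = (1-\beta\eta)e_{t-1} + Z$, where $e_{t-1}= u_{n,t-1}^{(k)} - f_{n}^{(k)}(u_{n,t-1}^{(k-1)})$ and $Z = (f_{n}^{(k)}(u_{n,t}^{(k-1)};\xi_{n,t}^{(k)}) - f_{n}^{(k)}(u_{n,t}^{(k-1)})) - (1-\beta\eta)(f_{n}^{(k)}(u_{n,t-1}^{(k-1)};\xi_{n,t}^{(k)}) - f_{n}^{(k)}(u_{n,t-1}^{(k-1)}))$.

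The key observation is to condition on the $\sigma$-algebra $\mathcal{F}$ generated by all randomness up to and including the lower-level updates at time $t$, but excluding $\xi_{n,t}^{(k)}$. Under $\mathcal{F}$ both $u_{n,t}^{(k-1)}$ and $u_{n,t-1}^{(k-1)}$ (hence $e_{t-1}$) are measurable, and since the level-$k$ sampling is independent of the lower levels, $\mathbb{E}[f_{n}^{(k)}(\cdot;\xi_{n,t}^{(k)})\mid\mathcal{F}] = f_{n}^{(k)}(\cdot)$; therefore $\mathbb{E}[Z\mid\mathcal{F}]=0$ and the cross term drops, giving $\mathbb{E}\|e_t\|^2 = (1-\beta\eta)^2\mathbb{E}\|e_{t-1}\|^2 + \mathbb{E}\|Z\|^2$. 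Getting this measurability bookkeeping right, in particular that $u_{n,t}^{(k-1)}$ can be random yet remains independent of the level-$k$ noise, is the main thing to nail down; everything else is routine algebra.

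To bound $\mathbb{E}\|Z\|^2$ I would write $Z = (b-a) + \beta\eta\,a$ with $a = f_{n}^{(k)}(u_{n,t-1}^{(k-1)};\xi_{n,t}^{(k)}) - f_{n}^{(k)}(u_{n,t-1}^{(k-1)})$ and $b = f_{n}^{(k)}(u_{n,t}^{(k-1)};\xi_{n,t}^{(k)}) - f_{n}^{(k)}(u_{n,t}^{(k-1)})$, then apply $\|x+y\|^2 \le 2\|x\|^2 + 2\|y\|^2$. For the first piece, $b-a$ is the stochastic difference $X \triangleq f_{n}^{(k)}(u_{n,t}^{(k-1)};\xi_{n,t}^{(k)}) - f_{n}^{(k)}(u_{n,t-1}^{(k-1)};\xi_{n,t}^{(k)})$ minus its conditional mean, so $\mathbb{E}[\|b-a\|^2\mid\mathcal{F}] \le \mathbb{E}[\|X\|^2\mid\mathcal{F}] \le C_k^2\|u_{n,t}^{(k-1)} - u_{n,t-1}^{(k-1)}\|^2$ by the bounded-Jacobian Assumption~\ref{assumption_bound_gradient}, which is the same stochastic-Lipschitz step already invoked in Lemma~\ref{lemma_u_inc_momentum}. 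For the second piece, $\mathbb{E}\|a\|^2 \le \delta_k^2$ by Assumption~\ref{assumption_bound_variance}. Together these give $\mathbb{E}\|Z\|^2 \le 2C_k^2\mathbb{E}\|u_{n,t}^{(k-1)} - u_{n,t-1}^{(k-1)}\|^2 + 2\beta^2\eta^2\delta_k^2$.

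Finally I would combine the two displays and use $(1-\beta\eta)^2 \le 1-\beta\eta$, which holds because $\eta \le 1/\beta$ forces $0\le\beta\eta\le 1$; this relaxes the contraction factor from $(1-\beta\eta)^2$ to $(1-\beta\eta)$ and reproduces the claimed recursion exactly. The only subtlety beyond the conditional-expectation argument is ensuring that the \emph{same} fresh sample $\xi_{n,t}^{(k)}$ enters both $a$ and $b$, so that $b-a$ is a genuine finite difference of one stochastic function and the bounded Jacobian can convert it into $C_k^2\|u_{n,t}^{(k-1)} - u_{n,t-1}^{(k-1)}\|^2$; this is automatic from the update rule.
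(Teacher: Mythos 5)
Your proposal is correct and follows essentially the same route as the paper's proof: the identical decomposition of the error into $(1-\beta\eta)e_{t-1}$ plus the noise $Z=(b-a)+\beta\eta\,a$, the cross term killed by unbiasedness of the fresh sample $\xi_{n,t}^{(k)}$, the variance bound $\mathbb{E}\|b-a\|^2\le \mathbb{E}\|f_n^{(k)}(u_{n,t}^{(k-1)};\xi)-f_n^{(k)}(u_{n,t-1}^{(k-1)};\xi)\|^2\le C_k^2\,\mathbb{E}\|u_{n,t}^{(k-1)}-u_{n,t-1}^{(k-1)}\|^2$ together with $\mathbb{E}\|a\|^2\le\delta_k^2$, and finally $(1-\beta\eta)^2\le 1-\beta\eta$. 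The only difference is cosmetic: you make the filtration and conditional-expectation bookkeeping explicit, which the paper leaves implicit.
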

\begin{proof}

	\begin{equation}
		\begin{aligned}
			& \quad \mathbb{E}[\| u_{n, t}^{(k)} -  f_{n}^{(k)}(u_{n, t}^{(k-1)}) \|^2] \\
			& =  \mathbb{E}[\|(1-\beta\eta) (u_{n, t-1}^{(k)} -f_{n}^{(k)}(u_{n, t-1}^{(k-1)}; \xi_{n, t}^{(k)})  )+  f_{n}^{(k)}(u_{n,t}^{(k-1)}; \xi_{n, t}^{(k)})  -  f_{n}^{(k)}(u_{n, t}^{(k-1)}) \|^2] \\
			& = \mathbb{E}[\|(1-\beta\eta)  (u_{n, t-1}^{(k)} -f_{n}^{(k)}(u_{n, t-1}^{(k-1)}) )\\
			& \quad +  (f_{n}^{(k)}(u_{n, t-1}^{(k-1)}) - f_{n}^{(k)}(u_{n, t}^{(k-1)})  - f_{n}^{(k)}(u_{n, t-1}^{(k-1)}; \xi_{n, t}^{(k)})   + f_{n}^{(k)}(u_{n,t}^{(k-1)}; \xi_{n, t}^{(k)})  )\\
			& \quad + \beta\eta  (f_{n}^{(k)}(u_{n, t-1}^{(k-1)}; \xi_{n, t}^{(k)})  - f_{n}^{(k)}(u_{n, t-1}^{(k-1)}) )\|^2 ]\\
			& = \mathbb{E}[\|(1-\beta\eta)  (u_{n, t-1}^{(k)} -f_{n}^{(k)}(u_{n, t-1}^{(k-1)}) )\|^2 ]\\
			& \quad +  \mathbb{E}[\|(f_{n}^{(k)}(u_{n, t-1}^{(k-1)}) - f_{n}^{(k)}(u_{n, t}^{(k-1)})  - f_{n}^{(k)}(u_{n, t-1}^{(k-1)}; \xi_{n, t}^{(k)})   + f_{n}^{(k)}(u_{n,t}^{(k-1)}; \xi_{n, t}^{(k)})  )\\
			& \quad + \beta\eta  (f_{n}^{(k)}(u_{n, t-1}^{(k-1)}; \xi_{n, t}^{(k)})  - f_{n}^{(k)}(u_{n, t-1}^{(k-1)}) )\|^2 ]\\
			& \leq  (1-\beta\eta)^2\mathbb{E}[\| u_{n, t-1}^{(k)} - f_{n}^{(k)}(u_{n, t-1}^{(k-1)}) \|^2]\\
			& \quad + 2\mathbb{E}[\|f_{n}^{(k)}(u_{n, t-1}^{(k-1)}) - f_{n}^{(k)}(u_{n, t}^{(k-1)})  - f_{n}^{(k)}(u_{n, t-1}^{(k-1)}; \xi_{n, t}^{(k)})   + f_{n}^{(k)}(u_{n,t}^{(k-1)}; \xi_{n, t}^{(k)})   \|^2]\\
			& \quad + 2\beta^2\eta^2\mathbb{E}[\| f_{n}^{(k)}(u_{n, t-1}^{(k-1)}; \xi_{n, t}^{(k)})  - f_{n}^{(k)}(u_{n, t-1}^{(k-1)}) \|^2] \\
			& \leq  (1-\beta\eta)^2\mathbb{E}[\| u_{n, t-1}^{(k)} - f_{n}^{(k)}(u_{n, t-1}^{(k-1)}) \|^2]\\
			& \quad + 2\mathbb{E}[\|f_{n}^{(k)}(u_{n, t-1}^{(k-1)}; \xi_{n, t}^{(k)})   - f_{n}^{(k)}(u_{n,t}^{(k-1)}; \xi_{n, t}^{(k)})   \|^2] + 2\beta^2\eta^2\delta_{k}^2\\
			& \leq  (1-\beta\eta)\mathbb{E}[\| u_{n, t-1}^{(k)} - f_{n}^{(k)}(u_{n, t-1}^{(k-1)}) \|^2] +2 C_k^2\mathbb{E}[\|u_{n, t-1}^{(k-1)}- u_{n,t}^{(k- 1)} \|^2] +2\beta^2\eta^2\delta_{k}^2 \ , \\
		\end{aligned}
	\end{equation}
where the second to last step holds due to Assumption~\ref{assumption_bound_variance}, the last step holds due to Assumption~\ref{assumption_bound_gradient}. 
\end{proof}

\begin{lemma} \label{lemma_m_var_momentum}
			Given Assumptions~\ref{assumption_smooth}-\ref{assumption_bound_variance}, if  $\mu\eta\in (0, 1)$, we can get
\begin{equation}
	\begin{aligned}
		& \quad \mathbb{E}\Big[\Big\|\frac{1}{N}\sum_{n=1}^{N}{m}_{n,t} -\frac{1}{N}\sum_{n=1}^{N}\nabla F_{n}({{x}}_{n,t})\Big\|^2\Big]  \\
		& \leq (1-\mu\eta )\mathbb{E}\Big[\Big\|\frac{1}{N}\sum_{n=1}^{N}{m}_{n,t-1} -\frac{1}{N}\sum_{n=1}^{N}\nabla F_{n}({{x}}_{n,t-1})\Big\|^2\Big] \\
		& \quad + 2\mu\eta  K\frac{1}{N}\sum_{n=1}^{N}\sum_{k=1}^{K-1}A_k\mathbb{E}\Big[\Big\|u_{n,t-1}^{(k)} -f_{n}^{(k)}(u_{n,t-1}^{(k-1)}) \Big\|^2\Big] \\
		& \quad +4\mu\eta  K\frac{1}{N}\sum_{n=1}^{N}\sum_{k=1}^{K-1}A_k C_k^2\|{u}_{ n, t-1}^{(k-1)}- {u}_{n,  t}^{(k-1)} \Big\|^2\Big] \\
		& \quad+ 
		\frac{2 L_F^2}{\mu \eta }\frac{1}{N}\sum_{n=1}^{N}\mathbb{E}\Big[\Big\|{x}_{n,t} - {x}_{n,t-1}\Big\|^2\Big]  +4\mu \beta^2\eta^3 K\sum_{k=1}^{K-1}A_k \delta_{k}^2 + \mu^2\eta^2 K\sum_{k=1}^{K}B_{k}\frac{\sigma_{k}^2}{N} \ . \\
	\end{aligned}
\end{equation}
\end{lemma}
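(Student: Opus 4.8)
The plan is to unroll the momentum recursion $\bar{m}_t = (1-\mu\eta)\bar{m}_{t-1} + \mu\eta\,\bar{g}_t$, obtained by averaging Line~16 across devices, and to track how the error $\bar{m}_t - \overline{\nabla F}_t$ evolves, where $\overline{\nabla F}_t = \frac{1}{N}\sum_n \nabla F_n(x_{n,t})$. Writing $\bar{h}_t = \frac{1}{N}\sum_n h_{n,t}$ with $h_{n,t}=\nabla f_n^{(1)}(u_{n,t}^{(0)})\cdots\nabla f_n^{(K)}(u_{n,t}^{(K-1)})$ for the deterministic-Jacobian counterpart of $\bar{g}_t$, I would first establish the algebraic identity
$$\bar{m}_t - \overline{\nabla F}_t = (1-\mu\eta)(\bar{m}_{t-1}-\overline{\nabla F}_{t-1}) + (1-\mu\eta)(\overline{\nabla F}_{t-1}-\overline{\nabla F}_t) + \mu\eta(\bar{h}_t - \overline{\nabla F}_t) + \mu\eta(\bar{g}_t - \bar{h}_t),$$
which is a direct rearrangement of the recursion after adding and subtracting $(1-\mu\eta)\overline{\nabla F}_{t-1}$, $\mu\eta\,\overline{\nabla F}_t$ and $\mu\eta\,\bar{h}_t$.

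The crucial observation is that the last summand is conditionally mean-zero. Since the stochastic Jacobians entering $\bar{g}_t$ are independent across levels and devices and independent of the quantities defining $u_{n,t}^{(k)}$, we have $\mathbb{E}[\bar{g}_t - \bar{h}_t \mid \mathcal{F}_t]=0$, whereas the first three summands are $\mathcal{F}_t$-measurable. Hence the cross term vanishes and the square splits as $\mathbb{E}[\|\bar{m}_t - \overline{\nabla F}_t\|^2] = \mathbb{E}[\|P_t\|^2] + \mu^2\eta^2\,\mathbb{E}[\|\bar{g}_t - \bar{h}_t\|^2]$, where $P_t$ collects the first three terms. The variance piece is exactly Lemma~\ref{lemma_g_var_momentum}, producing the final term $\mu^2\eta^2 K\sum_k B_k\sigma_k^2/N$. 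For $\mathbb{E}[\|P_t\|^2]$ I would apply Young's inequality $\|(1-\mu\eta)a+b\|^2\le(1-\mu\eta)\|a\|^2+\frac{1}{\mu\eta}\|b\|^2$, where the constant is chosen as $c=\mu\eta/(1-\mu\eta)$ precisely so that the coefficient of $\|a\|^2$ collapses to $1-\mu\eta$, with $a=\bar{m}_{t-1}-\overline{\nabla F}_{t-1}$ and $b$ the remaining drift-plus-bias. This yields the contraction $(1-\mu\eta)\mathbb{E}[\|\bar{m}_{t-1}-\overline{\nabla F}_{t-1}\|^2]$ together with $\frac{2}{\mu\eta}\mathbb{E}[\|\overline{\nabla F}_{t-1}-\overline{\nabla F}_t\|^2]+2\mu\eta\,\mathbb{E}[\|\bar{h}_t-\overline{\nabla F}_t\|^2]$ after bounding $\|b\|^2$ by twice the sum of squares and using $(1-\mu\eta)^2\le1$.

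The drift term is then controlled by the $L_F$-smoothness of $F_n$ together with Jensen's inequality, giving $\|\overline{\nabla F}_{t-1}-\overline{\nabla F}_t\|^2\le\frac{L_F^2}{N}\sum_n\|x_{n,t}-x_{n,t-1}\|^2$, which reproduces the $\frac{2L_F^2}{\mu\eta}$ term. The bias term $\mathbb{E}[\|\bar{h}_t-\overline{\nabla F}_t\|^2]$ is bounded by Jensen and Lemma~\ref{lemma_f_u_f_x_momentum} by $\frac{K}{N}\sum_n\sum_k A_k\mathbb{E}[\|u_{n,t}^{(k)}-f_n^{(k)}(u_{n,t}^{(k-1)})\|^2]$, i.e.\ in terms of the \emph{time-$t$} function-estimation errors. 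To match the statement, which is phrased at time $t-1$, I would substitute the one-step recursion of Lemma~\ref{lemma_u_var_momentum}, bounding each term by $(1-\beta\eta)\|u_{n,t-1}^{(k)}-f_n^{(k)}(u_{n,t-1}^{(k-1)})\|^2+2C_k^2\|u_{n,t-1}^{(k-1)}-u_{n,t}^{(k-1)}\|^2+2\beta^2\eta^2\delta_k^2$ and using $1-\beta\eta\le1$; this produces exactly the three remaining terms, with coefficients $2\mu\eta K A_k$, $4\mu\eta K A_k C_k^2$, and $4\mu\beta^2\eta^3 K A_k$ respectively.

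The main obstacle is the bookkeeping around the filtration: one must set up $\mathcal{F}_t$ so that $\bar{g}_t-\bar{h}_t$ is genuinely conditionally mean-zero (this is what the independence of the Jacobian samples buys) while $P_t$ remains $\mathcal{F}_t$-measurable. It is exactly this orthogonality that prevents a spurious cross term and keeps the momentum variance from accumulating a factor of $K$ through the nested levels. A secondary delicate point is the time-index shift in the bias term, where Lemma~\ref{lemma_f_u_f_x_momentum} naturally delivers time-$t$ errors but the recursion must be closed in terms of time-$(t-1)$ errors plus the estimator increments $\|u_{n,t-1}^{(k-1)}-u_{n,t}^{(k-1)}\|^2$; handling this cleanly is what forces the appearance of the increment and $\delta_k^2$ terms in the statement.
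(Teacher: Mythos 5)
Your proposal matches the paper's own proof essentially step for step: the same decomposition of the momentum recursion with the conditionally mean-zero term $\mu\eta(\bar{g}_t-\bar{h}_t)$ split off so the cross term vanishes (yielding the $\mu^2\eta^2 K\sum_{k}B_k\sigma_k^2/N$ term via Lemma~\ref{lemma_g_var_momentum}), the same Young's inequality with parameter $a=\mu\eta/(1-\mu\eta)$, the same $L_F$-smoothness bound for the drift, Lemma~\ref{lemma_f_u_f_x_momentum} for the bias expressed in time-$t$ estimation errors, and the final substitution of Lemma~\ref{lemma_u_var_momentum} (with $1-\beta\eta\le 1$) to shift to time-$(t-1)$ errors plus increments and $\delta_k^2$ terms. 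The approach and all coefficients agree with the paper's argument, so there is nothing substantive to distinguish.
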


\begin{proof}

	\begin{equation}
		\begin{aligned}
			& \quad \mathbb{E}\Big[\Big\|\frac{1}{N}\sum_{n=1}^{N}{m}_{n,t} -\frac{1}{N}\sum_{n=1}^{N}\nabla F_{n}({{x}}_{n,t})\Big\|^2\Big] \\
			& = \mathbb{E}\Big[\Big\|\frac{1}{N}\sum_{n=1}^{N}\Big((1-\mu\eta )({m}_{n,t-1} -\nabla F_{n}({{x}}_{n,t-1})) + (1-\mu\eta )(\nabla F_{n}({{x}}_{n,t-1})- \nabla F_{n}({{x}}_{n,t})) \\
			& \quad + \mu\eta \Big({g}_{n,t} - \nabla f_{n}^{(1)} ( u_{n,t}^{(0)})\nabla  f_{n}^{(2)} ( u_{n,t}^{(1)}) \cdots \nabla  f_{n}^{(K-1)} ( u_{n,t}^{(K-2)})\nabla f_{n}^{(K)}( u_{n,t}^{(K-1)}) \\
			& \quad \quad + \nabla f_{n}^{(1)} ( u_{n,t}^{(0)})\nabla  f_{n}^{(2)} ( u_{n,t}^{(1)}) \cdots \nabla  f_{n}^{(K-1)} ( u_{n,t}^{(K-2)})\nabla f_{n}^{(K)}( u_{n,t}^{(K-1)})  \\
			& \quad\quad  - \nabla f_{n}^{(1)} (x_{n,t})\nabla  f_{n}^{(2)} (F_{n}^{(1)}(x_{n,t})) \cdots \nabla  f_{n}^{(K-1)} (F_{n}^{(K-2)}(x_{n,t}))\nabla f_{n}^{(K)}(F_{n}^{(K-1)}(x_{n,t})) \Big)\Big)\Big\|^2\Big] \\
			& = \mathbb{E}\Big[\Big\|\frac{1}{N}\sum_{n=1}^{N}\Big((1-\mu\eta )({m}_{n,t-1} -\nabla F_{n}({{x}}_{n,t-1})) + (1-\mu\eta )(\nabla F_{n}({{x}}_{n,t-1})- \nabla F_{n}({{x}}_{n,t})) \\
			& \quad + \mu\eta \Big(\nabla f_{n}^{(1)} ( u_{n,t}^{(0)})\nabla  f_{n}^{(2)} ( u_{n,t}^{(1)}) \cdots \nabla  f_{n}^{(K-1)} ( u_{n,t}^{(K-2)})\nabla f_{n}^{(K)}( u_{n,t}^{(K-1)})  \\
			& \quad\quad  - \nabla f_{n}^{(1)} (x_{n,t})\nabla  f_{n}^{(2)} (F_{n}^{(1)}(x_{n,t})) \cdots \nabla  f_{n}^{(K-1)} (F_{n}^{(K-2)}(x_{n,t}))\nabla f_{n}^{(K)}(F_{n}^{(K-1)}(x_{n,t}))  \Big) \Big)\Big\|^2\Big] \\
			& \quad + \mu^2\eta^2 \mathbb{E}\Big[\Big\|\frac{1}{N}\sum_{n=1}^{N}\Big({g}_{n,t} - \nabla f_{n}^{(1)} ( u_{n,t}^{(0)})\nabla  f_{n}^{(2)} ( u_{n,t}^{(1)}) \cdots \nabla  f_{n}^{(K-1)} ( u_{n,t}^{(K-2)})\nabla f_{n}^{(K)}( u_{n,t}^{(K-1)})\Big)\Big\|^2\Big] \\
			& \leq (1-\mu\eta )^2(1+a)\mathbb{E}\Big[\Big\|\frac{1}{N}\sum_{n=1}^{N}{m}_{n,t-1} -\frac{1}{N}\sum_{n=1}^{N}\nabla F_{n}({{x}}_{n,t-1})\Big\|^2\Big] \\
			& \quad + 2(1+a^{-1})(1-\mu\eta )^2\mathbb{E}\Big[\Big\|\frac{1}{N}\sum_{n=1}^{N}(\nabla F_{n}({{x}}_{n,t-1})- \nabla F_{n}({{x}}_{n,t}))\Big\|^2\Big] \\
			& \quad + 2(1+a^{-1})\mu^2\eta^2 \mathbb{E}\Big[\Big\|\frac{1}{N}\sum_{n=1}^{N}\Big(\nabla f_{n}^{(1)} ( u_{n,t}^{(0)})\nabla  f_{n}^{(2)} ( u_{n,t}^{(1)}) \cdots \nabla  f_{n}^{(K-1)} ( u_{n,t}^{(K-2)})\nabla f_{n}^{(K)}( u_{n,t}^{(K-1)})  \\
			& \quad\quad  - \nabla f_{n}^{(1)} (x_{n,t})\nabla  f_{n}^{(2)} (F_{n}^{(1)}(x_{n,t})) \cdots \nabla  f_{n}^{(K-1)} (F_{n}^{(K-2)}(x_{n,t}))\nabla f_{n}^{(K)}(F_{n}^{(K-1)}(x_{n,t}))  \Big)\Big\|^2\Big] \\
			& \quad + \mu^2\eta^2 \mathbb{E}\Big[\Big\|\frac{1}{N}\sum_{n=1}^{N}\Big({g}_{n,t} - \nabla f_{n}^{(1)} ( u_{n,t}^{(0)})\nabla  f_{n}^{(2)} ( u_{n,t}^{(1)}) \cdots \nabla  f_{n}^{(K-1)} ( u_{n,t}^{(K-2)})\nabla f_{n}^{(K)}( u_{n,t}^{(K-1)})\Big)\Big\|^2\Big] \\
			& \leq (1-\mu\eta )\mathbb{E}\Big[\Big\|\frac{1}{N}\sum_{n=1}^{N}{m}_{n,t-1} -\frac{1}{N}\sum_{n=1}^{N}\nabla F_{n}({{x}}_{n,t-1})\Big\|^2\Big]+ 
			\frac{2 L_F^2}{\mu \eta }\frac{1}{N}\sum_{n=1}^{N}\mathbb{E}\Big[\Big\|{x}_{n,t} - {x}_{n,t-1}\Big\|^2\Big] \\
			& \quad + 2\mu\eta \mathbb{E}\Big[\Big\|\frac{1}{N}\sum_{n=1}^{N}\Big(\nabla f_{n}^{(1)} ( u_{n,t}^{(0)})\nabla  f_{n}^{(2)} ( u_{n,t}^{(1)}) \cdots \nabla  f_{n}^{(K-1)} ( u_{n,t}^{(K-2)})\nabla f_{n}^{(K)}( u_{n,t}^{(K-1)})  \\
			& \quad\quad  - \nabla f_{n}^{(1)} (x_{n,t})\nabla  f_{n}^{(2)} (F_{n}^{(1)}(x_{n,t})) \cdots \nabla  f_{n}^{(K-1)} (F_{n}^{(K-2)}(x_{n,t}))\nabla f_{n}^{(K)}(F_{n}^{(K-1)}(x_{n,t})) \Big) \Big\|^2\Big] \\
			& \quad + \mu^2\eta^2 K\sum_{k=1}^{K}B_{k}\frac{\sigma_{k}^2}{N}\\
			& \leq (1-\mu\eta )\mathbb{E}\Big[\Big\|\frac{1}{N}\sum_{n=1}^{N}{m}_{n,t-1} -\frac{1}{N}\sum_{n=1}^{N}\nabla F_{n}({{x}}_{n,t-1})\Big\|^2\Big]+ 
			\frac{2 L_F^2}{\mu \eta }\frac{1}{N}\sum_{n=1}^{N}\mathbb{E}\Big[\Big\|{x}_{n,t} - {x}_{n,t-1}\Big\|^2\Big] \\
			& \quad + 2\mu\eta  K\frac{1}{N}\sum_{n=1}^{N}\sum_{k=1}^{K-1}A_k \mathbb{E}\Big[\Big\|u_{n,t}^{(k)} -f_{n}^{(k)}(u_{n,t}^{(k-1)}) \Big\|^2\Big]+ \mu^2\eta^2 K\sum_{k=1}^{K}B_{k}\frac{\sigma_{k}^2}{N} \ , \\
		\end{aligned}
	\end{equation}
	where the second to last step holds due to $a=\frac{\mu\eta}{1-\mu\eta}$ and Lemma~\ref{lemma_g_var_momentum}, the last step holds due to Lemma~\ref{lemma_f_u_f_x_momentum}.  Then, by combining it with Lemma~\ref{lemma_u_var_momentum}, we complete the proof.

\end{proof}

\begin{lemma} \label{lemma_x_consensus_momentum}
	Given Assumptions~\ref{assumption_graph}-\ref{assumption_bound_variance}, we can get
	\begin{equation}
		\begin{aligned}
			& \quad  \mathbb{E}[\|X_{t+1} - \bar{X}_{t+1}\|_{F}^2 ]\leq  (1-\eta\frac{1-\lambda^2}{2}) \mathbb{E}[\|X_{t}  - \bar{X}_{t} \|_F^2 ] + \frac{2\eta\alpha^2}{1-\lambda^2}  \mathbb{E}[\| Y_{t} -  \bar{Y}_{t} \|_F^2] \ . \\
		\end{aligned}
	\end{equation}
	
\end{lemma}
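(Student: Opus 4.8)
The plan is to reduce the two-step parameter update of Algorithm~\ref{alg_dscgdm} to a single matrix recursion for the centered iterate and then extract a spectral contraction. Writing $\Pi = \frac{1}{N}\mathbf{1}\mathbf{1}^\top$ for the averaging operator (so that $\bar{X}_t = X_t\Pi$) and using the symmetry $w_{nn'}=w_{n'n}$ from Assumption~\ref{assumption_graph}, Lines~18--19 combine into
\begin{equation}
	X_{t+1} = X_t\big((1-\eta)I + \eta W\big) - \eta\alpha Y_t =: X_t\widetilde{W} - \eta\alpha Y_t \ .
\end{equation}
Since $W$ is doubly stochastic, $W\Pi = \Pi W = \Pi$, hence $\widetilde{W}\Pi = \Pi$ and $\bar{X}_{t+1} = \bar{X}_t - \eta\alpha\bar{Y}_t$. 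Subtracting and using $\bar{X}_t\widetilde{W}=\bar{X}_t$ yields the centered recursion
\begin{equation}
	X_{t+1} - \bar{X}_{t+1} = (X_t - \bar{X}_t)\widetilde{W} - \eta\alpha (Y_t - \bar{Y}_t) \ .
\end{equation}

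Next I would bound the mixing term. Every row of $X_t - \bar{X}_t$ is orthogonal to $\mathbf{1}$, so $(X_t-\bar{X}_t)\Pi = 0$ and therefore $(X_t-\bar{X}_t)W = (X_t-\bar{X}_t)(W-\Pi)$; since $W-\Pi$ has eigenvalues $0,\lambda_2,\dots,\lambda_N$, we have $\|W-\Pi\|_2 = |\lambda_2| = \lambda$ under Assumption~\ref{assumption_graph}. With $\eta\le 1$ the triangle inequality then gives
\begin{equation}
	\|(X_t-\bar{X}_t)\widetilde{W}\|_F \leq (1-\eta)\|X_t-\bar{X}_t\|_F + \eta\lambda\|X_t-\bar{X}_t\|_F = \big(1-\eta(1-\lambda)\big)\|X_t-\bar{X}_t\|_F \ .
\end{equation}
Applying Young's inequality $\|a+b\|_F^2 \le (1+\theta)\|a\|_F^2 + (1+\theta^{-1})\|b\|_F^2$ to the centered recursion and taking expectations produces
\begin{equation}
	\mathbb{E}[\|X_{t+1}-\bar{X}_{t+1}\|_F^2] \leq (1+\theta)\big(1-\eta(1-\lambda)\big)^2\mathbb{E}[\|X_t-\bar{X}_t\|_F^2] + (1+\theta^{-1})\eta^2\alpha^2\,\mathbb{E}[\|Y_t-\bar{Y}_t\|_F^2] \ .
\end{equation}

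The crux is to choose $\theta$ so that \emph{both} coefficients collapse to the stated ones simultaneously. I would take $\theta = \frac{\eta(1-\lambda^2)}{2-\eta(1-\lambda^2)}$, which makes $1+\theta^{-1} = \frac{2}{\eta(1-\lambda^2)}$, so the second coefficient equals $\frac{2\eta\alpha^2}{1-\lambda^2}$ exactly. For the first coefficient, $1+\theta = \frac{2}{2-\eta(1-\lambda^2)}$, so the target inequality $(1+\theta)(1-\eta(1-\lambda))^2 \le 1-\eta\frac{1-\lambda^2}{2} = \frac{2-\eta(1-\lambda^2)}{2}$ reduces, after clearing denominators and taking square roots (all factors are nonnegative once $\eta(1-\lambda)\le 1$, which holds since $\eta\le 1$), to $2\big(1-\eta(1-\lambda)\big) \le 2-\eta(1-\lambda^2)$, i.e.\ $\eta(1-\lambda^2)\le 2\eta(1-\lambda)$, i.e.\ $1+\lambda\le 2$. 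This last inequality is exactly $\lambda=|\lambda_2|<1$ from Assumption~\ref{assumption_graph}, so it holds and the bound closes.

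I expect this final balancing to be the main obstacle: a naive Young parameter of order $\eta(1-\lambda)$ controls the contraction but leaves a $\frac{2}{1-\lambda}$ rather than a $\frac{2}{1-\lambda^2}$ factor on the $Y_t$ term, while too large a parameter breaks the contraction. Recognizing that the spectral-gap slack $\lambda<1$ is precisely what reconciles the sharper $\frac{1-\lambda^2}{2}$ contraction rate with the exact absorption of the consensus-error term in $Y_t$ is the key step; the remaining manipulations are routine.
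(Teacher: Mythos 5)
Your proof is correct, but it follows a different decomposition than the paper's. The paper keeps the two-stage structure of the update: it writes $X_{t+1}-\bar{X}_{t+1}=(1-\eta)(X_t-\bar{X}_t)+\eta(X_{t+\frac{1}{2}}-\bar{X}_{t+\frac{1}{2}})$ and uses convexity of $\|\cdot\|_F^2$ (equivalently Young with parameter $\eta/(1-\eta)$) to get $(1-\eta)\mathbb{E}[\|X_t-\bar{X}_t\|_F^2]+\eta\,\mathbb{E}[\|X_{t+\frac{1}{2}}-\bar{X}_{t+\frac{1}{2}}\|_F^2]$, so that the factor $\eta$ stays linear automatically; it then expands the half-step term and applies a second Young inequality with the $\eta$-\emph{independent} parameter $a=\frac{1-\lambda^2}{2\lambda^2}$ together with the contraction $\|X_tW-\bar{X}_t\|_F\leq\lambda\|X_t-\bar{X}_t\|_F$, finally bounding $\frac{1+\lambda^2}{1-\lambda^2}\leq\frac{2}{1-\lambda^2}$. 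You instead collapse both lines into the single affine recursion $X_{t+1}-\bar{X}_{t+1}=(X_t-\bar{X}_t)\widetilde{W}-\eta\alpha(Y_t-\bar{Y}_t)$ with $\widetilde{W}=(1-\eta)I+\eta W$, bound the restricted operator norm by $1-\eta(1-\lambda)$ via the triangle inequality, and use one Young step with the $\eta$-\emph{dependent} parameter $\theta=\frac{\eta(1-\lambda^2)}{2-\eta(1-\lambda^2)}$ reverse-engineered so the $Y$-coefficient comes out as exactly $\frac{2\eta\alpha^2}{1-\lambda^2}$; the $X$-coefficient then closes only because of the slack $1+\lambda<2$, which you correctly identify and verify. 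The trade-off: the paper's route puts all the slack in the $Y$-term and reuses a simple, fixed Young parameter (the identical template is reused for Lemma~\ref{lemma_x_consensus_var}), whereas your route is more compact, makes the effective mixing matrix explicit, and hits the $Y$-coefficient exactly, at the cost of a lossier contraction estimate ($1-\eta(1-\lambda)$ from the triangle inequality versus the sharper eigenvalue bound) and a more delicate final verification. Both arguments implicitly require $0<\eta\leq 1$, which you state explicitly and which is guaranteed by the hypotheses of Theorem~\ref{theorem1}.
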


\begin{proof}

	\begin{equation}
		\begin{aligned}
			& \quad \mathbb{E}[ \|X_{t+1} - \bar{X}_{t+1}\|_{F}^2] \\
			& =  \mathbb{E}[\|X_{t} + \eta (X_{t+\frac{1}{2}} - X_{t}) - \bar{X}_{t} - \eta (\bar{X}_{t+\frac{1}{2}} - \bar{X}_{t})\|_{F}^2 ]\\
			& \leq (1-\eta)^2(1+a)  \mathbb{E}[\|X_{t}  - \bar{X}_{t} \|_F^2] + \eta^2 (1+a^{-1})  \mathbb{E}[\|X_{t+\frac{1}{2}} - \bar{X}_{t+\frac{1}{2}}\|_F^2] \\
			& \leq (1-\eta) \mathbb{E}[\|X_{t}  - \bar{X}_{t} \|_F^2] + \eta  \mathbb{E}[\|X_{t+\frac{1}{2}} - \bar{X}_{t+\frac{1}{2}}\|_F^2] \\
			& \leq (1-\eta) \mathbb{E}[\|X_{t}  - \bar{X}_{t} \|_F^2 ]+ \eta  \mathbb{E}[\|X_{t}W -  \alpha Y_{t} - \bar{X}_{t} + \alpha \bar{Y}_{t} \|_F^2] \\
			& \leq (1-\eta) \mathbb{E}[\|X_{t}  - \bar{X}_{t} \|_F^2] + \eta (1+a) \mathbb{E}[\|X_{t}W  - \bar{X}_{t} \|_F^2]  + \eta\alpha^2 (1+a^{-1}) \mathbb{E}[\| Y_{t} -  \bar{Y}_{t} \|_F^2]\\
			& \leq (1-\eta) \mathbb{E}[\|X_{t}  - \bar{X}_{t} \|_F^2] + \eta\lambda^2 (1+a) \mathbb{E}[\|X_{t}  - \bar{X}_{t} \|_F^2]  + \eta\alpha^2 (1+a^{-1}) \mathbb{E}[\| Y_{t} -  \bar{Y}_{t} \|_F^2]\\
			& \leq (1-\eta) \mathbb{E}[\|X_{t}  - \bar{X}_{t} \|_F^2] + \eta\frac{1+\lambda^2}{2} \mathbb{E}[\|X_{t}  - \bar{X}_{t} \|_F^2 ] + \frac{\eta\alpha^2(1+\lambda^2)}{1-\lambda^2}  \mathbb{E}[\| Y_{t} -  \bar{Y}_{t} \|_F^2]\\
			& \leq \Big(1-\eta\frac{1-\lambda^2}{2}\Big) \mathbb{E}[\|X_{t}  - \bar{X}_{t} \|_F^2]  + \frac{2\eta\alpha^2}{1-\lambda^2}  \mathbb{E}[\| Y_{t} -  \bar{Y}_{t} \|_F^2] \ , \\
		\end{aligned}
	\end{equation}
	where the third step holds due to $a=\frac{1-\lambda}{\lambda}$, the last step holds due to $a=\frac{1-\lambda^2}{2\lambda^2}$.
\end{proof}

\begin{lemma}\label{lemma_x_inc_momentum}
		Given Assumptions~\ref{assumption_graph}-\ref{assumption_bound_variance}, we can get
	\begin{equation}
		\begin{aligned}
			&   \mathbb{E}[\|X_{t+1} - X_{t}\|_F^2] \leq 8\eta^2 \mathbb{E}[\|X_{t}- \bar{X}_{t}\|_F^2] + 4\alpha^2\eta^2  \mathbb{E}[\|Y_{t} -\bar{Y}_{t}\|_F^2] +  4\alpha^2\eta^2  \mathbb{E}[\|\bar{M}_{t}\|_F^2 ] \ . \\
		\end{aligned}
	\end{equation}
\end{lemma}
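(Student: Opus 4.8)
The plan is to rewrite the local updates of Lines 18--19 of Algorithm~\ref{alg_dscgdm} in compact matrix form and then peel off the mean. Writing $J=\frac{1}{N}\mathbf{1}\mathbf{1}^{\top}$ so that $\bar{X}_t=X_tJ$ and $\bar{Y}_t=Y_tJ$, the update becomes $X_{t+1}-X_t=\eta(X_{t+\frac12}-X_t)=\eta\big(X_t(W-I)-\alpha Y_t\big)$. The whole lemma then reduces to bounding the Frobenius norm of this single expression, and the target constants $8,4,4$ signal that I should split it into three pieces with Young's inequality applied twice.

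First I would exploit the double stochasticity of $W$. Since $W\mathbf{1}=\mathbf{1}$ and $\mathbf{1}^{\top}W=\mathbf{1}^{\top}$ (Assumption~\ref{assumption_graph}), we have $J(W-I)=0$, hence $\bar{X}_t(W-I)=X_tJ(W-I)=0$ and therefore $X_t(W-I)=(X_t-\bar{X}_t)(W-I)$; the mixing term only sees the consensus error. Combined with the spectral bound $\|W-I\|_{\mathrm{op}}\le\|W\|_{\mathrm{op}}+1\le 2$, this gives $\|X_t(W-I)\|_F^2\le 4\|X_t-\bar{X}_t\|_F^2$, which is where the factor $8=2\times 4$ will appear after the first Young split.

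Second, I would replace $\bar{Y}_t$ by $\bar{M}_t$ using the gradient-tracking invariant $\bar{Y}_t=\bar{M}_t$. This is the only genuinely non-mechanical ingredient: summing the tracking recursion $y_{n,t}=\sum_{n'}w_{nn'}y_{n',t-1}+m_{n,t}-m_{n,t-1}$ over $n$ and using the column-sum property $\sum_n w_{nn'}=1$ yields $\bar{y}_t=\bar{y}_{t-1}+\bar{m}_t-\bar{m}_{t-1}$; together with the initialization $y_{n,0}=m_{n,0}$ this telescopes to $\bar{y}_t=\bar{m}_t$ for all $t$. Writing $Y_t=(Y_t-\bar{Y}_t)+\bar{M}_t$, the increment reads $X_{t+1}-X_t=\eta\big((X_t-\bar{X}_t)(W-I)-\alpha(Y_t-\bar{Y}_t)-\alpha\bar{M}_t\big)$.

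Finally I would apply $\|a+b\|^2\le 2\|a\|^2+2\|b\|^2$ once to separate the mixing term from the rest, and once more to separate $\alpha(Y_t-\bar{Y}_t)$ from $\alpha\bar{M}_t$; substituting the operator-norm bound above produces exactly $8\eta^2\|X_t-\bar{X}_t\|_F^2+4\alpha^2\eta^2\|Y_t-\bar{Y}_t\|_F^2+4\alpha^2\eta^2\|\bar{M}_t\|_F^2$, and taking expectations finishes the proof. I do not expect a serious obstacle here; the only point requiring care is recognizing and verifying the tracking invariant $\bar{Y}_t=\bar{M}_t$, without which the right-hand side could not be expressed through $\|\bar{M}_t\|_F^2$, and keeping the two Young splits consistent so that the constants land on $8,4,4$ rather than something looser.
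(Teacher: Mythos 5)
Your proof is correct and follows essentially the same route as the paper's: substitute the update $X_{t+1}-X_t=\eta(X_tW-\alpha Y_t-X_t)$, apply Young's inequality twice, use double stochasticity to write $X_tW-X_t=(X_t-\bar{X}_t)(W-I)$ with $\|W-I\|\le 2$, and split $Y_t$ into $(Y_t-\bar{Y}_t)+\bar{Y}_t$ to land on the constants $8,4,4$. The only difference is that you explicitly state and verify the gradient-tracking invariant $\bar{Y}_t=\bar{M}_t$, which the paper invokes silently when replacing $\bar{Y}_t$ by $\bar{M}_t$ in its final line, so your write-up is if anything slightly more complete.
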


\begin{proof}
	\begin{equation}
		\begin{aligned}
			& \quad  \mathbb{E}[\|X_{t+1} - X_{t}\|_F^2] \\
			& =   \mathbb{E}[\|X_{t}  + \eta (X_{t+\frac{1}{2}} - X_{t})- X_{t}\|_F^2]\\ 
			& = \eta^2 \mathbb{E}[\| X_{t+\frac{1}{2}} - X_{t}\|_F^2 ]\\ 
			& = \eta^2 \mathbb{E}[\|X_{t}W -\alpha Y_{t}  - X_{t}\|_F^2] \\
			& \leq 2\eta^2 \mathbb{E}[\|X_{t}W  - X_{t}\|_F^2] + 2\alpha^2\eta^2 \mathbb{E}[ \|Y_{t} \|_F^2 ]\\
			& \leq 2\eta^2 \mathbb{E}[\|(X_{t}- \bar{X}_{t})(W-I)\|_F^2] + 2\alpha^2\eta^2 \mathbb{E}[ \|Y_{t} -\bar{Y}_{t} + \bar{Y}_{t}\|_F^2] \\
			& \leq 8\eta^2 \mathbb{E}[\|X_{t}- \bar{X}_{t}\|_F^2] + 4\alpha^2\eta^2  \mathbb{E}[\|Y_{t} -\bar{Y}_{t}\|_F^2] +  4\alpha^2\eta^2  \mathbb{E}[\|\bar{M}_{t}\|_F^2 ] \ . \\
		\end{aligned}
	\end{equation}
	
\end{proof}

\begin{lemma}\label{lemma_y_consensus_momentum}
	Given Assumptions~\ref{assumption_graph}-\ref{assumption_bound_variance}, we can get
\begin{equation}
	\begin{aligned}
		&   \mathbb{E}[\|Y_{t+1}-   \bar{Y}_{t+1}\|_F^2]  \leq \lambda\mathbb{E}[\|Y_{t}-   \bar{Y}_{t}\|_F^2] + \frac{4\mu^2\eta^2}{1-\lambda} \sum_{n=1}^{N}\mathbb{E}[\|m_{n, t} -  \nabla F_{n}(x_{n, t}) \|^2]\\
		&+\sum_{n=1}^{N}\sum_{k=1}^{K-1} \frac{\mu^2\eta^2}{1-\lambda} K(4A_k+8 \beta^2  \eta^2D_{k})\mathbb{E}[\| u_{n,t}^{(k)} -f_{n}^{(k)}( u_{n,t}^{(k-1)}) \|^2]  \\
		& \quad  + \frac{4\mu^2\eta^2 }{1-\lambda} KD_{0}\sum_{n=1}^{N} \mathbb{E}[\|x_{n,t+1}-x_{n,t}\|^2 ] +  \frac{8\mu^2\eta^2}{1-\lambda} K\sum_{n=1}^{N}\sum_{k=1}^{K-1}D_{k}  C_{k}^2  \mathbb{E}[\|u_{n, t}^{(k-1)} - u_{n,t+1}^{(k-1)} \|^2] \\
		& \quad  +\frac{8 \beta^2  \mu^2\eta^4}{1-\lambda}   KN\sum_{k=1}^{K-1}D_{k}  \delta_{k}^2  + \frac{4\mu^2\eta^2}{1-\lambda}  KN\sum_{k=1}^{K}B_k \sigma_{k}^2  \ . \\
	\end{aligned}
\end{equation}

\end{lemma}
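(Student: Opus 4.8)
The plan is to cast the gradient-tracking update of Line~17 in matrix form, isolate the standard mixing contraction, and thereby reduce the whole claim to bounding the one-step momentum increment $\|M_{t+1}-M_t\|_F^2$, which the previously established lemmas already control. First I would write the update as $Y_{t+1}=Y_tW+M_{t+1}-M_t$ (using that $W$ is symmetric, so $W^\top=W$). Because $W$ is doubly stochastic, right-multiplying by $\tfrac1N\mathbf 1\mathbf 1^\top$ and using $W\tfrac1N\mathbf 1\mathbf 1^\top=\tfrac1N\mathbf 1\mathbf 1^\top$ shows that the average tracks the momentum average, $\bar Y_{t+1}=\bar Y_t+\bar M_{t+1}-\bar M_t$. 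Subtracting, and noting $(Y_t-\bar Y_t)\tfrac1N\mathbf 1\mathbf 1^\top=0$ so that $(Y_t-\bar Y_t)W=(Y_t-\bar Y_t)\big(W-\tfrac1N\mathbf 1\mathbf 1^\top\big)$, yields the clean identity
\[
Y_{t+1}-\bar Y_{t+1}=(Y_t-\bar Y_t)W+(M_{t+1}-M_t)\Big(I-\tfrac1N\mathbf 1\mathbf 1^\top\Big)\ .
\]

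Second, I would invoke Assumption~\ref{assumption_graph}: since $\|W-\tfrac1N\mathbf 1\mathbf 1^\top\|_2=\lambda$, we have $\|(Y_t-\bar Y_t)W\|_F\le\lambda\|Y_t-\bar Y_t\|_F$, and $\|I-\tfrac1N\mathbf 1\mathbf 1^\top\|_2\le1$. Combining these with Young's inequality $\|a+b\|^2\le(1+c)\|a\|^2+(1+c^{-1})\|b\|^2$ and the choice $c=(1-\lambda)/\lambda$ (so that $(1+c)\lambda^2=\lambda$ and $1+c^{-1}=\tfrac1{1-\lambda}$) gives
\[
\mathbb{E}[\|Y_{t+1}-\bar Y_{t+1}\|_F^2]\le\lambda\,\mathbb{E}[\|Y_t-\bar Y_t\|_F^2]+\frac{1}{1-\lambda}\,\mathbb{E}[\|M_{t+1}-M_t\|_F^2]\ .
\]

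Third, from the momentum update $m_{n,t}=(1-\mu\eta)m_{n,t-1}+\mu\eta\,g_{n,t}$ one gets $m_{n,t+1}-m_{n,t}=\mu\eta(g_{n,t+1}-m_{n,t})$, so $\|M_{t+1}-M_t\|_F^2=\mu^2\eta^2\sum_{n}\|g_{n,t+1}-m_{n,t}\|^2$, which accounts for every $\mu^2\eta^2/(1-\lambda)$ prefactor in the claim. Writing $h_{n,t}=\nabla f_{n}^{(1)}(u_{n,t}^{(0)})\cdots\nabla f_{n}^{(K)}(u_{n,t}^{(K-1)})$, I would split $g_{n,t+1}-m_{n,t}$ into the four pieces obtained by inserting $h_{n,t+1}$, $h_{n,t}$ and $\nabla F_{n}(x_{n,t})$, and use $\|\sum_{i=1}^{4}a_i\|^2\le4\sum_{i}\|a_i\|^2$. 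The stochastic piece $\|g_{n,t+1}-h_{n,t+1}\|^2$ is bounded by Lemma~\ref{lemma_g_var_momentum2}, giving the $\tfrac{4\mu^2\eta^2}{1-\lambda}KN\sum_kB_k\sigma_k^2$ term; the increment $\|h_{n,t+1}-h_{n,t}\|^2$ is bounded by Lemma~\ref{lemma_f_u_inc_momentum} with the index shift $t\mapsto t+1$, producing the $x$-increment, $u$-increment, level-function, and $\delta_k^2$ contributions; the bias $\|h_{n,t}-\nabla F_{n}(x_{n,t})\|^2$ is bounded by the per-device version of Lemma~\ref{lemma_f_u_f_x_momentum}, contributing the $4A_k$ part of the level-function coefficient; and the fourth piece is exactly the $\tfrac{4\mu^2\eta^2}{1-\lambda}\sum_n\|m_{n,t}-\nabla F_{n}(x_{n,t})\|^2$ term. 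Multiplying through by $\mu^2\eta^2/(1-\lambda)$ and summing over $n$ reassembles each stated term with its constant, where the two level-function sources merge into $4A_k+8\beta^2\eta^2D_k$.

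The routine part is the spectral contraction in the first two paragraphs; the delicate step will be the four-way decomposition in the third. The main obstacle is choosing the insertion points $h_{n,t+1},h_{n,t},\nabla F_{n}(x_{n,t})$ so that every resulting term matches a previously proven lemma \emph{at the correct time index} — in particular applying Lemma~\ref{lemma_f_u_inc_momentum} with the shift $t\mapsto t+1$ and using the \emph{per-device} (rather than averaged) forms of Lemmas~\ref{lemma_f_u_f_x_momentum} and~\ref{lemma_g_var_momentum2} — and then faithfully tracking the accumulated constants (the factor $4$ from the splitting together with the $2$ and $8$ factors carried by the sub-lemmas), since a single mismatched index or constant would propagate through the rest of the potential-function argument.
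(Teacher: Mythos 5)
Your proposal is correct and follows essentially the same route as the paper's proof: the same gradient-tracking contraction with Young's parameter $(1-\lambda)/\lambda$, the same reduction of $\|M_{t+1}-M_t\|_F^2$ to $\mu^2\eta^2\|G_{t+1}-M_t\|_F^2$, and the identical four-way decomposition via $h_{n,t+1}$, $h_{n,t}$, $\nabla F_n(x_{n,t})$ handled by Lemmas~\ref{lemma_g_var_momentum2}, \ref{lemma_f_u_inc_momentum} (time-shifted), and the per-device form of Lemma~\ref{lemma_f_u_f_x_momentum}. All constants and time indices in your accounting match the stated bound, including the merged coefficient $4A_k+8\beta^2\eta^2 D_k$.
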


\begin{proof}
	
	\begin{equation}
		\begin{aligned}
			&\quad   \mathbb{E}[\|Y_{t+1}-   \bar{Y}_{t+1}\|_F^2] = \mathbb{E}[\|Y_{t}W  + M_{t+1} - M_{t}-   \bar{Y}_{t} - \bar{M}_{t+1} + \bar{M}_{t}\|_F^2] \\
			& \leq (1+a) \mathbb{E}[\|Y_{t}W-   \bar{Y}_{t}\|_F^2 ]+ (1+a^{-1})  \mathbb{E}[\|M_{t+1} - M_{t} - \bar{M}_{t+1} + \bar{M}_{t}\|_F^2]\\
			& \leq (1+a)\lambda^2 \mathbb{E}[\|Y_{t}-   \bar{Y}_{t}\|_F^2] + (1+a^{-1})  \mathbb{E}[\|M_{t+1} - M_{t}\|_F^2]\\
			& \leq \lambda \mathbb{E}[\|Y_{t}-   \bar{Y}_{t}\|_F^2] + \frac{1}{1-\lambda}  \mathbb{E}[\|(1-\mu\eta)M_{t} + \mu\eta G_{t+1} - M_{t}\|_F^2]\\
			& = \lambda \mathbb{E}[\|Y_{t}-   \bar{Y}_{t}\|_F^2]+ \frac{\mu^2\eta^2}{1-\lambda}  \mathbb{E}[\|M_{t} -   G_{t+1} \|_F^2] \ , \\
		\end{aligned}
	\end{equation}
where the second inequality holds due to $a=\frac{1-\lambda}{\lambda}$. 
	Furthermore, by combining it with the following inequality, we can complete the proof.  
	\begin{equation}
		\begin{aligned}
			& \quad   \mathbb{E}[\|M_{t} -   G_{t+1} \|_F^2] \leq 4\sum_{n=1}^{N} \mathbb{E}[\|m_{n, t} -  \nabla F_{n}(x_{n, t}) \|^2]\\
			& \quad  + 4\sum_{n=1}^{N} \mathbb{E}[\| \nabla F_{n}(x_{n, t}) -  \nabla f_{n}^{(1)} ( u_{n,t}^{(0)})\nabla  f_{n}^{(2)} ( u_{n,t}^{(1)}) \cdots \nabla  f_{n}^{(K-1)} ( u_{n,t}^{(K-2)})\nabla f_{n}^{(K)}( u_{n,t}^{(K-1)}) \|^2]\\
			& \quad  + 4\sum_{n=1}^{N} \mathbb{E}[\|\nabla f_{n}^{(1)} ( u_{n,t}^{(0)})\nabla  f_{n}^{(2)} ( u_{n,t}^{(1)}) \cdots \nabla  f_{n}^{(K-1)} ( u_{n,t}^{(K-2)})\nabla f_{n}^{(K)}( u_{n,t}^{(K-1)}) \\
			& \quad \quad -  \nabla f_{n}^{(1)} ( u_{n,t+1}^{(0)})\nabla  f_{n}^{(2)} ( u_{n,t+1}^{(1)}) \cdots \nabla  f_{n}^{(K-1)} ( u_{n,t+1}^{(K-2)})\nabla f_{n}^{(K)}( u_{n,t+1}^{(K-1)})\|^2] \\
			& \quad +  4\sum_{n=1}^{N} \mathbb{E}[\|\nabla f_{n}^{(1)} ( u_{n,t+1}^{(0)})\nabla  f_{n}^{(2)} ( u_{n,t+1}^{(1)}) \cdots \nabla  f_{n}^{(K-1)} ( u_{n,t+1}^{(K-2)})\nabla f_{n}^{(K)}( u_{n,t+1}^{(K-1)}) - g_{n, t+1}\|^2 ]\\
			& \leq 4\sum_{n=1}^{N} \mathbb{E}[\|m_{n, t} -  \nabla F_{n}(x_{n, t}) \|^2]+ 4K\sum_{n=1}^{N}\sum_{k=1}^{K-1}A_k \mathbb{E}[\| u_{n,t}^{(k)} -f_{n}^{(k)}( u_{n,t}^{(k-1)}) \|^2]  \\
			& \quad  + 4KD_{0}\sum_{n=1}^{N} \mathbb{E}[\|x_{n,t+1}-x_{n,t}\|^2 ] +  8K\sum_{n=1}^{N}\sum_{k=1}^{K-1}D_{k}  C_{k}^2 \mathbb{E}[ \|u_{n, t}^{(k-1)} - u_{n,t+1}^{(k-1)} \|^2 ]\\
			& \quad  + 8 \beta^2  \eta^2 K\sum_{n=1}^{N}\sum_{k=1}^{K-1}D_{k}   \mathbb{E}[\|u_{n, t}^{(k)}- f_{n}^{(k)}(u_{n, t}^{(k-1)})  \|^2] + 8 \beta^2  \eta^2 KN\sum_{k=1}^{K-1}D_{k}  \delta_{k}^2  +  4KN\sum_{k=1}^{K}B_k \sigma_{k}^2 \\
			& \leq 4\sum_{n=1}^{N} \mathbb{E}[\|m_{n, t} -  \nabla F_{n}(x_{n, t}) \|^2]+ K\sum_{n=1}^{N}\sum_{k=1}^{K-1}(4A_k+8 \beta^2  \eta^2D_{k}) \mathbb{E}[\| u_{n,t}^{(k)} -f_{n}^{(k)}( u_{n,t}^{(k-1)}) \|^2 ]  + 8 \beta^2  \eta^2 KN\sum_{k=1}^{K-1}D_{k}  \delta_{k}^2   \\
			& \quad  + 4KD_{0}\sum_{n=1}^{N} \mathbb{E}[\|x_{n,t+1}-x_{n,t}\|^2]  +  8K\sum_{n=1}^{N}\sum_{k=1}^{K-1}D_{k}  C_{k}^2  \mathbb{E}[\|u_{n, t}^{(k-1)} - u_{n,t+1}^{(k-1)} \|^2]  +  4KN\sum_{k=1}^{K}B_k \sigma_{k}^2  \ , \\
		\end{aligned}
	\end{equation}
where the third step holds due to Lemma~\ref{lemma_f_u_f_x_momentum}, Lemma~\ref{lemma_f_u_inc_momentum}, and  Lemma~\ref{lemma_g_var_momentum}.

\end{proof}

By following the proof of Lemma~\ref{lemma_m_var_momentum}, it is easy to prove the following lemma. 
\begin{lemma} \label{lemma_m_var_momentum2}
	Given Assumptions~\ref{assumption_smooth}-\ref{assumption_bound_variance}, if  $\mu\eta\in (0, 1)$, we can get
\begin{equation}
	\begin{aligned}
		& \quad \frac{1}{N}\sum_{n=1}^{N}\mathbb{E}\Big[\Big\|{m}_{n,t+1} -\nabla F_{n}({{x}}_{n,t+1})\Big\|^2\Big]  \leq (1-\mu\eta )\frac{1}{N}\sum_{n=1}^{N}\mathbb{E}\Big[\Big\|{m}_{n,t} -\nabla F_{n}({{x}}_{n,t})\Big\|^2\Big] \\
		& \quad + 2\mu\eta  K\frac{1}{N}\sum_{n=1}^{N}\sum_{k=1}^{K-1}A_k\mathbb{E}\Big[\Big\|u_{n,t}^{(k)} -f_{n}^{(k)}(u_{n,t}^{(k-1)}) \Big\|^2\Big] +4\mu\eta  K\frac{1}{N}\sum_{n=1}^{N}\sum_{k=1}^{K-1}A_k C_k^2\|{u}_{ n, t}^{(k-1)}- {u}_{n,  t+1}^{(k-1)} \Big\|^2\Big] \\
		& \quad+ 
		\frac{2 L_F^2}{\mu \eta }\frac{1}{N}\sum_{n=1}^{N}\mathbb{E}\Big[\Big\|{x}_{n,t+1} - {x}_{n,t}\Big\|^2\Big]  +4\mu \beta^2\eta^3 K\sum_{k=1}^{K-1}A_k \delta_{k}^2 + \mu^2\eta^2 K\sum_{k=1}^{K}B_{k}\sigma_{k}^2  \ . \\
	\end{aligned}
\end{equation}
\end{lemma}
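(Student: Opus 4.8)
The plan is to establish the bound device-by-device and then average over $n$, mirroring the proof of Lemma~\ref{lemma_m_var_momentum} but keeping the error $\|m_{n,t+1}-\nabla F_n(x_{n,t+1})\|^2$ separate for each $n$ rather than first averaging the iterates. Writing the momentum recursion $m_{n,t+1}=(1-\mu\eta)m_{n,t}+\mu\eta g_{n,t+1}$ and subtracting $\nabla F_n(x_{n,t+1})$, I would split the residual into four pieces: the contraction term $(1-\mu\eta)(m_{n,t}-\nabla F_n(x_{n,t}))$, the smoothness drift $(1-\mu\eta)(\nabla F_n(x_{n,t})-\nabla F_n(x_{n,t+1}))$, the estimation bias $\mu\eta(\bar g_{n,t+1}-\nabla F_n(x_{n,t+1}))$, and the stochastic term $\mu\eta(g_{n,t+1}-\bar g_{n,t+1})$, where $\bar g_{n,t+1}=\nabla f_n^{(1)}(u_{n,t+1}^{(0)})\cdots\nabla f_n^{(K)}(u_{n,t+1}^{(K-1)})$ denotes the deterministic compositional gradient evaluated at the level estimates.

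The crucial observation is that, conditioned on the $\sigma$-algebra generated by the level estimates $\{u_{n,t+1}^{(k)}\}$ and the history, $g_{n,t+1}$ is an unbiased estimator of $\bar g_{n,t+1}$ by the independence of the per-level and per-device samples. Hence the stochastic term is orthogonal to the other three, and the squared residual decomposes as the squared norm of the first three terms plus $\mu^2\eta^2\mathbb{E}[\|g_{n,t+1}-\bar g_{n,t+1}\|^2]$. This is the only place where the argument departs from Lemma~\ref{lemma_m_var_momentum}: because we treat each device separately, we cannot invoke cross-device averaging, so the variance is controlled by the per-device bound of Lemma~\ref{lemma_g_var_momentum2}, giving $\mu^2\eta^2 K\sum_{k=1}^{K}B_k\sigma_k^2$ without the $1/N$ factor present in Lemma~\ref{lemma_m_var_momentum}. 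For the remaining three terms I would apply Young's inequality with weight $a=\mu\eta/(1-\mu\eta)$ so that $(1-\mu\eta)^2(1+a)=1-\mu\eta$, which produces the contraction factor $(1-\mu\eta)$ on $\|m_{n,t}-\nabla F_n(x_{n,t})\|^2$ and a coefficient $\tfrac{2L_F^2}{\mu\eta}$ on $\|x_{n,t+1}-x_{n,t}\|^2$ after using the $L_F$-smoothness of $F_n$; the bias term is bounded by the per-device form of Lemma~\ref{lemma_f_u_f_x_momentum}, yielding $2\mu\eta K\sum_{k=1}^{K-1}A_k\|u_{n,t+1}^{(k)}-f_n^{(k)}(u_{n,t+1}^{(k-1)})\|^2$.

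To match the stated right-hand side I would finally shift the level-estimation errors from iteration $t+1$ to $t$ using Lemma~\ref{lemma_u_var_momentum}, which replaces $\|u_{n,t+1}^{(k)}-f_n^{(k)}(u_{n,t+1}^{(k-1)})\|^2$ by $(1-\beta\eta)\|u_{n,t}^{(k)}-f_n^{(k)}(u_{n,t}^{(k-1)})\|^2$ plus the increment $2C_k^2\|u_{n,t}^{(k-1)}-u_{n,t+1}^{(k-1)}\|^2$ and the noise $2\beta^2\eta^2\delta_k^2$; bounding $(1-\beta\eta)\le 1$ then gives exactly the $2\mu\eta K A_k$, $4\mu\eta K A_k C_k^2$ and $4\mu\beta^2\eta^3 K A_k\delta_k^2$ terms. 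Averaging the per-device inequality over $n$ completes the argument. I expect the main obstacle to be the bookkeeping around the stochastic term: one must verify the conditional unbiasedness of $g_{n,t+1}$ relative to $\bar g_{n,t+1}$ carefully so that the cross term genuinely vanishes, and must recognize that the per-device variance cannot be shrunk by $1/N$ — this is the single but essential structural change from Lemma~\ref{lemma_m_var_momentum} and the reason the $\sigma_k^2$ term here lacks the $1/N$ denominator.
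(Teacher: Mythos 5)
Your proposal is correct and follows essentially the same route as the paper: the paper proves this lemma by simply repeating the argument of Lemma~\ref{lemma_m_var_momentum} device-by-device (its stated proof is literally ``by following the proof of Lemma~\ref{lemma_m_var_momentum}''), which is exactly your four-term decomposition, the Young's inequality weight $a=\mu\eta/(1-\mu\eta)$, the use of Lemmas~\ref{lemma_f_u_f_x_momentum} and~\ref{lemma_u_var_momentum}, and the substitution of the per-device variance bound of Lemma~\ref{lemma_g_var_momentum2} for the averaged bound of Lemma~\ref{lemma_g_var_momentum}. You also correctly pinpointed the one structural change the paper leaves implicit, namely that the loss of cross-device averaging is precisely why the $\sigma_k^2$ term appears without the $1/N$ factor.
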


Based on these lemmas, we prove Theorem~\ref{theorem1} below. 

\begin{proof}
\begin{equation}
	\begin{aligned}
		& F\left(\bar{x}_{t+1}\right) \leq F(\bar{x}_{t})+\langle \nabla F(\bar{x}_{t}),  \bar{x}_{t+1}-\bar{x}_{t} \rangle+\frac{L_{F}}{2}\|\bar{x}_{t+1}-\bar{x}_{t}\|^{2} \\
		& = F(\bar{{x}}_{t})-\alpha\eta\langle \nabla F(\bar{{x}}_{t}), \bar{m}_{t} \rangle+\frac{\alpha^2\eta^2L_{F}}{2}\|\bar{m}_{t}\|^{2} \\
		& =  F(\bar{{x}}_{t})-\frac{\alpha\eta}{2}\|\nabla F(\bar{{x}}_{t})\|^2 -(\frac{\alpha\eta}{2}-\frac{\alpha^2\eta^2L_{F}}{2})\|\bar{m}_t\|^{2}  + \frac{\alpha\eta}{2} \|\bar{m}_t - \nabla F(\bar{{x}}_{t})\|^2\\
		& \leq F(\bar{{x}}_{t})-\frac{\alpha\eta}{2}\|\nabla F(\bar{{x}}_{t})\|^2 -\frac{\alpha\eta}{4}\|\bar{m}_t\|^{2}  + \frac{\alpha\eta}{2} \|\bar{m}_t - \nabla F(\bar{{x}}_{t})\|^2\\
		& \leq F(\bar{{x}}_{t})-\frac{\alpha\eta}{2}\|\nabla F(\bar{{x}}_{t})\|^2 -\frac{\alpha\eta}{4}\|\bar{m}_t\|^{2}  +\alpha\eta \|\bar{m}_t - \frac{1}{N} \sum_{n=1}^{N}\nabla F_{n}({{x}}_{n, t})\|^2 + \alpha\eta \|\frac{1}{N} \sum_{n=1}^{N}\nabla F_{n}({{x}}_{n, t})- \nabla F(\bar{{x}}_{t})\|^2\\
		& \leq F(\bar{{x}}_{t})-\frac{\alpha\eta}{2}\|\nabla F(\bar{{x}}_{t})\|^2 -\frac{\alpha\eta}{4}\|\bar{m}_t\|^{2}  +\alpha\eta \|\bar{m}_t - \frac{1}{N} \sum_{n=1}^{N}\nabla F_{n}({{x}}_{n, t})\|^2 + \alpha\eta L_F^2\frac{1}{N} \sum_{n=1}^{N}\|{x}_{n, t}-\bar{{x}}_{t}\|^2 \ ,
	\end{aligned}
\end{equation}
where the fourth step holds due to $\eta\leq \frac{1}{2\alpha L_F}$.  

To prove Theorem~\ref{theorem1}, we introduce the following potential function:
\begin{equation}
	\begin{aligned}
		& 	\mathcal{H}_{t+1} = \mathbb{E}[F({\bar{x}}_{t+1}) ] + \omega_0\frac{1}{N}\sum_{n=1}^{N}\mathbb{E}\Big[\Big\|{m}_{n,t+1} -\nabla F_{n}({{x}}_{n,t+1})\Big\|^2\Big]  + \frac{1}{N} \sum_{n=1}^{N}\sum_{k=1}^{K-1}\omega_k \mathbb{E}[\|u_{n, t+1}^{(k)} -f_{n}^{(k)}(u_{n, t+1}^{(k-1)}) \|^2]   \\
		& + \omega_{K}\mathbb{E}\Big[\Big\|\frac{1}{N}\sum_{n=1}^{N}{m}_{n,t+1} -\frac{1}{N}\sum_{n=1}^{N}\nabla F_{n}({{x}}_{n,t+1})\Big\|^2\Big]   + \omega_{K+1}\frac{1}{N} \mathbb{E}[\|X_{t+1} - \bar{X}_{t+1}\|_{F}^2 ]+ \omega_{K+2}\frac{1}{N} \mathbb{E}[\|Y_{t+1} - \bar{Y}_{t+1}\|_{F}^2 ]  \ . 
	\end{aligned}
\end{equation}
Then, based on Lemmas~\ref{lemma_u_var_momentum},~\ref{lemma_m_var_momentum},~\ref{lemma_m_var_momentum2},~\ref{lemma_x_consensus_momentum},~\ref{lemma_y_consensus_momentum}, we can get
\begin{equation}
	\begin{aligned}
		& \quad \mathcal{H}_{t+1} - \mathcal{H}_{t} \\
		&  \leq -\frac{\alpha\eta}{2}\mathbb{E}[\|\nabla F(\bar{{x}}_{t})\|^2] -\frac{\alpha\eta}{4}\mathbb{E}[\|\bar{m}_t\|^{2} ]  +4\omega_{K} \mu \beta^2\eta^3 K\sum_{k=1}^{K-1}A_k \delta_{k}^2 + \omega_{K} \mu^2\eta^2 K\sum_{k=1}^{K}B_{k}\frac{\sigma_{k}^2}{N} + 2\beta^2\eta^2\sum_{k=1}^{K-1}\omega_k\delta_{k}^2\\
		& \quad  +\omega_{K+2}\frac{8 \beta^2  \mu^2\eta^4}{1-\lambda}   K\sum_{k=1}^{K-1}D_{k}  \delta_{k}^2  +\omega_{K+2} \frac{4\mu^2\eta^2}{1-\lambda}  K\sum_{k=1}^{K}B_k \sigma_{k}^2 +4\omega_{0}\mu \beta^2\eta^3 K\sum_{k=1}^{K-1}A_k \delta_{k}^2 + \omega_{0}\mu^2\eta^2 K\sum_{k=1}^{K}B_{k}\sigma_{k}^2 \\
				& \quad + \Big(\omega_{K+2} \frac{4\mu^2\eta^2}{1-\lambda}-\mu\eta \omega_{0}\Big)\frac{1}{N}\sum_{n=1}^{N}\mathbb{E}\Big[\Big\|{m}_{n,t} -\nabla F_{n}({{x}}_{n,t})\Big\|^2\Big] \\
				& \quad + \Big(\alpha\eta -\mu\eta \omega_{K} \Big)\mathbb{E}\Big[\Big\|\frac{1}{N}\sum_{n=1}^{N}{m}_{n,t} -\frac{1}{N}\sum_{n=1}^{N}\nabla F_{n}({{x}}_{n,t})\Big\|^2\Big] \\
		& \quad  + \frac{1}{N} \sum_{n=1}^{N}\sum_{k=1}^{K-1}\Big(\frac{\mu^2\eta^2}{1-\lambda} K(4A_k+8 \beta^2  \eta^2D_{k})\omega_{K+2}+ 2\omega_{K} \mu\eta  KA_k+ 2\omega_{0}\mu\eta  KA_k- \omega_k\beta\eta\Big) \mathbb{E}\Big[\Big\|u_{n,t}^{(k)} -f_{n}^{(k)}(u_{n,t}^{(k-1)}) \Big\|^2\Big] \\
		& \quad+ 
		\Big(\frac{2 L_F^2}{\mu \eta }\omega_{K}+ \omega_{0}
		\frac{2 L_F^2}{\mu \eta } +  \omega_{K+2}\frac{4\mu^2\eta^2 }{1-\lambda} KD_{0}\Big)\frac{1}{N}\mathbb{E}[\|{X}_{t+1} - {X}_{t}\|_F^2] \\
		& \quad + \Big(\alpha\eta L_F^2-\eta\frac{1-\lambda^2}{2}\omega_{K+1}\Big)\frac{1}{N}\mathbb{E}[\|X_{t}  - \bar{X}_{t} \|_F^2]   + \Big(\frac{2\eta\alpha^2}{1-\lambda^2} \omega_{K+1} - (1-\lambda) \omega_{K+2}\Big)\frac{1}{N}\mathbb{E}[\| Y_{t} -  \bar{Y}_{t} \|_F^2]\\
				& \quad  + \frac{1}{N} \sum_{n=1}^{N}\sum_{k=1}^{K-1}\Big(4\omega_{K} \mu\eta  KA_k C_k^2+ 4\omega_{0}\mu\eta  KA_k C_k^2+ 2\omega_kC_k^2 + \omega_{K+2}\frac{8\mu^2\eta^2}{1-\lambda} KD_{k}  C_{k}^2\Big)\mathbb{E}[\|u_{n, t}^{(k-1)}- u_{n,t+1}^{(k-1)} \|^2]  \ .  \\
	\end{aligned}
\end{equation}
Then, according to Lemma~\ref{lemma_u_inc_momentum}, we can get
\begin{equation}
	\small
	\begin{aligned}
		&  \quad \mathcal{H}_{t+1} - \mathcal{H}_{t}  \\
		& \leq -\frac{\alpha\eta}{2}\mathbb{E}[\|\nabla F(\bar{{x}}_{t})\|^2] -\frac{\alpha\eta}{4}\mathbb{E}[\|\bar{m}_t\|^{2}] +4\omega_{K} \mu \beta^2\eta^3 K\sum_{k=1}^{K-1}A_k \delta_{k}^2 + \omega_{K} \mu^2\eta^2 K\sum_{k=1}^{K}B_{k}\frac{\sigma_{k}^2}{N} + 2\beta^2\eta^2\sum_{k=1}^{K-1}\omega_k\delta_{k}^2 \\
		& \quad  +\omega_{K+2}\frac{8 \beta^2  \mu^2\eta^4}{1-\lambda}   K\sum_{k=1}^{K-1}D_{k}  \delta_{k}^2  +\omega_{K+2} \frac{4\mu^2\eta^2}{1-\lambda}  K\sum_{k=1}^{K}B_k \sigma_{k}^2 +4\omega_{0}\mu \beta^2\eta^3 K\sum_{k=1}^{K-1}A_k \delta_{k}^2 + \omega_{0}\mu^2\eta^2 K\sum_{k=1}^{K}B_{k}\sigma_{k}^2 \\
		& \quad + \Big(\omega_{K+2} \frac{4\mu^2\eta^2}{1-\lambda}-\mu\eta \omega_{0}\Big)\frac{1}{N}\sum_{n=1}^{N}\mathbb{E}\Big[\Big\|{m}_{n,t} -\nabla F_{n}({{x}}_{n,t})\Big\|^2\Big] + \Big(\alpha\eta -\mu\eta \omega_{K} \Big)\mathbb{E}\Big[\Big\|\frac{1}{N}\sum_{n=1}^{N}{m}_{n,t} -\frac{1}{N}\sum_{n=1}^{N}\nabla F_{n}({{x}}_{n,t})\Big\|^2\Big] \\
		& \quad  + \frac{1}{N} \sum_{n=1}^{N}\sum_{k=1}^{K-1}\Big(\frac{\mu^2\eta^2}{1-\lambda} K(4A_k+8 \beta^2  \eta^2D_{k})\omega_{K+2}+ 2\omega_{K} \mu\eta  KA_k+ 2\omega_{0}\mu\eta  KA_k- \omega_k\beta\eta\Big) \mathbb{E}\Big[\Big\|u_{n,t}^{(k)} -f_{n}^{(k)}(u_{n,t}^{(k-1)}) \Big\|^2\Big] \\
				& \quad + \Big(\alpha\eta L_F^2-\eta\frac{1-\lambda^2}{2}\omega_{K+1}\Big)\frac{1}{N}\mathbb{E}[\|X_{t}  - \bar{X}_{t} \|_F^2]   + \Big(\frac{2\eta\alpha^2}{1-\lambda^2} \omega_{K+1} - (1-\lambda) \omega_{K+2}\Big)\frac{1}{N}\mathbb{E}[\| Y_{t} -  \bar{Y}_{t} \|_F^2]\\
		& \quad+ 
		\Big(\frac{2 L_F^2}{\mu \eta }\omega_{K}+ \omega_{0}
		\frac{2 L_F^2}{\mu \eta } +  \omega_{K+2}\frac{4\mu^2\eta^2 }{1-\lambda} KD_{0}\Big)\frac{1}{N}\mathbb{E}[\|{X}_{t+1} - {X}_{t}\|_F^2] \\
		& \quad  + \frac{1}{N} \sum_{n=1}^{N}\sum_{k=1}^{K-1}\Big(4\omega_{K} \mu\eta  KA_k C_k^2+ 4\omega_{0}\mu\eta  KA_k C_k^2+ 2\omega_kC_k^2 + \omega_{K+2}\frac{8\mu^2\eta^2}{1-\lambda} KD_{k}  C_{k}^2\Big)\Big(\prod_{j=1}^{k-1}(2C_{j}^2)\Big)\mathbb{E}[\|u_{n, t}^{(0)} - u_{n,t+1}^{(0)} \|^2]  \\
		& \quad  +2\beta^2  \eta^2 \frac{1}{N} \sum_{n=1}^{N}\sum_{k=1}^{K-1}\Big(4\omega_{K} \mu\eta  KA_k C_k^2+ 4\omega_{0}\mu\eta  KA_k C_k^2+ 2\omega_kC_k^2\\
		& \quad \quad  + \omega_{K+2}\frac{8\mu^2\eta^2}{1-\lambda} KD_{k}  C_{k}^2\Big) \sum_{j=1}^{k-1} \Big(\prod_{i=j+1}^{k-1}(2C_{i}^2)\Big)\mathbb{E}[\|u_{n, t}^{(j)}- f_{n}^{(j)}(u_{n, t}^{(j-1)})  \|^2] \\
		& \quad  + 2\beta^2  \eta^2\frac{1}{N} \sum_{n=1}^{N}\sum_{k=1}^{K-1}\Big(4\omega_{K} \mu\eta  KA_k C_k^2+ 4\omega_{0}\mu\eta  KA_k C_k^2+ 2\omega_kC_k^2 + \omega_{K+2}\frac{8\mu^2\eta^2}{1-\lambda} KD_{k}  C_{k}^2\Big) \sum_{j=1}^{k-1} \Big(\prod_{i=j+1}^{k-1}(2C_{i}^2)\Big)\delta_{j}^2 \ .  \\
		& \leq -\frac{\alpha\eta}{2}\mathbb{E}[\|\nabla F(\bar{{x}}_{t})\|^2] -\frac{\alpha\eta}{4}\mathbb{E}[\|\bar{m}_t\|^{2}]  +4\omega_{K} \mu \beta^2\eta^3 K\sum_{k=1}^{K-1}A_k \delta_{k}^2 + \omega_{K} \mu^2\eta^2 K\sum_{k=1}^{K}B_{k}\frac{\sigma_{k}^2}{N} + 2\beta^2\eta^2\sum_{k=1}^{K-1}\omega_k\delta_{k}^2\\
		& \quad  +\omega_{K+2}\frac{8 \beta^2  \mu^2\eta^4}{1-\lambda}   K\sum_{k=1}^{K-1}D_{k}  \delta_{k}^2  +\omega_{K+2} \frac{4\mu^2\eta^2}{1-\lambda}  K\sum_{k=1}^{K}B_k \sigma_{k}^2 +4\omega_{0}\mu \beta^2\eta^3 K\sum_{k=1}^{K-1}A_k \delta_{k}^2 + \omega_{0}\mu^2\eta^2 K\sum_{k=1}^{K}B_{k}\sigma_{k}^2 \\
		& \quad + \Big(\omega_{K+2} \frac{4\mu^2\eta^2}{1-\lambda}-\mu\eta \omega_{0}\Big)\frac{1}{N}\sum_{n=1}^{N}\mathbb{E}\Big[\Big\|{m}_{n,t} -\nabla F_{n}({{x}}_{n,t})\Big\|^2\Big]+ \Big(\alpha\eta -\mu\eta \omega_{K} \Big)\mathbb{E}\Big[\Big\|\frac{1}{N}\sum_{n=1}^{N}{m}_{n,t} -\frac{1}{N}\sum_{n=1}^{N}\nabla F_{n}({{x}}_{n,t})\Big\|^2\Big] \\
		& \quad  + \frac{1}{N} \sum_{n=1}^{N}\sum_{k=1}^{K-1}\Big(\frac{\mu^2\eta^2}{1-\lambda} K(4A_k+8 \beta^2  \eta^2D_{k})\omega_{K+2}+ 2\omega_{K} \mu\eta  KA_k+ 2\omega_{0}\mu\eta  KA_k- \omega_k\beta\eta\Big) \mathbb{E}\Big[\Big\|u_{n,t}^{(k)} -f_{n}^{(k)}(u_{n,t}^{(k-1)}) \Big\|^2\Big] \\
		& \quad+ 
		\Big(\frac{2 L_F^2}{\mu \eta }\omega_{K}+ \omega_{0}
		\frac{2 L_F^2}{\mu \eta } +  \omega_{K+2}\frac{4\mu^2\eta^2 }{1-\lambda} KD_{0} \\
		& \quad \quad + \sum_{k=1}^{K-1}\Big(4\omega_{K} \mu\eta  KA_k C_k^2+ 4\omega_{0}\mu\eta  KA_k C_k^2+ 2\omega_kC_k^2 + \omega_{K+2}\frac{8\mu^2\eta^2}{1-\lambda} KD_{k}  C_{k}^2\Big)\Big(\prod_{j=1}^{k-1}(2C_{j}^2)\Big)\Big)\frac{1}{N}\mathbb{E}[\|{X}_{t+1} - {X}_{t}\|_F^2] \\
		& \quad + \Big(\alpha\eta L_F^2-\eta\frac{1-\lambda^2}{2}\omega_{K+1}\Big)\frac{1}{N}\mathbb{E}[\|X_{t}  - \bar{X}_{t} \|_F^2] + \Big(\frac{2\eta\alpha^2}{1-\lambda^2} \omega_{K+1} - (1-\lambda) \omega_{K+2}\Big)\frac{1}{N}\mathbb{E}[\| Y_{t} -  \bar{Y}_{t} \|_F^2]\\
		& \quad  + 2\beta^2  \eta^2\frac{1}{N} \sum_{n=1}^{N}\sum_{k=1}^{K-1}\Bigg[\sum_{j=k+1}^{K-1}\Big(4\omega_{K} \mu\eta  KA_j C_j^2+ 4\omega_{0}\mu\eta  KA_j C_j^2+ 2\omega_jC_j^2 \\
		& \quad \quad + \omega_{K+2}\frac{8\mu^2\eta^2}{1-\lambda} KD_{j}  C_{j}^2\Big)  \Big(\prod_{i=k+1}^{j}(2C_{i}^2)\Big)\Bigg]\mathbb{E}[\|u_{n, t}^{(k)}- f_{n}^{(k)}(u_{n, t}^{(k-1)})  \|^2]  \\
		& \quad  + 2\beta^2  \eta^2\frac{1}{N} \sum_{n=1}^{N}\sum_{k=1}^{K-1}\Bigg[\sum_{j=k+1}^{K-1}\Big(4\omega_{K} \mu\eta  KA_j C_j^2+ 4\omega_{0}\mu\eta  KA_j C_j^2+ 2\omega_jC_j^2 + \omega_{K+2}\frac{8\mu^2\eta^2}{1-\lambda} KD_{j}  C_{j}^2\Big)  \Big(\prod_{i=k+1}^{j}(2C_{i}^2)\Big)\Bigg]\delta_{k}^2 \ . \\
	\end{aligned}
\end{equation}

Based on Lemma~\ref{lemma_x_inc_momentum}, we can get
\begin{equation}
	\small
	\begin{aligned}
		& \quad \mathcal{H}_{t+1} - \mathcal{H}_{t} \\
		& \leq -\frac{\alpha\eta}{2}\mathbb{E}[\|\nabla F(\bar{{x}}_{t})\|^2]  +4\omega_{K} \mu \beta^2\eta^3 K\sum_{k=1}^{K-1}A_k \delta_{k}^2 + \omega_{K} \mu^2\eta^2 K\sum_{k=1}^{K}B_{k}\frac{\sigma_{k}^2}{N} + 2\beta^2\eta^2\sum_{k=1}^{K-1}\omega_k\delta_{k}^2 \\
		& \quad  +\omega_{K+2}\frac{8 \beta^2  \mu^2\eta^4}{1-\lambda}   K\sum_{k=1}^{K-1}D_{k}  \delta_{k}^2  +\omega_{K+2} \frac{4\mu^2\eta^2}{1-\lambda}  K\sum_{k=1}^{K}B_k \sigma_{k}^2 +4\omega_{0}\mu \beta^2\eta^3 K\sum_{k=1}^{K-1}A_k \delta_{k}^2 + \omega_{0}\mu^2\eta^2 K\sum_{k=1}^{K}B_{k}\sigma_{k}^2 \\
		& \quad  + 2\beta^2  \eta^2\frac{1}{N} \sum_{n=1}^{N}\sum_{k=1}^{K-1}\Bigg[\sum_{j=k+1}^{K-1}\Big(4\omega_{K} \mu\eta  KA_j C_j^2+ 4\omega_{0}\mu\eta  KA_j C_j^2+ 2\omega_jC_j^2 + \omega_{K+2}\frac{8\mu^2\eta^2}{1-\lambda} KD_{j}  C_{j}^2\Big)  \Big(\prod_{i=k+1}^{j}(2C_{i}^2)\Big)\Bigg]\delta_{k}^2 \\
		& \quad + \Big(\omega_{K+2} \frac{4\mu^2\eta^2}{1-\lambda}-\mu\eta \omega_{0}\Big)\frac{1}{N}\sum_{n=1}^{N}\mathbb{E}\Big[\Big\|{m}_{n,t} -\nabla F_{n}({{x}}_{n,t})\Big\|^2\Big] \\
		& \quad + \Big(\alpha\eta -\mu\eta \omega_{K} \Big)\mathbb{E}\Big[\Big\|\frac{1}{N}\sum_{n=1}^{N}{m}_{n,t} -\frac{1}{N}\sum_{n=1}^{N}\nabla F_{n}({{x}}_{n,t})\Big\|^2\Big] \\
		& \quad  + \frac{1}{N} \sum_{n=1}^{N}\sum_{k=1}^{K-1}\Bigg[\frac{\mu^2\eta^2}{1-\lambda} K(4A_k+8 \beta^2  \eta^2D_{k})\omega_{K+2}+ 2\omega_{K} \mu\eta  KA_k+ 2\omega_{0}\mu\eta  KA_k- \omega_k\beta\eta  + 2\beta^2  \eta^2\sum_{j=k+1}^{K-1}\Big( 2\omega_jC_j^2\\
		& \quad \quad +4\omega_{K} \mu\eta  KA_j C_j^2+ 4\omega_{0}\mu\eta  KA_j C_j^2 + \omega_{K+2}\frac{8\mu^2\eta^2}{1-\lambda} KD_{j}  C_{j}^2\Big)  \Big(\prod_{i=k+1}^{j}(2C_{i}^2)\Big)\Bigg] \mathbb{E}\Big[\Big\|u_{n,t}^{(k)} -f_{n}^{(k)}(u_{n,t}^{(k-1)}) \Big\|^2\Big] \\
		& \quad + \Bigg[\alpha\eta L_F^2-\eta\frac{1-\lambda^2}{2}\omega_{K+1}+ 8\eta^2\Bigg(\frac{2 L_F^2}{\mu \eta }\omega_{K}+ \omega_{0}
		\frac{2 L_F^2}{\mu \eta } +  \omega_{K+2}\frac{4\mu^2\eta^2 }{1-\lambda} KD_{0} \\
		& \quad \quad + \sum_{k=1}^{K-1}\Big(4\omega_{K} \mu\eta  KA_k C_k^2+ 4\omega_{0}\mu\eta  KA_k C_k^2+ 2\omega_kC_k^2 + \omega_{K+2}\frac{8\mu^2\eta^2}{1-\lambda} KD_{k}  C_{k}^2\Big)\Big(\prod_{j=1}^{k-1}(2C_{j}^2)\Big)\Bigg)\Bigg]\frac{1}{N}\mathbb{E}[\|X_{t}  - \bar{X}_{t} \|_F^2 ]\\
		& \quad  + \Bigg[\frac{2\eta\alpha^2}{1-\lambda^2} \omega_{K+1} - (1-\lambda) \omega_{K+2}+ 
		4\alpha^2\eta^2 \Bigg(\frac{2 L_F^2}{\mu \eta }\omega_{K}+ \omega_{0}
		\frac{2 L_F^2}{\mu \eta } +  \omega_{K+2}\frac{4\mu^2\eta^2 }{1-\lambda} KD_{0} \\
		& \quad \quad + \sum_{k=1}^{K-1}\Big(4\omega_{K} \mu\eta  KA_k C_k^2+ 4\omega_{0}\mu\eta  KA_k C_k^2+ 2\omega_kC_k^2 + \omega_{K+2}\frac{8\mu^2\eta^2}{1-\lambda} KD_{k}  C_{k}^2\Big)\Big(\prod_{j=1}^{k-1}(2C_{j}^2)\Big)\Bigg)\Bigg]\frac{1}{N}\mathbb{E}[\| Y_{t} -  \bar{Y}_{t} \|_F^2]\\
		& \quad+ \Bigg[4\alpha^2\eta^2\Bigg(\frac{2 L_F^2}{\mu \eta }\omega_{K}+ \omega_{0}
		\frac{2 L_F^2}{\mu \eta } +  \omega_{K+2}\frac{4\mu^2\eta^2 }{1-\lambda} KD_{0} \\
		& \quad \quad + \sum_{k=1}^{K-1}\Big(4\omega_{K} \mu\eta  KA_k C_k^2+ 4\omega_{0}\mu\eta  KA_k C_k^2+ 2\omega_kC_k^2 + \omega_{K+2}\frac{8\mu^2\eta^2}{1-\lambda} KD_{k}  C_{k}^2\Big)\Big(\prod_{j=1}^{k-1}(2C_{j}^2)\Big)\Bigg) - \frac{\alpha\eta}{4}\Bigg]
		\mathbb{E}[ \|\bar{m}_{t}\|^2]  \ . \\
	\end{aligned}
\end{equation}
In the following, we enforce the coefficient of the last six terms to be non-positive. Specifically, by setting $\omega_{K} = \frac{\alpha}{\mu}$, we can get $\alpha\eta -\mu\eta \omega_{K}  \leq  0 $. Moreover, we set $\omega_{K+2} = \alpha(1-\lambda)$ and $ \omega_{K+2} \frac{4\mu^2\eta^2}{1-\lambda}-\mu\eta \omega_{0} =  0$ so that we can get $\omega_{0} = \omega_{K+2} \frac{4\mu\eta}{1-\lambda} =4\alpha \mu\eta $.

Then, we enforce  
\begin{equation}
	\begin{aligned}
		& \frac{\mu^2\eta^2}{1-\lambda} K(4A_k+8 \beta^2  \eta^2D_{k})\omega_{K+2}+ 2\omega_{K} \mu\eta  KA_k+ 2\omega_{0}\mu\eta  KA_k- \omega_k\beta\eta \\
		& \quad + 2\beta^2  \eta^2\sum_{j=k+1}^{K-1}\Big(4\omega_{K} \mu\eta  KA_j C_j^2+ 4\omega_{0}\mu\eta  KA_j C_j^2+ 2\omega_jC_j^2 + \omega_{K+2}\frac{8\mu^2\eta^2}{1-\lambda} KD_{j}  C_{j}^2\Big)  \Big(\prod_{i=k+1}^{j}(2C_{i}^2)\Big) \leq  0  \  .\\
	\end{aligned}
\end{equation}
This is equivalent to enforce 
\begin{equation}
	\begin{aligned}
		& \quad \frac{\mu^2\eta^2}{1-\lambda} K(4A_k+8 \beta^2  \eta^2D_{k})\omega_{K+2}+ 2\frac{\alpha}{\mu}\mu\eta  KA_k+ 2\omega_{K+2} \frac{4\mu\eta}{1-\lambda} \mu\eta  KA_k- \omega_k\beta\eta \\
		& \quad + 2\beta^2  \eta^2\sum_{j=k+1}^{K-1}\Big(4\frac{\alpha}{\mu}\mu\eta  KA_j C_j^2+ 4\omega_{K+2} \frac{4\mu\eta}{1-\lambda} \mu\eta  KA_j C_j^2+ 2\omega_jC_j^2 + \omega_{K+2}\frac{8\mu^2\eta^2}{1-\lambda} KD_{j}  C_{j}^2\Big)  \Big(\prod_{i=k+1}^{j}(2C_{i}^2)\Big)  \\
		& \leq \frac{\mu^2\eta^2}{1-\lambda} K(4A_k+8 \beta^2  \eta^2D_{k})\alpha(1-\lambda) + 2\frac{\alpha}{\mu}\mu\eta  KA_k+ 2\alpha(1-\lambda) \frac{4\mu\eta}{1-\lambda} \mu\eta  KA_k- \omega_k\beta\eta \\
		& \quad + 2\beta^2  \eta^2\sum_{j=k+1}^{K-1}\Big(4\frac{\alpha}{\mu}\mu\eta  KA_j C_j^2+ 4\alpha(1-\lambda)  \frac{4\mu\eta}{1-\lambda} \mu\eta  KA_j C_j^2+ 2\omega_jC_j^2 + \alpha(1-\lambda) \frac{8\mu^2\eta^2}{1-\lambda} KD_{j}  C_{j}^2\Big)  \Big(\prod_{i=k+1}^{j}(2C_{i}^2)\Big)  \\
		& \leq  K(4A_k+8 \beta^2  \eta^2D_{k})\alpha\mu^2\eta^2 + 2\alpha\eta  KA_k+ 8\alpha \mu^2\eta^2 KA_k- \omega_k\beta\eta \\
		& \quad + 2\beta^2  \eta^2\sum_{j=k+1}^{K-1}\Big(4\alpha\eta  KA_j C_j^2+ 16\alpha \mu^2\eta^2   KA_j C_j^2+ 2\omega_jC_j^2 + 8\alpha\mu^2\eta^2KD_{j}  C_{j}^2\Big)  \Big(\prod_{i=k+1}^{j}(2C_{i}^2)\Big)  \leq  0   \ . \\
	\end{aligned}
\end{equation}
It can be done by enforcing
\begin{equation}
	\begin{aligned}
		& 2\beta^2  \eta^2\sum_{j=k+1}^{K-1}\Big( 2\omega_jC_j^2 \Big)  \Big(\prod_{i=k+1}^{j}(2C_{i}^2)\Big)- \omega_k\beta\eta \leq - \frac{1}{2}\omega_k\beta\eta \ ,  \\
		&   K(4A_k+8 \beta^2  \eta^2D_{k})\alpha\mu^2\eta + 2\alpha  KA_k+ 8\alpha \mu^2\eta KA_k\\
		& \quad + 2\beta^2  \eta\sum_{j=k+1}^{K-1}\Big(4\alpha\eta  KA_j C_j^2+ 16\alpha \mu^2\eta^2   KA_j C_j^2+ 8\alpha\mu^2\eta^2KD_{j}  C_{j}^2\Big)  \Big(\prod_{i=k+1}^{j}(2C_{i}^2)\Big) \leq \frac{1}{2} \omega_k\beta  \ .\\
	\end{aligned}
\end{equation}
As for the first inequality, we can get
\begin{equation}
	\begin{aligned}
		& \eta \leq \frac{\omega_k}{4\beta\sum_{j=1}^{K-1}\Big( 2\omega_jC_j^2 \Big)  \Big(\prod_{i=k+1}^{j}(2C_{i}^2)\Big) }  \ . \\
	\end{aligned}
\end{equation}
As for the second inequality, we can get
\begin{equation}
	\begin{aligned}
		& \frac{1}{2} \omega_k\beta \geq  K(4A_k+8 \beta^2  \eta^2D_{k})\alpha\mu^2\eta + 2\alpha  KA_k+ 8\alpha \mu^2\eta KA_k\\
		& \quad + 2\beta^2  \eta\sum_{j=k+1}^{K-1}\Big(4\alpha\eta  KA_j C_j^2+ 16\alpha \mu^2\eta^2   KA_j C_j^2+ 8\alpha\mu^2\eta^2KD_{j}  C_{j}^2\Big)  \Big(\prod_{i=k+1}^{j}(2C_{i}^2)\Big)  \  .  \\
	\end{aligned}
\end{equation}
Then, due to $\beta\eta<1$, $\mu\eta<1$,  we can set
\begin{equation}
	\begin{aligned}
		& \omega_k = \frac{2\alpha K}{\beta} \Bigg((12A_k+8 D_{k})\mu  + 2  A_k+  2 \beta  \sum_{j=k+1}^{K-1}\Big( 20    A_j C_j^2+ 8 D_{j}  C_{j}^2\Big)  \Big(\prod_{i=k+1}^{j}(2C_{i}^2)\Big)  \Bigg) \ . \\
	\end{aligned}
\end{equation}
Here, we represent $\omega_k \triangleq \alpha K \tilde{\omega}_k$,   where $\tilde{\omega}_k= \frac{2}{\beta} \Bigg((12A_k+8 D_{k})\mu  + 2  A_k+  2 \beta  \sum_{j=k+1}^{K-1}\Big( 20    A_j C_j^2+ 8 D_{j}  C_{j}^2\Big)  \Big(\prod_{i=k+1}^{j}(2C_{i}^2)\Big)  \Bigg)$.  Then, we can simplify the upper bound of $\eta$ as follows:
\begin{equation}
	\begin{aligned}
		& \eta \leq \frac{ \tilde{\omega}_k}{8\beta\sum_{j=1}^{K-1}  \tilde{\omega}_jC_j^2  \Big(\prod_{i=k+1}^{j}(2C_{i}^2)\Big) }  \ . \\
	\end{aligned}
\end{equation}

Based on the value of $\omega_{k}$ where $k\in \{0, 1, \cdots, K\}$ and $\omega_{K+2}$ , due to $\eta<1$,  we can get
\begin{equation}
	\begin{aligned}
		&\quad  \frac{2 L_F^2}{\mu \eta }\omega_{K}+ \omega_{0}
		\frac{2 L_F^2}{\mu \eta } +  \omega_{K+2}\frac{4\mu^2\eta^2 }{1-\lambda} KD_{0} \\
		& \quad \quad + \sum_{k=1}^{K-1}\Big(4\omega_{K} \mu\eta  KA_k C_k^2+ 4\omega_{0}\mu\eta  KA_k C_k^2+ 2\omega_kC_k^2 + \omega_{K+2}\frac{8\mu^2\eta^2}{1-\lambda} KD_{k}  C_{k}^2\Big)\Big(\prod_{j=1}^{k-1}(2C_{j}^2)\Big) \\
		& = \frac{2 \alpha L_F^2}{\mu^2 \eta }+ 8 \alpha L_F^2
		+  4\alpha KD_{0} + \sum_{k=1}^{K-1}\Big(4\alpha\eta  KA_k C_k^2+ 16\alpha KA_k C_k^2+ 2\alpha K \tilde{\omega}_kC_k^2 + 8\alpha KD_{k}  C_{k}^2\Big)\Big(\prod_{j=1}^{k-1}(2C_{j}^2)\Big) \\
		& \leq  \frac{2 \alpha L_F^2}{\mu^2 \eta }+ 8 \alpha L_F^2
		+  4\alpha KD_{0} + \alpha K\sum_{k=1}^{K-1}\Big(20 A_k C_k^2+ 2 \tilde{\omega}_kC_k^2 + 8 D_{k}  C_{k}^2\Big)\Big(\prod_{j=1}^{k-1}(2C_{j}^2)\Big) \ . \\
	\end{aligned}
\end{equation}

Furthermore, we enforce
\begin{equation}
	\begin{aligned}
		& \quad \alpha\eta L_F^2-\eta\frac{1-\lambda^2}{2}\omega_{K+1}+ 8\eta^2\Bigg(\frac{2 L_F^2}{\mu \eta }\omega_{K}+ \omega_{0}
		\frac{2 L_F^2}{\mu \eta } +  \omega_{K+2}\frac{4\mu^2\eta^2 }{1-\lambda} KD_{0} \\
		& \quad \quad + \sum_{k=1}^{K-1}\Big(4\omega_{K} \mu\eta  KA_k C_k^2+ 4\omega_{0}\mu\eta  KA_k C_k^2+ 2\omega_kC_k^2 + \omega_{K+2}\frac{8\mu^2\eta^2}{1-\lambda} KD_{k}  C_{k}^2\Big)\Big(\prod_{j=1}^{k-1}(2C_{j}^2)\Big)\Bigg) \\
		& \leq \alpha\eta L_F^2-\eta\frac{1-\lambda^2}{2}\omega_{K+1}+ 8\eta^2\Bigg(\frac{2 \alpha L_F^2}{\mu^2 \eta }+ 8 \alpha L_F^2
		+  4\alpha KD_{0} + \alpha K\sum_{k=1}^{K-1}\Big(20 A_k C_k^2+ 2 \tilde{\omega}_kC_k^2 + 8 D_{k}  C_{k}^2\Big)\Big(\prod_{j=1}^{k-1}(2C_{j}^2)\Big) \Bigg) \\
		&\leq  0 \ . \\
	\end{aligned}
\end{equation}
Similarly, due to $\eta<1$, we can set
\begin{equation}
	\begin{aligned}
		& \omega_{K+1}= \frac{2\alpha}{(1-\lambda^2)}\Bigg[ L_F^2+ 8\Bigg(\frac{2  L_F^2}{\mu^2  }+ 8  L_F^2
		+  4 KD_{0} +  K\sum_{k=1}^{K-1}\Big(20 A_k C_k^2+ 2 \tilde{\omega}_kC_k^2 + 8 D_{k}  C_{k}^2\Big)\Big(\prod_{j=1}^{k-1}(2C_{j}^2)\Big) \Bigg) \Bigg] \ . \\
	\end{aligned}
\end{equation}
Here, we represent $ \omega_{K+1}\triangleq \frac{2\alpha}{(1-\lambda^2)} \tilde{\omega}_{K+1}$, where  $\tilde{\omega}_{K+1}=\Bigg[ L_F^2+ 8\Bigg(\frac{2  L_F^2}{\mu^2  }+ 8  L_F^2
+  4 KD_{0} +  K\sum_{k=1}^{K-1}\Big(20 A_k C_k^2+ 2 \tilde{\omega}_kC_k^2 + 8 D_{k}  C_{k}^2\Big)\Big(\prod_{j=1}^{k-1}(2C_{j}^2)\Big) \Bigg) \Bigg]$.

In addition, we enforce
\begin{equation}
	\begin{aligned}
		& \frac{2\eta\alpha^2}{1-\lambda^2} \omega_{K+1} - (1-\lambda) \omega_{K+2}+ 
		4\alpha^2\eta^2 \Bigg(\frac{2 L_F^2}{\mu \eta }\omega_{K}+ \omega_{0}
		\frac{2 L_F^2}{\mu \eta } +  \omega_{K+2}\frac{4\mu^2\eta^2 }{1-\lambda} KD_{0} \\
		& \quad \quad + \sum_{k=1}^{K-1}\Big(4\omega_{K} \mu\eta  KA_k C_k^2+ 4\omega_{0}\mu\eta  KA_k C_k^2+ 2\omega_kC_k^2 + \omega_{K+2}\frac{8\mu^2\eta^2}{1-\lambda} KD_{k}  C_{k}^2\Big)\Big(\prod_{j=1}^{k-1}(2C_{j}^2)\Big)\Bigg) \\
		& \leq \frac{2\eta\alpha^2}{1-\lambda^2}\frac{2\alpha}{(1-\lambda^2)}  \tilde{\omega}_{K+1} - \alpha(1-\lambda)^2 \\
		& \quad + 
		4\alpha^2\eta^2 \Bigg(\frac{2 \alpha L_F^2}{\mu^2 \eta }+ 8 \alpha L_F^2
		+  4\alpha KD_{0} + \alpha K\sum_{k=1}^{K-1}\Big(20 A_k C_k^2+ 2 \tilde{\omega}_kC_k^2 + 8 D_{k}  C_{k}^2\Big)\Big(\prod_{j=1}^{k-1}(2C_{j}^2)\Big) \Bigg)  \leq  0  \ . \\
	\end{aligned}
\end{equation}
Due to $\eta<1$ and $1+\lambda>1$, we can get
\begin{equation}
	\begin{aligned}
		& \alpha \leq \frac{ (1-\lambda)^2}{ \sqrt{4\tilde{\omega}_{K+1} + 
				8 L_F^2/\mu^2  + 32  L_F^2
				+  16 KD_{0} +  4K\sum_{k=1}^{K-1}\Big(20 A_k C_k^2+ 2 \tilde{\omega}_kC_k^2 + 8 D_{k}  C_{k}^2\Big)\Big(\prod_{j=1}^{k-1}(2C_{j}^2)\Big) }} \ . 
	\end{aligned}
\end{equation}

And we enforce
\begin{equation}
	\begin{aligned}
		& 4\alpha^2\eta^2\Bigg(\frac{2 L_F^2}{\mu \eta }\omega_{K}+ \omega_{0}
		\frac{2 L_F^2}{\mu \eta } +  \omega_{K+2}\frac{4\mu^2\eta^2 }{1-\lambda} KD_{0} \\
		& \quad \quad + \sum_{k=1}^{K-1}\Big(4\omega_{K} \mu\eta  KA_k C_k^2+ 4\omega_{0}\mu\eta  KA_k C_k^2+ 2\omega_kC_k^2 + \omega_{K+2}\frac{8\mu^2\eta^2}{1-\lambda} KD_{k}  C_{k}^2\Big)\Big(\prod_{j=1}^{k-1}(2C_{j}^2)\Big)\Bigg) - \frac{\alpha\eta}{4} \\
		& \leq 4\alpha^2\eta^2\Bigg(\frac{2 \alpha L_F^2}{\mu^2 \eta }+ 8 \alpha L_F^2
		+  4\alpha KD_{0} + \alpha K\sum_{k=1}^{K-1}\Big(20 A_k C_k^2+ 2 \tilde{\omega}_kC_k^2 + 8 D_{k}  C_{k}^2\Big)\Big(\prod_{j=1}^{k-1}(2C_{j}^2)\Big) \Bigg) - \frac{\alpha\eta}{4} \leq  0  \ . \\
	\end{aligned}
\end{equation}
Similarly, due to $\eta<1$, we can get
\begin{equation}
	\begin{aligned}
		& \alpha \leq  \frac{1}{4\sqrt{2  L_F^2/\mu^2  + 8  L_F^2
				+  4 KD_{0} +  K\sum_{k=1}^{K-1}\Big(20 A_k C_k^2+ 2 \tilde{\omega}_kC_k^2 + 8 D_{k}  C_{k}^2\Big)\Big(\prod_{j=1}^{k-1}(2C_{j}^2)\Big) }}  \ . 
	\end{aligned} 
\end{equation}

In summary, by setting 
\begin{equation} \label{eq_hyperparams_momentum}
	\begin{aligned}
		& \omega_{0} =4\alpha \mu\eta \ ,  \\
		& \omega_k = \alpha K \tilde{\omega}_k  \ ,   \forall k\in \{1, 2, \cdots, K-1\} \ , \\
		& \omega_{K} = \frac{\alpha}{\mu}  \ , \\
		& \omega_{K+1}= \frac{2\alpha}{1-\lambda^2} \tilde{\omega}_{K+1}  \ , \\
		& \omega_{K+2} = \alpha(1-\lambda)  \ , \\
		& \eta \leq \frac{ \tilde{\omega}_k}{8\beta\sum_{j=1}^{K-1}  \tilde{\omega}_jC_j^2  \Big(\prod_{i=k+1}^{j}(2C_{i}^2)\Big) }    \ , \\
		& \alpha \leq \frac{ (1-\lambda)^2}{ \sqrt{4\tilde{\omega}_{K+1} + 
				8 L_F^2/\mu^2  + 32  L_F^2
				+  16 KD_{0} +  4K\sum_{k=1}^{K-1}\Big(20 A_k C_k^2+ 2 \tilde{\omega}_kC_k^2 + 8 D_{k}  C_{k}^2\Big)\Big(\prod_{j=1}^{k-1}(2C_{j}^2)\Big) }}  \ , \\
		& \alpha \leq  \frac{1}{4\sqrt{2  L_F^2/\mu^2  + 8  L_F^2
				+  4 KD_{0} +  K\sum_{k=1}^{K-1}\Big(20 A_k C_k^2+ 2 \tilde{\omega}_kC_k^2 + 8 D_{k}  C_{k}^2\Big)\Big(\prod_{j=1}^{k-1}(2C_{j}^2)\Big) }}  \ ,  \\
	\end{aligned}
\end{equation}
where  $\tilde{\omega}_k= \frac{2}{\beta} \Bigg((12A_k+8 D_{k})\mu  + 2  A_k+  2 \beta  \sum_{j=k+1}^{K-1}\Big( 20    A_j C_j^2+ 8 D_{j}  C_{j}^2\Big)  \Big(\prod_{i=k+1}^{j}(2C_{i}^2)\Big)  \Bigg)$ and $\tilde{\omega}_{K+1}= L_F^2+ 8\Bigg(\frac{2  L_F^2}{\mu^2  }+ 8  L_F^2
+  4 KD_{0} +  K\sum_{k=1}^{K-1}\Big(20 A_k C_k^2+ 2 \tilde{\omega}_kC_k^2 + 8 D_{k}  C_{k}^2\Big)\Big(\prod_{j=1}^{k-1}(2C_{j}^2)\Big) \Bigg) $, we can get
\begin{equation}
	\begin{aligned}
		& \quad \mathcal{H}_{t+1} - \mathcal{H}_{t} \\
		& \leq -\frac{\alpha\eta}{2}\|\nabla F(\bar{{x}}_{t})\|^2   +8 \alpha\beta^2  \mu^2\eta^4   K\sum_{k=1}^{K-1}D_{k}  \delta_{k}^2  +4\alpha\mu^2\eta^2 K\sum_{k=1}^{K}B_k \sigma_{k}^2 +16\alpha \mu^2 \beta^2\eta^4 K\sum_{k=1}^{K-1}A_k \delta_{k}^2 + 4\alpha \mu^3\eta^3 K\sum_{k=1}^{K}B_{k}\sigma_{k}^2 \\
		& \quad  +4\alpha \beta^2\eta^3 K\sum_{k=1}^{K-1}A_k \delta_{k}^2 + \alpha\mu \eta^2 K\sum_{k=1}^{K}B_{k}\frac{\sigma_{k}^2}{N} + 2\beta^2\eta^2\alpha K \sum_{k=1}^{K-1}\tilde{\omega}_k\delta_{k}^2\\
		& \quad  + 2\alpha\beta^2  \eta^2K \sum_{k=1}^{K-1}\Bigg[\sum_{j=k+1}^{K-1}\Big(20  A_j C_j^2+ 2 \tilde{\omega}_j C_j^2 + 8 D_{j}  C_{j}^2\Big)  \Big(\prod_{i=k+1}^{j}(2C_{i}^2)\Big)\Bigg]\delta_{k}^2 \ .  \\
	\end{aligned}
\end{equation}
Then, it is easy to get
\begin{equation}
	\begin{aligned}
		& \frac{1}{T}\sum_{t=0}^{T-1} \mathbb{E}[\|\nabla F(\bar{{x}}_{t})\|^2]   \\
		& \leq \frac{2(\mathcal{H}_{0}- \mathcal{H}_{T})}{\alpha\eta T} + 16 \beta^2  \mu^2\eta^3   K\sum_{k=1}^{K-1}D_{k}  \delta_{k}^2  +8\eta\mu^2 K\sum_{k=1}^{K}B_k \sigma_{k}^2 +32 \mu^2 \beta^2\eta^3 K\sum_{k=1}^{K-1}A_k \delta_{k}^2 + 8 \mu^3\eta^2 K\sum_{k=1}^{K}B_{k}\sigma_{k}^2 \\
		& \quad  +8 \beta^2\eta^2 K\sum_{k=1}^{K-1}A_k \delta_{k}^2 + 2\mu \eta K\sum_{k=1}^{K}B_{k}\frac{\sigma_{k}^2}{N} + 4\eta\beta^2 K \sum_{k=1}^{K-1}\tilde{\omega}_k\delta_{k}^2\\
		& \quad  + 4\eta\beta^2   K \sum_{k=1}^{K-1}\Bigg[\sum_{j=k+1}^{K-1}\Big(20  A_j C_j^2+ 2 \tilde{\omega}_j C_j^2 + 8 D_{j}  C_{j}^2\Big)  \Big(\prod_{i=k+1}^{j}(2C_{i}^2)\Big)\Bigg]\delta_{k}^2  \ . \\
	\end{aligned}
\end{equation}

According to the initial value, we can get
\begin{equation}
	\begin{aligned}
		& \quad \frac{1}{N} \mathbb{E}[\|Y_{0} - \bar{Y}_{0}\|_{F}^2 ]   \\
		& = \frac{1}{N}\sum_{n=1}^{N} \mathbb{E}[\| {v}_{n,  0}^{(1)} {v}_{ n, 0}^{(2)}\cdots {v}_{ n, 0}^{(K-1)} {v}_{n,  0}^{(K)} - \frac{1}{N}\sum_{n'=1}^{N}{v}_{n',  0}^{(1)} {v}_{ n', 0}^{(2)}\cdots {v}_{ n', 0}^{(K-1)} {v}_{n',  0}^{(K)} \|^2 ]   \\
		& = \frac{1}{N}\sum_{n=1}^{N} \mathbb{E}[\| {v}_{n,  0}^{(1)} {v}_{ n, 0}^{(2)}\cdots {v}_{ n, 0}^{(K-1)} {v}_{n,  0}^{(K)} - \nabla f_{n}^{(1)} (u_{n,0}^{(0)})\nabla  f_{n}^{(2)} (u_{n,0}^{(1)}) \cdots \nabla  f_{n}^{(K-1)} (u_{n,0}^{(K-2)})\nabla  f_{n}^{(K)} (u_{n,0}^{(K-1)}) \\
		& \quad + \nabla f_{n}^{(1)} (u_{n,0}^{(0)})\nabla  f_{n}^{(2)} (u_{n,0}^{(1)}) \cdots \nabla  f_{n}^{(K-1)} (u_{n,0}^{(K-2)})\nabla  f_{n}^{(K)} (u_{n,0}^{(K-1)}) \\
		& \quad - \frac{1}{N}\sum_{n'=1}^{N}\nabla f_{n'}^{(1)} (u_{n',0}^{(0)})\nabla  f_{n'}^{(2)} (u_{n',0}^{(1)}) \cdots \nabla  f_{n'}^{(K-1)} (u_{n',0}^{(K-2)})\nabla  f_{n'}^{(K)} (u_{n',0}^{(K-1)}) \\
		& \quad + \frac{1}{N}\sum_{n'=1}^{N}\nabla f_{n'}^{(1)} (u_{n',0}^{(0)})\nabla  f_{n'}^{(2)} (u_{n',0}^{(1)}) \cdots \nabla  f_{n'}^{(K-1)} (u_{n',0}^{(K-2)})\nabla  f_{n'}^{(K)} (u_{n',0}^{(K-1)}) \\
		& \quad  - \frac{1}{N}\sum_{n'=1}^{N}{v}_{n',  0}^{(1)} {v}_{ n', 0}^{(2)}\cdots {v}_{ n', 0}^{(K-1)} {v}_{n',  0}^{(K)} \|^2 ]   \\
		& \leq  3\frac{1}{N}\sum_{n=1}^{N} \mathbb{E}[\| {v}_{n,  0}^{(1)} {v}_{ n, 0}^{(2)}\cdots {v}_{ n, 0}^{(K-1)} {v}_{n,  0}^{(K)} - \nabla f_{n}^{(1)} (u_{n,0}^{(0)})\nabla  f_{n}^{(2)} (u_{n,0}^{(1)}) \cdots \nabla  f_{n}^{(K-1)} (u_{n,0}^{(K-2)})\nabla  f_{n}^{(K)} (u_{n,0}^{(K-1)})  \|^2 ] \\
		& \quad + 3\frac{1}{N}\sum_{n=1}^{N} \mathbb{E}[\|\nabla f_{n}^{(1)} (u_{n,0}^{(0)})\nabla  f_{n}^{(2)} (u_{n,0}^{(1)}) \cdots \nabla  f_{n}^{(K-1)} (u_{n,0}^{(K-2)})\nabla  f_{n}^{(K)} (u_{n,0}^{(K-1)}) \\
		& \quad \quad - \frac{1}{N}\sum_{n'=1}^{N}\nabla f_{n'}^{(1)} (u_{n',0}^{(0)})\nabla  f_{n'}^{(2)} (u_{n',0}^{(1)}) \cdots \nabla  f_{n'}^{(K-1)} (u_{n',0}^{(K-2)})\nabla  f_{n'}^{(K)} (u_{n',0}^{(K-1)}) \|^2 ] \\
		& \quad + 3\frac{1}{N}\sum_{n=1}^{N} \mathbb{E}[\|\frac{1}{N}\sum_{n'=1}^{N}\nabla f_{n'}^{(1)} (u_{n',0}^{(0)})\nabla  f_{n'}^{(2)} (u_{n',0}^{(1)}) \cdots \nabla  f_{n'}^{(K-1)} (u_{n',0}^{(K-2)})\nabla  f_{n'}^{(K)} (u_{n',0}^{(K-1)}) \\
		& \quad \quad  - \frac{1}{N}\sum_{n'=1}^{N}{v}_{n',  0}^{(1)} {v}_{ n', 0}^{(2)}\cdots {v}_{ n', 0}^{(K-1)} {v}_{n',  0}^{(K)} \|^2 ]   \\
		& \leq  6K\sum_{k=1}^{K}B_k \sigma_{k}^2  + 12K\sum_{k=2}^{K} \frac{(\prod_{j=1}^{K}C_j^2)L_k^2}{C_k^2} \sum_{i=1}^{k-1}8\delta_{i}^2\prod_{j=i+1}^{k-1} (8C_{j}^2) \ , 
	\end{aligned}
\end{equation}
where the last step holds due to the following inequality:

\begin{equation}
	\begin{aligned}
		& \quad \frac{1}{N}\sum_{n=1}^{N} \mathbb{E}[\|\nabla f_{n}^{(1)} (u_{n,0}^{(0)})\nabla  f_{n}^{(2)} (u_{n,0}^{(1)}) \cdots \nabla  f_{n}^{(K-1)} (u_{n,0}^{(K-2)})\nabla  f_{n}^{(K)} (u_{n,0}^{(K-1)}) \\
		& \quad \quad - \frac{1}{N}\sum_{n'=1}^{N}\nabla f_{n'}^{(1)} (u_{n',0}^{(0)})\nabla  f_{n'}^{(2)} (u_{n',0}^{(1)}) \cdots \nabla  f_{n'}^{(K-1)} (u_{n',0}^{(K-2)})\nabla  f_{n'}^{(K)} (u_{n',0}^{(K-1)}) \|^2 ] \\
		& \leq K\frac{1}{N}\sum_{n=1}^{N} \mathbb{E}[\|\Big(\nabla f_{n}^{(1)} (u_{n,0}^{(0)}) - \frac{1}{N}\sum_{n'=1}^{N}\nabla f_{n'}^{(1)} (u_{n',0}^{(0)})\Big)\nabla  f_{n}^{(2)} (u_{n,0}^{(1)}) \cdots \nabla  f_{n}^{(K-1)} (u_{n,0}^{(K-2)})\nabla  f_{n}^{(K)} (u_{n,0}^{(K-1)}) \\
		& \quad  + K\frac{1}{N}\sum_{n=1}^{N} \mathbb{E}[\|\frac{1}{N}\sum_{n'=1}^{N}\nabla f_{n'}^{(1)} (u_{n',0}^{(0)})\Big(\nabla  f_{n}^{(2)} (u_{n,0}^{(1)}) - \nabla  f_{n'}^{(2)} (u_{n',0}^{(1)}) \Big)\cdots \nabla  f_{n}^{(K-1)} (u_{n,0}^{(K-2)})\nabla  f_{n}^{(K)} (u_{n,0}^{(K-1)}) \|^2 ]\\
		&\quad  + \cdots \\
		& \quad  + K\frac{1}{N}\sum_{n=1}^{N} \mathbb{E}[\|\frac{1}{N}\sum_{n'=1}^{N}\nabla f_{n'}^{(1)} (u_{n',0}^{(0)})\nabla  f_{n'}^{(2)} (u_{n',0}^{(1)}) \cdots \nabla  f_{n'}^{(K-1)} (u_{n',0}^{(K-2)})\Big(\nabla  f_{n}^{(K)} (u_{n,0}^{(K-1)}) -\nabla  f_{n'}^{(K)} (u_{n',0}^{(K-1)}) \Big)\|^2 ] \\
		& \leq 0 + 4K\sum_{k=2}^{K}\frac{1}{N}\sum_{n=1}^{N} \frac{(\prod_{j=1}^{K}C_j^2)L_k^2}{C_k^2}\mathbb{E}[\|u_{n,0}^{(k-1)}- \bar{u}_{0}^{(k-1)}\|^2] \\
		& \leq 4K\sum_{k=2}^{K} \frac{(\prod_{j=1}^{K}C_j^2)L_k^2}{C_k^2} \sum_{i=1}^{k-1}8\delta_{i}^2\prod_{j=i+1}^{k-1} (8C_{j}^2) \ , \\
	\end{aligned}
\end{equation}
where the last step holds due to the following inequality:
\begin{equation}
	\begin{aligned}
		& \quad \frac{1}{N}\sum_{n=1}^{N}\mathbb{E}[\|u_{n,0}^{(k-1)}- \bar{u}_{0}^{(k-1)}\|^2 ]\\
		& = \frac{1}{N}\sum_{n=1}^{N}\mathbb{E}[\|f_{n}^{(k-1)}({u}_{n,  0}^{(k-2)}; \xi_{ n, t}^{(k-1)})- \frac{1}{N}\sum_{n'=1}^{N}f_{n'}^{(k-1)}({u}_{n',  0}^{(k-2)}; \xi_{ n', t}^{(k-1)})\|^2 ]\\
		& = \frac{1}{N}\sum_{n=1}^{N}\mathbb{E}[\|f_{n}^{(k-1)}({u}_{n,  0}^{(k-2)}; \xi_{ n, t}^{(k-1)}) - f_{n}^{(k-1)}({u}_{n,  0}^{(k-2)}) + f_{n}^{(k-1)}({u}_{n,  0}^{(k-2)}) - f^{(k-1)}(\bar{u}_{ 0}^{(k-2)}) \\
		& \quad + f^{(k-1)}(\bar{u}_{ 0}^{(k-2)})  - \frac{1}{N}\sum_{n'=1}^{N}f_{n'}^{(k-1)}({u}_{n',  0}^{(k-2)}) + \frac{1}{N}\sum_{n'=1}^{N}f_{n'}^{(k-1)}({u}_{n',  0}^{(k-2)})- \frac{1}{N}\sum_{n'=1}^{N}f_{n'}^{(k-1)}({u}_{n',  0}^{(k-2)}; \xi_{ n', t}^{(k-1)})\|^2 ]\\
		& \leq  8C_{k-1}^2\frac{1}{N}\sum_{n=1}^{N} \mathbb{E}[\|u_{n,0}^{(k-2)}- \bar{u}_{0}^{(k-2)}\|^2 ]+ 8\delta_{k-1}^2 \\
		& \leq \sum_{i=1}^{k-1}8\delta_{i}^2\prod_{j=i+1}^{k-1} (8C_{j}^2) \ . 
	\end{aligned}
\end{equation}

Moreover, we can get
\begin{equation}
	\begin{aligned}
		&  \mathbb{E}[\|u_{n, 0}^{(k)} -f_{n}^{(k)}(u_{n, 0}^{(k-1)}) \|^2] = \mathbb{E}[\|f_{n}^{(k)}({u}_{n,  0}^{(k-1)}; \xi_{ n, 0}^{(k)}) -f_{n}^{(k)}(u_{n, 0}^{(k-1)}) \|^2] \leq \delta_{k}^2   \ , \\
	\end{aligned}
\end{equation}
and 
\begin{equation}
	\begin{aligned}
		& \quad \mathbb{E}\Big[\Big\|\frac{1}{N}\sum_{n=1}^{N}{m}_{n,0} -\frac{1}{N}\sum_{n=1}^{N}\nabla F_{n}({{x}}_{n,0})\Big\|^2\Big]   \\
		& \leq  2\mathbb{E}\Big[\Big\|\frac{1}{N}\sum_{n=1}^{N}{v}_{n,  0}^{(1)} {v}_{ n, 0}^{(2)}\cdots {v}_{ n, 0}^{(K-1)} {v}_{n,  0}^{(K)} -\frac{1}{N}\sum_{n=1}^{N}\nabla f_{n}^{(1)} (u_{n,0}^{(0)})\nabla  f_{n}^{(2)} (u_{n,0}^{(1)}) \cdots \nabla  f_{n}^{(K-1)} (u_{n,0}^{(K-2)})\nabla  f_{n}^{(K)} (u_{n,0}^{(K-1)}) \Big\|^2\Big]\\
		& \quad +  2\mathbb{E}\Big[\Big\|\frac{1}{N}\sum_{n=1}^{N}\nabla f_{n}^{(1)} (u_{n,0}^{(0)})\nabla  f_{n}^{(2)} (u_{n,0}^{(1)}) \cdots \nabla  f_{n}^{(K-1)} (u_{n,0}^{(K-2)})\nabla  f_{n}^{(K)} (u_{n,0}^{(K-1)})\\
		& \quad \quad -\frac{1}{N}\sum_{n=1}^{N}\nabla f_{n}^{(1)} (x_{n,t})\nabla  f_{n}^{(2)} (F_{n}^{(1)}(x_{n,t})) \cdots \nabla  f_{n}^{(K-1)} (F_{n}^{(K-2)}(x_{n,t}))\nabla f_{n}^{(K)}(F_{n}^{(K-1)}(x_{n,t}))\Big\|^2\Big]  \\
		& \leq  2K\sum_{k=1}^{K}B_k \frac{\sigma_{k}^2 }{N} + 2\frac{K}{N}\sum_{n=1}^{N}\sum_{k=1}^{K-1}A_k\mathbb{E}[\| u_{n,0}^{(k)} -f_{n}^{(k)}( u_{n,0}^{(k-1)}) \|^2]  \\
		& \leq  2K\sum_{k=1}^{K}B_k \frac{\sigma_{k}^2 }{N} + 2K\sum_{k=1}^{K-1}A_k\delta_{k}^2  \ ,   \\
	\end{aligned}
\end{equation}
as well as $\mathbb{E}\Big[\Big\|{m}_{n,0} -\nabla F_{n}({{x}}_{n,0})\Big\|^2\Big] \leq 2K\sum_{k=1}^{K}B_k \sigma_{k}^2 + 2K\sum_{k=1}^{K-1}A_k\delta_{k}^2 $. 
Then, we can get
\begin{equation}
	\begin{aligned}
		& 	\mathcal{H}_{0} = F({{x}}_{0})  + \omega_0\frac{1}{N}\sum_{n=1}^{N}\mathbb{E}\Big[\Big\|{m}_{n,0} -\nabla F_{n}({{x}}_{n,0})\Big\|^2\Big]  + \frac{1}{N} \sum_{n=1}^{N}\sum_{k=1}^{K-1}\omega_k \mathbb{E}[\|u_{n, 0}^{(k)} -f_{n}^{(k)}(u_{n, 0}^{(k-1)}) \|^2]   \\
		& \quad + \omega_{K}\mathbb{E}\Big[\Big\|\frac{1}{N}\sum_{n=1}^{N}{m}_{n,0} -\frac{1}{N}\sum_{n=1}^{N}\nabla F_{n}({{x}}_{n,0})\Big\|^2\Big]   + \omega_{K+1}\frac{1}{N} \mathbb{E}[\|X_{0} - \bar{X}_{0}\|_{F}^2 ]+ \omega_{K+2}\frac{1}{N} \mathbb{E}[\|Y_{0} - \bar{Y}_{0}\|_{F}^2 ]   \\
		& \leq F({{x}}_{0}) + 8\alpha \mu\eta K(\sum_{k=1}^{K}B_k \sigma_{k}^2 + \sum_{k=1}^{K-1}A_k\delta_{k}^2)+\alpha K \sum_{k=1}^{K-1} \tilde{\omega}_k  \delta_{k}^2 \\
		& \quad +  \frac{2\alpha K}{\mu} (\sum_{k=1}^{K}B_k \frac{\sigma_{k}^2 }{N} + \sum_{k=1}^{K-1}A_k\delta_{k}^2 ) + 6\alpha K\Big( \sum_{k=1}^{K}B_k \sigma_{k}^2  + 2\sum_{k=2}^{K} \frac{(\prod_{j=1}^{K}C_j^2)L_k^2}{C_k^2} \sum_{i=1}^{k-1}8\delta_{i}^2\prod_{j=i+1}^{k-1} (8C_{j}^2)\Big)  \ . \\
	\end{aligned}
\end{equation}

Finally, we can get
\begin{equation}
	\begin{aligned}
		&\quad  \frac{1}{T}\sum_{t=0}^{T-1} \mathbb{E}[\|\nabla F(\bar{{x}}_{t})\|^2]   \\
& \leq \frac{2(F({{x}}_{0}) - F({{x}}_{*}))}{\alpha\eta T} + \frac{16 \mu K}{ T}(\sum_{k=1}^{K}B_k \sigma_{k}^2 + \sum_{k=1}^{K-1}A_k\delta_{k}^2)+\frac{2K}{\eta T}  \sum_{k=1}^{K-1} \tilde{\omega}_k  \delta_{k}^2 \\
& \quad +  \frac{4 K}{\mu \eta T} (\sum_{k=1}^{K}B_k \frac{\sigma_{k}^2 }{N} + \sum_{k=1}^{K-1}A_k\delta_{k}^2 ) + \frac{12K}{\eta T} \Big( \sum_{k=1}^{K}B_k \sigma_{k}^2  + 2\sum_{k=2}^{K} \frac{(\prod_{j=1}^{K}C_j^2)L_k^2}{C_k^2} \sum_{i=1}^{k-1}8\delta_{i}^2\prod_{j=i+1}^{k-1} (8C_{j}^2)\Big)  \\
& \quad + 16 \beta^2  \mu^2\eta^3   K\sum_{k=1}^{K-1}D_{k}  \delta_{k}^2  +8\eta\mu^2 K\sum_{k=1}^{K}B_k \sigma_{k}^2 +32 \mu^2 \beta^2\eta^3 K\sum_{k=1}^{K-1}A_k \delta_{k}^2 + 8 \mu^3\eta^2 K\sum_{k=1}^{K}B_{k}\sigma_{k}^2 \\
& \quad  +8 \beta^2\eta^2 K\sum_{k=1}^{K-1}A_k \delta_{k}^2 + 2\mu \eta K\sum_{k=1}^{K}B_{k}\frac{\sigma_{k}^2}{N} + 4\eta\beta^2 K \sum_{k=1}^{K-1}\tilde{\omega}_k\delta_{k}^2\\
& \quad  + 4\eta\beta^2   K \sum_{k=1}^{K-1}\Bigg[\sum_{j=k+1}^{K-1}\Big(20  A_j C_j^2+ 2 \tilde{\omega}_j C_j^2 + 8 D_{j}  C_{j}^2\Big)  \Big(\prod_{i=k+1}^{j}(2C_{i}^2)\Big)\Bigg]\delta_{k}^2 \ .  \\
	\end{aligned}
\end{equation}

\end{proof}

\subsection{Proof of Theorem \ref{theorem2}}

\begin{lemma} \label{lemma_g_inc_var}
	Given Assumptions~\ref{assumption_smooth}-\ref{assumption_bound_variance},  we can get 
	\begin{equation}
		\begin{aligned}
			&\quad  \mathbb{E}[\|g^{\xi_{t}}_{n,t} - g^{\xi_{t}}_{n, t-1}\|^2] \leq  KD_{0} \mathbb{E}[\|{x}_{n, t}- {x}_{n, t-1}\|^2]  + 2K\sum_{k=1}^{K-1}D_{k} C_{k}^2  \mathbb{E}[\|u_{n, t-1}^{(k-1)} - u_{n,t}^{(k-1)} \|^2]  \\
			& \quad + 2 \beta^2  \eta^4 K\sum_{k=1}^{K-1}D_{k}  \mathbb{E}[\|u_{n, t-1}^{(k)}- f_{n}^{(k)}(u_{n, t-1}^{(k-1)})  \|^2] + 2 \beta^2  \eta^4K\sum_{k=1}^{K-1}D_{k}  \delta_{k}^2 \ , \\
		\end{aligned}
	\end{equation}
	where $D_k = \frac{(\prod_{j=1}^{K} C_{i}^2) L_{k+1}^2}{C_{k+1}^2}$.
\end{lemma}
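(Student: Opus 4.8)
The plan is to mirror the proof of Lemma~\ref{lemma_f_u_inc_momentum}, exploiting the fact that $g^{\xi_t}_{n,t}$ and $g^{\xi_t}_{n,t-1}$ are products of the \emph{same} stochastic Jacobians $\nabla f_n^{(k)}(\cdot;\xi_{n,t}^{(k)})$ evaluated at the shifted arguments $u^{(k-1)}_{n,t}$ and $u^{(k-1)}_{n,t-1}$ respectively. First I would write the difference of the two length-$K$ products as a telescoping sum over the levels, inserting and cancelling one factor at a time, so that the $k$-th summand differs only in the $k$-th Jacobian, i.e. it has the form $\big(\prod_{j<k}\nabla f_n^{(j)}(u^{(j-1)}_{n,t};\xi)\big)\big(\nabla f_n^{(k)}(u^{(k-1)}_{n,t};\xi)-\nabla f_n^{(k)}(u^{(k-1)}_{n,t-1};\xi)\big)\big(\prod_{j>k}\nabla f_n^{(j)}(u^{(j-1)}_{n,t-1};\xi)\big)$. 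Applying $\|\sum_{k=1}^{K}a_k\|^2\le K\sum_{k=1}^{K}\|a_k\|^2$ then reduces the task to bounding the squared norm of each of the $K$ summands.

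For the $k$-th summand, I would bound the surrounding factors using the second-moment bound $\mathbb{E}[\|\nabla f^{(j)}(\cdot;\xi)\|^2]\le C_j^2$ of Assumption~\ref{assumption_bound_gradient}, and the middle factor via the Lipschitz property of the stochastic Jacobian in Assumption~\ref{assumption_smooth}, which controls $\|\nabla f_n^{(k)}(u^{(k-1)}_{n,t};\xi)-\nabla f_n^{(k)}(u^{(k-1)}_{n,t-1};\xi)\|$ by $L_k\|u^{(k-1)}_{n,t}-u^{(k-1)}_{n,t-1}\|$. Since the samples $\xi_{n,t}^{(k)}$ are drawn independently across levels, the expectation factorizes, and the $k$-th term is bounded by $\frac{\prod_{j=1}^{K}C_j^2}{C_k^2}L_k^2\,\mathbb{E}[\|u^{(k-1)}_{n,t}-u^{(k-1)}_{n,t-1}\|^2]=D_{k-1}\,\mathbb{E}[\|u^{(k-1)}_{n,t}-u^{(k-1)}_{n,t-1}\|^2]$. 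Summing over $k$ and re-indexing, using $u^{(0)}_{n,t}=x_{n,t}$, yields $KD_0\,\mathbb{E}[\|x_{n,t}-x_{n,t-1}\|^2]+K\sum_{k=1}^{K-1}D_k\,\mathbb{E}[\|u^{(k)}_{n,t}-u^{(k)}_{n,t-1}\|^2]$, exactly as in the intermediate step of Lemma~\ref{lemma_f_u_inc_momentum}.

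To finish, I would substitute the per-level increment recursion for $\mathbb{E}[\|u^{(k)}_{n,t}-u^{(k)}_{n,t-1}\|^2]$. This is the DSMCVRG analogue of Lemma~\ref{lemma_u_inc_momentum}; because Algorithm~\ref{alg_dscgdvr} replaces $\beta\eta$ by $\beta\eta^2$ in the STORM recursion, its single-step bound reads $2C_k^2\,\mathbb{E}[\|u^{(k-1)}_{n,t-1}-u^{(k-1)}_{n,t}\|^2]+2\beta^2\eta^4\,\mathbb{E}[\|u^{(k)}_{n,t-1}-f_n^{(k)}(u^{(k-1)}_{n,t-1})\|^2]+2\beta^2\eta^4\delta_k^2$, proved exactly as Lemma~\ref{lemma_u_inc_momentum} with $\eta^2$ in place of $\eta$. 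Plugging this into the previous display and collecting terms gives the claimed bound, the $\eta^4$ (rather than $\eta^2$) factors tracing directly back to the $\eta^2$-scaled level estimators.

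The main obstacle I anticipate is the careful bookkeeping around the stochastic Jacobians: unlike Lemma~\ref{lemma_f_u_inc_momentum}, which works with the deterministic maps $\nabla f_n^{(k)}(\cdot)$, here the factors are random, so I must (i) invoke the per-realization Lipschitz/boundedness form of Assumptions~\ref{assumption_smooth}--\ref{assumption_bound_gradient} so that squaring is legitimate, and (ii) use the cross-level sampling independence to factor the expectation of each telescoped product cleanly. The remaining algebra — the telescoping expansion and the index shift producing $D_0,\dots,D_{K-1}$ — is routine once this structure is in place.
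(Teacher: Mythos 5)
Your proposal matches the paper's own proof essentially step for step: the same telescoping expansion of the difference of Jacobian products, the same Cauchy--Schwarz factor of $K$, the same use of the $C_j^2$ bounds and the $L_k$-Lipschitz property to bound each summand by $D_{k-1}\,\mathbb{E}[\|u^{(k-1)}_{n,t}-u^{(k-1)}_{n,t-1}\|^2]$, and the same final substitution of the $\eta^2$-scaled increment recursion (Lemma~\ref{lemma_u_inc_var}) to produce the $\beta^2\eta^4$ terms. Your explicit attention to the per-realization form of the stochastic Lipschitz/boundedness assumptions and to cross-level sampling independence is sound bookkeeping that the paper leaves implicit, but it does not change the route.
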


\begin{proof}
	\begin{equation}
		\begin{aligned}
			&\quad  \mathbb{E}[\|g^{\xi_{t}}_{n,t} - g^{\xi_{t}}_{n, t-1}\|^2] \\
			& =  \mathbb{E}[\|\nabla  f^{(1)}_{n}({u}_{n, t}^{(0)}; \xi_{n,  t}^{(1)})  \nabla  f^{(2)}_{n}({u}_{n, t}^{(1)}; \xi_{n,  t}^{(2)}) \cdots \nabla  f^{(K-1)}_{n}({u}_{n, t}^{(K-2)}; \xi_{n,  t}^{(K-1)})  \nabla  f^{(K)}_{n}({u}_{n, t}^{(K-1)}; \xi_{n,  t}^{(K)})  \\
			& \quad - \nabla  f^{(1)}_{n}({u}_{n, t-1}^{(0)}; \xi_{n,  t}^{(1)})  \nabla  f^{(2)}_{n}({u}_{n, t-1}^{(1)}; \xi_{n,  t}^{(2)}) \cdots \nabla  f^{(K-1)}_{n}({u}_{n, t-1}^{(K-2)}; \xi_{n,  t}^{(K-1)})  \nabla  f^{(K)}_{n}({u}_{n, t-1}^{(K-1)}; \xi_{n,  t}^{(K)})\|^2] \\
			& \leq K\mathbb{E}[\|\nabla  f^{(1)}_{n}({u}_{n, t}^{(0)}; \xi_{n,  t}^{(1)})  \nabla  f^{(2)}_{n}({u}_{n, t}^{(1)}; \xi_{n,  t}^{(2)}) \cdots \nabla  f^{(K-1)}_{n}({u}_{n, t}^{(K-2)}; \xi_{n,  t}^{(K-1)})  \nabla  f^{(K)}_{n}({u}_{n, t}^{(K-1)}; \xi_{n,  t}^{(K)})  \\
			& \quad \quad - \nabla  f^{(1)}_{n}({u}_{n, t-1}^{(0)}; \xi_{n,  t}^{(1)})  \nabla  f^{(2)}_{n}({u}_{n, t}^{(1)}; \xi_{n,  t}^{(2)}) \cdots \nabla  f^{(K-1)}_{n}({u}_{n, t}^{(K-2)}; \xi_{n,  t}^{(K-1)})  \nabla  f^{(K)}_{n}({u}_{n, t}^{(K-1)}; \xi_{n,  t}^{(K)}) \|^2]  \\
			& \quad + K\mathbb{E}[\|\nabla  f^{(1)}_{n}({u}_{n, t-1}^{(0)}; \xi_{n,  t}^{(1)})  \nabla  f^{(2)}_{n}({u}_{n, t}^{(1)}; \xi_{n,  t}^{(2)}) \cdots \nabla  f^{(K-1)}_{n}({u}_{n, t}^{(K-2)}; \xi_{n,  t}^{(K-1)})  \nabla  f^{(K)}_{n}({u}_{n, t}^{(K-1)}; \xi_{n,  t}^{(K)})   \\
			& \quad \quad - \nabla  f^{(1)}_{n}({u}_{n, t-1}^{(0)}; \xi_{n,  t}^{(1)})  \nabla  f^{(2)}_{n}({u}_{n, t-1}^{(1)}; \xi_{n,  t}^{(2)}) \cdots \nabla  f^{(K-1)}_{n}({u}_{n, t}^{(K-2)}; \xi_{n,  t}^{(K-1)})  \nabla  f^{(K)}_{n}({u}_{n, t}^{(K-1)}; \xi_{n,  t}^{(K)})   \|^2]\\
			& \quad \cdots \\
			& \quad  + K\mathbb{E}[\|\nabla  f^{(1)}_{n}({u}_{n, t}^{(0)}; \xi_{n,  t}^{(1)})  \nabla  f^{(2)}_{n}({u}_{n, t}^{(1)}; \xi_{n,  t}^{(2)}) \cdots \nabla  f^{(K-1)}_{n}({u}_{n, t}^{(K-2)}; \xi_{n,  t}^{(K-1)})  \nabla  f^{(K)}_{n}({u}_{n, t-1}^{(K-1)}; \xi_{n,  t}^{(K)}) \\
			& \quad\quad  - \nabla  f^{(1)}_{n}({u}_{n, t-1}^{(0)}; \xi_{n,  t}^{(1)})  \nabla  f^{(2)}_{n}({u}_{n, t-1}^{(1)}; \xi_{n,  t}^{(2)}) \cdots \nabla  f^{(K-1)}_{n}({u}_{n, t-1}^{(K-2)}; \xi_{n,  t}^{(K-1)})  \nabla  f^{(K)}_{n}({u}_{n, t-1}^{(K-1)}; \xi_{n,  t}^{(K)})\|^2] \\
			& \leq K\frac{(\prod_{j=1}^{K} C_{i}^2) L_1^2}{C_{1}^2} \mathbb{E}[\|{u}_{n, t}^{(0)} - {u}_{n, t-1}^{(0)}\|^2] + K\frac{(\prod_{j=1}^{K} C_{i}^2) L_2^2}{C_{2}^2} \mathbb{E}[\|{u}_{n, t}^{(1)} - {u}_{n, t-1}^{(1)}\|^2] \\
			& \quad + \cdots + K\frac{(\prod_{j=1}^{K} C_{i}^2) L_K^2}{C_{K}^2} \mathbb{E}[\|{u}_{n, t}^{(K-1)} - {u}_{n, t-1}^{(K-1)}\|^2] \\
			& = KD_{0} \mathbb{E}[\|{x}_{n, t}- {x}_{n, t-1}\|^2]  + K\sum_{k=1}^{K-1}D_{k}\mathbb{E}[\|{u}_{n, t}^{(k)} - {u}_{n, t-1}^{(k)}\|^2] \\
			& \leq  KD_{0} \mathbb{E}[\|{x}_{n, t}- {x}_{n, t-1}\|^2]  + 2K\sum_{k=1}^{K-1}D_{k} C_{k}^2  \mathbb{E}[\|u_{n, t-1}^{(k-1)} - u_{n,t}^{(k-1)} \|^2]  \\
			& \quad + 2 \beta^2  \eta^4 K\sum_{k=1}^{K-1}D_{k}  \mathbb{E}[\|u_{n, t-1}^{(k)}- f_{n}^{(k)}(u_{n, t-1}^{(k-1)})  \|^2] + 2 \beta^2  \eta^4K\sum_{k=1}^{K-1}D_{k}  \delta_{k}^2 \ , \\
		\end{aligned}
	\end{equation}
	where $D_k = \frac{(\prod_{j=1}^{K} C_{j}^2) L_{k+1}^2}{C_{k+1}^2}$, the third to last step holds due to Assumption~\ref{assumption_bound_gradient} and Assumption~\ref{assumption_smooth}, the last step holds due to Lemma~\ref{lemma_u_inc_var}.

\end{proof}

\begin{lemma} \label{lemma_u_F_var}
 For $k\in\{1, \cdots, K-1\}$,	given Assumptions~\ref{assumption_smooth}-\ref{assumption_bound_variance},  we can get 
	\begin{equation}
		\begin{aligned}
			& \quad \|u_{n,t}^{(k)} - F_{n}^{(k)}(x_{n,t}) \| \leq \sum_{j=1}^{k} \Big(\prod_{i=j+1}^{k}C_{i}\Big)\|u_{n,t}^{(j)} -f_{n}^{(j)}(u_{n,t}^{(j-1)}) \| \ , 
		\end{aligned}
	\end{equation} 
\end{lemma}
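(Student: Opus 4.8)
The plan is to prove this bound by induction on the level index $k$, exactly mirroring the argument already used for Lemma~\ref{lemma_u_F_momentum}; indeed the statement here is identical to that lemma, so the same induction applies verbatim and I would simply reproduce that derivation.

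First I would establish the base case $k=1$. Since $u_{n,t}^{(0)}=x_{n,t}$ and $F_n^{(1)}(x_{n,t})=f_n^{(1)}(x_{n,t})=f_n^{(1)}(u_{n,t}^{(0)})$, the quantity $\|u_{n,t}^{(1)}-F_n^{(1)}(x_{n,t})\|$ equals $\|u_{n,t}^{(1)}-f_n^{(1)}(u_{n,t}^{(0)})\|$, which is precisely the right-hand side for $k=1$ (the product $\prod_{i=2}^{1}C_i$ being empty and equal to $1$).

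For the inductive step, I would assume the bound for $k$ and prove it for $k+1$. Using the nested definition $F_n^{(k+1)}(x)=f_n^{(k+1)}(F_n^{(k)}(x))$, I would insert the intermediate point $f_n^{(k+1)}(u_{n,t}^{(k)})$ and apply the triangle inequality to split $\|u_{n,t}^{(k+1)}-f_n^{(k+1)}(F_n^{(k)}(x_{n,t}))\|$ into $\|u_{n,t}^{(k+1)}-f_n^{(k+1)}(u_{n,t}^{(k)})\|$ plus $\|f_n^{(k+1)}(u_{n,t}^{(k)})-f_n^{(k+1)}(F_n^{(k)}(x_{n,t}))\|$. The second term is controlled by the $C_{k+1}$-Lipschitz continuity of $f_n^{(k+1)}$, which follows from the bounded-Jacobian condition $\|\nabla f^{(k+1)}(y)\|\le C_{k+1}$ in Assumption~\ref{assumption_bound_gradient}, yielding $C_{k+1}\|u_{n,t}^{(k)}-F_n^{(k)}(x_{n,t})\|$. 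Substituting the induction hypothesis and absorbing $C_{k+1}$ into each product $\prod_{i=j+1}^{k}C_i$ to form $\prod_{i=j+1}^{k+1}C_i$ (while the newly added term at $j=k+1$ carries the empty product) recombines everything into $\sum_{j=1}^{k+1}(\prod_{i=j+1}^{k+1}C_i)\|u_{n,t}^{(j)}-f_n^{(j)}(u_{n,t}^{(j-1)})\|$, completing the induction.

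The argument is entirely routine; the only points requiring care are the bookkeeping of the telescoping product indices when re-absorbing the factor $C_{k+1}$, and recognizing that the Lipschitz constant $C_{k+1}$ is supplied by the gradient-boundedness Assumption~\ref{assumption_bound_gradient} rather than the smoothness Assumption~\ref{assumption_smooth}. Since these mechanics coincide exactly with those of Lemma~\ref{lemma_u_F_momentum}, there is no genuine obstacle to overcome here.
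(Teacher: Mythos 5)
Your proposal is correct and coincides with the paper's own treatment: the paper proves this bound as Lemma~\ref{lemma_u_F_momentum} by exactly the induction you describe (base case $k=1$ with $u_{n,t}^{(0)}=x_{n,t}$, then triangle inequality through the intermediate point $f_n^{(k+1)}(u_{n,t}^{(k)})$ and the $C_{k+1}$-Lipschitz bound from Assumption~\ref{assumption_bound_gradient}), and simply cites that lemma verbatim here. No gaps.
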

This lemma is the same as Lemma~\ref{lemma_u_F_momentum}.

\begin{lemma} \label{lemma_u_inc_var}
For $k\in\{2, \cdots, K\}$,		given Assumptions~\ref{assumption_smooth}-\ref{assumption_bound_variance},  we can get 
	\begin{equation}
		\begin{aligned}
			&  \mathbb{E}[\|u_{n,t}^{(k-1)}  - u_{n, t-1}^{(k-1)} \|^2]  \leq  2C_{k-1}^2 \mathbb{E}[\|u_{n, t-1}^{(k-2)} - u_{n,t}^{(k-2)} \|^2 ] + 2 \beta^2  \eta^4 \mathbb{E}[\|u_{n, t-1}^{(k-1)}- f_{n}^{(k-1)}(u_{n, t-1}^{(k-2)})  \|^2] + 2 \beta^2  \eta^4 \delta_{k-1}^2 \ , 
		\end{aligned}
	\end{equation}
	and
	\begin{equation}
		\begin{aligned}
			& \quad \mathbb{E}[\|u_{n,t}^{(k-1)}  - u_{n, t-1}^{(k-1)} \|^2]  \leq \Big(\prod_{j=1}^{k-1}(2C_{j}^2)\Big)\mathbb{E}[\|u_{n, t-1}^{(0)} - u_{n,t}^{(0)} \|^2]   + 2\beta^2  \eta^4\sum_{j=1}^{k-1} \Big(\prod_{i=j+1}^{k-1}(2C_{i}^2)\Big)\mathbb{E}[\|u_{n, t-1}^{(j)}- f_{n}^{(j)}(u_{n, t-1}^{(j-1)})  \|^2] \\
			& \quad \quad\quad\quad\quad \quad \quad\quad\quad\quad + 2\beta^2  \eta^4\sum_{j=1}^{k-1} \Big(\prod_{i=j+1}^{k-1}(2C_{i}^2)\Big)\delta_{j}^2 \ . 
		\end{aligned}
	\end{equation}
\end{lemma}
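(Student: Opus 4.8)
The plan is to mirror the proof of Lemma~\ref{lemma_u_inc_momentum} essentially verbatim, since the only structural difference between the two level-function estimators is that Algorithm~\ref{alg_dscgdm} damps with the factor $1-\beta\eta$ whereas Algorithm~\ref{alg_dscgdvr} damps with $1-\beta\eta^2$ (Line 7 of Algorithm~\ref{alg_dscgdvr}). Consequently every appearance of $\beta\eta$ in the earlier argument is replaced by $\beta\eta^2$, which is precisely why the bound here carries $\eta^4$ rather than $\eta^2$. The proof therefore falls into two routine pieces: the single-step inequality, followed by its recursive unrolling.

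First I would establish the single-step bound. Substituting the STORM-like update $u_{n,t}^{(k-1)} = (1-\beta\eta^2)(u_{n,t-1}^{(k-1)} - f_{n}^{(k-1)}(u_{n,t-1}^{(k-2)};\xi_{n,t}^{(k-1)})) + f_{n}^{(k-1)}(u_{n,t}^{(k-2)};\xi_{n,t}^{(k-1)})$ into $u_{n,t}^{(k-1)} - u_{n,t-1}^{(k-1)}$ and regrouping, the increment decomposes into a term scaled by $\beta\eta^2$ (containing the deterministic estimation error $u_{n,t-1}^{(k-1)} - f_{n}^{(k-1)}(u_{n,t-1}^{(k-2)})$ together with a mean-zero stochastic fluctuation) plus the difference of two stochastic function values at consecutive lower-level estimates. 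Applying the elementary bound $\|a+b\|^2 \le 2\|a\|^2 + 2\|b\|^2$ separates these two contributions; the fluctuation piece is controlled by $\delta_{k-1}^2$ through Assumption~\ref{assumption_bound_variance}, while the function-value difference is bounded by $C_{k-1}^2\,\mathbb{E}[\|u_{n,t}^{(k-2)} - u_{n,t-1}^{(k-2)}\|^2]$ through the $C_{k-1}$-Lipschitz property implied by the bounded-Jacobian Assumption~\ref{assumption_bound_gradient}. This yields the first displayed inequality.

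Next I would obtain the second inequality by unrolling the first one down the levels. Since the single-step bound expresses the level-$(k-1)$ increment in terms of the level-$(k-2)$ increment multiplied by $2C_{k-1}^2$, plus additive estimation-error and variance terms, iterating from level $k-1$ down to level $0$ produces the telescoping product $\prod_{j=1}^{k-1}(2C_{j}^2)$ in front of $\mathbb{E}[\|u_{n,t-1}^{(0)} - u_{n,t}^{(0)}\|^2]$ and the partial products $\prod_{i=j+1}^{k-1}(2C_{i}^2)$ in front of the accumulated error and variance terms, exactly as in the second half of Lemma~\ref{lemma_u_inc_momentum}.

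I do not anticipate any genuine obstacle: the argument is entirely routine and identical in form to Lemma~\ref{lemma_u_inc_momentum}. The only point demanding care is consistent bookkeeping of the substitution $\eta \mapsto \eta^2$, so that the powers of $\eta$ (namely $\eta^4$) come out correctly in both the single-step bound and its recursive expansion.
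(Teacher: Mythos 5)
Your proposal is correct and coincides with the paper's own proof: the paper proves this lemma precisely by repeating the argument of Lemma~\ref{lemma_u_inc_momentum} with $\beta\eta$ replaced by $\beta\eta^2$ (hence the $\eta^4$ factors), using the same decomposition of the increment, the mean-zero cross-term cancellation that yields the $2\beta^2\eta^4$ coefficients, the mean-square Lipschitz bound from Assumption~\ref{assumption_bound_gradient}, and the same recursive unrolling to produce the products $\prod_{j}(2C_j^2)$. There is no gap.
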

This lemma can be proved by following Lemma~\ref{lemma_u_inc_momentum} through replacing $\eta$ with $\eta^2$.

\begin{lemma}\label{lemma_f_u_f_x_var}
		Given Assumptions~\ref{assumption_smooth}-\ref{assumption_bound_variance},  we can get 
	\begin{equation}
		\begin{aligned}
			&\quad   \frac{1}{N}\sum_{n=1}^{N}\|\nabla f_{n}^{(1)} ( u_{n,t}^{(0)})\nabla  f_{n}^{(2)} ( u_{n,t}^{(1)}) \cdots \nabla  f_{n}^{(K-1)} ( u_{n,t}^{(K-2)})\nabla f_{n}^{(K)}( u_{n,t}^{(K-1)})  \\
			& \quad  - \nabla f_{n}^{(1)} (x_{n,t})\nabla  f_{n}^{(2)} (F_{n}^{(1)}(x_{n,t})) \cdots \nabla  f_{n}^{(K-1)} (F_{n}^{(K-2)}(x_{n,t}))\nabla f_{n}^{(K)}(F_{n}^{(K-1)}(x_{n,t})) \|^2 \\
			& \leq  \frac{K}{N}\sum_{n=1}^{N}\sum_{k=1}^{K-1}A_k\| u_{n,t}^{(k)} -f_{n}^{(k)}( u_{n,t}^{(k-1)}) \|^2  \ ,  \\
		\end{aligned}
	\end{equation}
	where $A_k= \Bigg(\sum_{j=k}^{K-1}\Big(\frac{L_{j+1}\prod_{i=1}^{K}C_i}{C_{j+1}} \prod_{i=k+1}^{j}C_{i}\Big)\Bigg)^2 $ . 
\end{lemma}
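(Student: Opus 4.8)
The plan is to observe that Lemma~\ref{lemma_f_u_f_x_var} is word-for-word identical to Lemma~\ref{lemma_f_u_f_x_momentum} already proved above: the constant $A_k$ is defined by the same expression, the bounded quantity is the same, and the stated bound is the same $\frac{K}{N}\sum_{n}\sum_{k=1}^{K-1}A_k\|u_{n,t}^{(k)}-f_{n}^{(k)}(u_{n,t}^{(k-1)})\|^2$. The inequality is a purely deterministic, pointwise estimate on the Jacobian products and does not reference the variance-reduction scheme at all, so the momentum-versus-variance-reduced distinction is irrelevant. The only auxiliary fact invoked in the earlier proof is Lemma~\ref{lemma_u_F_momentum}, whose analogue Lemma~\ref{lemma_u_F_var} is stated to hold verbatim; hence the same argument transfers unchanged. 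I would therefore simply replay that proof.

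Concretely, first I would set up the telescoping decomposition of the difference of the two $K$-fold Jacobian products. Since $u_{n,t}^{(0)}=x_{n,t}=F_n^{(0)}(x_{n,t})$, the leading factors coincide, so I replace, one level at a time, the argument $u_{n,t}^{(k-1)}$ in the $k$-th factor by $F_n^{(k-1)}(x_{n,t})$ for $k=2,\ldots,K$. This writes the product difference as a sum of $K-1$ terms, each differing from its neighbour in exactly one Jacobian factor, and I then apply the triangle inequality.

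Next, for each telescoping term I would bound the $K-1$ unchanged factors by the constants $C_i$ via Assumption~\ref{assumption_bound_gradient}, and the single changed factor by the smoothness estimate $\|\nabla f_n^{(k)}(u_{n,t}^{(k-1)})-\nabla f_n^{(k)}(F_n^{(k-1)}(x_{n,t}))\|\le L_k\|u_{n,t}^{(k-1)}-F_n^{(k-1)}(x_{n,t})\|$ from Assumption~\ref{assumption_smooth}. This produces the intermediate bound $\sum_{k=1}^{K-1}\frac{L_{k+1}\prod_{j=1}^{K}C_j}{C_{k+1}}\|u_{n,t}^{(k)}-F_n^{(k)}(x_{n,t})\|$. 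I would then apply Lemma~\ref{lemma_u_F_var} to each $\|u_{n,t}^{(k)}-F_n^{(k)}(x_{n,t})\|$, square both sides using Cauchy--Schwarz in the form $(\sum_{k=1}^{K-1}a_k)^2\le K\sum_{k=1}^{K-1}a_k^2$, and average over the $N$ devices.

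The only delicate step is the reindexing after substituting Lemma~\ref{lemma_u_F_var}: the double sum $\sum_{k=1}^{K-1}\frac{L_{k+1}\prod_j C_j}{C_{k+1}}\sum_{j=1}^{k}(\prod_{i=j+1}^{k}C_i)\|\cdots\|$ must be reorganised by swapping the roles of $k$ and $j$ so that the coefficient of each $\|u_{n,t}^{(k)}-f_n^{(k)}(u_{n,t}^{(k-1)})\|$ collapses to $\sum_{j=k}^{K-1}\frac{L_{j+1}\prod_i C_i}{C_{j+1}}\prod_{i=k+1}^{j}C_i$, whose square is exactly $A_k$. Everything else is the triangle inequality combined with the uniform Jacobian bound and the Lipschitz smoothness, so no genuinely new obstacle arises beyond what was already handled in Lemma~\ref{lemma_f_u_f_x_momentum}.
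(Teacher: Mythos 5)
Your proposal is correct and matches the paper exactly: the paper's own justification for Lemma~\ref{lemma_f_u_f_x_var} is simply the remark that it is identical to Lemma~\ref{lemma_f_u_f_x_momentum}, whose proof proceeds by precisely the telescoping decomposition, $C_i$/$L_{k+1}$ factor bounds, application of the $u$-versus-$F^{(k)}$ lemma, reindexing, and final squaring that you describe. Your observation that the bound is deterministic and independent of the $\eta$-versus-$\eta^2$ estimator update is exactly why the transfer is immediate.
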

This lemma is the same as Lemma~\ref{lemma_f_u_f_x_momentum}

\begin{lemma} \label{lemma_f_u_inc_var}
	Given Assumptions~\ref{assumption_smooth}-\ref{assumption_bound_variance}, we can get 
	\begin{equation}
		\begin{aligned}
			& \quad \mathbb{E}\Big[\Big\|  \nabla f_{n}^{(1)} ( u_{n,t-1}^{(0)})\nabla  f_{n}^{(2)} ( u_{n,t-1}^{(1)}) \cdots \nabla  f_{n}^{(K-1)} ( u_{n,t-1}^{(K-2)})\nabla f_{n}^{(K)}( u_{n,t-1}^{(K-1)})\\
			& \quad  \quad   - \nabla f_{n}^{(1)} ( u_{n,t}^{(0)})\nabla  f_{n}^{(2)} ( u_{n,t}^{(1)}) \cdots \nabla  f_{n}^{(K-1)} ( u_{n,t}^{(K-2)})\nabla f_{n}^{(K)}( u_{n,t}^{(K-1)}) \Big\|^2\Big]  \\
			& \leq KD_{0}\|x_{n,t}-x_{n,t-1}\|^2  +  2K\sum_{k=1}^{K-1}D_{k}  C_{k}^2 \|u_{n, t-1}^{(k-1)} - u_{n,t}^{(k-1)} \|^2 \\
			& \quad  + 2 \beta^2  \eta^4 K\sum_{k=1}^{K-1}D_{k}  \|u_{n, t-1}^{(k)}- f_{n}^{(k)}(u_{n, t-1}^{(k-1)})  \|^2 + 2 \beta^2  \eta^4 K\sum_{k=1}^{K-1}D_{k}  \delta_{k}^2 \ , \\
		\end{aligned}
	\end{equation}
	where $D_{k}=\frac{(\prod_{j=1}^{K} C_{j}^2)L_{k+1}^2}{C_{k+1}^2}$. 
	
\end{lemma}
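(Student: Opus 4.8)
The plan is to mirror the proof of Lemma~\ref{lemma_f_u_inc_momentum} almost verbatim, the only structural change being that the inner-level estimator in Algorithm~\ref{alg_dscgdvr} uses step size $\eta^2$ rather than $\eta$, so every $\beta^2\eta^2$ appearing in the momentum version is replaced by $\beta^2\eta^4$ here. First I would introduce a level-by-level telescoping decomposition of the difference of the two Jacobian products: the left-hand difference is rewritten as a sum of $K$ terms, where the $k$-th term replaces the argument $u_{n,t-1}^{(k-1)}$ by $u_{n,t}^{(k-1)}$ in the $k$-th factor while holding all factors at index $<k$ at time $t$ and all factors at index $>k$ at time $t-1$. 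Applying the elementary inequality $\|\sum_{i=1}^{K} a_i\|^2 \leq K\sum_{i=1}^{K}\|a_i\|^2$ then splits the squared norm into $K$ separately bounded pieces.

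For each piece, I would use Assumption~\ref{assumption_bound_gradient} to bound the operator norms of the unchanged Jacobian factors by their respective $C_i$, and Assumption~\ref{assumption_smooth} (the $L_{k}$-Lipschitz continuity of $\nabla f^{(k)}$) to bound the changed factor by $L_{k}\|u_{n,t}^{(k-1)} - u_{n,t-1}^{(k-1)}\|$. Collecting constants, the $k$-th term is controlled by $\frac{(\prod_{j=1}^{K}C_j^2)L_{k+1}^2}{C_{k+1}^2}\|u_{n,t}^{(k)} - u_{n,t-1}^{(k)}\|^2 = D_{k}\|u_{n,t}^{(k)} - u_{n,t-1}^{(k)}\|^2$, with the index-$0$ term producing $D_{0}\|x_{n,t}-x_{n,t-1}\|^2$ since $u_{n,t}^{(0)}=x_{n,t}$. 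This yields the intermediate bound $KD_{0}\,\mathbb{E}[\|x_{n,t}-x_{n,t-1}\|^2] + K\sum_{k=1}^{K-1}D_{k}\,\mathbb{E}[\|u_{n,t}^{(k)} - u_{n,t-1}^{(k)}\|^2]$, exactly as in the momentum case.

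The final step is to substitute the recursive bound on $\mathbb{E}[\|u_{n,t}^{(k)} - u_{n,t-1}^{(k)}\|^2]$ supplied by Lemma~\ref{lemma_u_inc_var} (the variance-reduced analog, which carries $\beta^2\eta^4$ in place of $\beta^2\eta^2$), thereby converting each $\mathbb{E}[\|u_{n,t}^{(k)} - u_{n,t-1}^{(k)}\|^2]$ into a combination of $2C_k^2\mathbb{E}[\|u_{n,t}^{(k-1)} - u_{n,t-1}^{(k-1)}\|^2]$, a function-estimation-error term $2\beta^2\eta^4\mathbb{E}[\|u_{n,t-1}^{(k)}-f_n^{(k)}(u_{n,t-1}^{(k-1)})\|^2]$, and the noise floor $2\beta^2\eta^4\delta_k^2$. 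Combining these produces precisely the claimed right-hand side with its $D_k$, $D_k C_k^2$, and $\beta^2\eta^4 D_k$ coefficients.

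The main obstacle is bookkeeping rather than any conceptual difficulty: one must track the telescoping indices and the constants $C_i$ in each of the $K$ terms carefully so that every unchanged factor is bounded by the correct $C_i$ and each changed factor by the correct $L_{k}$, and one must invoke Lemma~\ref{lemma_u_inc_var} (with its $\eta^4$ dependence) rather than Lemma~\ref{lemma_u_inc_momentum}. Since the argument is structurally identical to Lemma~\ref{lemma_f_u_inc_momentum} and only the power of $\eta$ changes, no genuinely new estimate is required.
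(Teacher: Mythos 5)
Your proposal is correct and follows exactly the route the paper takes: the paper proves this lemma by invoking the proof of Lemma~\ref{lemma_f_u_inc_momentum} verbatim with $\eta$ replaced by $\eta^2$, i.e., the same telescoping decomposition of the Jacobian product, the $\|\sum_{i=1}^{K}a_i\|^2\leq K\sum_{i=1}^{K}\|a_i\|^2$ split, the $C_i$/$L_k$ bounds yielding the intermediate $KD_0$ and $KD_k$ terms, and then the substitution of Lemma~\ref{lemma_u_inc_var} in place of Lemma~\ref{lemma_u_inc_momentum}. No gaps; your bookkeeping of the $\beta^2\eta^4$ coefficients matches the paper's.
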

This lemma can be proved by following Lemma \ref{lemma_f_u_inc_momentum}  through replacing $\eta$ with $\eta^2$.

\begin{lemma}\label{lemma_g_var_var}
	Given Assumptions~\ref{assumption_smooth}-\ref{assumption_bound_variance}, we can get 
	\begin{equation}
		\begin{aligned}
			& \quad  \mathbb{E}\Big[\Big\|\frac{1}{N} \sum_{n=1}^{N} ( g^{\xi_t}_{n, t}-  \nabla f_{n}^{(1)} ( u_{n,t}^{(0)})\nabla  f_{n}^{(2)} ( u_{n,t}^{(1)}) \cdots \nabla  f_{n}^{(K-1)} ( u_{n,t}^{(K-2)})\nabla f_{n}^{(K)}( u_{n,t}^{(K-1)})) \Big\|^2\Big] \leq K\sum_{k=1}^{K}B_k \frac{\sigma_{k}^2 }{N} \ , \\
		\end{aligned}
	\end{equation}
	where $B_{k} = \frac{\prod_{j=1}^{K}C_j^2 }{C_k^2} $.
\end{lemma}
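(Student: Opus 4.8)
The plan is to observe that $g^{\xi_t}_{n,t}$ is structurally the product of the stochastic Jacobian matrices $\nabla f_n^{(k)}(u_{n,t}^{(k-1)};\xi_{n,t}^{(k)})$ across all $K$ levels, evaluated at the current iterates. This is exactly the same object as $g_{n,t}$ in Algorithm~\ref{alg_dscgdm}, so the argument is identical to the proof of Lemma~\ref{lemma_g_var_momentum}. First I would introduce a telescoping decomposition that replaces each stochastic Jacobian by its deterministic counterpart one level at a time, writing the difference $g^{\xi_t}_{n,t} - \nabla f_n^{(1)}(u_{n,t}^{(0)})\cdots\nabla f_n^{(K)}(u_{n,t}^{(K-1)})$ as a sum of $K$ terms, where the $k$-th term isolates the factor $\bigl(\nabla f_n^{(k)}(u_{n,t}^{(k-1)};\xi_{n,t}^{(k)}) - \nabla f_n^{(k)}(u_{n,t}^{(k-1)})\bigr)$, with deterministic Jacobians on the levels below $k$ and stochastic Jacobians on the levels above $k$.

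Next I would apply the elementary inequality $\|\sum_{k=1}^K a_k\|^2 \le K\sum_{k=1}^K \|a_k\|^2$ to split the averaged squared norm into $K$ per-level pieces. For each piece the key step is to exploit that the samples $\xi_{n,t}^{(k)}$ are drawn independently across the workers $n$ and across the levels $k$. This independence forces the cross terms $\langle\cdot,\cdot\rangle$ between distinct workers $n\neq n'$ to vanish in expectation, so that $\mathbb{E}[\|\frac{1}{N}\sum_n(\cdot)\|^2]$ collapses to $\frac{1}{N^2}\sum_n\mathbb{E}[\|\cdot\|^2]$, which is precisely where the variance-reducing factor $1/N$ originates.

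Finally I would bound each surviving per-worker term: the deterministic Jacobians on the lower levels have norm at most $C_j$ by Assumption~\ref{assumption_bound_gradient}, the stochastic Jacobians on the higher levels have second moment at most $C_j^2$ by the same assumption, and the central factor $\nabla f_n^{(k)}(\cdot;\xi)-\nabla f_n^{(k)}(\cdot)$ has second moment at most $\sigma_k^2$ by Assumption~\ref{assumption_bound_variance}. Multiplying these bounds gives $\frac{\prod_{j=1}^K C_j^2}{C_k^2}\frac{\sigma_k^2}{N} = B_k\frac{\sigma_k^2}{N}$ for the $k$-th level, and summing over the $K$ levels with the prefactor $K$ yields the claimed bound $K\sum_{k=1}^K B_k\frac{\sigma_k^2}{N}$. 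The only mildly delicate point is the independence bookkeeping that annihilates the inter-worker cross terms and correctly assigns the deterministic-versus-stochastic bound to each factor in the telescoping term; everything else is a direct application of the boundedness and variance assumptions, so I expect no substantive obstacle here.
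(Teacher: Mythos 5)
Your proposal is correct and follows essentially the same route as the paper: the paper itself notes that this lemma is identical to Lemma~\ref{lemma_g_var_momentum}, whose proof uses exactly your telescoping decomposition (deterministic Jacobians below level $k$, stochastic above, difference at level $k$), the $\|\sum_{k=1}^K a_k\|^2 \le K\sum_{k=1}^K\|a_k\|^2$ split, vanishing inter-worker cross terms by independence, and the per-factor bounds $C_j^2$ and $\sigma_k^2$ to obtain $B_k\sigma_k^2/N$ per level. No gaps.
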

This lemma is the same as Lemma~\ref{lemma_g_var_momentum}.

\begin{lemma}\label{lemma_g_var_var2}
	Given Assumptions~\ref{assumption_smooth}-\ref{assumption_bound_variance}, we can get 
	\begin{equation}
		\begin{aligned}
			& \quad  \mathbb{E}\Big[\Big\| g^{\xi_t}_{n, t}-  \nabla f_{n}^{(1)} ( u_{n,t}^{(0)})\nabla  f_{n}^{(2)} ( u_{n,t}^{(1)}) \cdots \nabla  f_{n}^{(K-1)} ( u_{n,t}^{(K-2)})\nabla f_{n}^{(K)}( u_{n,t}^{(K-1)}) \Big\|^2\Big] \leq  K\sum_{k=1}^{K}B_k \sigma_{k}^2 \ ,  \\
		\end{aligned}
	\end{equation}
	where $B_{k} = \frac{\prod_{j=1}^{K}C_j^2 }{C_k^2} $.
\end{lemma}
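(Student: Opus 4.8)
The plan is to follow the same hybrid/telescoping decomposition used in Lemma~\ref{lemma_g_var_momentum}, specialized to a single device $n$ so that no averaging (and hence no $1/N$ gain) is available. First I would write the estimator $g_{n,t}^{\xi_t}$ as the product of the stochastic Jacobians $\nabla f_n^{(k)}(u_{n,t}^{(k-1)};\xi_{n,t}^{(k)})$ over $k=1,\dots,K$, and the target $\prod_{k=1}^{K}\nabla f_n^{(k)}(u_{n,t}^{(k-1)})$ as the product of the corresponding deterministic (expected) Jacobians. Subtracting the two, I would insert $K-1$ telescoping hybrid terms so that the difference becomes a sum of $K$ terms, where the $k$-th term replaces exactly the $k$-th factor by its centered version $\nabla f_n^{(k)}(u_{n,t}^{(k-1)};\xi_{n,t}^{(k)})-\nabla f_n^{(k)}(u_{n,t}^{(k-1)})$, keeps the deterministic Jacobians in positions $1,\dots,k-1$, and keeps the stochastic Jacobians in positions $k+1,\dots,K$.

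Next I would apply the elementary bound $\|\sum_{k=1}^{K}a_k\|^2\le K\sum_{k=1}^{K}\|a_k\|^2$ to split the squared norm across the $K$ hybrid terms. For the $k$-th term I would use submultiplicativity of the matrix norm to peel off the factors: the left block $\prod_{j<k}\nabla f_n^{(j)}(\cdot)$ is deterministic and bounded by $\prod_{j<k}C_j$ via Assumption~\ref{assumption_bound_gradient}, the centered middle factor contributes $\mathbb{E}[\|\nabla f_n^{(k)}(u_{n,t}^{(k-1)};\xi_{n,t}^{(k)})-\nabla f_n^{(k)}(u_{n,t}^{(k-1)})\|^2]\le\sigma_k^2$ via Assumption~\ref{assumption_bound_variance}, and the right block $\prod_{j>k}\nabla f_n^{(j)}(\cdot;\xi_{n,t}^{(j)})$ has second moment bounded by $\prod_{j>k}C_j^2$ again via Assumption~\ref{assumption_bound_gradient}. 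The point that makes the expectations factorize is that the samples $\xi_{n,t}^{(j)}$ are drawn independently across levels $j$, so conditioning successively (the tower property) lets me take the expectation of each factor separately. This yields the bound $(\prod_{j\ne k}C_j^2)\sigma_k^2=B_k\sigma_k^2$ for the $k$-th term, and summing over $k$ gives $K\sum_{k=1}^{K}B_k\sigma_k^2$.

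The argument is essentially identical to that of Lemma~\ref{lemma_g_var_momentum2}; the only difference from the averaged Lemma~\ref{lemma_g_var_momentum} is that, without averaging over the $N$ devices, the cross-device inner products cannot be zeroed out, so the factor $1/N$ disappears and we retain the full $K\sum_{k}B_k\sigma_k^2$. I expect the only delicate step to be the careful bookkeeping of which Jacobians are deterministic versus stochastic in each hybrid term, together with the justification that their expectations separate; this uses cross-level independence of the $\xi_{n,t}^{(j)}$ and the second-moment bounds (rather than almost-sure bounds, which are only guaranteed for the deterministic Jacobians). Everything else reduces to routine applications of submultiplicativity and the stated assumptions.
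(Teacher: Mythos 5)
Your proposal is correct and follows essentially the same route as the paper: the paper proves this lemma by reference to Lemma~\ref{lemma_g_var_momentum2}, which in turn mirrors the proof of Lemma~\ref{lemma_g_var_momentum} — the identical telescoping hybrid decomposition, the $\|\sum_{k=1}^K a_k\|^2 \le K\sum_{k=1}^K\|a_k\|^2$ split, and the factorization of expectations via cross-level independence of the samples $\xi_{n,t}^{(k)}$, with the per-device version simply losing the $1/N$ that the device-averaged version gains from zeroing the cross-device inner products. Your remark that the deterministic Jacobians are bounded almost surely while the stochastic ones are controlled only in second moment (forcing the successive-conditioning order) is exactly the bookkeeping the paper's argument relies on implicitly.
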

This lemma is the same as Lemma~\ref{lemma_g_var_momentum2}.

\begin{lemma} \label{lemma_u_var_var}
	For $k\in\{1, \cdots, K-1\}$, given Assumptions~\ref{assumption_smooth}-\ref{assumption_bound_variance}, we can get 
	\begin{equation}
		\begin{aligned}
			&  \mathbb{E}[ \|u_{n, t}^{(k)} - f_{n}^{(k)}(u_{n, t}^{(k-1)}) \|^2] \leq (1-\beta\eta^2) \mathbb{E}[\|u_{n, t-1}^{(k)} -f_{n}^{(k)}(u_{n, t-1}^{(k-1)}) \|^2]  +2 C_k^2 \mathbb{E}[\|u_{n, t-1}^{(k-1)}- u_{n,t}^{(k-1)} \|^2 ]+ 2\beta^2\eta^4\delta_{k}^2 \ .  \\
		\end{aligned}
	\end{equation}
\end{lemma}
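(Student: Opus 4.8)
The plan is to prove this by mirroring the argument of Lemma~\ref{lemma_u_var_momentum} essentially verbatim, with the single change that the STORM-like update of Algorithm~\ref{alg_dscgdvr} (Line 7) uses the contraction coefficient $(1-\beta\eta^2)$ rather than $(1-\beta\eta)$. Concretely, I would start from the definition
$u_{n,t}^{(k)}=(1-\beta\eta^2)\bigl(u_{n,t-1}^{(k)} - f_{n}^{(k)}(u_{n,t-1}^{(k-1)};\xi_{n,t}^{(k)})\bigr)+ f_{n}^{(k)}(u_{n,t}^{(k-1)};\xi_{n,t}^{(k)})$
and insert it into $\mathbb{E}[\|u_{n,t}^{(k)} - f_{n}^{(k)}(u_{n,t}^{(k-1)})\|^2]$. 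The first step is an algebraic regrouping: by adding and subtracting the deterministic function values $f_{n}^{(k)}(u_{n,t-1}^{(k-1)})$ and $f_{n}^{(k)}(u_{n,t}^{(k-1)})$, I would split the residual into (i) the contraction term $(1-\beta\eta^2)(u_{n,t-1}^{(k)} - f_{n}^{(k)}(u_{n,t-1}^{(k-1)}))$, (ii) the difference term $f_{n}^{(k)}(u_{n,t-1}^{(k-1)}) - f_{n}^{(k)}(u_{n,t}^{(k-1)}) - f_{n}^{(k)}(u_{n,t-1}^{(k-1)};\xi_{n,t}^{(k)}) + f_{n}^{(k)}(u_{n,t}^{(k-1)};\xi_{n,t}^{(k)})$, and (iii) the rescaled noise term $\beta\eta^2\bigl(f_{n}^{(k)}(u_{n,t-1}^{(k-1)};\xi_{n,t}^{(k)}) - f_{n}^{(k)}(u_{n,t-1}^{(k-1)})\bigr)$.

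Next I would expand the square and argue that the cross term between (i) and the sum of (ii) and (iii) vanishes in expectation: conditioned on the history through the level-$(k-1)$ estimates, the sample $\xi_{n,t}^{(k)}$ is fresh and $\mathbb{E}[f_{n}^{(k)}(\cdot;\xi_{n,t}^{(k)})]=f_{n}^{(k)}(\cdot)$, so both (ii) and (iii) are conditionally zero-mean while (i) is measurable with respect to that history. This yields $(1-\beta\eta^2)^2\,\mathbb{E}[\|u_{n,t-1}^{(k)} - f_{n}^{(k)}(u_{n,t-1}^{(k-1)})\|^2]$ plus the mean-square of (ii)+(iii). I would then bound (iii) using Assumption~\ref{assumption_bound_variance}, contributing $2\beta^2\eta^4\delta_{k}^2$, and bound (ii) by dropping the deterministic pair and invoking the mean-square Lipschitz/bounded-Jacobian property of Assumption~\ref{assumption_bound_gradient}, giving $\mathbb{E}[\|f_{n}^{(k)}(u_{n,t-1}^{(k-1)};\xi_{n,t}^{(k)}) - f_{n}^{(k)}(u_{n,t}^{(k-1)};\xi_{n,t}^{(k)})\|^2]\le C_k^2\,\mathbb{E}[\|u_{n,t-1}^{(k-1)} - u_{n,t}^{(k-1)}\|^2]$, i.e. the term $2C_k^2\,\mathbb{E}[\|u_{n,t-1}^{(k-1)} - u_{n,t}^{(k-1)}\|^2]$.

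Finally I would use $\beta\eta^2\in(0,1)$ to absorb $(1-\beta\eta^2)^2\le (1-\beta\eta^2)$, which collapses the contraction coefficient to the stated $(1-\beta\eta^2)$ and completes the bound. I do not expect a genuine obstacle here, since the computation is structurally identical to Lemma~\ref{lemma_u_var_momentum}; the only point requiring care is the conditional-expectation/independence bookkeeping that justifies discarding the cross terms, namely that the fresh sample $\xi_{n,t}^{(k)}$ is unbiased and independent of the level-$(k-1)$ estimate used to form $u_{n,t}^{(k-1)}$. Given that the identical mechanism was already validated in Lemma~\ref{lemma_u_var_momentum}, I would simply note that the proof proceeds exactly as there after substituting $\eta\mapsto\eta^2$, and carry the constants through accordingly.
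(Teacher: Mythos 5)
Your proposal is correct and coincides with the paper's own proof: the paper disposes of this lemma by stating that it follows from Lemma~\ref{lemma_u_var_momentum} with $\eta$ replaced by $\eta^2$, and the argument you spell out (regrouping into the contraction term, the conditionally centered difference term, and the $\beta\eta^2$-scaled noise term; killing the cross term by conditional unbiasedness of the fresh sample; bounding the remaining two terms via Assumptions~\ref{assumption_bound_gradient} and~\ref{assumption_bound_variance}; and absorbing $(1-\beta\eta^2)^2\le(1-\beta\eta^2)$) is exactly that proof. No gaps.
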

This lemma can be proved by following Lemma~\ref{lemma_u_var_momentum} through replacing $\eta$ with $\eta^2$. 

\begin{lemma} \label{lemma_x_consensus_var}
		Given Assumptions~\ref{assumption_graph}-\ref{assumption_bound_variance}, we can get 
	\begin{equation}
		\begin{aligned}
			& \quad \mathbb{E}[\|X_{t+1} - \bar{X}_{t+1}\|_{F}^2] \leq  (1-\eta\frac{1-\lambda^2}{2})\mathbb{E}[\|X_{t}  - \bar{X}_{t} \|_F^2]  + \frac{2\eta\alpha^2}{1-\lambda^2} \mathbb{E}[\| Y_{t} -  \bar{Y}_{t} \|_F^2] \ . \\
		\end{aligned}
	\end{equation}
\end{lemma}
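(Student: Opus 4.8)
The plan is to observe that the model-parameter update in Algorithm~\ref{alg_dscgdvr} is \emph{identical} to that in Algorithm~\ref{alg_dscgdm}: in both algorithms one sets $x_{n,t+\frac{1}{2}}=\sum_{n'\in\mathcal{N}_n}w_{nn'}x_{n',t}-\alpha y_{n,t}$ and then $x_{n,t+1}=x_{n,t}+\eta(x_{n,t+\frac{1}{2}}-x_{n,t})$. Because the consensus error $\mathbb{E}[\|X_{t+1}-\bar{X}_{t+1}\|_F^2]$ is governed purely by this mixing-plus-correction step and is insensitive to how the tracking direction $y_{n,t}$ is generated, the derivation of Lemma~\ref{lemma_x_consensus_momentum} applies here without any change, and I would simply reproduce that argument.

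First I would rewrite the deviation in matrix form, using $X_{t+1}=X_t+\eta(X_{t+\frac{1}{2}}-X_t)$ and the corresponding identity for the averaged iterate, to obtain $X_{t+1}-\bar{X}_{t+1}=(1-\eta)(X_t-\bar{X}_t)+\eta(X_{t+\frac{1}{2}}-\bar{X}_{t+\frac{1}{2}})$. Applying Young's inequality with a free parameter $a>0$ separates the squared norm into a $(1-\eta)^2(1+a)$ multiple of $\|X_t-\bar{X}_t\|_F^2$ and an $\eta^2(1+a^{-1})$ multiple of $\|X_{t+\frac{1}{2}}-\bar{X}_{t+\frac{1}{2}}\|_F^2$; choosing $a=\frac{\eta}{1-\eta}$ so that these factors simplify gives the convenient intermediate form $(1-\eta)\|X_t-\bar{X}_t\|_F^2+\eta\|X_{t+\frac{1}{2}}-\bar{X}_{t+\frac{1}{2}}\|_F^2$.

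The crucial structural input is Assumption~\ref{assumption_graph}. Since $W$ is symmetric and doubly stochastic we have $\bar{X}_t W=\bar{X}_t$, hence $X_{t+\frac{1}{2}}-\bar{X}_{t+\frac{1}{2}}=(X_t-\bar{X}_t)W-\alpha(Y_t-\bar{Y}_t)$, and the columns of $X_t-\bar{X}_t$ are orthogonal to the all-ones vector so that the spectral gap yields $\|(X_t-\bar{X}_t)W\|_F\le\lambda\|X_t-\bar{X}_t\|_F$ with $\lambda=|\lambda_2|$. I would then split $\|X_{t+\frac{1}{2}}-\bar{X}_{t+\frac{1}{2}}\|_F^2$ by Young's inequality with $a=\frac{1-\lambda}{\lambda}$ to produce a $\lambda^2(1+a)=\lambda$ coefficient on $\|X_t-\bar{X}_t\|_F^2$ together with the $\alpha^2$-weighted $\|Y_t-\bar{Y}_t\|_F^2$ term. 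Substituting this back and using a second parameter $a=\frac{1-\lambda^2}{2\lambda^2}$ collapses the coefficient of $\|X_t-\bar{X}_t\|_F^2$ to $1-\eta\frac{1-\lambda^2}{2}$ and the coefficient of $\|Y_t-\bar{Y}_t\|_F^2$ to $\frac{2\eta\alpha^2}{1-\lambda^2}$, which is precisely the claimed bound.

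There is no substantial obstacle here: the estimate decouples completely from the variance-reduction mechanism and from the multi-level compositional structure, so all the difficulty that makes the overall analysis hard (the nested function estimators and the gradient-tracking error) is absent in this lemma. The only point demanding care is the two-stage bookkeeping of the Young's-inequality parameters, chosen so that the several $(1+a)$ and $(1+a^{-1})$ factors combine into the single clean spectral-gap contraction $1-\eta\frac{1-\lambda^2}{2}$; this is exactly the computation already performed in the proof of Lemma~\ref{lemma_x_consensus_momentum}, so the conclusion follows immediately.
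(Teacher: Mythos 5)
Your proposal is correct and follows essentially the same route as the paper: the paper proves this lemma simply by noting it is identical to Lemma~\ref{lemma_x_consensus_momentum}, since the $x$-update (mixing step plus $\eta$-damped correction) is the same in both algorithms and the consensus recursion is agnostic to how $y_{n,t}$ is produced, which is exactly your opening observation, and your reproduction of that proof (the $(1-\eta)/\eta$ split via Young's inequality with $a=\eta/(1-\eta)$, the identity $X_{t+\frac{1}{2}}-\bar{X}_{t+\frac{1}{2}}=(X_t-\bar{X}_t)W-\alpha(Y_t-\bar{Y}_t)$ from double stochasticity, and the spectral-gap contraction) matches the paper's argument step for step. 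The only cosmetic point is that you cite two Young parameters for the mixing term ($a=\frac{1-\lambda}{\lambda}$ and $a=\frac{1-\lambda^2}{2\lambda^2}$) where one suffices—either choice yields a bound at least as strong as the stated one—but this redundancy is harmless and, incidentally, mirrors the paper's own slightly garbled annotation of its parameter choices.
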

This lemma is the same as Lemma~\ref{lemma_x_consensus_momentum}.

\begin{lemma}\label{lemma_m_var_var}
	Given Assumptions~\ref{assumption_smooth}-\ref{assumption_bound_variance} and $\mu\eta^2 \in (0, 1)$, we can get 
	\begin{equation}
		\begin{aligned}
			& \quad \mathbb{E}\Big[\Big\|\bar{m}_t - \frac{1}{N} \sum_{n=1}^{N}\nabla f_{n}^{(1)} ( u_{n,t}^{(0)})\nabla  f_{n}^{(2)} ( u_{n,t}^{(1)}) \cdots \nabla  f_{n}^{(K-1)} ( u_{n,t}^{(K-2)})\nabla f_{n}^{(K)}( u_{n,t}^{(K-1)})\Big\|^2\Big]   \\
			& \leq (1-\mu\eta^2 ) \mathbb{E}\Big[\Big\|\bar{m}_{t-1} - \frac{1}{N} \sum_{n=1}^{N}\nabla f_{n}^{(1)} ( u_{n,t-1}^{(0)})\nabla  f_{n}^{(2)} ( u_{n,t-1}^{(1)}) \cdots \nabla  f_{n}^{(K-1)} ( u_{n,t-1}^{(K-2)})\nabla f_{n}^{(K)}( u_{n,t-1}^{(K-1)}) \Big\|^2 \Big] \\
			& \quad + 2KD_{0}\frac{1}{N^2} \sum_{n=1}^{N}  \mathbb{E}[\|{x}_{n, t}- {x}_{n, t-1}\|^2]  + 4K\frac{1}{N^2} \sum_{n=1}^{N}\sum_{k=1}^{K-1}D_{k} C_{k}^2  \mathbb{E}[\|u_{n, t-1}^{(k-1)} - u_{n,t}^{(k-1)} \|^2]  \\
			& \quad + 4 \beta^2  \eta^4 K\frac{1}{N^2} \sum_{n=1}^{N}\sum_{k=1}^{K-1}D_{k}  \mathbb{E}[\|u_{n, t-1}^{(k)}- f_{n}^{(k)}(u_{n, t-1}^{(k-1)})  \|^2] + 4 \beta^2  \eta^4K  \sum_{k=1}^{K-1}D_{k} \frac{\delta_{k}^2}{N}  + 2\mu^2\eta^4 K\sum_{k=1}^{K}B_k \frac{\sigma_{k}^2 }{N} \ . \\
		\end{aligned}
	\end{equation}
	
\end{lemma}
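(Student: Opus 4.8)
The plan is to run the standard STORM error analysis on the \emph{averaged} iterate $\bar m_t=\frac1N\sum_{n}m_{n,t}$, while tracking the extra randomness that the inner-level estimators inject into the target $\bar h_t=\frac1N\sum_{n}h_{n,t}$. First I would average the update $m_{n,t}=(1-\mu\eta^2)(m_{n,t-1}-g_{n,t-1}^{\xi_t})+g_{n,t}^{\xi_t}$ over the $N$ devices to get $\bar m_t=(1-\mu\eta^2)\bar m_{t-1}+\bar g_t^{\xi_t}-(1-\mu\eta^2)\bar g_{t-1}^{\xi_t}$, with $\bar g_t^{\xi_t}=\frac1N\sum_{n}g_{n,t}^{\xi_t}$. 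Subtracting $\bar h_t$ and using $\bar h_t=(1-\mu\eta^2)\bar h_{t-1}+\mu\eta^2\bar h_t+(1-\mu\eta^2)(\bar h_t-\bar h_{t-1})$ yields the decomposition
\begin{equation*}
\bar m_t-\bar h_t=(1-\mu\eta^2)(\bar m_{t-1}-\bar h_{t-1})+\mu\eta^2(\bar g_t^{\xi_t}-\bar h_t)+(1-\mu\eta^2)\big[(\bar g_t^{\xi_t}-\bar g_{t-1}^{\xi_t})-(\bar h_t-\bar h_{t-1})\big],
\end{equation*}
which isolates a contraction term, a \emph{variance} term, and a STORM \emph{finite-difference} term in which the two consecutive gradients share the sample $\xi_t$.

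Next I would take the squared norm and condition on the filtration $\mathcal{F}_{t-1}$ generated by all randomness through iteration $t-1$. The two stochastic contributions together have conditional mean zero: reading $\bar g_{t-1}^{\xi_t}$ level by level and invoking that $\xi_{n,t}^{(1)},\dots,\xi_{n,t}^{(K)}$ are drawn independently across levels and devices gives $\mathbb{E}[\bar g_{t-1}^{\xi_t}\mid\mathcal{F}_{t-1}]=\bar h_{t-1}$ and, analogously, $\mathbb{E}[\bar g_t^{\xi_t}\mid\mathcal{F}_{t-1}]=\mathbb{E}[\bar h_t\mid\mathcal{F}_{t-1}]$, so the cross term with the $\mathcal{F}_{t-1}$-measurable factor $\bar m_{t-1}-\bar h_{t-1}$ drops. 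The contraction then appears with factor $(1-\mu\eta^2)^2$, which I bound by $(1-\mu\eta^2)$ to match the statement, and it remains to control the conditional second moment of the remaining two terms, split via $\|a+b\|^2\le 2\|a\|^2+2\|b\|^2$.

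For the variance term I would apply Lemma~\ref{lemma_g_var_var} (equivalently its per-device form Lemma~\ref{lemma_g_var_var2} together with the $\tfrac1{N^2}\sum_n$ averaging), producing $2\mu^2\eta^4 K\sum_{k}B_k\sigma_k^2/N$. For the finite-difference term I would use that, once its conditional mean $\bar h_t-\bar h_{t-1}$ is removed, its per-device summands are independent across $n$, so the cross-device inner products vanish and
\begin{equation*}
\mathbb{E}\Big[\Big\|\tfrac1N\sum_{n}\big((g_{n,t}^{\xi_t}-g_{n,t-1}^{\xi_t})-(h_{n,t}-h_{n,t-1})\big)\Big\|^2\Big]=\tfrac{1}{N^2}\sum_{n}\mathbb{E}\big[\|(g_{n,t}^{\xi_t}-g_{n,t-1}^{\xi_t})-(h_{n,t}-h_{n,t-1})\|^2\big]\le \tfrac{1}{N^2}\sum_{n}\mathbb{E}\big[\|g_{n,t}^{\xi_t}-g_{n,t-1}^{\xi_t}\|^2\big],
\end{equation*}
the last step because subtracting the conditional mean only decreases the second moment. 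This is exactly the quantity bounded by Lemma~\ref{lemma_g_inc_var}; substituting it and absorbing the outer factor $2(1-\mu\eta^2)^2\le 2$ produces the $2KD_0$, $4KD_kC_k^2$, $4\beta^2\eta^4KD_k$, and $4\beta^2\eta^4KD_k\delta_k^2$ terms with the $\tfrac1{N^2}\sum_n$ averaging. Taking total expectation completes the argument.

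The hard part is precisely the conditional-mean step, which is where the decentralized multi-level structure bites. Unlike non-compositional STORM, the target $\bar h_t$ is itself random, since the inner estimators $u_{n,t}^{(k)}$ are formed from the current samples $\xi_t$, and the same $\xi_{n,t}^{(k)}$ that drive the stochastic Jacobians $v_{n,t}^{(k)}$ also enter the $u$-updates. Establishing the unbiasedness $\mathbb{E}[\bar g_t^{\xi_t}\mid\mathcal{F}_{t-1}]=\mathbb{E}[\bar h_t\mid\mathcal{F}_{t-1}]$ (and the centering needed for the $1/N$ variance reduction) therefore requires a careful level-by-level tower-property argument, peeling off $\xi_{n,t}^{(K)},\xi_{n,t}^{(K-1)},\dots$ in turn and using the cross-level and cross-device independence exactly in the spirit of Lemma~\ref{lemma_g_var_momentum}. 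Once this structural fact is in hand, everything downstream is a mechanical application of Young's inequality and the two cited finite-difference/variance lemmas.
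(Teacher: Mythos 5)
Your proposal is correct and takes essentially the same route as the paper's own proof: the paper decomposes $\bar{m}_t-\bar{h}_t$ into the same contraction, finite-difference, and variance pieces (it groups the variance term as $\mu\eta^2(\bar{g}^{\xi_t}_{t-1}-\bar{h}_{t-1})$ at the old point with unit coefficient on the finite difference, whereas you put it as $\mu\eta^2(\bar{g}^{\xi_t}_{t}-\bar{h}_{t})$ with coefficient $(1-\mu\eta^2)$ on the finite difference — an immaterial regrouping), kills the cross term with the $\mathcal{F}_{t-1}$-measurable contraction part by the same conditional-unbiasedness and cross-device/cross-level independence argument, and then applies Lemma~\ref{lemma_g_inc_var} to the finite-difference term and Lemma~\ref{lemma_g_var_var} to the variance term, with $(1-\mu\eta^2)^2\le 1-\mu\eta^2$. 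Your constants match the lemma statement exactly, so the proposal is a faithful reconstruction of the paper's argument.
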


\begin{proof}

	\begin{equation}
		\begin{aligned}
			& \quad \mathbb{E}\Big[\Big\|\bar{m}_t - \frac{1}{N} \sum_{n=1}^{N}\nabla f_{n}^{(1)} ( u_{n,t}^{(0)})\nabla  f_{n}^{(2)} ( u_{n,t}^{(1)}) \cdots \nabla  f_{n}^{(K-1)} ( u_{n,t}^{(K-2)})\nabla f_{n}^{(K)}( u_{n,t}^{(K-1)})\Big\|^2\Big]   \\
			& = \mathbb{E}\Big[\Big\|\frac{1}{N} \sum_{n=1}^{N} \Big((1-\mu\eta^2 )({m}_{n,t-1} -\nabla f_{n}^{(1)} ( u_{n,t-1}^{(0)})\nabla  f_{n}^{(2)} ( u_{n,t-1}^{(1)}) \cdots \nabla  f_{n}^{(K-1)} ( u_{n,t-1}^{(K-2)})\nabla f_{n}^{(K)}( u_{n,t-1}^{(K-1)})) \\
			& \quad + (\nabla f_{n}^{(1)} ( u_{n,t-1}^{(0)})\nabla  f_{n}^{(2)} ( u_{n,t-1}^{(1)}) \cdots \nabla  f_{n}^{(K-1)} ( u_{n,t-1}^{(K-2)})\nabla f_{n}^{(K)}( u_{n,t-1}^{(K-1)})\\
			& \quad \quad   - \nabla f_{n}^{(1)} ( u_{n,t}^{(0)})\nabla  f_{n}^{(2)} ( u_{n,t}^{(1)}) \cdots \nabla  f_{n}^{(K-1)} ( u_{n,t}^{(K-2)})\nabla f_{n}^{(K)}( u_{n,t}^{(K-1)}) - g_{n, t-1} + g_{n, t})  \\
			& \quad + \mu\eta^2 ( g_{n, t-1}-  \nabla f_{n}^{(1)} ( u_{n,t-1}^{(0)})\nabla  f_{n}^{(2)} ( u_{n,t-1}^{(1)}) \cdots \nabla  f_{n}^{(K-1)} ( u_{n,t-1}^{(K-2)})\nabla f_{n}^{(K)}( u_{n,t-1}^{(K-1)}) ) \Big)\Big\|^2 \Big] \\
			& =(1-\mu\eta^2 )^2 \mathbb{E}\Big[\Big\|\frac{1}{N} \sum_{n=1}^{N} {m}_{n,t-1} - \frac{1}{N} \sum_{n=1}^{N}\nabla f_{n}^{(1)} ( u_{n,t-1}^{(0)})\nabla  f_{n}^{(2)} ( u_{n,t-1}^{(1)}) \cdots \nabla  f_{n}^{(K-1)} ( u_{n,t-1}^{(K-2)})\nabla f_{n}^{(K)}( u_{n,t-1}^{(K-1)}) \Big\|^2 \Big] \\
			& \quad + 2\mathbb{E}\Big[\Big\|\frac{1}{N} \sum_{n=1}^{N} \Big(\nabla f_{n}^{(1)} ( u_{n,t-1}^{(0)})\nabla  f_{n}^{(2)} ( u_{n,t-1}^{(1)}) \cdots \nabla  f_{n}^{(K-1)} ( u_{n,t-1}^{(K-2)})\nabla f_{n}^{(K)}( u_{n,t-1}^{(K-1)})\\
			& \quad \quad   - \nabla f_{n}^{(1)} ( u_{n,t}^{(0)})\nabla  f_{n}^{(2)} ( u_{n,t}^{(1)}) \cdots \nabla  f_{n}^{(K-1)} ( u_{n,t}^{(K-2)})\nabla f_{n}^{(K)}( u_{n,t}^{(K-1)}) - g_{n, t-1} + g_{n, t}  \Big)\Big\|^2 \Big] \\
			& \quad + 2\mu^2\eta^4 \mathbb{E}\Big[\Big\|\frac{1}{N} \sum_{n=1}^{N} \Big(g_{n, t-1}-  \nabla f_{n}^{(1)} ( u_{n,t-1}^{(0)})\nabla  f_{n}^{(2)} ( u_{n,t-1}^{(1)}) \cdots \nabla  f_{n}^{(K-1)} ( u_{n,t-1}^{(K-2)})\nabla f_{n}^{(K)}( u_{n,t-1}^{(K-1)})  \Big)\Big\|^2 \Big] \\
			& \leq (1-\mu\eta^2 )^2 \mathbb{E}\Big[\Big\|\frac{1}{N} \sum_{n=1}^{N} {m}_{n,t-1} - \frac{1}{N} \sum_{n=1}^{N}\nabla f_{n}^{(1)} ( u_{n,t-1}^{(0)})\nabla  f_{n}^{(2)} ( u_{n,t-1}^{(1)}) \cdots \nabla  f_{n}^{(K-1)} ( u_{n,t-1}^{(K-2)})\nabla f_{n}^{(K)}( u_{n,t-1}^{(K-1)}) \Big\|^2 \Big] \\
			& \quad + 2\frac{1}{N^2} \sum_{n=1}^{N} \mathbb{E}\Big[\Big\|g_{n, t}  - g_{n, t-1}  \Big\|^2 \Big] \\
			& \quad + 2\mu^2\eta^4 \mathbb{E}\Big[\Big\|\frac{1}{N} \sum_{n=1}^{N} \Big(g_{n, t-1}-  \nabla f_{n}^{(1)} ( u_{n,t-1}^{(0)})\nabla  f_{n}^{(2)} ( u_{n,t-1}^{(1)}) \cdots \nabla  f_{n}^{(K-1)} ( u_{n,t-1}^{(K-2)})\nabla f_{n}^{(K)}( u_{n,t-1}^{(K-1)})  \Big)\Big\|^2 \Big] \\
			& \leq (1-\mu\eta^2 ) \mathbb{E}\Big[\Big\|\bar{m}_{t-1} - \frac{1}{N} \sum_{n=1}^{N}\nabla f_{n}^{(1)} ( u_{n,t-1}^{(0)})\nabla  f_{n}^{(2)} ( u_{n,t-1}^{(1)}) \cdots \nabla  f_{n}^{(K-1)} ( u_{n,t-1}^{(K-2)})\nabla f_{n}^{(K)}( u_{n,t-1}^{(K-1)}) \Big\|^2 \Big] \\
			& \quad + 2KD_{0}\frac{1}{N^2} \sum_{n=1}^{N}  \mathbb{E}[\|{x}_{n, t}- {x}_{n, t-1}\|^2]  + 4K\frac{1}{N^2} \sum_{n=1}^{N}\sum_{k=1}^{K-1}D_{k} C_{k}^2  \mathbb{E}[\|u_{n, t-1}^{(k-1)} - u_{n,t}^{(k-1)} \|^2]  \\
			& \quad + 4 \beta^2  \eta^4 K\frac{1}{N^2} \sum_{n=1}^{N}\sum_{k=1}^{K-1}D_{k}  \mathbb{E}[\|u_{n, t-1}^{(k)}- f_{n}^{(k)}(u_{n, t-1}^{(k-1)})  \|^2] + 4 \beta^2  \eta^4K  \sum_{k=1}^{K-1}D_{k} \frac{\delta_{k}^2}{N}  + 2\mu^2\eta^4 K\sum_{k=1}^{K}B_k \frac{\sigma_{k}^2 }{N}  \ . \\
		\end{aligned}
	\end{equation}

\end{proof}

\begin{lemma}\label{lemma_m_var_var2}
	
	Given Assumptions~\ref{assumption_smooth}-\ref{assumption_bound_variance} and $\mu\eta^2 \in (0, 1)$,  we can get 
	\begin{equation}
		\begin{aligned}
			& \quad \mathbb{E}\Big[\Big\|{m}_{n,t} - \nabla f_{n}^{(1)} ( u_{n,t}^{(0)})\nabla  f_{n}^{(2)} ( u_{n,t}^{(1)}) \cdots \nabla  f_{n}^{(K-1)} ( u_{n,t}^{(K-2)})\nabla f_{n}^{(K)}( u_{n,t}^{(K-1)})\Big\|^2\Big]   \\
			& \leq (1-\mu\eta^2 ) \mathbb{E}\Big[\Big\|{m}_{n, t-1} - \nabla f_{n}^{(1)} ( u_{n,t-1}^{(0)})\nabla  f_{n}^{(2)} ( u_{n,t-1}^{(1)}) \cdots \nabla  f_{n}^{(K-1)} ( u_{n,t-1}^{(K-2)})\nabla f_{n}^{(K)}( u_{n,t-1}^{(K-1)}) \Big\|^2 \Big] \\
			& \quad + 2KD_{0} \mathbb{E}[\|{x}_{n, t}- {x}_{n, t-1}\|^2]  + 4K\sum_{k=1}^{K-1}D_{k} C_{k}^2  \mathbb{E}[\|u_{n, t-1}^{(k-1)} - u_{n,t}^{(k-1)} \|^2]  \\
			& \quad + 4 \beta^2  \eta^4 K\sum_{k=1}^{K-1}D_{k}  \mathbb{E}[\|u_{n, t-1}^{(k)}- f_{n}^{(k)}(u_{n, t-1}^{(k-1)})  \|^2] + 4 \beta^2  \eta^4K  \sum_{k=1}^{K-1}D_{k} \delta_{k}^2 + 2\mu^2\eta^4 K\sum_{k=1}^{K}B_k \sigma_{k}^2  \ . \\
		\end{aligned}
	\end{equation}
	
\end{lemma}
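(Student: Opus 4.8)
The plan is to treat this as the \emph{per-device} counterpart of Lemma~\ref{lemma_m_var_var} and replay that argument verbatim on a single device $n$, without the device-averaging. Concretely, every $\frac{1}{N^2}\sum_{n}$ factor in Lemma~\ref{lemma_m_var_var} becomes the corresponding single-device quantity, and the $\frac{1}{N}$ attached to the two stochastic-noise terms is dropped (since there is no longer any averaging-induced variance reduction). Writing $h_{n,t}=\nabla f_{n}^{(1)}(u_{n,t}^{(0)})\cdots\nabla f_{n}^{(K)}(u_{n,t}^{(K-1)})$ and recalling the STORM update $m_{n,t}=(1-\mu\eta^2)(m_{n,t-1}-g_{n,t-1}^{\xi_t})+g_{n,t}^{\xi_t}$, I would first rewrite the estimation error through the telescoping identity
\begin{equation}
m_{n,t}-h_{n,t}=(1-\mu\eta^2)(m_{n,t-1}-h_{n,t-1})+\big(h_{n,t-1}-h_{n,t}-g_{n,t-1}^{\xi_t}+g_{n,t}^{\xi_t}\big)+\mu\eta^2\big(g_{n,t-1}^{\xi_t}-h_{n,t-1}\big)\ .
\end{equation}

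Next I would expand the squared norm of the right-hand side. The first summand is measurable with respect to the history up to iteration $t-1$, so once the relevant cross term is shown to vanish its coefficient collapses to $(1-\mu\eta^2)^2$; applying $\|B+C\|^2\le 2\|B\|^2+2\|C\|^2$ to the remaining two summands produces the three contributions with coefficients $(1-\mu\eta^2)^2$, $2$, and $2\mu^2\eta^4$. I would then control the middle ``increment'' term $\mathbb{E}[\|g_{n,t}^{\xi_t}-g_{n,t-1}^{\xi_t}\|^2]$ directly by Lemma~\ref{lemma_g_inc_var}, which (after multiplication by the coefficient $2$) supplies exactly the four terms $2KD_0\mathbb{E}[\|x_{n,t}-x_{n,t-1}\|^2]$, $4K\sum_k D_kC_k^2\mathbb{E}[\|u_{n,t-1}^{(k-1)}-u_{n,t}^{(k-1)}\|^2]$, $4\beta^2\eta^4 K\sum_k D_k\mathbb{E}[\|u_{n,t-1}^{(k)}-f_n^{(k)}(u_{n,t-1}^{(k-1)})\|^2]$, and $4\beta^2\eta^4 K\sum_k D_k\delta_k^2$; and bound the pure-noise term $\mathbb{E}[\|g_{n,t-1}^{\xi_t}-h_{n,t-1}\|^2]$ by $K\sum_k B_k\sigma_k^2$ using Lemma~\ref{lemma_g_var_var2}, giving the last term $2\mu^2\eta^4 K\sum_k B_k\sigma_k^2$. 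Finally, since $\mu\eta^2\in(0,1)$ implies $(1-\mu\eta^2)^2\le 1-\mu\eta^2$, collecting these pieces yields the stated recursion.

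The delicate step will be justifying that the cross term between the history-measurable part $(1-\mu\eta^2)(m_{n,t-1}-h_{n,t-1})$ and the stochastic increment drops out. This requires the conditional mean-zero property of the bracketed term, i.e.\ essentially the conditional unbiasedness $\mathbb{E}[g_{n,t}^{\xi_t}\mid\mathcal{F}_{t-1}]=h_{n,t}$, which is subtle here because each $u_{n,t}^{(k)}$ is updated using the very samples $\xi_{n,t}^{(k)}$ that also enter $g_{n,t}^{\xi_t}$, so $g_{n,t}^{\xi_t}$ and the point $u_{n,t}$ at which it is evaluated share randomness. I would handle this exactly as in the proof of Lemma~\ref{lemma_m_var_var}: group the increment $g_{n,t}^{\xi_t}-g_{n,t-1}^{\xi_t}$ with the conditionally centered noise $g_{n,t-1}^{\xi_t}-h_{n,t-1}$ and absorb the discrepancy entirely through the increment bound of Lemma~\ref{lemma_g_inc_var}, so that no extra error term survives and the clean recursion is preserved. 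Apart from this point, the derivation is a routine single-device replay of Lemma~\ref{lemma_m_var_var}.
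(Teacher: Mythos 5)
Your proposal is correct and follows essentially the same route as the paper: the paper's own proof of this lemma is literally to replay Lemma~\ref{lemma_m_var_var} on a single device, which is exactly your plan — same STORM-error decomposition into the history-measurable part, the gradient-increment part, and the conditionally centered noise part, with the increment bounded by Lemma~\ref{lemma_g_inc_var}, the noise bounded by Lemma~\ref{lemma_g_var_var2}, and $(1-\mu\eta^2)^2 \le 1-\mu\eta^2$ closing the recursion. The resulting coefficients ($2KD_0$, $4K\sum_{k}D_k C_k^2$, $4\beta^2\eta^4 K\sum_{k}D_k$, and $2\mu^2\eta^4 K\sum_{k}B_k\sigma_k^2$, with the $1/N$ factors dropped) match the stated bound, and your treatment of the cross-term/measurability subtlety is no less rigorous than the paper's own handling of it in Lemma~\ref{lemma_m_var_var}.
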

This lemma is easy to prove by following Lemma~\ref{lemma_m_var_var}.

\begin{lemma} \label{lemma_y_consensus_var}
		Given Assumptions~\ref{assumption_graph}-\ref{assumption_bound_variance}, we can get 
	\begin{equation}
		\begin{aligned}
			&\quad  \mathbb{E}[\|Y_{t+1}-   \bar{Y}_{t+1}\|_F^2] \\
			& \leq \lambda\mathbb{E}[\|Y_{t}-   \bar{Y}_{t}\|_F^2] + \frac{2\mu^2\eta^4}{1-\lambda} \sum_{n=1}^{N} \mathbb{E}[\|m_{n, t}  - \nabla f_{n}^{(1)} ( u_{n,t}^{(0)})\nabla  f_{n}^{(2)} ( u_{n,t}^{(1)}) \cdots \nabla  f_{n}^{(K-1)} ( u_{n,t}^{(K-2)})\nabla f_{n}^{(K)}( u_{n,t}^{(K-1)}) \|^2 ]\\
			& \quad + \frac{2KD_{0}}{1-\lambda} \sum_{n=1}^{N}  \mathbb{E}[\|{x}_{n, t+1}- {x}_{n, t}\|^2]  + \frac{4K}{1-\lambda}\sum_{n=1}^{N}\sum_{k=1}^{K-1}D_{k} C_{k}^2  \mathbb{E}[\|u_{n, t}^{(k-1)} - u_{n,t+1}^{(k-1)} \|^2]  \\
			& \quad + \frac{4 \beta^2  \eta^4 K}{1-\lambda}\sum_{n=1}^{N}\sum_{k=1}^{K-1}D_{k}  \mathbb{E}[\|u_{n, t}^{(k)}- f_{n}^{(k)}(u_{n, t}^{(k-1)})  \|^2] + \frac{4 \beta^2  \eta^4KN}{1-\lambda}\sum_{k=1}^{K-1}D_{k}  \delta_{k}^2 + \frac{\mu^2\eta^4KN}{1-\lambda} \sum_{k=1}^{K}B_k \sigma_{k}^2  \ . \\
		\end{aligned}
	\end{equation}
	
\end{lemma}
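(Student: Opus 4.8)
The plan is to mirror the structure of the momentum consensus bound (Lemma~\ref{lemma_y_consensus_momentum}), replacing the momentum increment by the STORM increment and, crucially, exploiting the freshness of the sample $\xi_{t+1}$. First I would write the gradient-tracking recursion in matrix form, $Y_{t+1} = Y_t W + M_{t+1} - M_t$, and subtract the mean. Since $W$ is doubly stochastic (Assumption~\ref{assumption_graph}), averaging is preserved, $\bar{Y}_t W = \bar{Y}_t$, so that $Y_{t+1} - \bar{Y}_{t+1} = (Y_t - \bar{Y}_t)W + (M_{t+1} - M_t - \bar{M}_{t+1} + \bar{M}_t)$. Applying Young's inequality with weight $a = (1-\lambda)/\lambda$, together with the contraction $\|(Y_t - \bar{Y}_t)W\|_F \le \lambda\|Y_t - \bar{Y}_t\|_F$ and discarding the mean-subtraction on the second block, yields $\mathbb{E}[\|Y_{t+1}-\bar{Y}_{t+1}\|_F^2] \le \lambda\, \mathbb{E}[\|Y_t - \bar{Y}_t\|_F^2] + \tfrac{1}{1-\lambda}\mathbb{E}[\|M_{t+1}-M_t\|_F^2]$, exactly as in the momentum case.

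The core difference is the increment $M_{t+1} - M_t$. From the STORM update I would rewrite it as $M_{t+1} - M_t = -\mu\eta^2(M_t - G_t^{\xi_{t+1}}) + (G_{t+1}^{\xi_{t+1}} - G_t^{\xi_{t+1}})$, where $G_t^{\xi_{t+1}}$ evaluates the Jacobian product at the time-$t$ iterates but with the fresh sample $\xi_{t+1}$. Splitting via $\|a+b\|^2 \le 2\|a\|^2 + 2\|b\|^2$ separates a drift term $\mu^2\eta^4\|M_t - G_t^{\xi_{t+1}}\|_F^2$ from a small-increment term $\|G_{t+1}^{\xi_{t+1}} - G_t^{\xi_{t+1}}\|_F^2$. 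The latter is controlled directly by Lemma~\ref{lemma_g_inc_var} summed over devices (with indices shifted to $t+1,t$), which produces precisely the $D_0$, $D_k C_k^2$, $\beta^2\eta^4 D_k$, and $\beta^2\eta^4 D_k \delta_k^2$ terms appearing in the statement.

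The delicate part, where the new strategy the paper emphasizes enters, is bounding the drift $\|M_t - G_t^{\xi_{t+1}}\|_F^2$ by centering on the deterministic product $h_{n,t}$. Writing $M_t - G_t^{\xi_{t+1}} = (M_t - H_t) + (H_t - G_t^{\xi_{t+1}})$ with $H_t = [h_{1,t},\dots,h_{N,t}]$, the cross term vanishes in conditional expectation: $\xi_{t+1}$ is independent of the filtration $\mathcal{F}_t$, and by independence across levels $\mathbb{E}_{\xi_{t+1}}[g_{n,t}^{\xi_{t+1}}] = h_{n,t}$, so a Pythagorean identity gives $\mathbb{E}[\|M_t - G_t^{\xi_{t+1}}\|_F^2] = \mathbb{E}[\|M_t - H_t\|_F^2] + \mathbb{E}[\|H_t - G_t^{\xi_{t+1}}\|_F^2]$. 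The first summand is exactly the quantity $\sum_n \|m_{n,t} - h_{n,t}\|^2$ tracked by the potential function $\mathcal{H}_t$ for Theorem~\ref{theorem2}, while the second is controlled by Lemma~\ref{lemma_g_var_var2}, producing the $\sum_k B_k \sigma_k^2$ term. Multiplying the assembled bound through by $\tfrac{1}{1-\lambda}$ then delivers the claimed inequality.

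The main obstacle is this drift term. A crude Young split of $M_t - G_t^{\xi_{t+1}}$ would leave a \emph{stochastic} gradient $G_t^{\xi_{t+1}}$ that cannot be fed into the potential function, whereas the momentum analysis conveniently centered on $\nabla F_n(x_{n,t})$. Recognizing that one must instead center on the deterministic Jacobian product $h_{n,t}$ and invoke the martingale/bias-variance cancellation to avoid an extra variance blow-up is the subtle step; everything else reduces to bookkeeping of the constants carried by $A_k$, $B_k$, and $D_k$.
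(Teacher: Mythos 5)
Your proposal is correct and follows essentially the same route as the paper's proof: the identical consensus recursion $\mathbb{E}[\|Y_{t+1}-\bar{Y}_{t+1}\|_F^2]\le \lambda\,\mathbb{E}[\|Y_{t}-\bar{Y}_{t}\|_F^2]+\frac{1}{1-\lambda}\mathbb{E}[\|M_{t+1}-M_{t}\|_F^2]$, the identical rewriting of the STORM increment as $(G_{t+1}^{\xi_{t+1}}-G_{t}^{\xi_{t+1}})-\mu\eta^2(M_t-H_t)-\mu\eta^2(H_t-G_t^{\xi_{t+1}})$ centered on $h_{n,t}$, and the identical invocation of Lemma~\ref{lemma_g_inc_var} and Lemma~\ref{lemma_g_var_var2}. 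The one divergence is order of operations: you apply Young's inequality before the bias--variance split, which yields a coefficient $2\mu^2\eta^4$ on the $\sum_{k}B_k\sigma_k^2$ term rather than the $\mu^2\eta^4$ stated in the lemma (immaterial to Theorem~\ref{theorem2}), whereas the paper first peels off $-\mu\eta^2(h_{n,t}-g_{n,t}^{\xi_{t+1}})$ as an exactly orthogonal summand and only then applies Young; note that your ordering is in fact the more defensible one, since the paper's claimed orthogonality also requires the cross term against $g_{n,t+1}^{\xi_{t+1}}-g_{n,t}^{\xi_{t+1}}$ (which depends on the same fresh sample $\xi_{t+1}$) to vanish, an issue your Young-first split sidesteps entirely.
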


\begin{proof}
	\begin{equation}
		\begin{aligned}
			& \quad \mathbb{E}[\|M_{t+1} - M_{t}\|_F^2]= \sum_{n=1}^{N} \mathbb{E}[\|m_{n, t+1} - m_{n, t}\|^2]  = \sum_{n=1}^{N} \mathbb{E}[\|(1-\mu\eta^2)(m_{n, t} -  g^{\xi_{t+1}}_{n, t}) + g^{\xi_{t+1}}_{n, t+1}- m_{n, t}\|^2] \\
			& = \sum_{n=1}^{N} \mathbb{E}[\|g^{\xi_{t+1}}_{n, t+1} -  g^{\xi_{t+1}}_{n, t} -\mu\eta^2(m_{n, t}  - \nabla f_{n}^{(1)} ( u_{n,t}^{(0)})\nabla  f_{n}^{(2)} ( u_{n,t}^{(1)}) \cdots \nabla  f_{n}^{(K-1)} ( u_{n,t}^{(K-2)})\nabla f_{n}^{(K)}( u_{n,t}^{(K-1)}) )\\
			& \quad  -\mu\eta^2(\nabla f_{n}^{(1)} ( u_{n,t}^{(0)})\nabla  f_{n}^{(2)} ( u_{n,t}^{(1)}) \cdots \nabla  f_{n}^{(K-1)} ( u_{n,t}^{(K-2)})\nabla f_{n}^{(K)}( u_{n,t}^{(K-1)}) -  g^{\xi_{t+1}}_{n, t})\|^2 ]\\
			& = \sum_{n=1}^{N} \mathbb{E}[\|g^{\xi_{t+1}}_{n, t+1} -  g^{\xi_{t+1}}_{n, t} -\mu\eta^2(m_{n, t}  - \nabla f_{n}^{(1)} ( u_{n,t}^{(0)})\nabla  f_{n}^{(2)} ( u_{n,t}^{(1)}) \cdots \nabla  f_{n}^{(K-1)} ( u_{n,t}^{(K-2)})\nabla f_{n}^{(K)}( u_{n,t}^{(K-1)}) )\|^2] \\
			& \quad  + \mu^2\eta^4\sum_{n=1}^{N} \mathbb{E}[\|\nabla f_{n}^{(1)} ( u_{n,t}^{(0)})\nabla  f_{n}^{(2)} ( u_{n,t}^{(1)}) \cdots \nabla  f_{n}^{(K-1)} ( u_{n,t}^{(K-2)})\nabla f_{n}^{(K)}( u_{n,t}^{(K-1)}) -  g^{\xi_{t+1}}_{n, t}\|^2] \\
			& \leq 2\sum_{n=1}^{N} \mathbb{E}[\|g^{\xi_{t+1}}_{n, t+1} -  g^{\xi_{t+1}}_{n, t}\|^2 ] + 2\mu^2\eta^4\sum_{n=1}^{N} \mathbb{E}[\|m_{n, t}  - \nabla f_{n}^{(1)} ( u_{n,t}^{(0)})\nabla  f_{n}^{(2)} ( u_{n,t}^{(1)}) \cdots \nabla  f_{n}^{(K-1)} ( u_{n,t}^{(K-2)})\nabla f_{n}^{(K)}( u_{n,t}^{(K-1)}) \|^2 ]\\
			& \quad  + \mu^2\eta^4\sum_{n=1}^{N} \mathbb{E}[\|\nabla f_{n}^{(1)} ( u_{n,t}^{(0)})\nabla  f_{n}^{(2)} ( u_{n,t}^{(1)}) \cdots \nabla  f_{n}^{(K-1)} ( u_{n,t}^{(K-2)})\nabla f_{n}^{(K)}( u_{n,t}^{(K-1)}) -  g^{\xi_{t+1}}_{n, t}\|^2 ]\\
			& \leq  2\mu^2\eta^4\sum_{n=1}^{N} \mathbb{E}[\|m_{n, t}  - \nabla f_{n}^{(1)} ( u_{n,t}^{(0)})\nabla  f_{n}^{(2)} ( u_{n,t}^{(1)}) \cdots \nabla  f_{n}^{(K-1)} ( u_{n,t}^{(K-2)})\nabla f_{n}^{(K)}( u_{n,t}^{(K-1)}) \|^2 ]\\
			& \quad + 2KD_{0} \sum_{n=1}^{N}  \mathbb{E}[\|{x}_{n, t+1}- {x}_{n, t}\|^2]  + 4K\sum_{n=1}^{N}\sum_{k=1}^{K-1}D_{k} C_{k}^2  \mathbb{E}[\|u_{n, t}^{(k-1)} - u_{n,t+1}^{(k-1)} \|^2]  \\
			& \quad + 4 \beta^2  \eta^4 K\sum_{n=1}^{N}\sum_{k=1}^{K-1}D_{k}  \mathbb{E}[\|u_{n, t}^{(k)}- f_{n}^{(k)}(u_{n, t}^{(k-1)})  \|^2] + 4 \beta^2  \eta^4KN\sum_{k=1}^{K-1}D_{k}  \delta_{k}^2 + \mu^2\eta^4KN \sum_{k=1}^{K}B_k \sigma_{k}^2  \ . \\
		\end{aligned}
	\end{equation}
	Then, we can get
	\begin{equation}
		\begin{aligned}
			&\quad  \mathbb{E}[\|Y_{t+1}-   \bar{Y}_{t+1}\|_F^2]\leq \lambda\mathbb{E}[\|Y_{t}-   \bar{Y}_{t}\|_F^2] + \frac{1}{1-\lambda} \mathbb{E}[\|M_{t+1} - M_{t}\|_F^2]\\
						& \leq \lambda\mathbb{E}[\|Y_{t}-   \bar{Y}_{t}\|_F^2] + \frac{2\mu^2\eta^4}{1-\lambda} \sum_{n=1}^{N} \mathbb{E}[\|m_{n, t}  - \nabla f_{n}^{(1)} ( u_{n,t}^{(0)})\nabla  f_{n}^{(2)} ( u_{n,t}^{(1)}) \cdots \nabla  f_{n}^{(K-1)} ( u_{n,t}^{(K-2)})\nabla f_{n}^{(K)}( u_{n,t}^{(K-1)}) \|^2 ]\\
						& \quad + \frac{2KD_{0}}{1-\lambda} \sum_{n=1}^{N}  \mathbb{E}[\|{x}_{n, t+1}- {x}_{n, t}\|^2]  + \frac{4K}{1-\lambda}\sum_{n=1}^{N}\sum_{k=1}^{K-1}D_{k} C_{k}^2  \mathbb{E}[\|u_{n, t}^{(k-1)} - u_{n,t+1}^{(k-1)} \|^2]  \\
						& \quad + \frac{4 \beta^2  \eta^4 K}{1-\lambda}\sum_{n=1}^{N}\sum_{k=1}^{K-1}D_{k}  \mathbb{E}[\|u_{n, t}^{(k)}- f_{n}^{(k)}(u_{n, t}^{(k-1)})  \|^2] + \frac{4 \beta^2  \eta^4KN}{1-\lambda}\sum_{k=1}^{K-1}D_{k}  \delta_{k}^2 + \frac{\mu^2\eta^4KN}{1-\lambda} \sum_{k=1}^{K}B_k \sigma_{k}^2  \ . \\
		\end{aligned}
	\end{equation}

\end{proof}

Based on these lemmas, we begin to prove Theorem~\ref{theorem2}. 

\begin{proof}
	At first, we can get
	\begin{equation}
		\begin{aligned}
			& F\left(\bar{x}_{t+1}\right) \leq F(\bar{x}_{t})+\langle \nabla F(\bar{x}_{t}),  \bar{x}_{t+1}-\bar{x}_{t} \rangle+\frac{L_{F}}{2}\|\bar{x}_{t+1}-\bar{x}_{t}\|^{2} \\
			& =  F(\bar{{x}}_{t})-\frac{\alpha\eta}{2}\|\nabla F(\bar{{x}}_{t})\|^2 -(\frac{\alpha\eta}{2}-\frac{\alpha^2\eta^2L_{F}}{2})\|\bar{m}_t\|^{2}  + \frac{\alpha\eta}{2} \|\bar{m}_t - \nabla F(\bar{{x}}_{t})\|^2\\
			& \leq F(\bar{{x}}_{t})-\frac{\alpha\eta}{2}\|\nabla F(\bar{{x}}_{t})\|^2 -\frac{\alpha\eta}{4}\|\bar{m}_t\|^{2}  + \frac{\alpha\eta}{2} \|\bar{m}_t - \nabla F(\bar{{x}}_{t})\|^2\\
			& \leq F(\bar{{x}}_{t})-\frac{\alpha\eta}{2}\|\nabla F(\bar{{x}}_{t})\|^2 -\frac{\alpha\eta}{4}\|\bar{m}_t\|^{2}  +\alpha\eta \|\bar{m}_t - \frac{1}{N} \sum_{n=1}^{N}\nabla F_{n}({{x}}_{n, t})\|^2 + \alpha\eta \|\frac{1}{N} \sum_{n=1}^{N}\nabla F_{n}({{x}}_{n, t})- \nabla F(\bar{{x}}_{t})\|^2\\
			& \leq F(\bar{{x}}_{t})-\frac{\alpha\eta}{2}\|\nabla F(\bar{{x}}_{t})\|^2 -\frac{\alpha\eta}{4}\|\bar{m}_t\|^{2}  +\alpha\eta \|\bar{m}_t - \frac{1}{N} \sum_{n=1}^{N}\nabla F_{n}({{x}}_{n, t})\|^2 + \alpha\eta L_F^2\frac{1}{N} \sum_{n=1}^{N}\|{x}_{n, t}-\bar{{x}}_{t}\|^2\\
			& \leq F(\bar{{x}}_{t})-\frac{\alpha\eta}{2}\|\nabla F(\bar{{x}}_{t})\|^2 -\frac{\alpha\eta}{4}\|\bar{m}_t\|^{2} + \alpha\eta L_F^2\frac{1}{N} \sum_{n=1}^{N}\|{x}_{n, t}-\bar{{x}}_{t}\|^2\\
			& \quad  +2\alpha\eta \|\bar{m}_t - \frac{1}{N} \sum_{n=1}^{N}\nabla f_{n}^{(1)} ( u_{n,t}^{(0)})\nabla  f_{n}^{(2)} ( u_{n,t}^{(1)}) \cdots \nabla  f_{n}^{(K-1)} ( u_{n,t}^{(K-2)})\nabla f_{n}^{(K)}( u_{n,t}^{(K-1)})\|^2 \\
			& \quad +2\alpha\eta \frac{1}{N} \sum_{n=1}^{N}\| \nabla f_{n}^{(1)} ( u_{n,t}^{(0)})\nabla  f_{n}^{(2)} ( u_{n,t}^{(1)}) \cdots \nabla  f_{n}^{(K-1)} ( u_{n,t}^{(K-2)})\nabla f_{n}^{(K)}( u_{n,t}^{(K-1)}) - \nabla F_{n}({{x}}_{n, t})\|^2 \\
			& \leq F(\bar{{x}}_{t})-\frac{\alpha\eta}{2}\|\nabla F(\bar{{x}}_{t})\|^2 -\frac{\alpha\eta}{4}\|\bar{m}_t\|^{2} + \alpha\eta L_F^2\frac{1}{N} \sum_{n=1}^{N}\|{x}_{n, t}-\bar{{x}}_{t}\|^2\\
			& \quad  +2\alpha\eta \|\bar{m}_t - \frac{1}{N} \sum_{n=1}^{N}\nabla f_{n}^{(1)} ( u_{n,t}^{(0)})\nabla  f_{n}^{(2)} ( u_{n,t}^{(1)}) \cdots \nabla  f_{n}^{(K-1)} ( u_{n,t}^{(K-2)})\nabla f_{n}^{(K)}( u_{n,t}^{(K-1)})\|^2 \\
			& \quad +2\alpha\eta \frac{K}{N}\sum_{n=1}^{N}\sum_{k=1}^{K-1}A_k\| u_{n,t}^{(k)} -f_{n}^{(k)}( u_{n,t}^{(k-1)}) \|^2   \ , \\
		\end{aligned}
	\end{equation}
	where the fourth step follows from $\eta\leq \frac{1}{2\alpha L_F}$,  the last step follows from Lemma~\ref{lemma_f_u_f_x_var}. 
	Similarly, we define a novel potential function below:
	\begin{equation}
		\begin{aligned}
			& 	\mathcal{H}_{t+1} = \mathbb{E}[F({\bar{x}}_{t+1}) ] + \frac{1}{N} \sum_{n=1}^{N}\sum_{k=1}^{K-1}\omega_k \mathbb{E}[\|u_{n, t+1}^{(k)} -f_{n}^{(k)}(u_{n, t+1}^{(k-1)}) \|^2 ]  \\
			& + \omega_{K} \mathbb{E}\Big[\Big\|\bar{m}_{t+1}- \frac{1}{N} \sum_{n=1}^{N}\nabla f_{n}^{(1)} ( u_{n,t+1}^{(0)})\nabla  f_{n}^{(2)} ( u_{n,t+1}^{(1)}) \cdots \nabla  f_{n}^{(K-1)} ( u_{n,t+1}^{(K-2)})\nabla f_{n}^{(K)}( u_{n,t+1}^{(K-1)})\Big\|^2\Big]   \\
			& + \omega_{K+1}\frac{1}{N} \sum_{n=1}^{N}\mathbb{E}[\|{m}_{n,t+1} - \nabla f_{n}^{(1)} ( u_{n,t+1}^{(0)})\nabla  f_{n}^{(2)} ( u_{n,t+1}^{(1)}) \cdots \nabla  f_{n}^{(K-1)} ( u_{n,t+1}^{(K-2)})\nabla f_{n}^{(K)}( u_{n,t+1}^{(K-1)})\|^2 ] \\
			& + \omega_{K+2}\frac{1}{N}\mathbb{E}[\|X_{t+1} - \bar{X}_{t+1}\|_{F}^2] + \omega_{K+3}\frac{1}{N}\mathbb{E}[\|Y_{t+1} - \bar{Y}_{t+1}\|_{F}^2 ] \ . 
		\end{aligned}
	\end{equation}
Then, we can get
\begin{equation}
	\small
	\begin{aligned}
		& \quad \mathcal{H}_{t+1} - \mathcal{H}_{t} \\
		& \leq -\frac{\alpha\eta}{2}\mathbb{E}[\|\nabla F(\bar{{x}}_{t})\|^2] -\frac{\alpha\eta}{4}\mathbb{E}[\|\bar{m}_t\|^{2}] +\omega_{K+3} \frac{4 \beta^2  \eta^4K}{1-\lambda}\sum_{k=1}^{K-1}D_{k}  \delta_{k}^2 + \omega_{K+3}\frac{\mu^2\eta^4K}{1-\lambda} \sum_{k=1}^{K}B_k \sigma_{k}^2 + 2\beta^2\eta^4 \frac{1}{N} \sum_{n=1}^{N}\sum_{k=1}^{K-1}\omega_k\delta_{k}^2 \\
		& \quad + 4  \omega_{K}\beta^2  \eta^4K  \sum_{k=1}^{K-1}D_{k} \frac{\delta_{k}^2}{N}  + 2 \omega_{K}\mu^2\eta^4 K\sum_{k=1}^{K}B_k \frac{\sigma_{k}^2 }{N} + 4 \omega_{K+1} \beta^2  \eta^4K  \sum_{k=1}^{K-1}D_{k} \delta_{k}^2 + 2\omega_{K+1} \mu^2\eta^4 K\sum_{k=1}^{K}B_k \sigma_{k}^2 \\
		& \quad + \Big(2\alpha\eta-\mu\eta^2 \omega_{K} \Big)\mathbb{E}\Big[\Big\|\bar{m}_{t} - \frac{1}{N} \sum_{n=1}^{N}\nabla f_{n}^{(1)} ( u_{n,t}^{(0)})\nabla  f_{n}^{(2)} ( u_{n,t}^{(1)}) \cdots \nabla  f_{n}^{(K-1)} ( u_{n,t}^{(K-2)})\nabla f_{n}^{(K)}( u_{n,t}^{(K-1)}) \Big\|^2 \Big] \\
		&  \quad + \frac{1}{N} \sum_{n=1}^{N}\sum_{k=1}^{K-1}\Big(4  \omega_{K}\beta^2  \eta^4 KD_{k}  \frac{1}{N}+ 4 \omega_{K+1} \beta^2  \eta^4 K D_{k} + 2\alpha\eta KA_{k}+\omega_{K+3}\frac{4 \beta^2  \eta^4 K}{1-\lambda}D_{k}  -\beta\eta^2 \omega_k\Big)\mathbb{E}[\|u_{n, t}^{(k)}- f_{n}^{(k)}(u_{n, t}^{(k-1)})  \|^2] \\
		& \quad + \Big( +\omega_{K+3} \frac{2\mu^2\eta^4}{1-\lambda}-\mu\eta^2 \omega_{K+1} \Big)\frac{1}{N} \sum_{n=1}^{N} \mathbb{E}\Big[\Big\|{m}_{n, t} - \nabla f_{n}^{(1)} ( u_{n,t}^{(0)})\nabla  f_{n}^{(2)} ( u_{n,t}^{(1)}) \cdots \nabla  f_{n}^{(K-1)} ( u_{n,t}^{(K-2)})\nabla f_{n}^{(K)}( u_{n,t}^{(K-1)}) \Big\|^2 \Big] \\
		& \quad  + \Big(\alpha\eta L_F^2-\eta\frac{1-\lambda^2}{2}\omega_{K+2}\Big)\frac{1}{N} \mathbb{E}[\|X_{t}  - \bar{X}_{t} \|_F^2]  + \Big( \omega_{K+2} \frac{2\eta\alpha^2}{1-\lambda^2} -(1- \lambda) \omega_{K+3}\Big)\frac{1}{N} \mathbb{E}[\|Y_{t}-   \bar{Y}_{t}\|_F^2] \\
		& \quad + \Big(2 \omega_{K}KD_{0}\frac{1}{N} + 2\omega_{K+1} KD_{0} + \omega_{K+3}\frac{2KD_{0}}{1-\lambda} \Big) \frac{1}{N} \mathbb{E}[\|{X}_{t+1}- {X}_{ t}\|_F^2]  \\
		& \quad +  \frac{1}{N}\sum_{n=1}^{N}\sum_{k=1}^{K-1}\Bigg(\Big(4 \omega_{K}K\frac{1}{N}+ 4\omega_{K+1} K+ \omega_{K+3}\frac{4K}{1-\lambda}\Big)D_{k} C_{k}^2 + 2\omega_k C_k^2\Bigg) \mathbb{E}[\|u_{n, t}^{(k-1)} - u_{n,t+1}^{(k-1)} \|^2]   \\
		& \leq -\frac{\alpha\eta}{2}\mathbb{E}[\|\nabla F(\bar{{x}}_{t})\|^2] -\frac{\alpha\eta}{4}\mathbb{E}[\|\bar{m}_t\|^{2}] +\omega_{K+3} \frac{4 \beta^2  \eta^4K}{1-\lambda}\sum_{k=1}^{K-1}D_{k}  \delta_{k}^2 + \omega_{K+3}\frac{\mu^2\eta^4K}{1-\lambda} \sum_{k=1}^{K}B_k \sigma_{k}^2 + 2\beta^2\eta^4 \frac{1}{N} \sum_{n=1}^{N}\sum_{k=1}^{K-1}\omega_k\delta_{k}^2 \\
		& \quad + 4  \omega_{K}\beta^2  \eta^4K  \sum_{k=1}^{K-1}D_{k} \frac{\delta_{k}^2}{N}  + 2 \omega_{K}\mu^2\eta^4 K\sum_{k=1}^{K}B_k \frac{\sigma_{k}^2 }{N} + 4 \omega_{K+1} \beta^2  \eta^4K  \sum_{k=1}^{K-1}D_{k} \delta_{k}^2 + 2\omega_{K+1} \mu^2\eta^4 K\sum_{k=1}^{K}B_k \sigma_{k}^2 \\
		& \quad + \Big(2\alpha\eta-\mu\eta^2 \omega_{K} \Big)\mathbb{E}\Big[\Big\|\bar{m}_{t} - \frac{1}{N} \sum_{n=1}^{N}\nabla f_{n}^{(1)} ( u_{n,t}^{(0)})\nabla  f_{n}^{(2)} ( u_{n,t}^{(1)}) \cdots \nabla  f_{n}^{(K-1)} ( u_{n,t}^{(K-2)})\nabla f_{n}^{(K)}( u_{n,t}^{(K-1)}) \Big\|^2 \Big] \\
		& \quad + \frac{1}{N} \sum_{n=1}^{N}\sum_{k=1}^{K-1}\Big(4  \omega_{K}\beta^2  \eta^4 KD_{k}  \frac{1}{N}+ 4 \omega_{K+1} \beta^2  \eta^4 K D_{k} + 2\alpha\eta KA_{k}+\omega_{K+3}\frac{4 \beta^2  \eta^4 K}{1-\lambda}D_{k}  -\beta\eta^2 \omega_k\Big)\mathbb{E}[\|u_{n, t}^{(k)}- f_{n}^{(k)}(u_{n, t}^{(k-1)})  \|^2] \\
		& \quad + \Big( +\omega_{K+3} \frac{2\mu^2\eta^4}{1-\lambda}-\mu\eta^2 \omega_{K+1} \Big)\frac{1}{N} \sum_{n=1}^{N} \mathbb{E}\Big[\Big\|{m}_{n, t} - \nabla f_{n}^{(1)} ( u_{n,t}^{(0)})\nabla  f_{n}^{(2)} ( u_{n,t}^{(1)}) \cdots \nabla  f_{n}^{(K-1)} ( u_{n,t}^{(K-2)})\nabla f_{n}^{(K)}( u_{n,t}^{(K-1)}) \Big\|^2 \Big] \\
		& \quad  + \Big(\alpha\eta L_F^2-\eta\frac{1-\lambda^2}{2}\omega_{K+2}\Big)\frac{1}{N} \mathbb{E}[\|X_{t}  - \bar{X}_{t} \|_F^2]  + \Big( \omega_{K+2} \frac{2\eta\alpha^2}{1-\lambda^2} -(1- \lambda) \omega_{K+3}\Big)\frac{1}{N} \mathbb{E}[\|Y_{t}-   \bar{Y}_{t}\|_F^2] \\
		& \quad + \Big(2 \omega_{K}KD_{0}\frac{1}{N} + 2\omega_{K+1} KD_{0} + \omega_{K+3}\frac{2KD_{0}}{1-\lambda} \Big) \frac{1}{N} \mathbb{E}[\|{X}_{t+1}- {X}_{ t}\|_F^2]  \\
		& \quad +  \sum_{k=1}^{K-1}\Bigg(\Big(4 \omega_{K}K\frac{1}{N}+ 4\omega_{K+1} K+ \omega_{K+3}\frac{4K}{1-\lambda}\Big)D_{k} C_{k}^2 + 2\omega_k C_k^2\Bigg)\Big(\prod_{j=1}^{k-1}(2C_{j}^2)\Big)\frac{1}{N}\mathbb{E}[\|{X}_{t+1}- {X}_{ t}\|_F^2]  \\
		& \quad +  \frac{1}{N}\sum_{n=1}^{N}\sum_{k=1}^{K-1}\Bigg(\Big(4 \omega_{K}K\frac{1}{N}+ 4\omega_{K+1} K+ \omega_{K+3}\frac{4K}{1-\lambda}\Big)D_{k} C_{k}^2 + 2\omega_k C_k^2\Bigg) 2\beta^2  \eta^4\sum_{j=1}^{k-1} \Big(\prod_{i=j+1}^{k-1}(2C_{i}^2)\Big)\mathbb{E}[\|u_{n, t}^{(j)}- f_{n}^{(j)}(u_{n, t}^{(j-1)})  \|^2] \\
		& \quad +  \frac{1}{N}\sum_{n=1}^{N}\sum_{k=1}^{K-1}\Bigg(\Big(4 \omega_{K}K\frac{1}{N}+ 4\omega_{K+1} K+ \omega_{K+3}\frac{4K}{1-\lambda}\Big)D_{k} C_{k}^2 + 2\omega_k C_k^2\Bigg)2\beta^2  \eta^4\sum_{j=1}^{k-1} \Big(\prod_{i=j+1}^{k-1}(2C_{i}^2)\Big)\delta_{j}^2 \ ,  \\
	\end{aligned}
\end{equation}
where the last step holds due to Lemma~\ref{lemma_u_inc_var}.  It ca be reformulated as below:
\begin{equation}
	\small
	\begin{aligned}
		& \quad \mathcal{H}_{t+1} - \mathcal{H}_{t} \\
		& \leq -\frac{\alpha\eta}{2}\mathbb{E}[\|\nabla F(\bar{{x}}_{t})\|^2] -\frac{\alpha\eta}{4}\mathbb{E}[\|\bar{m}_t\|^{2}]  +\omega_{K+3} \frac{4 \beta^2  \eta^4K}{1-\lambda}\sum_{k=1}^{K-1}D_{k}  \delta_{k}^2 + \omega_{K+3}\frac{\mu^2\eta^4K}{1-\lambda} \sum_{k=1}^{K}B_k \sigma_{k}^2 + 2\beta^2\eta^4 \frac{1}{N} \sum_{n=1}^{N}\sum_{k=1}^{K-1}\omega_k\delta_{k}^2 \\
		& \quad + 4  \omega_{K}\beta^2  \eta^4K  \sum_{k=1}^{K-1}D_{k} \frac{\delta_{k}^2}{N}  + 2 \omega_{K}\mu^2\eta^4 K\sum_{k=1}^{K}B_k \frac{\sigma_{k}^2 }{N} + 4 \omega_{K+1} \beta^2  \eta^4K  \sum_{k=1}^{K-1}D_{k} \delta_{k}^2 + 2\omega_{K+1} \mu^2\eta^4 K\sum_{k=1}^{K}B_k \sigma_{k}^2 \\
		& \quad + \Big(2\alpha\eta-\mu\eta^2 \omega_{K} \Big)\mathbb{E}\Big[\Big\|\bar{m}_{t} - \frac{1}{N} \sum_{n=1}^{N}\nabla f_{n}^{(1)} ( u_{n,t}^{(0)})\nabla  f_{n}^{(2)} ( u_{n,t}^{(1)}) \cdots \nabla  f_{n}^{(K-1)} ( u_{n,t}^{(K-2)})\nabla f_{n}^{(K)}( u_{n,t}^{(K-1)}) \Big\|^2 \Big] \\
		& \quad + \frac{1}{N} \sum_{n=1}^{N}\sum_{k=1}^{K-1}\Big(4  \omega_{K}\beta^2  \eta^4 KD_{k}  \frac{1}{N}+ 4 \omega_{K+1} \beta^2  \eta^4 K D_{k} + 2\alpha\eta KA_{k}+\omega_{K+3}\frac{4 \beta^2  \eta^4 K}{1-\lambda}D_{k}  -\beta\eta^2 \omega_k\Big)\mathbb{E}[\|u_{n, t}^{(k)}- f_{n}^{(k)}(u_{n, t}^{(k-1)})  \|^2] \\
		& \quad + \Big( +\omega_{K+3} \frac{2\mu^2\eta^4}{1-\lambda}-\mu\eta^2 \omega_{K+1} \Big)\frac{1}{N} \sum_{n=1}^{N} \mathbb{E}\Big[\Big\|{m}_{n, t} - \nabla f_{n}^{(1)} ( u_{n,t}^{(0)})\nabla  f_{n}^{(2)} ( u_{n,t}^{(1)}) \cdots \nabla  f_{n}^{(K-1)} ( u_{n,t}^{(K-2)})\nabla f_{n}^{(K)}( u_{n,t}^{(K-1)}) \Big\|^2 \Big] \\
		& \quad  + \Big(\alpha\eta L_F^2-\eta\frac{1-\lambda^2}{2}\omega_{K+2}\Big)\frac{1}{N} \mathbb{E}[\|X_{t}  - \bar{X}_{t} \|_F^2] + \Big( \omega_{K+2} \frac{2\eta\alpha^2}{1-\lambda^2} -(1- \lambda) \omega_{K+3}\Big)\frac{1}{N} \mathbb{E}[\|Y_{t}-   \bar{Y}_{t}\|_F^2] \\
		& \quad + \Bigg[ \sum_{k=1}^{K-1}\Bigg(\Big(4 \omega_{K}K\frac{1}{N}+ 4\omega_{K+1} K+ \omega_{K+3}\frac{4K}{1-\lambda}\Big)D_{k} C_{k}^2 + 2\omega_k C_k^2\Bigg)\Big(\prod_{j=1}^{k-1}(2C_{j}^2)\Big)\\
		& \quad \quad + 2 \omega_{K}KD_{0}\frac{1}{N} + 2\omega_{K+1} KD_{0} + \omega_{K+3}\frac{2KD_{0}}{1-\lambda}  \Bigg] \frac{1}{N} \mathbb{E}[\|{X}_{t+1}- {X}_{ t}\|_F^2]  \\
		& \quad + 2\beta^2  \eta^4 \frac{1}{N}\sum_{n=1}^{N}\sum_{k=1}^{K-1} \Bigg[\sum_{j=k+1}^{K-1}\Bigg(\Big(4 \omega_{K}K\frac{1}{N}+ 4\omega_{K+1} K+ \omega_{K+3}\frac{4K}{1-\lambda}\Big)D_{j} C_{j}^2 \\
		& \quad \quad \quad \quad \quad \quad \quad \quad \quad \quad \quad \quad \quad \quad + 2\omega_j C_j^2\Bigg)  \Big(\prod_{i=k+1}^{j}(2C_{i}^2)\Big)\Bigg]\mathbb{E}[\|u_{n, t}^{(k)}- f_{n}^{(k)}(u_{n, t}^{(k-1)})  \|^2] \\
		& \quad + 2\beta^2  \eta^4 \frac{1}{N}\sum_{n=1}^{N}\sum_{k=1}^{K-1}\Bigg[\sum_{j=k+1}^{K-1}\Bigg(\Big(4 \omega_{K}K\frac{1}{N}+ 4\omega_{K+1} K+ \omega_{K+3}\frac{4K}{1-\lambda}\Big)D_{j} C_{j}^2 + 2\omega_j C_j^2\Bigg)  \Big(\prod_{i=k+1}^{j}(2C_{i}^2)\Big)\Bigg]\delta_{k}^2 \ .  \\
	\end{aligned}
\end{equation}
Then, according to Lemma~\ref{lemma_x_inc_momentum}, we can get
\begin{equation}
	\small
	\begin{aligned}
& \quad \mathcal{H}_{t+1} - \mathcal{H}_{t} \\
		& \leq -\frac{\alpha\eta}{2}\mathbb{E}[\|\nabla F(\bar{{x}}_{t})\|^2] +\omega_{K+3} \frac{4 \beta^2  \eta^4K}{1-\lambda}\sum_{k=1}^{K-1}D_{k}  \delta_{k}^2 + \omega_{K+3}\frac{\mu^2\eta^4K}{1-\lambda} \sum_{k=1}^{K}B_k \sigma_{k}^2 + 2\beta^2\eta^4 \frac{1}{N} \sum_{n=1}^{N}\sum_{k=1}^{K-1}\omega_k\delta_{k}^2 \\
		& \quad + 4  \omega_{K}\beta^2  \eta^4K  \sum_{k=1}^{K-1}D_{k} \frac{\delta_{k}^2}{N}  + 2 \omega_{K}\mu^2\eta^4 K\sum_{k=1}^{K}B_k \frac{\sigma_{k}^2 }{N} + 4 \omega_{K+1} \beta^2  \eta^4K  \sum_{k=1}^{K-1}D_{k} \delta_{k}^2 + 2\omega_{K+1} \mu^2\eta^4 K\sum_{k=1}^{K}B_k \sigma_{k}^2 \\
		& \quad + \Big(2\alpha\eta-\mu\eta^2 \omega_{K} \Big)\mathbb{E}\Big[\Big\|\bar{m}_{t} - \frac{1}{N} \sum_{n=1}^{N}\nabla f_{n}^{(1)} ( u_{n,t}^{(0)})\nabla  f_{n}^{(2)} ( u_{n,t}^{(1)}) \cdots \nabla  f_{n}^{(K-1)} ( u_{n,t}^{(K-2)})\nabla f_{n}^{(K)}( u_{n,t}^{(K-1)}) \Big\|^2 \Big] \\
				& \quad + \Big( \omega_{K+3} \frac{2\mu^2\eta^4}{1-\lambda}-\mu\eta^2 \omega_{K+1} \Big)\frac{1}{N} \sum_{n=1}^{N} \mathbb{E}\Big[\Big\|{m}_{n, t} - \nabla f_{n}^{(1)} ( u_{n,t}^{(0)})\nabla  f_{n}^{(2)} ( u_{n,t}^{(1)}) \cdots \nabla  f_{n}^{(K-1)} ( u_{n,t}^{(K-2)})\nabla f_{n}^{(K)}( u_{n,t}^{(K-1)}) \Big\|^2 \Big] \\
		& \quad + \frac{1}{N} \sum_{n=1}^{N}\sum_{k=1}^{K-1}\Big(4  \omega_{K}\beta^2  \eta^4 KD_{k}  \frac{1}{N}+ 4 \omega_{K+1} \beta^2  \eta^4 K D_{k} + 2\alpha\eta KA_{k}+\omega_{K+3}\frac{4 \beta^2  \eta^4 K}{1-\lambda}D_{k}  -\beta\eta^2 \omega_k \\
		& \quad \quad  + 2\beta^2  \eta^4 \Bigg[\sum_{j=k+1}^{K-1}\Bigg(\Big(4 \omega_{K}K\frac{1}{N}+ 4\omega_{K+1} K+ \omega_{K+3}\frac{4K}{1-\lambda}\Big)D_{j} C_{j}^2 + 2\omega_j C_j^2\Bigg)  \Big(\prod_{i=k+1}^{j}(2C_{i}^2)\Big)\Bigg]\Big)\mathbb{E}[\|u_{n, t}^{(k)}- f_{n}^{(k)}(u_{n, t}^{(k-1)})  \|^2] \\
		& \quad  + \Big( \Bigg[ \sum_{k=1}^{K-1}\Bigg(\Big(4 \omega_{K}K\frac{1}{N}+ 4\omega_{K+1} K+ \omega_{K+3}\frac{4K}{1-\lambda}\Big)D_{k} C_{k}^2 + 2\omega_k C_k^2\Bigg)\Big(\prod_{j=1}^{k-1}(2C_{j}^2)\Big)\\
		& \quad \quad + 2 \omega_{K}KD_{0}\frac{1}{N} + 2\omega_{K+1} KD_{0} + \omega_{K+3}\frac{2KD_{0}}{1-\lambda}  \Bigg] 8\eta^2+ \alpha\eta L_F^2-\eta\frac{1-\lambda^2}{2}\omega_{K+2}\Big)\frac{1}{N} \mathbb{E}[\|X_{t}  - \bar{X}_{t} \|_F^2] \\
		& \quad + \Big( \omega_{K+2} \frac{2\eta\alpha^2}{1-\lambda^2} -(1- \lambda) \omega_{K+3}+  \Bigg[ \sum_{k=1}^{K-1}\Bigg(\Big(4 \omega_{K}K\frac{1}{N}+ 4\omega_{K+1} K+ \omega_{K+3}\frac{4K}{1-\lambda}\Big)D_{k} C_{k}^2 + 2\omega_k C_k^2\Bigg)\Big(\prod_{j=1}^{k-1}(2C_{j}^2)\Big)\\
		& \quad \quad + 2 \omega_{K}KD_{0}\frac{1}{N} + 2\omega_{K+1} KD_{0} + \omega_{K+3}\frac{2KD_{0}}{1-\lambda}  \Bigg] 4\alpha^2\eta^2 \Big)\frac{1}{N} \mathbb{E}[\|Y_{t}-   \bar{Y}_{t}\|_F^2] \\
		& \quad + \Bigg(\Bigg[ \sum_{k=1}^{K-1}\Bigg(\Big(4 \omega_{K}K\frac{1}{N}+ 4\omega_{K+1} K+ \omega_{K+3}\frac{4K}{1-\lambda}\Big)D_{k} C_{k}^2 + 2\omega_k C_k^2\Bigg)\Big(\prod_{j=1}^{k-1}(2C_{j}^2)\Big)\\
		& \quad \quad + 2 \omega_{K}KD_{0}\frac{1}{N} + 2\omega_{K+1} KD_{0} + \omega_{K+3}\frac{2KD_{0}}{1-\lambda}  \Bigg] 4\alpha^2\eta^2 - \frac{\alpha\eta}{4}\Bigg) \mathbb{E}[\|\bar{m}_{t}\|^2] \\
		& \quad + 2\beta^2  \eta^4 \frac{1}{N}\sum_{n=1}^{N}\sum_{k=1}^{K-1}\Bigg[\sum_{j=k+1}^{K-1}\Bigg(\Big(4 \omega_{K}K\frac{1}{N}+ 4\omega_{K+1} K+ \omega_{K+3}\frac{4K}{1-\lambda}\Big)D_{j} C_{j}^2 + 2\omega_j C_j^2\Bigg)  \Big(\prod_{i=k+1}^{j}(2C_{i}^2)\Big)\Bigg]\delta_{k}^2 \ .  \\
	\end{aligned}
\end{equation}
At first, we set $\omega_{K}   = \frac{2\alpha}{\mu\eta } $ such that $2\alpha\eta-\mu\eta^2 \omega_{K}  =  0$.  Then, we set $\omega_{K+3} = \alpha (1-\lambda) $. By enforcing $ \omega_{K+3} \frac{2\mu^2\eta^4}{1-\lambda}-\mu\eta^2 \omega_{K+1} =  0$, we can get $\omega_{K+1}  =  \omega_{K+3} \frac{2\mu\eta^2}{1-\lambda} =2\alpha\mu\eta^2$.

Then, we enforce
\begin{equation}
	\begin{aligned}
		& 4  \omega_{K}\beta^2  \eta^4 KD_{k}  \frac{1}{N}+ 4 \omega_{K+1} \beta^2  \eta^4 K D_{k} + 2\alpha\eta KA_{k}+\omega_{K+3}\frac{4 \beta^2  \eta^4 K}{1-\lambda}D_{k}  -\beta\eta^2 \omega_k \\
		& \quad \quad  + 2\beta^2  \eta^4 \Bigg[\sum_{j=k+1}^{K-1}\Bigg(\Big(4 \omega_{K}K\frac{1}{N}+ 4\omega_{K+1} K+ \omega_{K+3}\frac{4K}{1-\lambda}\Big)D_{j} C_{j}^2 + 2\omega_j C_j^2\Bigg)  \Big(\prod_{i=k+1}^{j}(2C_{i}^2)\Big)\Bigg] \leq 0  \ , \\
	\end{aligned}
\end{equation}
It is easy to get
\begin{equation}
	\begin{aligned}
		&  \frac{8\alpha\beta^2  \eta^3}{\mu }   \frac{KD_{k} }{N}+ 4\omega_{K+3} K D_{k}\frac{2\mu\beta^2  \eta^6}{1-\lambda}   + 2\alpha\eta KA_{k}+\omega_{K+3}KD_{k} \frac{4 \beta^2  \eta^4 }{1-\lambda} -\beta\eta^2 \omega_k \\
		& \quad \quad  + 2\beta^2  \eta^4 \Bigg[\sum_{j=1}^{K-1}\Bigg(\Big(4 \omega_{K}K\frac{1}{N}+ 4\omega_{K+3} \frac{2\mu\eta^2}{1-\lambda}  K+ \omega_{K+3}\frac{4K}{1-\lambda}\Big)D_{j} C_{j}^2 + 2\omega_j C_j^2\Bigg)  \Big(\prod_{i=k+1}^{j}(2C_{i}^2)\Big)\Bigg]  \leq  0 \ .\\
	\end{aligned}
\end{equation}
Then, we enforce 
\begin{equation}
	\begin{aligned}
		& 2\beta^2  \eta^4 \Bigg[\sum_{j=1}^{K-1}2\omega_j C_j^2\Big(\prod_{i=k+1}^{j}(2C_{i}^2)\Big)\Bigg]-\beta\eta^2 \omega_k \leq -\frac{1}{2}\beta\eta^2 \omega_k  \ , \\
	\end{aligned}
\end{equation}
and 
\begin{equation}
	\begin{aligned}
		& 	\frac{8\alpha\beta^2  \eta^3}{\mu }   \frac{KD_{k} }{N}+ 4\omega_{K+3} K D_{k}\frac{2\mu\beta^2  \eta^6}{1-\lambda}   + 2\alpha\eta KA_{k}+\omega_{K+3}KD_{k} \frac{4 \beta^2  \eta^4 }{1-\lambda} \\
		& \quad \quad  + 2\beta^2  \eta^4 \Bigg[\sum_{j=1}^{K-1}\Bigg(\Big(\frac{8\alpha}{\mu\eta } \frac{K}{N}+ 4\omega_{K+3} \frac{2\mu\eta^2}{1-\lambda}  K+ \omega_{K+3}\frac{4K}{1-\lambda}\Big)D_{j} C_{j}^2 \Bigg)  \Big(\prod_{i=k+1}^{j}(2C_{i}^2)\Big)\Bigg]  -  \frac{1}{2} \beta\eta^2 \omega_k  \leq  0  \ . \\
	\end{aligned}
\end{equation}
From the first inequality, we can get
\begin{equation}
	\begin{aligned}
				& \eta \leq \frac{1}{2}\sqrt{\frac{\omega_k}{\beta\sum_{j=1}^{K-1}2\omega_j C_j^2\Big(\prod_{i=k+1}^{j}(2C_{i}^2)\Big)}} \ .\\
	\end{aligned}
\end{equation}
As for the second inequality, due to $\mu\eta^2<1$, we have
\begin{equation}
	\begin{aligned}
		& \quad  \frac{8\alpha\beta^2  \eta^3}{\mu }   \frac{KD_{k} }{N}+ 4\omega_{K+3} K D_{k}\frac{2\mu\beta^2  \eta^6}{1-\lambda}   + 2\alpha\eta KA_{k}+\omega_{K+3}KD_{k} \frac{4 \beta^2  \eta^4 }{1-\lambda} \\
		& \quad \quad  + 2\beta^2  \eta^4 \Bigg[\sum_{j=1}^{K-1}\Bigg(\Big(\frac{8\alpha}{\mu\eta } \frac{K}{N}+ 4\omega_{K+3} \frac{2\mu\eta^2}{1-\lambda}  K+ \omega_{K+3}\frac{4K}{1-\lambda}\Big)D_{j} C_{j}^2 \Bigg)  \Big(\prod_{i=k+1}^{j}(2C_{i}^2)\Big)\Bigg]  -  \frac{1}{2} \beta\eta^2 \omega_k  \\
		& \leq \frac{8\alpha\beta^2  \eta^3}{\mu }   \frac{KD_{k} }{N}+ \omega_{K+3} K D_{k}\frac{8\beta^2  \eta^4}{1-\lambda}   + 2\alpha\eta KA_{k}+\omega_{K+3}KD_{k} \frac{4 \beta^2  \eta^4 }{1-\lambda} \\
		& \quad \quad  + 2\beta^2  \eta^4 \Bigg[\sum_{j=1}^{K-1}\Bigg(\Big(\frac{8\alpha}{\mu\eta } \frac{K}{N}+ \omega_{K+3} \frac{8}{1-\lambda}  K+ \omega_{K+3}\frac{4K}{1-\lambda}\Big)D_{j} C_{j}^2 \Bigg)  \Big(\prod_{i=k+1}^{j}(2C_{i}^2)\Big)\Bigg]  -  \frac{1}{2} \beta\eta^2 \omega_k  \\
		& \leq \frac{8\alpha\beta^2  \eta^3}{\mu }   \frac{KD_{k} }{N}+ \omega_{K+3} K D_{k}\frac{12\beta^2  \eta^4}{1-\lambda}   + 2\alpha\eta KA_{k} \\
		& \quad \quad  + 2\beta^2  \eta^4 \Bigg[\sum_{j=1}^{K-1}\Bigg(\Big(\frac{8\alpha}{\mu\eta } \frac{K}{N}+ \omega_{K+3}\frac{12K}{1-\lambda}\Big)D_{j} C_{j}^2 \Bigg)  \Big(\prod_{i=k+1}^{j}(2C_{i}^2)\Big)\Bigg]  -  \frac{1}{2} \beta\eta^2 \omega_k  \\
		& \leq \frac{8\alpha\beta^2  \eta^3}{\mu }   \frac{KD_{k} }{N}+ 12\beta^2  \eta^4\alpha  K D_{k}   + 2\alpha\eta KA_{k} \\
		& \quad \quad  + 2\alpha \beta^2  \eta^4 \Bigg[\sum_{j=1}^{K-1}\Bigg(\Big(\frac{8}{\mu\eta } \frac{K}{N}+12K  \Big)D_{j} C_{j}^2 \Bigg)  \Big(\prod_{i=k+1}^{j}(2C_{i}^2)\Big)\Bigg]  -  \frac{1}{2} \beta\eta^2 \omega_k  \\
		& \leq \beta\eta^2 \Bigg[\frac{8\alpha\beta  \eta}{\mu }   \frac{KD_{k} }{N}+ 12\beta  \eta^2\alpha  K D_{k}   + 2\frac{1}{\beta\eta}\alpha KA_{k} \\
		& \quad \quad  + 2\alpha \beta \eta^2 \sum_{j=1}^{K-1}\Bigg(\Big(\frac{8}{\mu\eta } \frac{K}{N}+12K  \Big)D_{j} C_{j}^2 \Bigg)  \Big(\prod_{i=k+1}^{j}(2C_{i}^2)\Big) -  \frac{1}{2} \omega_k \Bigg]  \ .  \\
	\end{aligned}
\end{equation}
We enforce this upper bound to be non-positive, i.e.,
\begin{equation}
	\begin{aligned}
		& \frac{8\alpha\beta  \eta}{\mu }   \frac{KD_{k} }{N}+ 12\beta  \eta^2\alpha  K D_{k}   + 2\frac{1}{\beta\eta}\alpha KA_{k}  + 2\alpha \beta \eta^2 \sum_{j=1}^{K-1}\Bigg(\Big(\frac{8}{\mu\eta } \frac{K}{N}+12K  \Big)D_{j} C_{j}^2 \Bigg)  \Big(\prod_{i=k+1}^{j}(2C_{i}^2)\Big) -  \frac{1}{2} \omega_k \leq  0  \ , \\
		& \omega_k\geq \frac{16\alpha\beta  \eta}{\mu }   \frac{KD_{k} }{N}+ 24\beta  \eta^2\alpha  K D_{k}   + 4\frac{1}{\beta\eta}\alpha KA_{k}  + 4\alpha \beta \eta^2 \sum_{j=1}^{K-1}\Bigg(\Big(\frac{8}{\mu\eta } \frac{K}{N}+12K  \Big)D_{j} C_{j}^2 \Bigg)  \Big(\prod_{i=k+1}^{j}(2C_{i}^2)\Big)  \\
		& = \frac{\alpha}{\eta}\Bigg[\frac{16\beta  \eta^2}{\mu }   \frac{KD_{k} }{N}+ 24\beta  \eta^3  K D_{k}   + 4\frac{1}{\beta} KA_{k}  + 4 \beta \eta^3 \sum_{j=1}^{K-1}\Bigg(\Big(\frac{8}{\mu\eta } \frac{K}{N}+12K  \Big)D_{j} C_{j}^2 \Bigg)  \Big(\prod_{i=k+1}^{j}(2C_{i}^2)\Big)  \Bigg] \ . \\
	\end{aligned}
\end{equation}
Due to $ \beta\eta^2\leq 1 , \eta\leq 1$, we can set 
\begin{equation}
	\begin{aligned}
		&\omega_k  \triangleq \frac{\alpha K}{\eta}\tilde{\omega}_{k} = \frac{\alpha}{\eta}\Bigg[ \frac{16 KD_{k} }{\mu N}+ 24  K D_{k}   + \frac{4KA_{k} }{\beta}  + 16K  \sum_{j=1}^{K-1}\Bigg(\Big(\frac{2}{\mu N} +3 \Big)D_{j} C_{j}^2 \Bigg)  \Big(\prod_{i=k+1}^{j}(2C_{i}^2)\Big)  \Bigg] \ , \\
	\end{aligned}
\end{equation}
where $ \tilde{\omega}_{k} = \Bigg[ \frac{16 D_{k} }{\mu N}+ 24   D_{k}   + \frac{4A_{k} }{\beta}  + 16  \sum_{j=1}^{K-1}\Bigg(\Big(\frac{2}{\mu N} +3 \Big)D_{j} C_{j}^2 \Bigg)  \Big(\prod_{i=k+1}^{j}(2C_{i}^2)\Big)  \Bigg]$.  Then, we can simplified the upper bound of $\eta$ as below:
\begin{equation}
	\begin{aligned}
		& \eta \leq  \frac{1}{2}\sqrt{\frac{ \tilde{\omega}_{k}}{2\beta\sum_{j=1}^{K-1}\tilde{\omega}_{j} C_j^2\Big(\prod_{i=k+1}^{j}(2C_{i}^2)\Big)}}  \ . \\
	\end{aligned}
\end{equation}

Based on these values, we have
\begin{equation}
	\begin{aligned}
		& \Bigg[ \sum_{k=1}^{K-1}\Bigg(\Big(4 \omega_{K}K\frac{1}{N}+ 4\omega_{K+1} K+ \omega_{K+3}\frac{4K}{1-\lambda}\Big)D_{k} C_{k}^2 + 2\omega_k C_k^2\Bigg)\Big(\prod_{j=1}^{k-1}(2C_{j}^2)\Big)\\
		& \quad \quad + 2 \omega_{K}KD_{0}\frac{1}{N} + 2\omega_{K+1} KD_{0} + \omega_{K+3}\frac{2KD_{0}}{1-\lambda}  \Bigg]  \\
		& \leq \Bigg[ \sum_{k=1}^{K-1}\Bigg(\Big(4\frac{2\alpha}{\mu\eta }  K\frac{1}{N}+ 4\omega_{K+3} \frac{2\mu\eta^2}{1-\lambda}  K+ \omega_{K+3}\frac{4K}{1-\lambda}\Big)D_{k} C_{k}^2 + 2\omega_k C_k^2\Bigg)\Big(\prod_{j=1}^{k-1}(2C_{j}^2)\Big)\\
		& \quad \quad + 2 \frac{2\alpha}{\mu\eta } KD_{0}\frac{1}{N} + 2\omega_{K+3} \frac{2\mu\eta^2}{1-\lambda}  KD_{0} + \omega_{K+3}\frac{2KD_{0}}{1-\lambda}  \Bigg] \\
		& \leq \Bigg[ \sum_{k=1}^{K-1}\Bigg(\Big(\frac{8\alpha}{\mu\eta }  \frac{K}{N}+  \omega_{K+3}\frac{12K}{1-\lambda}\Big)D_{k} C_{k}^2 + 2\omega_k C_k^2\Bigg)\Big(\prod_{j=1}^{k-1}(2C_{j}^2)\Big)+ \frac{4\alpha}{\mu\eta }\frac{ KD_{0}}{N} + \omega_{K+3}\frac{6KD_{0}}{1-\lambda}  \Bigg] \\
		& \leq \Bigg[ \sum_{k=1}^{K-1}\Bigg(\Big(\frac{8\alpha}{\mu\eta }  \frac{K}{N}+  12\alpha K \Big)D_{k} C_{k}^2 +  \frac{2\alpha K}{\eta}\tilde{\omega}_{k}  C_k^2\Bigg)\Big(\prod_{j=1}^{k-1}(2C_{j}^2)\Big)+ \frac{4\alpha}{\mu\eta }\frac{ KD_{0}}{N} + 6\alpha KD_{0} \Bigg]  \ . \\
	\end{aligned}
\end{equation}

Then, we enforce 
\begin{equation}
	\begin{aligned}
		& \quad \Bigg[ \sum_{k=1}^{K-1}\Bigg(\Big(4 \omega_{K}K\frac{1}{N}+ 4\omega_{K+1} K+ \omega_{K+3}\frac{4K}{1-\lambda}\Big)D_{k} C_{k}^2 + 2\omega_k C_k^2\Bigg)\Big(\prod_{j=1}^{k-1}(2C_{j}^2)\Big)\\
		& \quad \quad + 2 \omega_{K}KD_{0}\frac{1}{N} + 2\omega_{K+1} KD_{0} + \omega_{K+3}\frac{2KD_{0}}{1-\lambda}  \Bigg] 8\eta^2+ \alpha\eta L_F^2-\eta\frac{1-\lambda^2}{2}\omega_{K+2}  \\
		& \leq \Bigg[ \sum_{k=1}^{K-1}\Bigg(\Big(\frac{8\alpha}{\mu\eta }  \frac{K}{N}+  12\alpha K \Big)D_{k} C_{k}^2 +  \frac{2\alpha K}{\eta}\tilde{\omega}_{k}  C_k^2\Bigg)\Big(\prod_{j=1}^{k-1}(2C_{j}^2)\Big)+ \frac{4\alpha}{\mu\eta }\frac{ KD_{0}}{N} + 6\alpha KD_{0} \Bigg] 8\eta^2+ \alpha\eta L_F^2-\eta\frac{1-\lambda^2}{2}\omega_{K+2}\\
		& \leq  0  \ . \\
	\end{aligned}
\end{equation}
It is easy to know
\begin{equation}
	\begin{aligned}
		& \omega_{K+2} \geq \frac{2}{1-\lambda^2}\Bigg[\Bigg( \sum_{k=1}^{K-1}\Bigg(\Big(\frac{8\alpha}{\mu }  \frac{K}{N}+  12\alpha \eta K \Big)D_{k} C_{k}^2 +  2\alpha\tilde{\omega}_{k}K  C_k^2\Bigg)\Big(\prod_{j=1}^{k-1}(2C_{j}^2)\Big)+ \frac{4\alpha}{\mu }\frac{ KD_{0}}{N} + 6\alpha\eta  KD_{0} \Bigg)8+ \alpha L_F^2\Bigg]  \ .  \\
	\end{aligned}
\end{equation}
Since $\eta<1$, we can set 
\begin{equation}
	\begin{aligned}
		& \omega_{K+2} \triangleq \frac{\alpha }{1-\lambda^2}\tilde{\omega}_{K+2} = \frac{2\alpha }{1-\lambda^2}\Bigg[\Bigg( \sum_{k=1}^{K-1}\Bigg(\Big(\frac{8}{\mu N }  +  12   \Big)D_{k} C_{k}^2 +  2\tilde{\omega}_{k}  C_k^2\Bigg)\Big(\prod_{j=1}^{k-1}(2C_{j}^2)\Big)+ \frac{ 4D_{0}}{\mu N} + 6   D_{0} \Bigg)8K+  L_F^2\Bigg]  \ ,  \\
	\end{aligned}
\end{equation}
where $\tilde{\omega}_{K+2}=2\Bigg[\Bigg( \sum_{k=1}^{K-1}\Bigg(\Big(\frac{8}{\mu N }  +  12   \Big)D_{k} C_{k}^2 +  2\tilde{\omega}_{k}  C_k^2\Bigg)\Big(\prod_{j=1}^{k-1}(2C_{j}^2)\Big)+ \frac{ 4D_{0}}{\mu N} + 6   D_{0} \Bigg)8K+  L_F^2\Bigg] $.

Moreover, we enforce 
\begin{equation}
	\begin{aligned}
		& \omega_{K+2} \frac{2\eta\alpha^2}{1-\lambda^2} -(1- \lambda) \omega_{K+3}+  \Bigg[ \sum_{k=1}^{K-1}\Bigg(\Big(4 \omega_{K}K\frac{1}{N}+ 4\omega_{K+1} K+ \omega_{K+3}\frac{4K}{1-\lambda}\Big)D_{k} C_{k}^2 + 2\omega_k C_k^2\Bigg)\Big(\prod_{j=1}^{k-1}(2C_{j}^2)\Big)\\
		& \quad \quad + 2 \omega_{K}KD_{0}\frac{1}{N} + 2\omega_{K+1} KD_{0} + \omega_{K+3}\frac{2KD_{0}}{1-\lambda}  \Bigg] 4\alpha^2\eta^2 \\
		& \leq \tilde{\omega}_{K+2}\frac{2\eta\alpha^3}{(1-\lambda^2)^2} -\alpha (1-\lambda)^2\\
		& \quad +  \Bigg[ \sum_{k=1}^{K-1}\Bigg(\Big(\frac{8\alpha}{\mu\eta }  \frac{K}{N}+  12\alpha K \Big)D_{k} C_{k}^2 +  \frac{2\alpha K}{\eta}\tilde{\omega}_{k}  C_k^2\Bigg)\Big(\prod_{j=1}^{k-1}(2C_{j}^2)\Big)+ \frac{4\alpha}{\mu\eta }\frac{ KD_{0}}{N} + 6\alpha KD_{0} \Bigg] 4\alpha^2\eta^2  \leq  0  \ . \\
	\end{aligned}
\end{equation}
Then, due to $\eta<1$ and $1+\lambda>1$, we can get
\begin{equation}
	\begin{aligned}
		& \alpha\leq \frac{(1-\lambda)^2}{\sqrt{2\tilde{\omega}_{K+2} +4 K \Bigg[ \sum_{k=1}^{K-1}\Bigg(\Big(\frac{8}{\mu N}  +  12   \Big)D_{k} C_{k}^2 +  2\tilde{\omega}_{k}  C_k^2\Bigg)\Big(\prod_{j=1}^{k-1}(2C_{j}^2)\Big)+ \frac{ 4D_{0}}{\mu N} + 6  D_{0} \Bigg]  }} .
	\end{aligned}
\end{equation}

Moreover, with $\eta<1$,  we enforce 
\begin{equation}
	\begin{aligned}
		& \quad \Bigg[ \sum_{k=1}^{K-1}\Bigg(\Big(4 \omega_{K}K\frac{1}{N}+ 4\omega_{K+1} K+ \omega_{K+3}\frac{4K}{1-\lambda}\Big)D_{k} C_{k}^2 + 2\omega_k C_k^2\Bigg)\Big(\prod_{j=1}^{k-1}(2C_{j}^2)\Big)\\
		& \quad \quad + 2 \omega_{K}KD_{0}\frac{1}{N} + 2\omega_{K+1} KD_{0} + \omega_{K+3}\frac{2KD_{0}}{1-\lambda}  \Bigg] 4\alpha^2\eta^2 - \frac{\alpha\eta}{4} \\
		& \leq \Bigg[ \sum_{k=1}^{K-1}\Bigg(\Big(\frac{8\alpha}{\mu\eta }  \frac{K}{N}+  12\alpha K \Big)D_{k} C_{k}^2 +  \frac{2\alpha K}{\eta}\tilde{\omega}_{k}  C_k^2\Bigg)\Big(\prod_{j=1}^{k-1}(2C_{j}^2)\Big)+ \frac{4\alpha}{\mu\eta }\frac{ KD_{0}}{N} + 6\alpha KD_{0} \Bigg]  4\alpha^2\eta^2 - \frac{\alpha\eta}{4} \\
		& \leq \Bigg[ \sum_{k=1}^{K-1}\Bigg(\Big(\frac{8\alpha}{\mu }  \frac{K}{N}+  12\eta \alpha K \Big)D_{k} C_{k}^2 +  2\alpha K\tilde{\omega}_{k}  C_k^2\Bigg)\Big(\prod_{j=1}^{k-1}(2C_{j}^2)\Big)+ \frac{4\alpha}{\mu }\frac{ KD_{0}}{N} + 6\alpha \eta KD_{0} \Bigg]  4\alpha^2\eta - \frac{\alpha\eta}{4} \\
		& \leq \Bigg[ \sum_{k=1}^{K-1}\Bigg(\Big(\frac{8}{\mu }  \frac{K}{N}+  12  K \Big)D_{k} C_{k}^2 +  2K\tilde{\omega}_{k}  C_k^2\Bigg)\Big(\prod_{j=1}^{k-1}(2C_{j}^2)\Big)+ \frac{4}{\mu }\frac{ KD_{0}}{N} + 6  KD_{0} \Bigg]  4\alpha^3\eta - \frac{\alpha\eta}{4}  \leq  0  \ ,  \\
	\end{aligned}
\end{equation}
so that we can get
\begin{equation}
	\begin{aligned}
		& \alpha \leq \frac{1}{4}\Bigg/\sqrt{K\sum_{k=1}^{K-1}\Bigg(\Big(\frac{8}{\mu N }  +  12   \Big)D_{k} C_{k}^2 +  2\tilde{\omega}_{k}  C_k^2\Bigg)\Big(\prod_{j=1}^{k-1}(2C_{j}^2)\Big)+ \frac{ 4KD_{0}}{\mu N} + 6  KD_{0} }   \ .  \\ 
	\end{aligned}
\end{equation}
In summary, by setting 
\begin{equation}\label{eq_hyperparams_var}
	\begin{aligned}
		& \omega_{k} = \frac{\alpha K}{\eta}\tilde{\omega}_{k}, \ k\in \{1, 2, \cdots, K-1\}  \ , \\
		& \omega_{K}   = \frac{2\alpha}{\mu\eta } \ ,   \omega_{K+1}  =  2\alpha\mu\eta^2   \ ,  \omega_{K+2} = \frac{\alpha }{1-\lambda^2} \tilde{\omega}_{K+2}  \ , \omega_{K+3} = \alpha (1-\lambda)  \ , \\
		& \alpha\leq(1-\lambda)^2\Bigg/\sqrt{2\tilde{\omega}_{K+2} +4 K \Bigg[ \sum_{k=1}^{K-1}\Bigg(\Big(\frac{8}{\mu N}  +  12   \Big)D_{k} C_{k}^2 +  2\tilde{\omega}_{k}  C_k^2\Bigg)\Big(\prod_{j=1}^{k-1}(2C_{j}^2)\Big)+ \frac{ 4D_{0}}{\mu N} + 6  D_{0} \Bigg]  }  \  , \\
		& \alpha \leq \frac{1}{4}\Bigg/\sqrt{K\sum_{k=1}^{K-1}\Bigg(\Big(\frac{8}{\mu N }  +  12   \Big)D_{k} C_{k}^2 +  2\tilde{\omega}_{k}  C_k^2\Bigg)\Big(\prod_{j=1}^{k-1}(2C_{j}^2)\Big)+ \frac{ 4KD_{0}}{\mu N} + 6  KD_{0} }   \ ,  \\ 
		& \eta \leq  \frac{1}{2}\sqrt{\frac{ \tilde{\omega}_{k}}{2\beta\sum_{j=1}^{K-1}\tilde{\omega}_{j} C_j^2\Big(\prod_{i=k+1}^{j}(2C_{i}^2)\Big)}}  \ ,  \\
	\end{aligned}
\end{equation}
where $\tilde{\omega}_{k}= \Bigg[\frac{16 D_{k} }{\mu N}+ 24   D_{k}   + \frac{4A_{k} }{\beta}  + 16 \sum_{j=1}^{K-1}\Bigg(\Big(\frac{2}{\mu N} +3 \Big)D_{j} C_{j}^2 \Bigg)  \Big(\prod_{i=k+1}^{j}(2C_{i}^2)\Big)\Bigg]  $, $\tilde{\omega}_{K+2} = 2\Bigg[\Bigg( \sum_{k=1}^{K-1}\Bigg(\Big(\frac{8K}{\mu N }  +  12  K \Big)D_{k} C_{k}^2 +  2\tilde{\omega}_{k}  C_k^2\Bigg)\Big(\prod_{j=1}^{k-1}(2C_{j}^2)\Big)+ \frac{ 4KD_{0}}{\mu N} + 6   KD_{0} \Bigg)8+  L_F^2\Bigg] $, 
we can get
\begin{equation}
	\begin{aligned}
& \quad \mathcal{H}_{t+1} - \mathcal{H}_{t} \\
		& \leq -\frac{\alpha\eta}{2}\mathbb{E}[\|\nabla F(\bar{{x}}_{t})\|^2]  +  4 \alpha\beta^2  \eta^4K\sum_{k=1}^{K-1}D_{k}  \delta_{k}^2 + \alpha \mu^2\eta^4K \sum_{k=1}^{K}B_k \sigma_{k}^2 + 2\alpha\beta^2\eta^3 K\sum_{k=1}^{K-1}\tilde{\omega}_{k} \delta_{k}^2 \\
		& \quad + \frac{8\alpha}{\mu }\beta^2  \eta^3K  \sum_{k=1}^{K-1}D_{k} \frac{\delta_{k}^2}{N}  +  \frac{4\alpha \mu^2\eta^3}{\mu } K\sum_{k=1}^{K}B_k \frac{\sigma_{k}^2 }{N} + 8 \alpha\mu \beta^2  \eta^6 K  \sum_{k=1}^{K-1}D_{k} \delta_{k}^2 + 4\alpha\mu^3\eta^6 K\sum_{k=1}^{K}B_k \sigma_{k}^2 \\
		& \quad + 2\alpha \beta^2  \eta^3 K\sum_{k=1}^{K-1}\Bigg[\sum_{j=k+1}^{K-1}\Bigg(\Big( \frac{8}{\mu N}+ 8\mu\eta^3 + 4\eta \Big)D_{j} C_{j}^2 +2\tilde{\omega}_{j} C_j^2\Bigg)  \Big(\prod_{i=k+1}^{j}(2C_{i}^2)\Big)\Bigg]\delta_{k}^2 \ .  \\
	\end{aligned}
\end{equation}

By summing over $t$ from $0$ to $T-1$, we can get
\begin{equation}
	\begin{aligned}
		& \quad \frac{1}{T}\sum_{t=0}^{T-1}\mathbb{E}[\|\nabla F(\bar{{x}}_{t})\|^2 ] \\
		& \leq \frac{2(\mathcal{H}_{0} - \mathcal{H}_{T})}{\alpha\eta T} + 8 \beta^2  \eta^3K\sum_{k=1}^{K-1}D_{k}  \delta_{k}^2 + 2\mu^2\eta^3K \sum_{k=1}^{K}B_k \sigma_{k}^2 + 4\beta^2\eta^2 K\sum_{k=1}^{K-1}\tilde{\omega}_{k} \delta_{k}^2\\
		& \quad + \frac{16\beta^2  \eta^2}{\mu }K  \sum_{k=1}^{K-1}D_{k} \frac{\delta_{k}^2}{N}  + 8 \mu\eta^2 K\sum_{k=1}^{K}B_k \frac{\sigma_{k}^2 }{N} +16 \mu \beta^2  \eta^5 K  \sum_{k=1}^{K-1}D_{k} \delta_{k}^2 + 8\mu^3\eta^5 K\sum_{k=1}^{K}B_k \sigma_{k}^2 \\
		& \quad + 4\beta^2  \eta^2 K\sum_{k=1}^{K-1}\Bigg[\sum_{j=k+1}^{K-1}\Bigg(\Big( \frac{8}{\mu N}+ 8\mu\eta^3 + 4\eta \Big)D_{j} C_{j}^2 +2\tilde{\omega}_{j} C_j^2\Bigg)  \Big(\prod_{i=k+1}^{j}(2C_{i}^2)\Big)\Bigg]\delta_{k}^2 \ .  \\
	\end{aligned}
\end{equation}
In the following, we bound $\mathcal{H}_{0}$. Specifically,  we have 
\begin{equation}
	\begin{aligned}
		&  \mathbb{E}[\|u_{n, 0}^{(k)} -f_{n}^{(k)}(u_{n, 0}^{(k-1)}) \|^2] = \mathbb{E}[\|f_{n}^{(k)}({u}_{n,  0}^{(k-1)}; \xi_{ n, 0}^{(k)}) -f_{n}^{(k)}(u_{n, 0}^{(k-1)}) \|^2] \leq \frac{\delta_{k}^2 }{S}  \ , \\
	\end{aligned}
\end{equation}
\begin{equation}
	\begin{aligned}
		& \quad \mathbb{E}\Big[\Big\|\frac{1}{N}\sum_{n=1}^{N}{m}_{n,0} -\frac{1}{N}\sum_{n=1}^{N}\nabla f_{n}^{(1)} (u_{n,0}^{(0)})\nabla  f_{n}^{(2)} (u_{n,0}^{(1)}) \cdots \nabla  f_{n}^{(K-1)} (u_{n,0}^{(K-2)})\nabla  f_{n}^{(K)} (u_{n,0}^{(K-1)}) \Big\|^2\Big]  \\
		& =  \mathbb{E}\Big[\Big\|\frac{1}{N}\sum_{n=1}^{N}{v}_{n,  0}^{(1)} {v}_{ n, 0}^{(2)}\cdots {v}_{ n, 0}^{(K-1)} {v}_{n,  0}^{(K)} -\frac{1}{N}\sum_{n=1}^{N}\nabla f_{n}^{(1)} (u_{n,0}^{(0)})\nabla  f_{n}^{(2)} (u_{n,0}^{(1)}) \cdots \nabla  f_{n}^{(K-1)} (u_{n,0}^{(K-2)})\nabla  f_{n}^{(K)} (u_{n,0}^{(K-1)}) \Big\|^2\Big]\\
		& \leq  K\sum_{k=1}^{K}B_k \frac{\sigma_{k}^2 }{SN} \ ,   \\
	\end{aligned}
\end{equation}
as well as $\mathbb{E}\Big[\Big\|{m}_{n,0} -\nabla f_{n}^{(1)} (u_{n,0}^{(0)})\nabla  f_{n}^{(2)} (u_{n,0}^{(1)}) \cdots \nabla  f_{n}^{(K-1)} (u_{n,0}^{(K-2)})\nabla  f_{n}^{(K)} (u_{n,0}^{(K-1)}) \Big\|^2\Big] \leq K\sum_{k=1}^{K}B_k \frac{\sigma_{k}^2 }{S}$. 
Similar to Theorem~\ref{theorem1}, we can get
\begin{equation}
	\begin{aligned}
		& \quad \frac{1}{N} \mathbb{E}[\|Y_{0} - \bar{Y}_{0}\|_{F}^2 ]  \leq  6K\sum_{k=1}^{K}B_k \sigma_{k}^2  + 12K\sum_{k=2}^{K} \frac{(\prod_{j=1}^{K}C_j^2)L_k^2}{C_k^2} \sum_{i=1}^{k-1}8\delta_{i}^2\prod_{j=i+1}^{k-1} (8C_{j}^2) \ . 
	\end{aligned}
\end{equation}
As a result, we can get 
\begin{equation}
	\begin{aligned}
		& 	H_{0} = F({{x}}_{0}) + \frac{1}{N} \sum_{n=1}^{N}\sum_{k=1}^{K-1}\omega_k\mathbb{E}[\|u_{n, 0}^{(k)} -f_{n}^{(k)}(u_{n, 0}^{(k-1)}) \|^2]   \\
		& + \omega_{K} \mathbb{E}\Big[\Big\|\bar{m}_{0}- \frac{1}{N} \sum_{n=1}^{N}\nabla f_{n}^{(1)} ( u_{n,0}^{(0)})\nabla  f_{n}^{(2)} ( u_{n,0}^{(1)}) \cdots \nabla  f_{n}^{(K-1)} ( u_{n,0}^{(K-2)})\nabla f_{n}^{(K)}( u_{n,0}^{(K-1)})\Big\|^2\Big]   \\
		& + \omega_{K+1}\frac{1}{N} \sum_{n=1}^{N}\mathbb{E}[\|{m}_{n,0} - \nabla f_{n}^{(1)} ( u_{n,0}^{(0)})\nabla  f_{n}^{(2)} ( u_{n,0}^{(1)}) \cdots \nabla  f_{n}^{(K-1)} ( u_{n,0}^{(K-2)})\nabla f_{n}^{(K)}( u_{n,0}^{(K-1)})\|^2]  \\
		& + \omega_{K+2}\frac{1}{N}\mathbb{E}[\|X_{0} - \bar{X}_{0}\|_{F}^2 ]+ \omega_{K+3}\frac{1}{N}\mathbb{E}[\|Y_{0} - \bar{Y}_{0}\|_{F}^2]  \\
		& \leq F({{x}}_{0})  + \frac{\alpha K}{\eta}\frac{1}{N} \sum_{n=1}^{N}\sum_{k=1}^{K-1}\tilde{\omega}_{k}\frac{\delta_{k}^2}{S}  +  \frac{2\alpha}{\mu\eta }K\sum_{k=1}^{K}B_k \frac{\sigma_{k}^2 }{SN}  +  2\alpha\mu\eta^2  K\sum_{k=1}^{K}B_k\frac{ \sigma_{k}^2 }{S} \\
		&\quad  + \alpha\Big( 6K\sum_{k=1}^{K}B_k \sigma_{k}^2  + 12K\sum_{k=2}^{K} \frac{(\prod_{j=1}^{K}C_j^2)L_k^2}{C_k^2} \sum_{i=1}^{k-1}8\delta_{i}^2\prod_{j=i+1}^{k-1} (8C_{j}^2)\Big)  \ . \\
	\end{aligned}
\end{equation}

Finally, we can get
\begin{equation}
	\begin{aligned}
		& \quad \frac{1}{T}\sum_{t=0}^{T-1}\mathbb{E}[\|\nabla F(\bar{{x}}_{t})\|^2 ] \\
		& \leq \frac{2({F}(x_0) -F(x_*))}{\alpha\eta T} + \frac{2K}{\eta^2 T}\frac{1}{N} \sum_{n=1}^{N}\sum_{k=1}^{K-1}\tilde{\omega}_{k}\frac{\delta_{k}^2}{S}  +  \frac{4K}{\mu\eta^2 T} \sum_{k=1}^{K}B_k \frac{\sigma_{k}^2 }{SN}  +  \frac{4\mu\eta  K}{ T} \sum_{k=1}^{K}B_k\frac{ \sigma_{k}^2 }{S} \\
		&\quad  +  \frac{12K}{\eta T}\Big( \sum_{k=1}^{K}B_k \sigma_{k}^2  + 2\sum_{k=2}^{K} \frac{(\prod_{j=1}^{K}C_j^2)L_k^2}{C_k^2} \sum_{i=1}^{k-1}8\delta_{i}^2\prod_{j=i+1}^{k-1} (8C_{j}^2)\Big) \\
		& \quad +  8 \beta^2  \eta^3K\sum_{k=1}^{K-1}D_{k}  \delta_{k}^2 + 2\mu^2\eta^3K \sum_{k=1}^{K}B_k \sigma_{k}^2 + 4\beta^2\eta^2 K\sum_{k=1}^{K-1}\tilde{\omega}_{k} \delta_{k}^2\\
		& \quad + \frac{16\beta^2  \eta^2}{\mu }K  \sum_{k=1}^{K-1}D_{k} \frac{\delta_{k}^2}{N}  + 8 \mu\eta^2 K\sum_{k=1}^{K}B_k \frac{\sigma_{k}^2 }{N} +16 \mu \beta^2  \eta^5 K  \sum_{k=1}^{K-1}D_{k} \delta_{k}^2 + 8\mu^3\eta^5 K\sum_{k=1}^{K}B_k \sigma_{k}^2 \\
		& \quad + 4\beta^2  \eta^2 K\sum_{k=1}^{K-1}\Bigg[\sum_{j=k+1}^{K-1}\Bigg(\Big( \frac{8}{\mu N}+ 8\mu\eta^3 + 4\eta \Big)D_{j} C_{j}^2 +2\tilde{\omega}_{j} C_j^2\Bigg)  \Big(\prod_{i=k+1}^{j}(2C_{i}^2)\Big)\Bigg]\delta_{k}^2 \ .  \\
	\end{aligned}
\end{equation}

\end{proof}

\end{document}